\newtheorem{theorem}{Theorem}
\newtheorem{proposition}{Proposition}
\newtheorem{lemma}{Lemma}
\newtheorem{corollary}{Corollary}
\newtheorem{assumption}{Assumption}
\newtheorem*{assumption*}{Assumption}
\newtheorem{definition}{Definition}
\newcommand{\R}{\mathbb{R}}
\newcommand{\abs}[1]{\left|#1\right|}
\newcommand{\norm}[1]{\left\|#1\right\|}
\newcommand{\commentout}[1]{}
\newcommand{\realmatrix}
\renewcommand{\d}[1]{\ensuremath{\operatorname{d}\!{#1}}}
\newcommand{\e}[1]{ {\mathrm{e}}^{ #1 } }
\newcommand{\expectation}[1]{ \mathbb{E} [ #1 ] }
\newcommand{\expectationWrt}[2]{ \mathbb{E}_{#2} [ #1 ] }
\newcommand{\expectationBig}[1]{ \mathbb{E} \Bigl[ #1 \Bigr] }
\newcommand{\pnorm}[2]{ \| #1 \|{}_{#2} }
\newcommand{\trace}[1]{ \mathrm{Tr}[#1] }
\newcommand{\naturalNumbersPlus}{ \mathbb{N}_{+} }
\newcommand{\realNumbers}{ \mathbb{R} }
\newcommand{\iterand}[2]{ #1^{[#2]} }
\newcommand{\iterandd}[2]{ #1^{\{#2\}} }
\newcommand{\refFigure}[1]{{\textrm{Figure~\ref{#1}}}}
\newcommand{\refDefinition}[1]{{\textrm{Definition~\ref{#1}}}}
\newcommand{\refTheorem}[1]{{\textrm{Theorem~\ref{#1}}}}
\newcommand{\refCorollary}[1]{{\textrm{Corollary~\ref{#1}}}}
\newcommand{\refProposition}[1]{{\textrm{Proposition~\ref{#1}}}}
\newcommand{\refLemma}[1]{{\textrm{Lemma~\ref{#1}}}}
\newcommand{\refAssumption}[1]{{\textrm{Assumption~\ref{#1}}}}
\newcommand{\refSection}[1]{{\textrm{Section~\ref{#1}}}}
\newcommand{\refAppendixSection}[1]{\textrm{Appendix~\ref{#1}}}
\def\eqcom#1{\overset{\textnormal{(#1)}}}
\newcommand{\itr}[2]{ \iterand{#1}{#2} }
\newcommand{\itrd}[2]{ \iterandd{#1}{#2} }
\def\({{\Bigl(}}
\def\){{\Bigr)}}
\newcommand{\ba}{\begin{array}}
\newcommand{\ea}{\end{array}}
\newcommand{\xdeleted}[1]{\deleted{}} %#1 to have deleted text printed.
\definecolor{light}{rgb}{0.5, 0.5, 0.5}
\newacronym{GF}{GF}{Gradient Flow}
\newacronym{GD}{GD}{Gradient Descent}
\newacronym{NN}{NN}{Neural Network}
\newacronym{ODE}{ODE}{Ordinary Differential Equation}
\newacronym{PL}{PL}{Polyak--\L{}ojasiewicz}
\newacronym{SGD}{SGD}{Stochastic Gradient Descent}
\newacronym{SVD}{SVD}{Singular Value Decomposition}
\title{Asymptotic convergence rate of Dropout\\ on shallow linear neural networks}
\author{
  Albert Senen--Cerda\\
  \small Department of Mathematics \& Computer Science\\
  \small Eindhoven University of Technology\\
  \small \texttt{a.senen.cerda@tue.nl}
  \And
  Jaron Sanders\\
  \small Department of Mathematics \& Computer Science\\
  \small Eindhoven University of Technology\\
  \small \texttt{jaron.sanders@tue.nl}
}
\date{\today}
\begin{document}

\maketitle

\begin{abstract}
    We analyze the convergence rate of gradient flows on objective functions induced by \emph{Dropout} and \emph{Dropconnect}, when applying them to shallow linear \glspl{NN}---which can also be viewed as doing matrix factorization using a particular regularizer. Dropout algorithms such as these are thus regularization techniques that use $\{0,1\}$-valued random variables to filter weights during training in order to avoid coadaptation of features. By leveraging a recent result on nonconvex optimization and conducting a careful analysis of the set of minimizers as well as the Hessian of the loss function, we are able to obtain (i) a local convergence proof of the gradient flow and (ii) a bound on the convergence rate that depends on the data, the dropout probability, and the width of the \gls{NN}. Finally, we compare this theoretical bound to numerical simulations, which are in qualitative agreement with the convergence bound and match it when starting sufficiently close to a minimizer.
\end{abstract}
\section{Introduction}
\label{sec:Introduction}

Dropout algorithms are regularization techniques for \glspl{NN} that use $\{ 0, 1 \}$-valued random variables to filter out weights during training in order to avoid coadaptation of features. The first dropout algorithm was proposed by Hinton et al.\ in \cite{hinton2012improving}, and several variants of the algorithm appeared thereafter. These include versions in which edges are dropped \cite{wan2013regularization}, groups of edges are dropped from the input layer \cite{devries2017improved}, the removal probabilities change adaptively \cite{ba2013adaptive,li2016improved}; ones that are suitable for recurrent \glspl{NN} \cite{zaremba2014recurrent,semeniuta2016recurrent}; and variational ones with Gaussian filters \cite{kingma2015variational,molchanov2017variational}. Dropout algorithms have found application in e.g.\ image classification \cite{krizhevsky2012imagenet}, handwriting recognition \cite{pham2014dropout}, heart sound classification \cite{kay2016Dropconnected}, and drug discovery in cancer research \cite{urban2018deep}.  

This paper is about the convergence rate of two dropout algorithms: the original \emph{Dropout} \cite{hinton2012improving}, and the variant \emph{Dropconnect} \cite{wan2013regularization}. These dropout algorithms roughly work as follows. During the training procedure of a \gls{NN}, we iteratively present either algorithm with new (possibly random) input and output samples. Either algorithm then only updates a random set of weights of the \gls{NN}, leaving all other weights unchanged. For $p \in (0,1]$, entire nodes of the \gls{NN} are dropped with probability $1 - p$ in an independent and identically distributed manner in the case of \emph{Dropout}; and edges are dropped with probability $1 - p$ in an independent and identically distributed manner in the case of \emph{Dropconnect}. For each algorithm we display a sample, random subgraph of the base graph of a \gls{NN} in \refFigure{fig:Random_two_layer_NNs_with_Dropout_and_Dropconnect}. 

\begin{figure}[htbp]
    \centering
    \begin{subfigure}{0.3125\textwidth}
        \centering
        \def\layersep{3.25em}
        \begin{tikzpicture}[shorten >=1pt,->,draw=black!50, node distance=\layersep]
            \tikzstyle{every pin edge}=[<-,shorten <=1pt]
            \tikzstyle{neuron}=[circle,draw=black,fill=white!100,minimum size=10pt,inner sep=0pt]
            \tikzstyle{input neuron}=[neuron];
            \tikzstyle{output neuron}=[neuron];
            \tikzstyle{hidden neuron}=[neuron];
            \tikzstyle{annot} = [text width=4em, text centered]

            % Draw the input layer nodes
            \foreach \name / \y in {1,...,4}
            % This is the same as writing \foreach \name / \y in {1/1,2/2,3/3,4/4,5/5,6/6}
            \node[input neuron, pin=left:$x_{\y}$] (I-\name) at (0,-\y) {};

            % Draw the hidden layer nodes
            \foreach \name / \y in {1,...,5}
            \path[yshift=+0.5cm]
            node[hidden neuron] (Ha-\name) at (\layersep,-\y cm) {};

            % Draw the output layer node
            \foreach \name / \y in {1,...,3}
            % This is the same as writing \foreach \name / \y in {1/1,2/2,3/3}
            \path[yshift=-0.5cm]
            node[output neuron,pin={[pin edge={->}]right:$y_{\y}$}] (O-\name) at (2*\layersep, -\y cm) {};

            % Set rand seed for reproducibility
            \pgfmathsetseed{1987116}

            % Connect every node in the input layer with every node in the
            % hidden layer.
            \foreach \source in {1,...,4}
            \foreach \dest in {1,...,5}
                {
                    \pgfmathparse{int(1000*rand+1000)} % Uniform between 0 and 2000
                    \ifnum \pgfmathresult>0
                        \path (I-\source) edge[black, opacity=1] (Ha-\dest);
                    \else
                        \path (I-\source) edge[black, opacity=0] (Ha-\dest);
                    \fi
                }

            % Connect every node in the hidden layer with the output layer
            \foreach \source in {1,...,5}
            \foreach \dest in {1,...,3}
                {
                    \pgfmathparse{int(1000*rand+1000)} % Uniform between 0 and 2000
                    \ifnum \pgfmathresult>0
                        \path (Ha-\source) edge[black, opacity=1] (O-\dest);
                    \else
                        \path (Ha-\source) edge[black, opacity=0] (O-\dest);
                    \fi
                }

            %   Annotate the layers
            %  \node[annot,above of=Ha-1, node distance=1cm] (hl) {Hidden layer};
            %  \node[annot,left of=hl] {Input layer};
            %  \node[annot,right of=hl] {Output layer};
        \end{tikzpicture}
        \caption{Full \gls{NN}.}
    \end{subfigure}
    ~
    \begin{subfigure}{0.3125\textwidth}
        \centering
        \def\layersep{3.25em}
        \begin{tikzpicture}[shorten >=1pt,->,draw=black!50, node distance=\layersep]
            \tikzstyle{every pin edge}=[<-,shorten <=1pt]
            \tikzstyle{neuron}=[circle,draw=black,fill=white!100,minimum size=10pt,inner sep=0pt]
            \tikzstyle{input neuron}=[neuron];
            \tikzstyle{output neuron}=[neuron];
            \tikzstyle{hidden neuron}=[neuron];
            \tikzstyle{annot} = [text width=4em, text centered]

            % Draw the input layer nodes
            \foreach \name / \y in {1,...,4}
            % This is the same as writing \foreach \name / \y in {1/1,2/2,3/3,4/4,5/5,6/6}
            \node[input neuron, pin=left:$x_{\y}$] (I-\name) at (0,-\y) {};

            % Draw the hidden layer nodes
            \foreach \name / \y in {1,...,5}
            \path[yshift=+0.5cm]
            node[hidden neuron] (Ha-\name) at (\layersep,-\y cm) {};

            % Draw the output layer node
            \foreach \name / \y in {1,...,3}
            % This is the same as writing \foreach \name / \y in {1/1,2/2,3/3}
            \path[yshift=-0.5cm]
            node[output neuron,pin={[pin edge={->}]right:$y_{\y}$}] (O-\name) at (2*\layersep, -\y cm) {};

            % Set rand seed for reproducibility
            \pgfmathsetseed{1987111}

            % Connect every node in the input layer with every node in the
            % hidden layer.
            \foreach \source in {1,...,4}
                {
                    \pgfmathparse{int(1000*rand+1000)} % Uniform between 0 and 2000
                    \foreach \dest in {1,...,5}
                        {
                            \ifnum \pgfmathresult>1000
                                \path (I-\source) edge[black, opacity=1] (Ha-\dest);
                            \else
                                \path (I-\source) edge[black, opacity=0] (Ha-\dest);
                            \fi
                        }
                }
            % Connect every node in the hidden layer with the output layer
            \foreach \source in {1,...,5}
                {
                    \pgfmathparse{int(1000*rand+1000)} % Uniform between 0 and 2000
                    \foreach \dest in {1,...,3}
                        {

                            \ifnum \pgfmathresult>1000
                                \path (Ha-\source) edge[black, opacity=1] (O-\dest);
                            \else
                                \path (Ha-\source) edge[black, opacity=0] (O-\dest);
                            \fi
                        }
                }
            %   Annotate the layers
            %  \node[annot,above of=Ha-1, node distance=1cm] (hl) {Hidden layer};
            %  \node[annot,left of=hl] {Input layer};
            %  \node[annot,right of=hl] {Output layer};
        \end{tikzpicture}
        \caption{\emph{Dropout}. Case $p=0.5$}
    \end{subfigure}
    ~
    \begin{subfigure}{0.3125\textwidth}
        \centering
        \def\layersep{3.25em}
        \begin{tikzpicture}[shorten >=1pt,->,draw=black!50, node distance=\layersep]
            \tikzstyle{every pin edge}=[<-,shorten <=1pt]
            \tikzstyle{neuron}=[circle,draw=black,fill=white!100,minimum size=10pt,inner sep=0pt]
            \tikzstyle{input neuron}=[neuron];
            \tikzstyle{output neuron}=[neuron];
            \tikzstyle{hidden neuron}=[neuron];
            \tikzstyle{annot} = [text width=4em, text centered]

            % Draw the input layer nodes
            \foreach \name / \y in {1,...,4}
            % This is the same as writing \foreach \name / \y in {1/1,2/2,3/3,4/4,5/5,6/6}
            \node[input neuron, pin=left:$x_{\y}$] (I-\name) at (0,-\y) {};

            % Draw the hidden layer nodes
            \foreach \name / \y in {1,...,5}
            \path[yshift=+0.5cm]
            node[hidden neuron] (Ha-\name) at (\layersep,-\y cm) {};

            % Draw the output layer node
            \foreach \name / \y in {1,...,3}
            % This is the same as writing \foreach \name / \y in {1/1,2/2,3/3}
            \path[yshift=-0.5cm]
            node[output neuron,pin={[pin edge={->}]right:$y_{\y}$}] (O-\name) at (2*\layersep, -\y cm) {};

            % Set rand seed for reproducibility
            \pgfmathsetseed{1987117}

            % Connect every node in the input layer with every node in the
            % hidden layer.
            \foreach \source in {1,...,4}
            \foreach \dest in {1,...,5}
                {
                    \pgfmathparse{int(1000*rand+1000)} % Uniform between 0 and 2000
                    \ifnum \pgfmathresult>1000
                        \path (I-\source) edge[black, opacity=1] (Ha-\dest);
                    \else
                        \path (I-\source) edge[black, opacity=0] (Ha-\dest);
                    \fi
                }

            % Connect every node in the hidden layer with the output layer
            \foreach \source in {1,...,5}
            \foreach \dest in {1,...,3}
                {
                    \pgfmathparse{int(1000*rand+1000)} % Uniform between 0 and 2000
                    \ifnum \pgfmathresult>1000
                        \path (Ha-\source) edge[black, opacity=1] (O-\dest);
                    \else
                        \path (Ha-\source) edge[black, opacity=0] (O-\dest);
                    \fi
                }

            %   Annotate the layers
            %  \node[annot,above of=Ha-1, node distance=1cm] (hl) {Hidden layer};
            %  \node[annot,left of=hl] {Input layer};
            %  \node[annot,right of=hl] {Output layer};
        \end{tikzpicture}
        \caption{\emph{Dropconnect}. Case $p=0.5$}
    \end{subfigure}
    \caption{(a) The base graph of a \gls{NN} consisting of two layers. Here, the number of input, hidden, and output nodes are $h = 4, f = 5, e = 3$ respectively. (b) A random subgraph being trained by \emph{Dropout}. When applying canonical \emph{Dropout}, we drop every node of the graph with probability $1-p$ in an independent, identically distributed fashion. (c) A random subgraph being trained by \emph{Dropconnect}. When applying \emph{Dropconnect}, we drop edges with probability $1-p$ in an independent, identically distributed fashion.}
    \label{fig:Random_two_layer_NNs_with_Dropout_and_Dropconnect}
\end{figure}
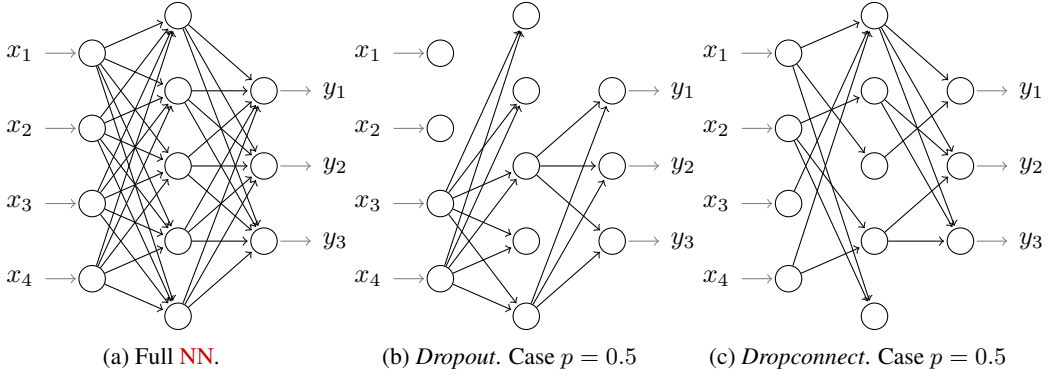

Dropout algorithms have been subject to extensive study in the literature. 
Most theoretical focus has been on the regularization effects of dropout algorithms \cite{hinton2012improving,baldi2013understanding,wager2013dropout,srivastava2014dropout,baldi2014dropout,cavazza2017dropout,mianjy2018implicit,mianjy2019dropout,pal2019regularization,wei2020implicit}, which are indeed a most welcome feature in practice. 
At the same time, it is known that a dropout algorithm executes stochastic gradient descent on an objective function that intricately depends on the probability distribution prescribing which of the weights are being trained each iteration \cite{mianjy2018implicit,mianjy2019dropout,pal2019regularization,senen2020almost}. 
Thus, one can anticipate that the additional variance introduced by a dropout algorithm will come at the cost of a lower convergence rate.

There are relatively few papers on the theoretical convergence properties of dropout algorithms. In \cite{senen2020almost}, it is proven that the iterates of a relatively large class of dropout algorithms converge asymptotically to a stationary set of a set of ordinary differential equations, when applying them to \glspl{NN} with polynomially bounded, smooth activation functions. The authors also show that when one applies \emph{Dropout} or \emph{Dropconnect} to deep linear \glspl{NN} structured as a tree consisting of $L$ layers, the convergence rate decreases by a factor $p^L$. In \cite{Mianjy2020OnCA}, a classification setting with shallow $\mathrm{ReLU}$ \glspl{NN} utilizing \emph{Dropout} is studied, and a nonasymptotic convergence rate of gradient descent for achieving $\epsilon$-suboptimality in the test error is derived. The result pertains to \glspl{NN} that are overparametrized, and relies on an assumption that the data distribution is separable with a margin in a particular reproducing kernel Hilbert space. Their derived convergence rate is independent of $1-p$, which as explained in \cite{Mianjy2020OnCA} is because of the separability assumption. 

The goal of the present paper is to further investigate the convergence rate of \emph{Dropout} and \emph{Dropconnect}, and in particular shed light on its dependency on properties of the data, the dropout probability $1-p$, and the structure of the \gls{NN}. To do so we investigate the convergence rate of the gradient flow on an objective function induced by \emph{Dropout} and \emph{Dropconnect} on shallow linear \glspl{NN}. The fact that there are relatively few convergence results regarding \glspl{NN} with dropout algorithms to build on \cite{senen2020almost,Mianjy2020OnCA}, combined with the additional challenges of a stochastic algorithm, means that at least for now, linear \glspl{NN} give the right balance of complexity and feasibility in order to obtain sharp rates. Our results in \refSection{sec:Results_shallow_linear_NN_local_convergence} show that these \glspl{NN} are sufficiently rich for new insights into the dependencies of the convergence rate on the data, the dropout probability, and the \glspl{NN}'s structure.

\subsection{Summary of results}

This paper investigates the convergence rate of the gradient flow of an ordinary differential equation that approximates the behavior of \emph{Dropout} and \emph{Dropconnect} when applying them to shallow linear \glspl{NN}. To see the relation, consider that both algorithms update the weights matrices $W = (W_2, W_1)$ of this \gls{NN} iteratively by setting
\begin{equation}
	\itr{W}{n+1}
	= 
	\itr{W}{n} - \itrd{\eta}{n} \itr{\Delta}{n+1}
	\label{eqn:Dropout_algorithm__Unprojected}
\end{equation}
for $n = 0, 1, 2$, \emph{et cetera}. Here, $\itr{\eta}{n} > 0$ denote the step sizes of the algorithm, and the $\itr{\Delta}{n+1}$ represent the random directions that result from the act of dropping edges or nodes and calculating the gradient of the resulting random \glspl{NN}. These random directions satisfy in shallow linear \glspl{NN} with whitened data \cite{mianjy2018implicit,senen2020almost}
\begin{align}
    \expectation{ \itr{\Delta}{n+1} \mid \itr{W}{0}, \ldots, \itr{W}{n} } 
    &
    = \nabla \mathcal{J}( \itr{W}{n} )
    ,
    \quad
    \textnormal{where}
    \nonumber \\     
    \mathcal{J}(W)
    &    
    = 
    \pnorm{Y - a W_2 W_1}{\mathrm{F}}^2 
    + b \trace{ \mathrm{Diag}(W_1W_1^T) \mathrm{Diag}(W_2W_2^T) }
    .
    \label{eqn:dropout_regular_intro}
\end{align}
Here, $Y$ is a matrix encoding the whitened data, and $\pnorm{\cdot}{\mathrm{F}}^2$ denotes the Frobenius norm. The constants $a, b$ have one closed-form expression in terms the probability $1-p$ of dropping nodes when using \emph{Dropout}, and another closed-form expression in terms of the probability $1-p$ of dropping edges when using \emph{Dropconnect} (see \refSection{sec:preliminaries_dropout_risk}). 
For diminishing step sizes $\itr{\eta}{n}$, we may therefore view both \emph{Dropout} and \emph{Dropconnect} schemes described in \eqref{eqn:Dropout_algorithm__Unprojected} as being noisy discretizations\footnote{
    Observe that the algorithm in \eqref{eqn:Dropout_algorithm__Unprojected} satisfies
$
    \itr{W}{n+1}
	= 
    \itr{W}{n} 
    + 
    \itrd{\eta}{n} 
    ( 
        - \nabla \mathcal{J}( \itr{W}{n} ) + \itr{M}{n+1} 
    )
$
where
$
    \itr{M}{n+1} 
    = 
    \expectation{ \itr{\Delta}{n+1} \mid \itr{W}{0}, \ldots, \itr{W}{n} } - \itr{\Delta}{n+1}
$
describes a \emph{martingale difference sequence}. This martingale difference sequence's expectation with respect to the past $\itr{W}{0}, \ldots, \itr{W}{n}$ is zero.
}
of the ordinary differential equation
\begin{equation}
    \frac{\d{W}}{\d{t}}
    = - \nabla \mathcal{J}( W(t) ).
    \label{eqn:Gradient_flow_introduction}
\end{equation}
This argument is not complete: to formally establish that the random iterates $\{ \itr{W}{n} \}$ indeed follow the trajectories of the gradient flow in \eqref{eqn:Gradient_flow_introduction}, one may employ the so-called \emph{ordinary differential equation method} \cite{bertsekas1996neuro,kushner2003stochastic,borkar2009stochastic,senen2020almost}. The present paper takes the relation however for granted and is about estimating the convergence rate of the gradient flow in \eqref{eqn:Gradient_flow_introduction}.

For shallow linear \glspl{NN}, \refTheorem{thm:main_proposition_maintext} in \refSection{sec:Results_shallow_linear_NN_local_convergence} gives an upper bound to the convergence rate for the gradient flow of $\mathcal{J}(W)$ when starting close to a minimizer. Informally stated, we prove that 
\begin{equation}
    \textnormal{if}
    \quad
    \frac{\d{W}}{\d{t}}
    = - \nabla \mathcal{J}( W(t) ),
    \quad
    \textnormal{then}
    \quad
    d( W(t), M)
    \leq \e{ - \omega t } d( W(0), M )
    \quad
    \textnormal{for some}
    \quad
    \omega > 0.
    \label{eqn:Simplified_main_result_of_the_paper}
\end{equation}
Here, $M$ denotes the closed set of global minimizers of \eqref{eqn:dropout_regular_intro}, and $d(x,M) = \inf_{y \in  M} \{ \abs{x-y} \}$ denotes the Euclidean distance between the point $x$ and the set $M$. If $W(t)$ converges to a so-called \emph{balanced} minimizer (see \refSection{sec:set_minima_assumptions}), then \refTheorem{thm:main_proposition_maintext} also gives an implicit characterization of $\omega$ that depends on the probability of dropping nodes or edges $1-p$, the number of hidden nodes $f$, and the singular values $\sigma_1, \ldots, \sigma_r$ of the data matrix $Y$. 

For the case of a one-dimensional output ($e=1$), we give a closed-form expression for $\omega$ whenever $W(0)$ is close enough to $M$. Informally stated, \refProposition{prop:convergence_final_case_one} in \refSection{sec:Results_shallow_linear_NN_local_convergence} implies that the upper bound  for $\omega$ satisfies
\begin{equation}
    \omega
    \approx 
    \frac{2p^2(1-p^2)}{p^2 f + 1-p^2} \sigma_1
    \quad
    \textnormal{for \emph{Dropconnect}, and}
    \quad
    \omega
    \approx 
    \frac{2p(1-p)}{p f + 1-p}\sigma_1
    \quad
    \textnormal{for \emph{Dropout}}.
    \label{eqn:Simplified_convergence_rate_Dropconnect_and_Dropout}
\end{equation}

Results \eqref{eqn:Simplified_main_result_of_the_paper} and \eqref{eqn:Simplified_convergence_rate_Dropconnect_and_Dropout} shed light on the convergence rate of \emph{Dropout} and \emph{Dropconnect} for linear \glspl{NN}. For example, in an overparametrized regime ($f \gg e = 1$), the convergence rates $\omega$ in \eqref{eqn:Simplified_convergence_rate_Dropconnect_and_Dropout} decay as $2 \sigma/f$. Furthermore, for every $f$, there is a choice of $p^*$ that maximizes $\omega$; this maximizer satisfies $p^{*} = 1/\sqrt{1+\sqrt{f}}$ for \emph{Dropconnect} and $p^{*} = 1/(1+\sqrt{f})$ for \emph{Dropout}.  
Lastly, it must be remarked that these results and insights also pertain to certain matrix factorization problems. Indeed, the minimization of \eqref{eqn:dropout_regular_intro} is in fact a matrix factorization problem with a regularizer induced by dropout. This was originally observed in \cite{cavazza2017dropout, mianjy2018implicit,mianjy2019dropout}. 

In order to prove \eqref{eqn:Simplified_main_result_of_the_paper} and \eqref{eqn:Simplified_convergence_rate_Dropconnect_and_Dropout}, we use a result in \cite{fehrman2019convergence} on the convergence of gradient flow for nonconvex objective functions. We combine this result with a careful analysis of the set of minimizers of the dropout loss, and of its Hessian. As a set, the set of minimizers $M$ has been characterized in \cite{cavazza2017dropout,mianjy2018implicit,mianjy2019dropout} for \emph{Dropout} and \emph{Dropconnect} and we build on their result. A related but different loss landscape analysis within the context of \glspl{NN} can be found in \cite{nguyen2017loss}. 

Formally, our lower bound to $\omega$ in \eqref{eqn:Simplified_convergence_rate_Dropconnect_and_Dropout} holds only close to $M$. Nonetheless, we expect that the iterates of a gradient descent counterpart should exhibit a similar decay with an exponent similar to our lower bound in \eqref{eqn:Simplified_convergence_rate_Dropconnect_and_Dropout} with enough iterations. To substantiate this claim, we show simulation results in \refSection{sec:Numerical_experiments} that compare numerically measured convergence rates to the rate in \eqref{eqn:Simplified_convergence_rate_Dropconnect_and_Dropout}. The simulations show that indeed, the convergence rate of the gradient descent counterpart exhibits similar qualitative dependencies as our bound in \eqref{eqn:Simplified_convergence_rate_Dropconnect_and_Dropout} for different initializations. Moreover when starting sufficiently close to a minimizer, the dependency of the numerically measured convergence rates on $f$ matches the decay provided by the bound $\omega$: this indicates that our bound in \eqref{eqn:Simplified_convergence_rate_Dropconnect_and_Dropout} bound is sharp.

Up to their conditions, \refTheorem{thm:main_proposition_maintext} and \refProposition{prop:convergence_final_case_one} achieve our goal of shedding light on the dependency of the convergence rate on properties of the data, the dropout probability, and the structure of the \gls{NN}. These new results add to the relatively scarce literature on the convergence properties of dropout algorithms and imply that the convergence rate of the stochastic \emph{Dropout} and \emph{Dropconnect} algorithms will intricately depend on the data, the dropout probability, and the \gls{NN}'s structure. By extension, we expect that this conclusion must also hold for nonlinear \glspl{NN}, though quantitatively establishing such fact requires more research.

\paragraph{Outline of the paper.} 
In \refSection{sec:Introduction}, we have introduced the problem and given an overview of the related literature on convergence results on \glspl{NN}, and for dropout algorithms. In \refSection{sec:Preliminaries}, we lay out notation and describe the dropout setting formally. \refSection{sec:Results_shallow_linear_NN_local_convergence} contains our main results and discussions thereof. In \refSection{sec:outline_proof} we outline the steps of the proof of the main results, while all details are relegated to the appendices. \refSection{sec:Numerical_experiments} compares our theoretical bound on the convergence rate, to numerical measurements of simulations of actual convergence rates. Finally, we conclude in \refSection{sec:Conclusion}. 
\section{Preliminaries}
\label{sec:Preliminaries}

\subsection{Shallow neural networks}

\glsreset{NN}
Consider the problem of finding a function $\Psi_W$ amongst a parametric family $\{ \Psi_{W}: \R^{h} \to \R^{e}, W \in \mathcal{P} \}$ that can best predict every ground truth output $t(x) \in \R^e$ belonging to any input $x \in \R^h$. Here, $\mathcal{P} \subseteq \R^d$ denotes a parameter space of dimension $d$, say. Usually, there is no access to the ground truth, and instead one uses a sample of $n$ pairs of input/output data points $\{(x_i, y_i) \}_{i=1}^{n} \subset \R^{h} \times \R^e$ and then aims to find the best function $\Psi_W$ in the parametric family by 
\begin{equation}
    \textnormal{minimizing}
    \quad
    \mathcal{R}(W) 
    \triangleq \sum_{i=1}^n \pnorm{y_{i} - \Psi_{W}(x_i)}{2}^2
    \quad
    \textnormal{over}
    \quad
    W \in \mathcal{P}.
    \label{eqn:loss_function_general_hypothesis}
\end{equation}
Here, $\mathcal{R}(W)$ is called the \emph{empirical risk}. In this paper, we assume that the data points are fixed, and we make no reference to their underlying distribution. 

Shallow \glspl{NN} constitute one parametric family used to solve \eqref{eqn:loss_function_general_hypothesis}. Concretely, let $e, f, h \in \naturalNumbersPlus$ denote the dimensions of the input, hidden, and output layer, respectively; see \refFigure{fig:Random_two_layer_NNs_with_Dropout_and_Dropconnect}. A shallow \gls{NN} with parameters 
$
W 
= (W_2,W_1) 
\in \R^{e \times f} \times \R^{f \times h} 
\triangleq \mathcal{P}
$
is then given by the function
\begin{align}
\Psi_{W}(x) = W_2\vartheta(W_1 x), 
\quad
\textnormal{where}
\quad
\vartheta : \R \to \R
\quad
\textnormal{is applied component-wise}.
\label{def:shallow_NN_model}
\end{align}
Here, the weights of the second and first layer are collected in the matrices $W_2$ and $W_1$, respectively. Common choices for the function $\vartheta$ include $\textrm{ReLU}(t) = \max \{ 0, t \}$ and $1/(1+\e{-t})$.

In this paper we focus on shallow linear \glspl{NN}, that is, the parametric family of functions spanned by $\Psi_{W}(x) = W_2 W_1 x$ so $\vartheta(t) = t$. For these \glspl{NN}, the optimization problem in \eqref{eqn:loss_function_general_hypothesis} is already quite challenging, as the empirical risk turns out to be nonconvex. Even though the expressiveness of a shallow linear \gls{NN} is low compared to e.g.\ a shallow \gls{NN} with $\vartheta(t) = \mathrm{ReLU}(t)$, their analysis is common in the optimization literature \cite{arora2018convergence,bah2019learning,Bartlett2018GradientDW}. The analyses of linear \gls{NN} are namely thought to give insight also into the optimization of nonlinear \glspl{NN}.

\subsection{Data whitening}

Data whitening is a preprocessing step that rescales the data points  such that their empirical covariance matrix equals the identity. Let $\mathcal{X} = (x_1, \ldots, x_n) \in \R^{h \times n}$ and $\mathcal{Y} = (y_1, \ldots, y_n) \in \R^{e \times n}$ be matrices containing the input and output data points, respectively. In order to be able to whiten the data, one must assume that $\mathcal{X} \mathcal{X}^T \in \R^{h \times h}$ is nonsingular. 

Under said assumption, define now the matrix $Y = \mathcal{Y} \mathcal{X}^T (\mathcal{X} \mathcal{X}^T)^{-1/2} \in \R^{e \times h}$, where $(\mathcal{X}\mathcal{X}^T)^{-1/2}$ is the inverse of the unique positive definite square root of $\mathcal{X}\mathcal{X}^T$. In \refAppendixSection{secappendix:data_whitening}, we derive that
\begin{equation}
\mathcal{R}(W)
= \pnorm{ \mathcal{Y} - W_2 W_1 \mathcal{X} }{\mathrm{F}}^2
= \pnorm{ Y - W_2 W_1 (\mathcal{X}\mathcal{X}^T)^{1/2} }{\mathrm{F}}^2 + c
\label{eqn:Data_whitened_objective_function}
\end{equation}
for some constant $c \in \realNumbers$ independent of $W$. Consequently, after data whitening, which applies a transformation $(W_2,W_1) \to (W_2, W_1 (\mathcal{X}\mathcal{X}^T)^{-1/2} ) $, we may focus on 
\begin{equation}
\textnormal{minimizing}
\quad
R(W) 
\triangleq \pnorm{ Y - W_2 W_1 }{\mathrm{F}}^2 
\quad
\textnormal{over}
\quad
W \in \mathcal{P}
\label{eqn:loss_linear_nowhitening}
\end{equation}
instead of the minimization problem in \eqref{eqn:loss_function_general_hypothesis}. The reader can find the explicit details of this transformation in \refAppendixSection{secappendix:data_whitening}.

\subsection{\emph{Dropout} and \emph{Dropconnect} on shallow, linear \texorpdfstring{\glspl{NN}}{NNs} with whitened data}
\label{sec:preliminaries_dropout_risk}

If the data samples are whitened first and kept fixed, then we can view a dropout algorithm on a shallow linear \gls{NN} as a stochastic algorithm that finds a stationary point of the objective function \cite{senen2020almost}
\begin{equation}
    \mathcal{J}(W) 
    = \expectationWrt{ R(F \odot W) }{}
    = \expectationWrt{ \pnorm{Y - (W_2 \odot F_2) (W_1 \odot F_1)}{\mathrm{F}}^2 }{}.
    \label{eqn:loss_dropout_linear_general}
\end{equation}
Here, $F \odot W$ denotes the component-wise product of each of the elements of the weight matrices $W = (W_2,W_1)$ by the elements of the two random matrices $F = (F_2, F_1) \in \{0,1\}^{e \times f} \times \{0,1\}^{f \times h}$. The expectation here is with respect to the distribution of $F$. \refLemma{lemma:dropout_loss_explicit_with_p} contains explicit expressions for \eqref{eqn:loss_dropout_linear_general} for the cases of \emph{Dropout} and \emph{Dropconnect}. Here, $p$ denotes the probability that a node or edge, respectively, remains. The proof of \refLemma{lemma:dropout_loss_explicit_with_p} is relegated to \refAppendixSection{secappendix:dropout_J}. 

\begin{lemma}
    \label{lemma:dropout_loss_explicit_with_p}
    When using \emph{Dropout},
    \begin{equation}
        \mathcal{J}(W)
        = 
        \norm{Y - p W_2 W_1}_{\mathrm{F}}^2 + (p - p^2) \mathrm{Tr}[ \mathrm{Diag}(W_1W_1^{\mathrm{T}}) \mathrm{Diag}(W_2W_2^{\mathrm{T}}) ].
        \label{eqn:dropout_regular}
    \end{equation}
    When using \emph{Dropconnect},
    \begin{equation}
        \mathcal{J}(W)
        = 
        \norm{Y - p^{2} W_2 W_1}_{\mathrm{F}}^2 + (p^2 - p^4) \mathrm{Tr}[ \mathrm{Diag}(W_1W_1^{\mathrm{T}}) \mathrm{Diag}(W_2W_2^{\mathrm{T}}) ].
        \label{eqn:dropconnect}
    \end{equation}
\end{lemma}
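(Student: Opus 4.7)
The strategy is to expand the Frobenius norm inside the expectation in \eqref{eqn:loss_dropout_linear_general}, push the expectation through by linearity, and then complete the square. Writing $A := (W_2 \odot F_2)(W_1 \odot F_1)$, one has
$\pnorm{Y - A}{\mathrm{F}}^2 = \pnorm{Y}{\mathrm{F}}^2 - 2 \langle Y, A \rangle_{\mathrm{F}} + \pnorm{A}{\mathrm{F}}^2$,
so the task reduces to computing $\expectation{A}$ for the cross term and $\expectation{\pnorm{A}{\mathrm{F}}^2}$ for the last term. The substantive ingredient is the second moment: expanding entrywise gives $\pnorm{A}{\mathrm{F}}^2 = \sum_{i,k} \bigl(\sum_j (W_2)_{ij}(F_2)_{ij}(W_1)_{jk}(F_1)_{jk}\bigr)^2$, a double sum over pairs $(j, j')$ of hidden-layer indices that I will split into diagonal ($j = j'$) and off-diagonal ($j \neq j'$) contributions. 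After collecting the resulting triple sums of the form $\sum_{i,j,k}(W_2)_{ij}^2 (W_1)_{jk}^2$, completing the square reassembles $\pnorm{Y - \alpha W_2 W_1}{\mathrm{F}}^2$ for an appropriate constant $\alpha \in \{p, p^2\}$, leaving the diagonal-minus-off-diagonal discrepancy as the advertised regularizer.

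For \emph{Dropconnect}, all entries of $F_1$ and $F_2$ are mutually independent $\mathrm{Bernoulli}(p)$, so $\expectation{A} = p^2 W_2 W_1$; moreover, the joint mean $\expectation{(F_2)_{ij}(F_2)_{ij'}(F_1)_{jk}(F_1)_{j'k}}$ equals $p^2$ when $j = j'$ (two distinct Bernoullis, each squared, and $\mathrm{Bernoulli}^2 = \mathrm{Bernoulli}$) and $p^4$ when $j \neq j'$ (four independent Bernoulli factors). Adding and subtracting the $j = j'$ contribution at the $p^4$ rate yields $\expectation{\pnorm{A}{\mathrm{F}}^2} = p^4 \pnorm{W_2 W_1}{\mathrm{F}}^2 + (p^2 - p^4) \sum_{i,j,k}(W_2)_{ij}^2 (W_1)_{jk}^2$, and the triple sum factorizes per fixed $j$ as $\bigl(\sum_i (W_2)_{ij}^2\bigr)\bigl(\sum_k (W_1)_{jk}^2\bigr)$, identifying it with the stated trace. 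Completing the square gives \eqref{eqn:dropconnect}. For \emph{Dropout}, the randomness is a single family $\xi_1, \ldots, \xi_f$ of independent $\mathrm{Bernoulli}(p)$ hidden-node indicators, with $(F_2)_{ij} = \xi_j = (F_1)_{jk}$ for all $i, k$, so $A = \sum_j \xi_j (W_2)_{\cdot, j}(W_1)_{j, \cdot}$ and $\expectation{A} = p W_2 W_1$; since $\expectation{\xi_j \xi_{j'}}$ equals $p$ on the diagonal (using $\xi_j^2 = \xi_j$) and $p^2$ off, the identical split produces $\expectation{\pnorm{A}{\mathrm{F}}^2} = p^2 \pnorm{W_2 W_1}{\mathrm{F}}^2 + (p - p^2) \sum_{i,j,k}(W_2)_{ij}^2(W_1)_{jk}^2$, and completing the square reproduces \eqref{eqn:dropout_regular}.

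I do not anticipate any serious obstacle: the only mild subtlety is the careful bookkeeping of which Bernoulli factors coincide in the $(j, j')$ double sum, and the identification of $\sum_{i,j,k}(W_2)_{ij}^2(W_1)_{jk}^2$ with $\trace{\mathrm{Diag}(W_1 W_1^{\mathrm{T}})\mathrm{Diag}(W_2 W_2^{\mathrm{T}})}$, read as a trace of the product of two diagonal matrices indexed by the hidden layer. The two formulas then differ only in the power of $p$ appearing in the second moment: four independent edge-indicators feed the Dropconnect calculation, yielding the factors $p^4$ and $(p^2 - p^4)$, whereas in Dropout a single node-indicator drives both factors $(F_2)_{ij}$ and $(F_1)_{jk}$ inside each summand, reducing the corresponding factors to $p^2$ and $(p - p^2)$.
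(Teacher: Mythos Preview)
Your proposal is correct and follows essentially the same approach as the paper: both compute $\expectation{A}$ and $\expectation{\pnorm{A}{\mathrm{F}}^2}$ by expanding entrywise and splitting the $(j,j')$ double sum into diagonal and off-diagonal contributions, then identify the resulting triple sum with the trace regularizer. The only cosmetic differences are that the paper organizes the Dropconnect calculation as a bias--variance split (adding and subtracting $p^2 W_2 W_1$ inside the norm, then computing the second moment via iterated conditioning on $F_1$) rather than completing the square at the end, and that the paper simply cites the literature for the Dropout formula whereas you prove it explicitly.
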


For convenience and without loss of generality, we now choose to scale both weight matrices $W_2, W_1$ by $1/\sqrt{p}$ in the case of \emph{Dropout}, and by a factor $1/p$ in \eqref{eqn:dropout_regular_intro} in the case of \emph{Dropconnect}. Concretely, this means that we will study the \emph{scaled risk function}
\begin{equation}
\mathcal{I}(W) 
\triangleq  \norm{ Y - W_2 W_1}_{\mathrm{F}}^2 + \lambda \textrm{Tr}[ \mathrm{Diag}(W_2^{\mathrm{T}} W_2) \mathrm{Diag}(W_1 W_1^{\mathrm{T}}) ]
\label{eqn:loss_whitened_scaled_dropout}
\end{equation}
with $\lambda = (1-p)/p$ in the case of \emph{Dropout}, and $\lambda = (1-p^2)/p^2$ in the case of \emph{Dropconnect}. The parameter $\lambda$ relates to the relative strength of the regularization term in either dropout algorithm and becomes large whenever the dropout probability $1-p$ increases.

\subsection{Characterization of the set of global minimizers}
\label{sec:Characterization_of_the_set_of_global_minimizers}

The set of global minimizers of \eqref{eqn:loss_whitened_scaled_dropout} have been characterized implicitly in \cite{mianjy2018implicit,mianjy2019dropout}. We build on one of their results, which we repeat here for your convenience. Concretely, let
\begin{equation}
M = \{ W \in \mathcal{P} : \mathcal{I}(W) = \inf_{s \in \mathcal{P}} \mathcal{I}(s) \}
\label{eqn:definition_M_maintext}
\end{equation}
be the set of global minimizers. Let the nonzero singular values of $Y$ be denoted by $\sigma_1 \geq \cdots \geq \sigma_{r}$ with $r \leq \min(e,h)$; and let the compact \gls{SVD} of $Y$ be $U_{\mathrm{c}} \Sigma_{Y} V_{\mathrm{c}}$ where thus $\Sigma_{Y} = \textrm{Diag}(\sigma_1, \ldots, \sigma_r)$ and $U_{\mathrm{c}}^TU_{\mathrm{c}} = V_{\mathrm{c}}V_{\mathrm{c}}^T =\mathrm{I}_r$. Introduce
\begin{equation}
    \kappa_j
    = \frac{1}{j} \sum_{i=1}^j \sigma_{i},
    \quad
    \rho
    = \max \Big \{  j \in [f] ~ : ~ \sigma_j > \frac{j \lambda \kappa_j}{f + j \lambda} \Big \},
    \quad
    \textnormal{and}
    \quad
    \alpha
    = \frac{ \rho \lambda \kappa_{\rho} }{ f + j \lambda }.
    \label{eqn:Definition_rho_and_alpha}
\end{equation}
Define now the \emph{shrinkage thresholding operator with threshold $\alpha$}, which applied to $Y$ is given by 
\begin{equation}
    \mathcal{S}_{\alpha}(Y)
    = U_{\mathrm{c}} (\Sigma_{Y} - \alpha \mathrm{I}_r)_+ V_{\mathrm{c}},
    \quad
    \textnormal{where}
    \quad
    ((\Sigma_{Y} - \alpha \mathrm{I}_r)_+)_{ii} = \max(0, \sigma_i - \alpha).
    \label{eqn:Shrinkage_thresholding_operator}
\end{equation}
By \cite[Theorem 3.4, Theorem 3.6]{mianjy2018implicit}: if $W^* = (W_2^*, W_1^*) \in M$ and $\rho < f$, then 
\begin{equation}
    \mathcal{W}^{*}
    = W_2^* W_1^*
    = \mathcal{S}_{\alpha}[Y]
    \quad
    \text{and}
    \quad
    \mathrm{Diag}((W^{*}_2)^{\mathrm{T}} W^*_2)\mathrm{Diag}(W^*_1 (W^*_1)^{\mathrm{T}})
    = \frac{\norm{\mathcal{W}^{*}}^2_1}{f^2} \mathrm{I}_{f}.
    \label{eqn:optimal_diagonal_main}
\end{equation}
If $f = \rho$, then in \eqref{eqn:optimal_diagonal_main} the conclusion on $\mathcal{W}^*$ must be replaced by the fact that $\mathcal{W}^{*}$ equals the rank-$f$ approximation of $\mathcal{S}_{\alpha}[Y]$.\footnote{This is perhaps not immediately clear in \cite[Theorem 3.6]{mianjy2018implicit}  for the case $\rho = f \leq r$. The fact that the rank-$f$ approximation must be used instead follows from the second-to-last step in the proof of \cite[Theorem 3.6]{mianjy2018implicit}.}

\subsection{Subsets of balanced and diagonally balanced minimizers}
\label{sec:set_minima_assumptions}

The notion of \emph{(approximately) balanced} weights has been found to be a sufficient condition for gradient descent on the objective function of deep linear \glspl{NN} to converge to their minima \cite{arora2018convergence,bah2019learning}. This has also been observed experimentally in \emph{Dropout} for shallow linear \glspl{NN} \cite{mianjy2018implicit}. It may therefore be little surprise that we too will use the notion of balanced weights in our convergence proof. 

\begin{definition}
    \label{definition:balanced_maintext}
    Weights $(W_2, W_1) \in \mathcal{P}$ are \emph{balanced} if $W_2^{\mathrm{T}} W_2 = W_1 W_1^{\mathrm{T}}$. Weights $(W_2, W_1) \in \mathcal{P}$ are \emph{diagonally balanced} if $\mathrm{Diag}(W_2^{\mathrm{T}} W_2) = \mathrm{Diag}(W_1W_1^{\mathrm{T}})$. Let
    \begin{gather}
        M_b
        = 
        \bigl\{ 
            W = (W_2, W_1) \in M : W_2^{\mathrm{T}} W_2 = W_1 W_1^{\mathrm{T}} 
        \bigr\},
        \quad
        \textnormal{and}
        \label{eqn:Definition_Mb}
        \\
        M_{db}
        = 
        \bigl\{ 
            W = (W_2, W_1) \in M : \mathrm{Diag}(W_2^{\mathrm{T}} W_2) = \mathrm{Diag}(W_1 W_1^{\mathrm{T}})
        \bigr\}
        \label{eqn:Definition_Mdb}
    \end{gather}
    be the sets of \emph{balanced minimizers}, and \emph{diagonally balanced minimizers}, respectively.
\end{definition}

We will characterize the sets $M_b, M_{db}$ explicitly as part of our proof. To that end, \refLemma{lemma:first_integral_main} contains a key observation that we will use. It is proven in \refAppendixSection{secappendix:Proof_that_Mdb_equals_Mb}.

\begin{lemma}
The set $M_b = M_{db}$, and is an invariant set for the gradient flow of \eqref{eqn:loss_whitened_scaled_dropout}.
\label{lemma:first_integral_main}
\end{lemma}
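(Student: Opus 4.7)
\textbf{Proof plan for \refLemma{lemma:first_integral_main}.} The inclusion $M_b \subseteq M_{db}$ is immediate from \refDefinition{definition:balanced_maintext}. For the reverse inclusion, I plan to use the first-order optimality conditions at a minimizer. Writing $A := W_2^{\mathrm{T}} W_2$, $B := W_1 W_1^{\mathrm{T}}$, $D_1 := \mathrm{Diag}(A)$, and $D_2 := \mathrm{Diag}(B)$, the conditions $\nabla_{W_2}\mathcal{I}(W) = 0$ and $\nabla_{W_1}\mathcal{I}(W) = 0$ read
\begin{equation*}
(Y - W_2 W_1) W_1^{\mathrm{T}} = \lambda W_2 D_2,
\qquad
W_2^{\mathrm{T}} (Y - W_2 W_1) = \lambda D_1 W_1.
\end{equation*}
Left-multiplying the first by $W_2^{\mathrm{T}}$ and right-multiplying the second by $W_1^{\mathrm{T}}$ gives two expressions for $W_2^{\mathrm{T}}(Y - W_2 W_1) W_1^{\mathrm{T}}$, whose equality yields the key algebraic identity $A D_2 = D_1 B$. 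For $W \in M_{db}$ one further has $D_1 = D_2 =: D$, while \eqref{eqn:optimal_diagonal_main} imposes $D_1 D_2 = (\norm{\mathcal{W}^{*}}_1/f)^2\, \mathrm{I}_f$; combined with $D \succeq 0$, these force $D = c\, \mathrm{I}_f$ with $c = \norm{\mathcal{W}^{*}}_1/f$. Substituting back into $A D_2 = D_1 B$ yields $cA = cB$. When $c > 0$ this gives $A = B$; when $c = 0$, both $\mathcal{W}^{*} = 0$ and $D = 0$ follow, forcing $W_2 = W_1 = 0$ and hence $A = B$ trivially.

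For the invariance statement, my plan is to prove the slightly stronger fact that the entire set $\{W \in \mathcal{P} : W_2^{\mathrm{T}} W_2 = W_1 W_1^{\mathrm{T}}\}$ of balanced weights is flow-invariant; invariance of $M_b$ then follows because $M$ itself is pointwise fixed by the gradient flow of $\mathcal{I}$. Let $X(t) := W_2(t)^{\mathrm{T}} W_2(t) - W_1(t) W_1(t)^{\mathrm{T}}$ along a trajectory of the gradient flow. Using $\dot W_2 = -\nabla_{W_2}\mathcal{I}$ and $\dot W_1 = -\nabla_{W_1}\mathcal{I}$ and expanding, the residual cross-terms $2\, W_1(Y - W_2 W_1)^{\mathrm{T}} W_2$ and $2\, W_2^{\mathrm{T}}(Y - W_2 W_1) W_1^{\mathrm{T}}$ appear identically in $\dot A$ and $\dot B$, so they cancel when subtracting, leaving
\begin{equation*}
\dot X
=
2\lambda \bigl[ \mathrm{Diag}(X)\, A + A\, \mathrm{Diag}(X) - D_1 X - X D_1 \bigr].
\end{equation*}
For a fixed trajectory $W(t)$ the right-hand side is linear in $X$ with continuous-in-$t$ coefficients, and $X \equiv 0$ is a solution; uniqueness of solutions to linear ODEs then forces $X(t) \equiv 0$ whenever $X(0) = 0$.

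The step I expect to require the most care is the cancellation of the residual cross-terms when forming $\dot X$, which must be tracked with attention to the symmetric positions in which they occur in $\dot A$ and $\dot B$. A secondary subtlety lies in invoking \eqref{eqn:optimal_diagonal_main} in the regime $\rho = f$: the footnote in \refSection{sec:Characterization_of_the_set_of_global_minimizers} only modifies the conclusion on $\mathcal{W}^{*}$ there, so the $\mathrm{Diag}$ identity should remain intact; if any doubt arises on this point, it can be rederived directly from the KKT identity $A D_2 = D_1 B$ produced in the first paragraph.
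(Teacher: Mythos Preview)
Your proof is correct, but the route for $M_{db}\subseteq M_b$ differs from the paper's. You argue algebraically: the stationarity identity $AD_2=D_1B$ together with diagonal balance $D_1=D_2=D$ and the characterization \eqref{eqn:optimal_diagonal_main} forces $D=c\,\mathrm{I}_f$ and hence $A=B$. The paper instead runs a dynamical argument that does \emph{not} invoke \eqref{eqn:optimal_diagonal_main}: it shows that along any diagonally balanced trajectory the difference $S=W_1W_1^{\mathrm T}-W_2^{\mathrm T}W_2$ satisfies $\dot S=-2\lambda(DS+SD)$, and then argues that if the trajectory converges (which it does trivially at a minimizer) the limit must be balanced. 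Your approach is shorter and more transparent, at the cost of importing the external result \eqref{eqn:optimal_diagonal_main}; the paper's approach is self-contained and yields the extra information that the full balanced and diagonally-balanced sets (not just their intersections with $M$) are flow-invariant. For the invariance claim itself, your $\dot X$ computation and linear-ODE uniqueness argument is essentially the paper's fact~(ii), phrased equivalently.

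Two small remarks. First, your fallback ``rederive from $AD_2=D_1B$'' is actually viable without \eqref{eqn:optimal_diagonal_main}: with $D_1=D_2=D$, symmetrizing $AD=DB$ gives $(D_{ii}+D_{jj})(A_{ij}-B_{ij})=0$, and the case $D_{ii}=D_{jj}=0$ is handled by the PSD inequality $|A_{ij}|^2\le A_{ii}A_{jj}$. This is exactly what the paper's ODE argument collapses to at a fixed point, so the two proofs are closer than they look. Second, since $M_b\subseteq M$ consists of fixed points of the flow, its invariance is immediate; your stronger statement about all balanced weights is correct and matches the paper, but is not strictly needed for the lemma as stated.
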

\section{Results}
\label{sec:Results_shallow_linear_NN_local_convergence}

\subsection{Assumptions}

We rely on the following assumptions. Both assumptions are mild and expected to hold in most cases as they rely on generic properties of matrices. For a brief discussion on these assumptions, we refer to \refAppendixSection{secappendix:on_the_assumptions}.

First, we limit the degree of symmetry of the set of global minimizers to be able to characterize $M_b$ explicitly. Concretely, we rely on the following assumption on the multiplicity of singular values:

\begin{assumption}
Let $r=\mathrm{rk}(Y) \leq \min(e,h)$ and let the positive singular values $\{ \sigma_i \}_{i=1}^r$ of $Y$  satisfy $\sigma_1 > \cdots > \sigma_r > 0$.
\label{ass:eigenvalues_different_maintext}
\end{assumption}

We also want $M_b$ to be smooth enough to characterize the local behavior of the gradient flow. Concretely, in a neighborhood of $W \in M_{b}$, we would like $M_b$ to be a proper submanifold of $\mathcal{P}$ without singular points. To guarantee that $M_b$ is a manifold `almost everywhere,' the following assumption suffices:

\begin{assumption}
There exists some $W \in M_b$ with full \gls{SVD} $W = (U \Sigma_2 S, S^{T} \Sigma_1 V)$ such that $S$ has no zero entries.
\label{ass:nonvanishing_maintext}
\end{assumption}

\subsection{Convergence rate of \texorpdfstring{gradient flow}{gradient flow} on \emph{Dropout} and \emph{Dropconnect}'s risk functions}

We are now in position to state our main result. Here, for $W \in M$,
\begin{equation}
	V_{R/2, \delta}(W) 
	= 
	\{ 
		x \in M \cap U
		: 
		d(x, M \cap U) 
		= d(x, \bar{B}_{R/2}(W) \cap M \cap U) 
		< \delta 
	\}
\label{eqn:definition_V_set}
\end{equation}
and
$
	\bar{B}_{R/2}(W)
	= \{ x \in \mathcal{P}: \pnorm{x - W}{} \leq R \}
$. 

\begin{theorem}
	\label{thm:main_proposition_maintext}
	Presume Assumptions~\ref{ass:eigenvalues_different_maintext}, \ref{ass:nonvanishing_maintext}. For a generic\footnote{ We understand generic here in an `almost everywhere' sense. $M$ is an algebraic variety defined as the zero locus of a set of polynomials from \eqref{eqn:optimal_diagonal_main}. A point $W \in M$ is smooth in $M$ whenever the rank of a Jacobian is maximal. Only at the points where the rank is not maximal we do not have generic points. This occurs only in an algebraic set of strictly lower dimension than that of $M$. Formally, a generic set of the algebraic variety $M$ consists of all $W \in M$ up to a proper Zariski closed set in $M$. See \cite{smith2004invitation} for reference.} $W \in M$, there exists a neighborhood $U_{W} \subseteq \mathcal{P}$ of $W$, $\delta_0 > 0$, and $R_0 > 0$ such that: for all $\delta \in (0, \delta_0]$, $R \in (0, R_0]$ and $\theta: \mathcal{P} \times [0, \infty) \to \mathcal{P}$ satisfying
	\begin{equation}
		\frac{ \d{\theta_t} }{ \d{t} }
		= - \nabla \mathcal{I}(\theta_t)
		\quad
		\textnormal{and}
		\quad
		\theta_0 \in V_{R/2,\delta}(W),
		\label{eqn:gradient_flow_on_I}
	\end{equation}
	there exists a $\omega_{U} > 0$ such that
	\begin{equation}
		d(\theta_t, U_{W} \cap M) \leq \exp(-\omega_{U} t) d(\theta_0, U_{W} \cap M)
		\quad
		\textnormal{for all}
		\quad
		t \in (0, \infty).
	\end{equation}	

	If moreover $W \in M_b$, then there exists an $\epsilon_{U} \geq 0$ such that $\omega_{U} \in [\omega_{W} - \epsilon_{U}, \omega_W + \epsilon_{U}]$, where 
	\begin{equation}
		\omega_{W}
		= \begin{cases}
		\min 
		\bigl\{
			2 \frac{\lambda \kappa_{\rho}\rho }{f + \lambda \rho} - 2\sigma_{\rho + 1}
			, 
			\zeta_W 
		\bigr\} &\textnormal{ if } \rho < f \\
		\min 
		\bigl\{
			2 (\sigma_{\rho} -\sigma_{\rho + 1})
			, 
			\zeta_W 
		\bigr\} &\textnormal{ if } \rho = f 
		\end{cases}.
		\label{eqn:convergence_rate_omega}
	\end{equation}
	Here, $\sigma_{\rho+1} = 0$ if $r = \rho$, and $\zeta_{W}>0$ depends implicitly on $W$, $p$ and $\sigma_1, \ldots, \sigma_r$.
\end{theorem}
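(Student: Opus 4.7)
The high-level strategy is to reduce the question to a local spectral analysis of the Hessian of $\mathcal{I}$ along the manifold of minimizers $M_b$, and then invoke the nonconvex gradient flow convergence result of~\cite{fehrman2019convergence} to promote a Łojasiewicz-type lower bound into exponential convergence to $M$. The first step is to establish the local manifold structure of $M_b$. By \refLemma{lemma:first_integral_main}, $M_b = M_{db}$ is invariant under the gradient flow of $\mathcal{I}$, so it suffices to understand the dynamics transverse to $M_b$. Under \refAssumption{ass:eigenvalues_different_maintext}, the nonzero singular values of the product $\mathcal{W}^{*} = \mathcal{S}_{\alpha}[Y]$ prescribed in~\eqref{eqn:optimal_diagonal_main} are simple; combined with \refAssumption{ass:nonvanishing_maintext}, the defining equations of $M_b$ (the product constraint $W_2 W_1 = \mathcal{W}^{*}$ together with the balance relation $W_2^{\mathrm{T}} W_2 = W_1 W_1^{\mathrm{T}}$) have a Jacobian of maximal rank on a Zariski-open subset of $M_b$. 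An implicit function theorem argument then yields, at any generic $W \in M_b$, a neighborhood $U_W \subseteq \mathcal{P}$ in which $M_b$ is a smooth embedded submanifold.

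The second step is the Hessian computation at $W \in M_b$. Writing the full SVDs $W_2 = U \Sigma_2 S$ and $W_1 = S^{\mathrm{T}} \Sigma_1 V$ supplied by \refAssumption{ass:nonvanishing_maintext}, and using the balance condition to force $\Sigma_1 = \Sigma_2$, one can rewrite the quadratic form $\nabla^2 \mathcal{I}(W)[\delta W, \delta W]$ in coordinates aligned with $U$, $V$, and $S$. In these coordinates, the reconstruction loss and the diagonal regularizer decouple into small blocks indexed by pairs of singular directions. The block structure produces three families of eigenvalues: (a) zero modes generated by the gauge symmetry $(W_2, W_1) \to (W_2 R, R^{-1} W_1)$ for orthogonal $R$ preserving $\Sigma_2, \Sigma_1$, which account precisely for the tangent space of $M_b$; (b) modes corresponding to singular directions of $Y$ not activated by the shrinkage threshold, which yield the eigenvalues $2\alpha - 2 \sigma_{\rho+1}$ when $\rho < f$ and $2(\sigma_{\rho} - \sigma_{\rho+1})$ when $\rho = f$; and (c) residual modes coupling the active singular directions with the diagonal regularizer, whose spectrum depends on $S$ and $p$ and contributes the implicit quantity $\zeta_W > 0$.

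The third step converts this spectral information into exponential contraction. The genericity of $W$ and \refAssumption{ass:nonvanishing_maintext} ensure that family (a) is exactly the tangent space of $M_b$ at $W$, so the smallest positive eigenvalue of $\nabla^2 \mathcal{I}(W)$ restricted to normal directions is $\omega_W$ as in~\eqref{eqn:convergence_rate_omega}. A local Łojasiewicz inequality of the form $\pnorm{\nabla \mathcal{I}(\theta)}{2}^2 \geq 2 (\omega_W - \epsilon_U)(\mathcal{I}(\theta) - \inf \mathcal{I})$ therefore holds on a neighborhood of $W$, with $\epsilon_U \to 0$ as the neighborhood shrinks. The hypotheses of the gradient flow theorem in~\cite{fehrman2019convergence} are thereby met on the sets $V_{R/2,\delta}(W)$ defined in~\eqref{eqn:definition_V_set} for sufficiently small $R, \delta$, and its conclusion provides the decay $d(\theta_t, U_W \cap M) \leq \exp(-\omega_U t) d(\theta_0, U_W \cap M)$ with $\omega_U \in [\omega_W - \epsilon_U, \omega_W + \epsilon_U]$.

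The main obstacle is the Hessian analysis in Step~2. Both the gauge symmetry and the rank constraint inside $M$ produce a large kernel of $\nabla^2 \mathcal{I}(W)$, so one must verify that no additional accidental zero eigenvalues appear beyond those spanning the tangent space of $M_b$ — otherwise the apparent rate would have to be $0$. This requires carefully combining \refAssumption{ass:eigenvalues_different_maintext} with \refAssumption{ass:nonvanishing_maintext} to rule out degeneracies, identifying which blocks of the rewritten Hessian must be strictly positive definite, and then tracking the smallest positive eigenvalue across all blocks. Producing the clean closed-form expressions involving $\alpha$, $\sigma_{\rho}$, and $\sigma_{\rho+1}$ — as opposed to an opaque lower bound — is the most technically demanding part of the argument, and is what enables the explicit consequences in \refProposition{prop:convergence_final_case_one}.
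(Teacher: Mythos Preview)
Your outline captures the correct high-level architecture --- local manifold structure, Hessian spectral analysis, then invoke \cite{fehrman2019convergence} --- but there is a genuine structural gap: you work throughout with $M_b$ where the theorem requires $M$.

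Concretely, the gauge symmetry you list in family~(a), namely $(W_2,W_1)\mapsto(W_2 R,R^{-1}W_1)$ for orthogonal $R$, is not the full symmetry of $\mathcal{I}$. The regularizer $\trace{\mathrm{Diag}(W_2^{\mathrm T}W_2)\,\mathrm{Diag}(W_1 W_1^{\mathrm T})}$ is also invariant under $(W_2,W_1)\mapsto(W_2 C,C^{-1}W_1)$ for any \emph{diagonal} invertible $C$, since diagonal matrices commute. These $f$ additional directions lie in $\ker\nabla^2\mathcal{I}(W)$ but are orthogonal to $\mathrm{T}_W M_b$; they are precisely the directions along which $M$ extends beyond $M_b$. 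Consequently, the Hessian restricted to $(\mathrm{T}_W M_b)^\perp$ has a nontrivial kernel, and your ``smallest positive eigenvalue restricted to normal directions'' would be zero if ``normal'' means normal to $M_b$. The paper resolves this by first using the diagonal group action $\pi$ of $H=(\mathbb R^*)^f$ to show $M=\pi(H)(M_b)$ (\refProposition{prop:exists_diagonal_reduction_Mb_to_M}), then identifying $\mathrm{T}_W M=\mathrm{T}_W M_b\oplus \mathrm{D}_W\pi(\mathcal H)$ (\refLemma{lemma:Tangent_space_at_W_of_M}), computing $\mathrm{T}_W^\perp M$ explicitly (\refLemma{lemma:tangent_cotangent_M}), and only then lower-bounding $\nabla^2\mathcal{I}(W)$ on that smaller normal space.

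A second consequence of the same gap is the first part of the theorem, which is stated for generic $W\in M$, not only $W\in M_b$. Your argument never leaves $M_b$. The paper closes this by showing that the diagonal action transports nondegeneracy: since $\mathcal{I}\circ\pi(C)=\mathcal{I}$, one has $\nabla^2\mathcal{I}(W)=\mathcal{C}^{\mathrm T}\nabla^2\mathcal{I}(\pi(C)W)\,\mathcal{C}$ for an invertible $\mathcal{C}$, so the rank is preserved along $H$-orbits (\refProposition{prop:nondeg_almost_everywhere}). Without this step you have convergence only near balanced minimizers, which is strictly weaker than what is claimed.
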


Note that \refTheorem{thm:main_proposition_maintext} gives an upper bound for the convergence rate of gradient flow on $\mathcal{I}(W)$ as long as we start close enough to the set of minima. Moreover, near balanced minimizers $(W_2^*, W_1^*) \in M_b$ a partially explicit bound is given in \eqref{eqn:convergence_rate_omega}. Note that to obtain an upper bound for the convergence rate for gradient flow on $\mathcal{J}(W)$ in \eqref{eqn:loss_dropout_linear_general}, we need to multiply $\omega_{U}$ by $p$ for \emph{Dropout} or $p^2$ for \emph{Dropconnect}.

If the output has dimension one ($e = 1$), then we prove the following special case of \refTheorem{thm:main_proposition_maintext}:

\begin{proposition}
	\label{prop:convergence_final_case_one}
	If the output dimension is one ($e=1$), then:
	(i) \refAssumption{ass:nonvanishing_maintext} holds, and
	(ii) we can replace  \eqref{eqn:convergence_rate_omega} in \refTheorem{thm:main_proposition_maintext} by
	\begin{equation}
		\omega_{U}
		\in [\omega_{W} - \epsilon_{U}, \omega_W + \epsilon_{U}]
		\quad
		\textnormal{where}
		\quad
		\omega_{W}
		= 2\frac{\lambda}{f + \lambda} \sigma_1.
		\label{eqn:convergence_rate_omega__case_e_1}
	\end{equation}
\end{proposition}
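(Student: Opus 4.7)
The plan is to derive \refProposition{prop:convergence_final_case_one} as a direct specialization of \refTheorem{thm:main_proposition_maintext} to $e=1$. Under \refAssumption{ass:eigenvalues_different_maintext}, $r=1$, so $Y = \sigma_1 V_c$ with a unit row $V_c \in \R^{1 \times h}$. By \eqref{eqn:optimal_diagonal_main}, any $W^{*} \in M$ satisfies $W_2^{*} W_1^{*} = \mathcal{S}_{\alpha}(Y) = (\sigma_1 - \alpha) V_c$, which has rank one. The balance condition $W_2^{*\mathrm{T}} W_2^{*} = W_1^{*} W_1^{*\mathrm{T}}$ therefore forces $W_1^{*} W_1^{*\mathrm{T}}$ to be rank one, so $W_1^{*} = \beta V_c$ for some $\beta \in \R^{f}$. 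Matching against the relation $W_2^{*} \beta V_c = (\sigma_1 - \alpha) V_c$ together with balance then pins down $W_2^{*} = \beta^{\mathrm{T}}$ with $\|\beta\|^2 = \sigma_1 - \alpha = f\sigma_1/(f+\lambda)$. Thus $M_b$ is diffeomorphic to the sphere of radius $\sqrt{f\sigma_1/(f+\lambda)}$ in $\R^f$.

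For claim (i), I would pick $\beta$ on this sphere with, for example, all entries equal in magnitude, so that the first row of the orthogonal matrix $S$ appearing in the joint SVD $W^{*} = (U \Sigma_2 S, S^{\mathrm{T}} \Sigma_1 V)$ equals $\beta^{\mathrm{T}}/\|\beta\|$. The remaining $f-1$ rows can be chosen (e.g.\ from a normalized DFT-type construction, or by a generic rotation in the orthogonal complement of $\beta$) so that $S$ has no zero entries, verifying \refAssumption{ass:nonvanishing_maintext}.

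For claim (ii), I would compute the quantities entering \eqref{eqn:convergence_rate_omega}. Since $\sigma_j = 0$ for $j \geq 2$, the definition of $\rho$ in \eqref{eqn:Definition_rho_and_alpha} forces $\rho = 1$, whence $\kappa_1 = \sigma_1$ and $\alpha = \lambda\sigma_1/(f+\lambda)$. The explicit term in \eqref{eqn:convergence_rate_omega} then evaluates to $2\lambda\sigma_1/(f+\lambda)$ when $f \geq 2$ (first branch, using $\sigma_{\rho+1}=\sigma_2=0$), and to $2\sigma_1$ when $f=1$ (second branch). It remains to evaluate $\zeta_{W}$: the plan is to expand $\mathcal{I}$ to second order around $W^{*}$, use $Y - W_2^{*}W_1^{*} = \alpha V_c$, and decompose a perturbation $(\delta_2, \delta_1)$ normal to $T_{W^{*}}M_b$ into a component aligned with $V_c$ and a component orthogonal to $V_c$. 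In the orthogonal component only the regularizer contributes, while in the aligned component the data-fit and regularization terms combine; using $\|\beta\|^2 = f\sigma_1/(f+\lambda)$, I expect the restricted transverse Hessian to have smallest eigenvalue exactly $2\lambda\sigma_1/(f+\lambda)$, giving $\zeta_W = 2\lambda\sigma_1/(f+\lambda)$.

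The main obstacle is this Hessian analysis: cleanly isolating $N_{W^{*}}M_b$ within the full perturbation space, diagonalizing the cross term $\nabla^2_{W_2,W_1}\mathcal{I}$ that couples $\delta_2$ with $\delta_1$, and exploiting the rank-one structure of $W^{*}$ to obtain the clean value $2\lambda\sigma_1/(f+\lambda)$ as the smallest transverse eigenvalue. Once this identification is established, \refTheorem{thm:main_proposition_maintext} immediately yields \eqref{eqn:convergence_rate_omega__case_e_1}, because $2\lambda\sigma_1/(f+\lambda)$ always attains the minimum: it equals the first-branch term for $f \geq 2$ and is strictly smaller than $2\sigma_1$ for $f=1$.
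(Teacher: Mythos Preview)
Your overall strategy—reduce to $\rho=r=1$ and evaluate the constants in \eqref{eqn:convergence_rate_omega}—matches the paper's route, but two points would derail the execution.

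First, $M_b$ is not a sphere: you used only the product constraint $W_2^*W_1^*=\mathcal{S}_\alpha[Y]$ and balance, but dropped the second condition in \eqref{eqn:optimal_diagonal_main}, namely $\mathrm{Diag}((W_2^*)^{\mathrm T}W_2^*)\,\mathrm{Diag}(W_1^*(W_1^*)^{\mathrm T})=(\|\mathcal{W}^*\|_1^2/f^2)\,\mathrm{I}_f$. With $W^*=(\beta^{\mathrm T},\beta V_c)$ this forces $\beta_i^4$ to be constant in $i$, so $|\beta_i|=\|\beta\|/\sqrt{f}$ for every $i$ and $M_b$ is a finite set of points, not an $(f{-}1)$-sphere. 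This is exactly how the paper dispatches claim~(i): the first row of $S$ equals $\beta^{\mathrm T}/\|\beta\|$, which automatically has no zero entries—no construction or DFT-type completion is needed.

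Second, and more seriously, the transverse Hessian in \refTheorem{thm:main_proposition_maintext} is taken on $T_W^\perp M$, not $T_W^\perp M_b$. Since $M=\pi(H)(M_b)$ with $H\simeq(\R^*)^f$ acting by $(W_2,W_1)\mapsto(W_2C,C^{-1}W_1)$, one has $T_W M = T_W M_b \oplus \mathrm{D}_W\pi(\mathcal{H})$; here $T_W M_b=\{0\}$ and the $f$ directions $(W_2D,-DW_1)=\bigl((D\beta)^{\mathrm T},-(D\beta)V_c\bigr)$ for diagonal $D$ span $T_W M$. These directions are orthogonal to your sphere (indeed $\langle(\gamma^{\mathrm T},\gamma V_c),(\delta^{\mathrm T},-\delta V_c)\rangle=\gamma^{\mathrm T}\delta-\gamma^{\mathrm T}\delta\,\|V_c\|^2=0$), hence they lie in your candidate normal space $N_{W^*}M_b$, and the Hessian vanishes on them—so minimizing over $N_{W^*}M_b$ would return $0$, not $2\lambda\sigma_1/(f+\lambda)$. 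The paper avoids this by explicitly cutting out $T_W^\perp M$ via the extra constraint $\mathrm{Diag}(K_2^{\mathrm T}W_2)=\mathrm{Diag}(K_1W_1^{\mathrm T})$ (see \refLemma{lemma:tangent_cotangent_M}), and then exploits the identity $|S_{1j}|^2=1/f$ through a Parseval argument to compute $\zeta_W=4\lambda\sigma_1/(f+\lambda)$ (not $2\lambda\sigma_1/(f+\lambda)$); the final $\omega_W=2\lambda\sigma_1/(f+\lambda)$ then comes from the \emph{other} term in the minimum.
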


While \refTheorem{thm:main_proposition_maintext} already hints at dependencies on the hyperparameters, \refProposition{prop:convergence_final_case_one} provides an upper bound for $\omega$ that explicitly depends on the singular value $\sigma_1$ of the data $Y$, the probability $1-p$ of dropping nodes (or edges) encoded in $\lambda$, and the number of nodes in the hidden layer $f$. In particular, we obtain from \refProposition{prop:convergence_final_case_one} the rates 
\begin{equation}
\omega_{W}^{\textnormal{DC}} 
= \frac{2(1-p^2)}{p^2 f + 1-p^2} \sigma_1,
\quad
\omega_{W}^{\textnormal{DO}} 
= \frac{2(1-p)}{p f + 1-p} \sigma_1,
\label{eqn:convergence_rate_Dropconnect_and_Dropout}
\end{equation}
also in \eqref{eqn:Simplified_convergence_rate_Dropconnect_and_Dropout}, after multiplying by the scaling $p$ and $p^2$ for the cases of \emph{Dropconnect}, \emph{Dropout}, respectively.

\subsection{Discussion}
\label{sec:Discussion}

\refTheorem{thm:main_proposition_maintext} yields a convergence rate that depends on the singular values of the data matrix $Y$, the dropout probability $1-p$, and the structure parameters $e, f, h$ of the \gls{NN}. Observe that the rate $\omega_W$ in \eqref{eqn:convergence_rate_omega} is the minimum of two terms. The first term $\zeta_W$ depends on the point $W$ as well as $p$ and gives a local perspective on the convergence rate's dependency on the initialization and dropout probability (see \refAppendixSection{sec:Proof_of_the_lower_bound_to_the_Hessian_restricted_to_directions_normal_to_the_manifold_of_minima} for its exact dependency)---for our purposes, the fact that it is strictly positive suffices. The second term is namely independent of $W \in M_b$ and provides a more global perspective on the convergence rate's dependency on the data matrix, the dropout probability, and the structure parameters. It is furthermore noteworthy that the dependency on $\zeta_{W}$ disappears in the case $e = 1$, as evidenced from \refProposition{prop:convergence_final_case_one}.

We obtain the rates in \eqref{eqn:Simplified_convergence_rate_Dropconnect_and_Dropout} from \refProposition{prop:convergence_final_case_one} through a multiplication using the scalings discussed above \eqref{eqn:loss_whitened_scaled_dropout}. We observe then that \emph{Dropout} and \emph{Dropconnect} have an impaired convergence rate: the convergence rate in \eqref{eqn:Simplified_convergence_rate_Dropconnect_and_Dropout} is reduced by a factor $p$ in the case of \emph{Dropout}, and $p^2$ in case of \emph{Dropconnect}. This is in agreement with the results in \cite{senen2020almost}. Observe furthermore from \eqref{eqn:Simplified_convergence_rate_Dropconnect_and_Dropout}  that as $p \uparrow 1$, i.e., a regime without dropout, $\omega \downarrow 0$. This tells us that for small dropout rates, convergence is apparently slow for some trajectories of the gradient flow problem. This is explained by the fact that for $p \approx 1$, points $W$ satisfying $Y = W_2 W_1$ are almost minimizers of $\mathcal{J}(W)$. Finding an exact minimizer becomes then less important since there is almost no regularization.

Note also that the rates in \eqref{eqn:Simplified_convergence_rate_Dropconnect_and_Dropout} tell us that in the overparametrized regime $f \gg e = 1$, for every $f$ there is a dropout probability $1-p^*$ that maximizes the rate $\omega$. Solving $\d{\omega}/\d{p} = 0$ shows that,
\begin{align}
	p^{*} 
	&
	= \frac{1}{\sqrt{1+\sqrt{f}}} \sim \frac{1}{f^{1/4}}
	\quad 
	\textnormal{for \emph{Dropconnect}, and} 
	\quad
	p^{*} 
	= \frac{1}{1+\sqrt{f}} \sim \frac{1}{f^{1/2}}
	\quad 
	\textnormal{for \emph{Dropout}}.
	\label{eqn:optimal_p_maximizing_omega}
\end{align}
Setting $p^{*}$ as in \eqref{eqn:optimal_p_maximizing_omega} still implies that the maximizing convergence rate is $\omega^{*} \sim 2\sigma /f$. Hence, the maximizing dropout probability $1-p^*$ will still have limited influence on the convergence rate in this regime. To see this more explicitly, consider that for \emph{Dropout} the best rate $\omega^{*} = \omega(p^{*})$ compared to the rate when choosing a generic dropout probability $1-p \in ( \delta, 1 -\delta)$ satisfies $\omega(p)/\omega^{*} \gtrsim \delta$.

Lastly, let us also consider the matrix factorization problem in which $f \ll e, h$. It follows from \refTheorem{thm:main_proposition_maintext} that when doing matrix factorization with \emph{Dropout} regularization, degeneracies at the minimum are avoided when $\mathrm{rk}(Y) < e, h$, i.e., when the data is of low rank. If $Y$ does not have full rank, then the minima of the usual risk function for matrix factorization $\pnorm{Y - W_2 W_1}{\mathrm{F}}$ are degenerate, and this may impair convergence of gradient descent. In contrast, for the objective function in \eqref{eqn:loss_whitened_scaled_dropout} with a $\lambda > 0$, the set of minima $M$ around a point $W \in M_b$, only becomes degenerate as $p \uparrow 1$. Indeed, we have $2 {\lambda \kappa_{\rho}\rho } / ({f + \lambda \rho}) \downarrow 0$ as $p \uparrow 1$ (recall that $\zeta_{W} > 0$ for any $p \in (0,1)$). When $\rho = f$, we see that up to the term $\zeta_{W}$, there is no dependence on $1-p$ on the convergence rate. Dependence starts appearing when we have $\sigma_{f} \simeq 2 {\lambda \kappa_{\rho}\rho } / ({f + \lambda \rho})$.

\section{Proofs}
\label{sec:outline_proof}

The proofs of \refTheorem{thm:main_proposition_maintext} and \refProposition{prop:convergence_final_case_one} are based on two ideas. The first idea is that the trajectories of a gradient flow, when starting close to a minimizer in $W^* \in M$, should depend to leading order only on the Hessian $\nabla^2 \mathcal{I}(W^*)$. However, when $M$ is a connected set (or a manifold in this case), this may not be true. For it to hold we need the point $W^*$ to be \emph{nondegenerate}, in the sense that directions tangent to the manifold $M$ are included in the kernel of the Hessian and other directions must be orthogonal to $M$ and not in the kernel. The gradient flow to $M$ can then be locally bounded using the eigenvalues of $\nabla^{2} \mathcal{I}$. As it will turn out, `almost every' point in $M$ is nondegenerate. The second idea is that we can give an explicit lower bound to the eigenvalues of the Hessian by restricting to directions orthogonal to $M$. This requires careful computations and is the most involved part of the proof.

\subsection{Overview}

Here is an overview of the steps that will prove \refTheorem{thm:main_proposition_maintext} and \refProposition{prop:convergence_final_case_one}:

\begin{itemize}
	\item[\emph{Step 1.}] We formalize the first idea by relating a lower bound on the Hessian to the convergence rate of gradient flow by using a recent result on nonconvex optimization \cite{fehrman2019convergence}. This result holds whenever the gradient flow is started close to a minimizer in $M$, and requires the minimizer to be nondegenerate. Therefore, to prove \refTheorem{thm:main_proposition_maintext}, we next need to explicitly compute a lower bound to the Hessian in directions orthogonal to $M$ and verify the nondegeneracy condition.

	\item[\emph{Steps 2, 3.}] We reduce the set of minimizers $M$ to the set of balanced minimizers $M_b$ using a group action. The set of balanced minimizers is namely easier to handle: we can prove that $M_b$ is, up to a set of lower dimension than that of $M_b$, a manifold, i.e., $M_b$ is \emph{generic}. We compute the tangent space $\mathrm{T}_W M_b$ explicitly at a generic point $W$.\footnote{By this, we mean `for any $W \in M_b$ up to an algebraic set of lower dimension than $M$' (formally, `up to a proper closed Zariski set in the algebraic variety $M_b$').} Using the group action again, we can then also obtain the tangent space $\mathrm{T}_W M$ by extending the results from the set of balanced minimizers to the set of minimizers.

	\item[\emph{Steps 4, 5.}] We compute the Hessian $\nabla^2 \mathcal{I}$ and calculate a lower bound when $W \in M_b \subset M$. This also implies immediately that $W$ is nondegenerate in $M$. Leveraging the group action again, we can then show that all generic points in $M$ are nondegenerate.

	\item[\emph{Step 6.}] Finally, we combine the result of \emph{Step 1} with the bound and nondegeneracy property in \emph{Steps 4, 5} to prove \refTheorem{thm:main_proposition_maintext} and \refProposition{prop:convergence_final_case_one}.
\end{itemize}

\subsection{Key steps}

We now prove \refTheorem{thm:main_proposition_maintext} and \refProposition{prop:convergence_final_case_one} step by step as listed previously. The detailed proofs of each proposition presented here can be found in the Appendix.

\reversemarginpar

% \noindent
% \emph{Step 1.}
We \marginnote{\emph{Step 1.}} use a recent result on the convergence rate of gradient descent methods for general objective functions \cite{fehrman2019convergence}, in which local convergence in a neighborhood $U_{W}$ of $W \in M$ is guaranteed by local nondegeneracy of the Hessian:

\begin{definition}
	A set $M \subset \mathcal{P}$ of minimizers of $\mathcal{I}(W)$ is \emph{locally nondegenerate at $W$} if there exists a neighborhood $U \subseteq \mathcal{P}$ of $W$, such that: 
	\begin{itemize}[topsep=2pt,itemsep=2pt,partopsep=2pt,parsep=2pt,leftmargin=0pt]
		\item[(i)] $M \cap U$ is a submanifold of $\mathcal{P}$, and 
		\item[(ii)] for any $p \in M \cap U$, $\dim \mathrm{T}_{p} (M \cap U)) = \dim \ker \nabla^2 \mathcal{I}(p)$.
	\end{itemize}
	\label{definition:non_degenerate}
	We also say that the set $M \cap U$ is \emph{nondegenerate} if it is locally nondegenerate at any $W \in M \cap U$.
\end{definition}

Concretely, our first step is to prove the following specification of the bound in \cite[Proposition 3.1]{fehrman2019convergence}. The details are relegated to \refAppendixSection{secappendix:local_convergence_GF}.

\begin{proposition}[Adaptation of Proposition 3.1 in \cite{fehrman2019convergence}]
	Let $U \subseteq \R^{d}$ be an open subset and let $f: U \to \R$ be three times continuously differentiable. Let
	$
		M = \{ w \in \R^{d} : f(w) = \inf_{\mathcal{\theta} \in \R^{d}}f(\mathcal{\theta}) \}
	$
	and suppose that $U \cap M$ is a nonempty differentiable submanifold of $\R^d$ of dimension $\mathfrak{d} < d$. Suppose also that for all $p \in M \cap U$, $d - \mathfrak{d} = \mathrm{rk}(\nabla^2 f(p))$ holds. Then, for any $x_0 \in M \cap U$ there exists $R_0, \delta_0, \lambda \in (0, \infty)$ such that: for all $\delta \in (0, \delta_0]$, $R \in (0, R_0]$ and $\theta : (0, \infty) \to \R^d$ satisfying
	$
		\d{} \theta_t / \d{t}
		= -\nabla f (\theta_t)
	$
	and
	$
		\theta_0 \in V_{R/2, \delta}(x_0),
	$
	it holds that
	\begin{equation}
		d(\theta_t, M \cap U) \leq \exp(-\lambda t) d(\theta_0, M \cap U)
		\quad
		\textnormal{for all}
		\quad
		t \in (0, \infty)
	\end{equation}
	where specifically
	\begin{equation}
		\lambda = \min_{w \in \bar{V}_{R_0 , \delta_0}(W)} \min_{\substack{\norm{v} = 1\\ v \in \ker \nabla^2 f(w)^{\perp}}} \abs{v^T \nabla^2 f(w) v}.
		\label{eqn:lambda_cotangent_space}
	\end{equation}
	\label{prop:lambda_cotangent_space_is_enough}
\end{proposition}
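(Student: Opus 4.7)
The plan is to exploit the key structural consequence of the rank hypothesis: since $M \cap U$ is a submanifold of minimizers, differentiating $\nabla f \equiv 0$ along any curve in $M \cap U$ through $p$ shows that $T_p(M \cap U) \subseteq \ker \nabla^2 f(p)$, and the assumed rank equality $d - \mathfrak{d} = \mathrm{rk}(\nabla^2 f(p))$ forces
\[
T_p(M \cap U) \;=\; \ker \nabla^2 f(p).
\]
Hence $\nabla^2 f(p)$ is positive semidefinite, with $\nabla^2 f(p)|_{T_p M^{\perp}}$ strictly positive definite. Since $f$ is $C^3$ and $\bar V_{R_0, \delta_0}(x_0)$ will be compact for $R_0, \delta_0$ small, continuity of the eigenvalues combined with this identification yields a uniform lower bound $\lambda > 0$, which is exactly the minimum in \eqref{eqn:lambda_cotangent_space}.

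Next I would set up a tubular neighborhood argument. For $R_0, \delta_0$ sufficiently small, there exists an open neighborhood $\mathcal{N}$ of $\bar V_{R_0, \delta_0}(x_0)$ on which the nearest-point projection $\pi: \mathcal{N} \to M \cap U$ is well defined and smooth. Given a gradient-flow trajectory $\theta_t$ with $\theta_0 \in V_{R/2, \delta}(x_0)$, set $n_t = \theta_t - \pi(\theta_t)$ so that $d(\theta_t, M \cap U)^2 = \|n_t\|^2$ whenever $\theta_t \in \mathcal{N}$. Differentiating and using the two standard properties of the nearest-point projection---namely $n_t \perp T_{\pi(\theta_t)}(M \cap U)$, and the image of $\dot\theta_t$ under $D\pi$ lies in $T_{\pi(\theta_t)}(M \cap U)$---one obtains
\[
\tfrac{1}{2}\,\frac{d}{dt}\|n_t\|^2 \;=\; \langle n_t, \dot\theta_t \rangle \;=\; -\langle n_t, \nabla f(\theta_t) \rangle.
\]
A second-order Taylor expansion of $\nabla f$ around $\pi(\theta_t)$ (using $\nabla f(\pi(\theta_t)) = 0$) gives $\nabla f(\theta_t) = \nabla^2 f(\pi(\theta_t))\, n_t + O(\|n_t\|^2)$, and because $n_t \in (\ker \nabla^2 f(\pi(\theta_t)))^{\perp}$, the previous step delivers $\langle n_t, \nabla^2 f(\pi(\theta_t))\, n_t \rangle \geq \lambda \|n_t\|^2$.

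Combining these estimates yields the differential inequality
\[
\frac{d}{dt}\|n_t\|^2 \;\leq\; -2\lambda \|n_t\|^2 \,+\, C\,\|n_t\|^3,
\]
for a constant $C$ controlled by $\|f\|_{C^3}$ on $\mathcal{N}$. By shrinking $R_0, \delta_0$ so that $C\|n_t\| \leq \eta$ along the trajectory for any preassigned $\eta > 0$, Grönwall's inequality produces exponential decay of $\|n_t\|^2$ at rate $2(\lambda - \eta)$, and a final harmless relabeling absorbs this loss into $\lambda$. The main obstacle in executing this plan is the \emph{forward invariance} of the tubular neighborhood: one must ensure that $\theta_t$ remains in $\mathcal{N}$ for all $t \geq 0$ and that the foot $\pi(\theta_t)$ does not drift out of the closed set over which $\lambda$ was defined. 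This is precisely the role of the auxiliary set $V_{R/2, \delta}(x_0)$ in \eqref{eqn:definition_V_set}, whose definition is engineered so that $\pi(\theta_0)$ lies in a strictly smaller ball around $x_0$ than $\bar B_{R_0}$; combined with the monotone decay of $\|n_t\|$ and the bound $\|\dot\theta_t\| = \|\nabla f(\theta_t)\| = O(\|n_t\|)$ on the tangential drift of $\pi(\theta_t)$, one can close a standard bootstrap and read off the explicit formula \eqref{eqn:lambda_cotangent_space} for $\lambda$.
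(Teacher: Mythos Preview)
Your approach and the paper's differ substantially. The paper does \emph{not} give a self-contained proof: it treats the proposition as a specification of \cite[Proposition~3.1]{fehrman2019convergence}, which already establishes the exponential decay with some implicit rate. The paper's entire contribution is to trace where that rate enters the cited proof (via \cite[Lemma~2.9]{fehrman2019convergence} at their equation~(3.11)) and then verify that the explicit candidate $\tilde\lambda$ from \eqref{eqn:lambda_cotangent_space} satisfies three technical conditions extracted from that lemma---a positivity condition, a contraction condition of the form $\|(x-x_*) - r\nabla^2 f(x_*)(x-x_*)\| \leq (1-r\tilde\lambda)\|x-x_*\|$ for a fixed small $r>0$, and the quadratic lower bound $\langle \nabla^2 f(x_*)(x-x_*), x-x_*\rangle \geq \tilde\lambda\|x-x_*\|^2$. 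Once these are checked (using the nondegeneracy $T_p(M\cap U)=\ker\nabla^2 f(p)$, which you also correctly identify), the cited machinery delivers the conclusion verbatim.

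Your tubular-neighborhood plus Taylor plus Gr\"onwall argument is a legitimate independent route and is correct in spirit, but the final ``harmless relabeling'' is not actually harmless. The differential inequality $\tfrac{d}{dt}\|n_t\|^2 \leq -2\lambda\|n_t\|^2 + C\|n_t\|^3$ yields rate $\lambda-\eta$, not $\lambda$, and the formula \eqref{eqn:lambda_cotangent_space} is tied to the \emph{specific} $R_0,\delta_0$ you chose: shrinking them to make $\eta$ small also changes $\lambda$ (typically increasing it), so you cannot simply redefine your way back to the stated formula with constant~$1$ in front of the exponential. The paper avoids this by plugging directly into the discrete-step style contraction condition~(ii) from \cite{fehrman2019convergence}, which is designed to deliver rate exactly $\lambda$ without an $\eta$ loss. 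Your argument would prove the result up to an arbitrarily small loss in the rate or an arbitrarily-close-to-$1$ multiplicative constant, which is morally the same statement but not literally what is written.
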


Observe now that \refTheorem{thm:main_proposition_maintext} almost follows from \refProposition{prop:lambda_cotangent_space_is_enough} by identifying the function $f$ with the loss function $\mathcal{I}$---that is, up to \refTheorem{thm:main_proposition_maintext}'s conditions and up to \eqref{eqn:convergence_rate_omega}. Eq.\ \eqref{eqn:convergence_rate_omega} is in fact a lower bound to \eqref{eqn:lambda_cotangent_space}, and the conditions are what allow us to lower bound \eqref{eqn:lambda_cotangent_space} in the first place.

To see where the conditions of \refTheorem{thm:main_proposition_maintext} come from and how the bound in \eqref{eqn:convergence_rate_omega} is obtained, consider the following approach. Suppose for a moment that we were given some open subset $U$ that meets the conditions of \refProposition{prop:lambda_cotangent_space_is_enough} and that $M$ were nondegenerate. If these hypotheses were true, then the convergence rate in \eqref{eqn:lambda_cotangent_space} could be bounded by providing for each $W \in M \cap U$ a lower bound to the Hessian $\nabla^2 \mathcal{I}$ restricted to $T_W^\perp M$. This is because the nondegeneracy of $M$ would imply that for any  $W \in M$, $\ker \nabla^2 \mathcal{I}(w) = \mathrm{T}_{W} M$ and therefore $\ker \nabla^2 \mathcal{I}(w)^{\perp} = \mathrm{T}^{\perp}_{W} M$, and \eqref{eqn:definition_V_set} would then imply that $\overline{V}_{R,\delta}(x_0) \subseteq M \cap U$. 

The two hypotheses used in the approach above have however not been proven. Instead, we will first prove that for a generic $W \in M$ there exists a neighborhood $U$ satisfying the conditions of \refProposition{prop:lambda_cotangent_space_is_enough} (\emph{Steps 2, 3}), and this turns out to be sufficient. After this, we will establish that $\nabla^2 \mathcal{I}(W)|_{T_{W}^{\perp} M}$ is positive definite (\emph{Step 4}), and then we lower bound its minimum eigenvalue (\emph{Step 5}) which allows us to approximately characterize $\omega$ in \refTheorem{thm:main_proposition_maintext}.

% \noindent
% \emph{Step 2.} 
We \marginnote{\emph{Step 2.}} start by characterizing $M$ using $M_b$ and a Lie group action on $M$. Let $H \simeq (\R^{*})^{f}$ be the Lie group of invertible diagonal matrices, where $\R^{*} = \R \backslash \{0\}$ is the multiplicative group of invertible elements in $\R$. We embed $H$ in $\R^{f \times f}$ via the diagonal inclusion $(a_1, \ldots, a_f) \to \mathrm{Diag}(a_1, \ldots, a_f) \in \R^{f \times f}$, and define the action $\pi$ of $C \in H$ on $M$ by 
\begin{equation}
\pi(C)(W_2, W_1) = (W_2 C, C^{-1} W_1).
\label{eqn:Action_pi}
\end{equation}

The action $\pi$ can be used to reduce $M$ to $M_b$, as formalized in \refProposition{prop:exists_diagonal_reduction_Mb_to_M}. We refer to \refAppendixSection{secappendix:reduction_M_M_b} for its proof.

\begin{proposition}
	\label{prop:exists_diagonal_reduction_Mb_to_M}
	For every $W \in M$ there exists a unique $C_W \in H$ such that $\pi(C_W)(W) \in M_{b}$.
\end{proposition}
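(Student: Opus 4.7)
The goal is to turn the matrix/manifold problem into a diagonal-entrywise one by observing that the action $\pi$ acts diagonally on the two quantities appearing in the characterization of $M$, namely $W_2W_1$ and $\mathrm{Diag}(W_2^{\mathrm{T}}W_2)\mathrm{Diag}(W_1W_1^{\mathrm{T}})$. Once this is established, the condition $\pi(C_W)W \in M_{db}$ reduces to $f$ decoupled scalar equations, each in one unknown $c_i$, and the existence/uniqueness can be read off directly.

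\textbf{Step 1: $\pi$ preserves $M$.} For $C = \mathrm{Diag}(c_1,\dots,c_f)\in H$ and $W = (W_2,W_1)$, a direct check gives $(W_2 C)(C^{-1}W_1) = W_2 W_1$, so the first condition $W_2^*W_1^* = \mathcal{S}_{\alpha}[Y]$ in \eqref{eqn:optimal_diagonal_main} is preserved. Next, $(W_2 C)^{\mathrm{T}}(W_2 C) = CW_2^{\mathrm{T}}W_2 C$ and $(C^{-1}W_1)(C^{-1}W_1)^{\mathrm{T}} = C^{-1}W_1W_1^{\mathrm{T}}C^{-1}$. Because $C$ is diagonal, the diagonals scale by $c_i^2$ and $c_i^{-2}$ respectively, so their product satisfies
\begin{equation*}
\mathrm{Diag}\bigl((W_2 C)^{\mathrm{T}}(W_2 C)\bigr)\,\mathrm{Diag}\bigl((C^{-1}W_1)(C^{-1}W_1)^{\mathrm{T}}\bigr)
= \mathrm{Diag}(W_2^{\mathrm{T}}W_2)\,\mathrm{Diag}(W_1 W_1^{\mathrm{T}}),
\end{equation*}
so the second condition in \eqref{eqn:optimal_diagonal_main} is also preserved. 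Hence $\pi(C)W \in M$.

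\textbf{Step 2: the balancing condition becomes scalar.} By \refLemma{lemma:first_integral_main} we have $M_b = M_{db}$, so it suffices to find $C_W$ with $\mathrm{Diag}((W_2 C_W)^{\mathrm{T}}(W_2 C_W)) = \mathrm{Diag}((C_W^{-1}W_1)(C_W^{-1}W_1)^{\mathrm{T}})$. Using the computation above, this becomes $c_i^2 (W_2^{\mathrm{T}}W_2)_{ii} = c_i^{-2}(W_1W_1^{\mathrm{T}})_{ii}$, i.e.\
\begin{equation*}
c_i^{4} = \frac{(W_1 W_1^{\mathrm{T}})_{ii}}{(W_2^{\mathrm{T}}W_2)_{ii}}, \qquad i = 1,\dots,f.
\end{equation*}
Existence of a (unique positive) real solution therefore reduces to showing both $(W_2^{\mathrm{T}}W_2)_{ii}$ and $(W_1W_1^{\mathrm{T}})_{ii}$ are strictly positive.

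\textbf{Step 3: strict positivity and uniqueness.} This is the only place where we use that $W \in M$, not just some arbitrary element of $\mathcal{P}$. The diagonal entries of $W_2^{\mathrm{T}}W_2$ and $W_1W_1^{\mathrm{T}}$ are squared column/row norms and hence nonnegative. By \eqref{eqn:optimal_diagonal_main}, their product equals $\|\mathcal{W}^*\|_1^2/f^2$; since a nontrivial minimizer has $\mathcal{W}^* = \mathcal{S}_{\alpha}[Y] \neq 0$ (guaranteed by $\rho \geq 1$, the situation in which the proposition is used), this product is strictly positive and both factors must be positive. The scalar equation then has a unique positive solution $c_i = \bigl((W_1W_1^{\mathrm{T}})_{ii}/(W_2^{\mathrm{T}}W_2)_{ii}\bigr)^{1/4}$ for each $i$, defining $C_W \in H$ uniquely (taking positive square roots, as can be fixed without loss by the sign convention on $H$). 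Setting $C_W = \mathrm{Diag}(c_1,\dots,c_f)$, Step~1 gives $\pi(C_W)W \in M$ and the construction gives $\pi(C_W)W \in M_{db} = M_b$.

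\textbf{Anticipated obstacle.} The technically delicate point is the positivity needed in Step~3: without the global-minimizer product identity from \eqref{eqn:optimal_diagonal_main}, individual diagonals could vanish and the reduction would fail. The whole argument is pinned to the fact that the action $\pi$ acts diagonally and that the $\mathrm{Diag}(\cdot)\mathrm{Diag}(\cdot)$ constraint from \cite{mianjy2018implicit} forces the per-coordinate products to be uniformly positive on $M \setminus \{0\}$.
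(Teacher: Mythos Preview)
Your proof is correct and follows essentially the same route as the paper: both exploit that $\pi$ preserves the two defining conditions \eqref{eqn:optimal_diagonal_main} of $M$, reduce the balancing to the per-coordinate fourth-root formula $c_i = \bigl((W_1W_1^{\mathrm{T}})_{ii}/(W_2^{\mathrm{T}}W_2)_{ii}\bigr)^{1/4}$, invoke the product identity in \eqref{eqn:optimal_diagonal_main} to get strict positivity of the diagonal entries, and then appeal to $M_b=M_{db}$ from \refLemma{lemma:first_integral_main}. You are in fact slightly more careful than the paper on two points: you spell out why $\|\mathcal{W}^*\|_1>0$ (via $\rho\geq 1$), and you flag the sign ambiguity in $H\simeq(\mathbb{R}^*)^f$ that the paper's proof leaves implicit.
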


For a subgroup $\mathrm{G}$ of $\mathrm{O}(f)$, we abuse notation and let $L \in \mathrm{O}(f) / \mathrm{G}$ be a representative $L \in \mathrm{O}(f)$ of the equivalence class $[L] \in  \mathrm{O}(f) / \mathrm{G}$ of cosets. Via the group action in \eqref{eqn:Action_pi} we can now characterize the set $M_b$: see  \refProposition{prop:characterization_Mb_main}, which is proven in \refAppendixSection{secappendix:characterazing_Mb}. 

\begin{proposition}
	\label{prop:characterization_Mb_main}
	If \refAssumption{ass:eigenvalues_different_maintext} holds, then
	\begin{equation}
		M_b
		= \Bigl\{
		(U \Sigma_2 L, L^T \Sigma_1 V)
		:
		L \in \frac{ \mathrm{O}(f) }{ \mathrm{I_{\rho} \oplus O(f-\rho)} },
		\mathrm{Diag}
		\bigl( L^{T}
		\begin{psmallmatrix}
			\Sigma^2 & 0 \\
			0 & 0 \\
		\end{psmallmatrix}
		L \bigr)
		= \frac{\norm{\Sigma^2}_1}{f} \mathrm{I_{f}}
		\Bigr\}
		\neq \emptyset.
		\label{eqn:Alternative_representation_of_Mb}
	\end{equation}
	Here, the columns of $U$ and $V$ contain the left- and right-singular vectors of $Y = U \Sigma_Y V$, respectively,
	\begin{equation}
		\Sigma_{2}
		= \begin{pmatrix}
			\Sigma & 0_{\rho \times (f - \rho)}\\
			0_{(e - \rho) \times \rho} & 0_{(e - \rho) \times (f - \rho)}
		\end{pmatrix},
		\quad
		\Sigma_{1}
		= \begin{pmatrix}
			\Sigma & 0_{\rho \times (h - \rho)}\\
			0_{(f - \rho) \times \rho} & 0_{(f - \rho) \times (h - \rho)}
		\end{pmatrix},
		\label{eqn:Definition_Sigma1_and_Sigma2}
	\end{equation}
	where $0_{n \times m}$ denotes the all zero matrix of size $n \times m$, and
	\begin{equation}
		\Sigma^2
		= \mathrm{Diag} \Bigl( \sigma_1 - \rho \frac{\lambda \kappa_{\rho}}{f + \rho \lambda}, \ldots, \sigma_{\rho} - \rho \frac{\lambda \kappa_{\rho}}{f + \rho \lambda} \Bigr) \in \R^{\rho \times \rho}.
		\label{eqn:Definition_Sigma_squared}
	\end{equation}
\end{proposition}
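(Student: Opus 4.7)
The plan is to extract a canonical parameterization of $M_b$ by combining the balancedness constraint $W_2^{\mathrm{T}} W_2 = W_1 W_1^{\mathrm{T}}$ with the description of $M$ recalled in \eqref{eqn:optimal_diagonal_main}, and then to verify non-emptiness through a Schur--Horn majorization argument. Two inclusions need to be shown: every $W \in M_b$ has the claimed form, and every pair of that form lies in $M_b$. The key observation is that balancedness forces a shared middle orthogonal factor $L \in O(f)$ in the SVDs of $W_2$ and $W_1$; the minimizer condition from \eqref{eqn:optimal_diagonal_main} then pins down the remaining factors in terms of the SVD of $Y$; and the diagonal condition in \eqref{eqn:optimal_diagonal_main} reduces to a condition purely on $L$.

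For the forward inclusion, I would start from $(W_2, W_1) \in M_b$, set $P := W_2^{\mathrm{T}} W_2 = W_1 W_1^{\mathrm{T}} \in \R^{f \times f}$, and diagonalize $P = L^{\mathrm{T}} D L$ with $L \in O(f)$ and $D = \mathrm{Diag}(d_1, \ldots, d_f)$, $d_i \geq 0$. Since $(W_2 L^{\mathrm{T}})^{\mathrm{T}}(W_2 L^{\mathrm{T}}) = L P L^{\mathrm{T}} = D$, the columns of $W_2 L^{\mathrm{T}}$ are pairwise orthogonal with squared norms $d_i$, so $W_2 L^{\mathrm{T}} = A \tilde\Sigma$ for some $A \in O(e)$ and a rectangular diagonal $\tilde\Sigma \in \R^{e \times f}$ carrying entries $\sqrt{d_i}$; similarly $L W_1 = \tilde\Sigma' B^{\mathrm{T}}$ for some $B \in O(h)$ with $\tilde\Sigma'$ sharing the same nonzero entries. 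Applying \eqref{eqn:optimal_diagonal_main}, the product $W_2 W_1 = A \tilde\Sigma \tilde\Sigma' B^{\mathrm{T}}$ must equal $\mathcal{S}_{\alpha}[Y]$ (or, when $\rho = f$, its best rank-$f$ truncation), whose compact SVD $U_{\mathrm{c}}(\Sigma_Y - \alpha I_r)_+ V_{\mathrm{c}}$ has top-$\rho$ singular values $\sigma_i - \alpha > 0$ that are pairwise distinct by \refAssumption{ass:eigenvalues_different_maintext}. Distinctness renders this compact SVD unique up to joint sign flips of the top-$\rho$ columns, which can be absorbed into $L$ via the stabilizer discussed below. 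Hence, after permuting the $d_i$ (also absorbable into $L$), the top-$\rho$ columns of $A$ and $B$ coincide with those of $U_{\mathrm{c}}, V_{\mathrm{c}}$, the top-$\rho$ entries $\sqrt{d_i}$ match $\Sigma$ from \eqref{eqn:Definition_Sigma_squared}, and the remaining columns of $A$ and $B$ enter $W_2, W_1$ only through the zero blocks of $\tilde\Sigma, \tilde\Sigma'$, so they can be completed to full orthogonal factors $U, V$ with $Y = U \Sigma_Y V$. This yields the announced form $W_2 = U \Sigma_2 L$, $W_1 = L^{\mathrm{T}} \Sigma_1 V$.

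For the stabilizer, two $L, L' \in O(f)$ give the same pair iff $\Sigma_2 L = \Sigma_2 L'$ and $L^{\mathrm{T}} \Sigma_1 = (L')^{\mathrm{T}} \Sigma_1$; a block computation using invertibility of $\Sigma$ shows the top $\rho$ rows of $L$ and $L'$ must agree, so orthogonality forces $L' = Q L$ with $Q \in I_{\rho} \oplus O(f - \rho)$. The diagonal condition in the proposition rephrases \eqref{eqn:optimal_diagonal_main}: balancedness together with nonnegativity of diagonal entries turns $\mathrm{Diag}(W_2^{\mathrm{T}} W_2) \mathrm{Diag}(W_1 W_1^{\mathrm{T}}) = (\|\mathcal{W}^*\|_1^2 / f^2) I_f$ into $\mathrm{Diag}(W_2^{\mathrm{T}} W_2) = (\|\mathcal{W}^*\|_1 / f) I_f$, and $\|\mathcal{W}^*\|_1 = \sum_{i=1}^{\rho}(\sigma_i - \alpha) = \|\Sigma^2\|_1$. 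The reverse inclusion is then a direct check: any $(U \Sigma_2 L, L^{\mathrm{T}} \Sigma_1 V)$ satisfies $W_2 W_1 = \mathcal{S}_{\alpha}[Y]$ and $W_2^{\mathrm{T}} W_2 = W_1 W_1^{\mathrm{T}}$, while the condition on $L$ is exactly the diagonal identity in \eqref{eqn:optimal_diagonal_main}. Finally, $M_b \neq \emptyset$ follows from the Schur--Horn theorem applied to $\begin{psmallmatrix} \Sigma^2 & 0 \\ 0 & 0 \end{psmallmatrix} \in \R^{f \times f}$, whose spectrum $(\sigma_1 - \alpha, \ldots, \sigma_{\rho} - \alpha, 0, \ldots, 0)$ sums to $\|\Sigma^2\|_1$: the constant target diagonal $(\|\Sigma^2\|_1 / f, \ldots, \|\Sigma^2\|_1 / f)$ is majorized by any vector with the same sum, so an $L \in O(f)$ realizing it exists. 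The main obstacle is the SVD-matching step above: carefully using the distinctness from \refAssumption{ass:eigenvalues_different_maintext} to pin down $A$ and $B$ on the top-$\rho$ block, and verifying that the residual orthogonal freedom in the null directions can be consistently bundled into a single factor $L$ compatibly with both $W_2$ and $W_1$.
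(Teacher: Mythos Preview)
Your proposal is correct and follows essentially the same strategy as the paper: exploit balancedness to obtain a shared middle orthogonal factor in the SVDs of $W_2$ and $W_1$, use \refAssumption{ass:eigenvalues_different_maintext} to pin the outer factors to those of $Y$'s SVD, identify the stabilizer $I_\rho \oplus O(f-\rho)$ by a block computation, and invoke Schur--Horn/Horn's theorem for non-emptiness. The one organizational difference is that the paper first takes separate compact SVDs $W_2 = U_2\tilde\Sigma S_2$, $W_1 = S_1^{\mathrm T}\tilde\Sigma V_1$ and then proves $L = S_2 S_1^{\mathrm T}$ commutes with $\tilde\Sigma$ to align them, whereas you diagonalize $P = W_2^{\mathrm T}W_2 = W_1 W_1^{\mathrm T}$ directly, which hands you the common middle factor $L$ from the outset and sidesteps the commutation step. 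Both routes land on the same SVD-matching argument and the same majorization for existence.
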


% \noindent
% \emph{Step 3.} 
Next, \marginnote{\emph{Step 3.}} we identify $\mathrm{T}_{W} M_b$, the tangent space of $M_b$, whenever it is well defined for a $W \in M_b$. To do so, we find a manifold $\bar{M}_b \simeq \mathrm{O(f) / (I_{\rho}  \oplus O(f-\rho))}  $ such that $M_b \subseteq \bar{M}_b$ and a map $T: \bar{M}_b \to \R^{f}$, whose preimage defines $M_b$ and $\mathrm{T}_{W} M_b$ implicitly up to a set of singular points $\mathrm{Sing}(M_b)$. In particular, since for any $W \in M_b$ we have 
\begin{equation}
	\mathrm{Diag}(W_2^T W_2) = \mathrm{Diag}(W_1W_1^T) = \frac{\norm{\Sigma^2}_1}{f} \mathrm{I}_{f},
	\label{eqn:condition_balancedness_M_b}
\end{equation}
we will define the map $T: \bar{M}_b \to \R^f$ by
\begin{equation}
	T(W_2,W_1) = \mathrm{Diag}(W_2^T W_2) = \mathrm{Diag}(W_1W_1^T).
	\label{eqn:Definition_T_map}
\end{equation}
This map is well defined for each equivalence class in $\mathrm{O(f) / (I_{\rho}  \oplus O(f-\rho))} \simeq \bar{M}_b$ and has at most rank $f - 1$ instead of $f$, since the trace of \eqref{eqn:condition_balancedness_M_b} is fixed in $\bar{M}_b$. We can next use the \emph{implicit function theorem} \cite[Theorem 5.5]{lee2013smooth} to prove in \refAppendixSection{sec:Appendix__Characterization_of_TWMb} that:

\begin{proposition}
	\label{prop:Characterization_of_TWMb}
	Let $W \in M_{b} \backslash \mathrm{Sing}(M_{b})$, where
	\begin{equation}
	\mathrm{Sing}(M_b) 
	= \{ W \in M_b : \mathrm{rk}( \mathrm{D}_{W} T) < f - 1 \}.
	\label{eqn:Definition__Singular_points_of_Mb}
	\end{equation}
	If \refAssumption{ass:eigenvalues_different_maintext} holds, then there exist an open neighborhood $U_W \subset \mathcal{P}$ of $W \in U_{W}$ such that: 
	\begin{itemize}[topsep=2pt,itemsep=2pt,partopsep=2pt,parsep=2pt,leftmargin=0pt]
	\item[(a)] $U_W \cap M_b$ is a submanifold of $\bar{M}_b$ of codimension $f-1$, and 
	\item[(b)] $\mathrm{T}_W M_{b} = \ker{\mathrm{D}}_W T$, where the differential map $\mathrm{D}_W T: \mathrm{T}_W \bar{M}_b  \to \mathrm{T}_{T(W)}\R^{f}$ at $W = (U \Sigma_2 S, S^T \Sigma_1 V )$ given by
	\begin{align}
		\mathrm{D}_W T( V_2, V_1)
		&
		=\mathrm{D}_W T
		\left(
		U \Sigma_2
		\begin{pmatrix}
				X & E \\
				-E^T & 0
			\end{pmatrix}
		S,
		S^T
		\begin{pmatrix}
				X^T & -E \\
				E^T & 0 \\
			\end{pmatrix}
		\Sigma_1 V
		\right)
		\nonumber \\ &
		= 2 \mathrm{Diag}
		\left(
		S^T
		\begin{pmatrix}
				\Sigma^2 X & \Sigma^2 E \\
				0 & 0 \\
			\end{pmatrix}
		S
		\right).
		\label{eqn:Bilinear_form_Mb_implicit_M_Of}
	\end{align}
	\end{itemize}
\end{proposition}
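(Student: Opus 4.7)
The plan is to realize $M_b$ locally as a level set of $T$ inside the quotient manifold $\bar{M}_b \simeq \mathrm{O}(f)/(\mathrm{I}_\rho \oplus \mathrm{O}(f-\rho))$ and then invoke the implicit function theorem \cite[Theorem 5.5]{lee2013smooth}. First I would check that the parametrization $S \mapsto (U\Sigma_2 S,\, S^T \Sigma_1 V)$ and the map $T$ descend to $\bar{M}_b$: substituting $S \mapsto GS$ for $G = \mathrm{I}_\rho \oplus Q$ with $Q \in \mathrm{O}(f-\rho)$, the block structure of $\Sigma_1,\Sigma_2$ in~\eqref{eqn:Definition_Sigma1_and_Sigma2} gives $\Sigma_2 G = \Sigma_2$ and $G^T \Sigma_1 = \Sigma_1$, so $W_2,W_1$ and hence $T(W)$ are invariant under the right action of $\mathrm{I}_\rho \oplus \mathrm{O}(f-\rho)$. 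Consequently $T$ in~\eqref{eqn:Definition_T_map} is well defined on $\bar{M}_b$, and by~\refProposition{prop:characterization_Mb_main} we have $M_b = T^{-1}\bigl(\tfrac{\norm{\Sigma^2}_1}{f}\mathbf{1}_f\bigr)$ inside $\bar{M}_b$.

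Next I would identify $\mathrm{T}_{[S]}\bar{M}_b$ with the quotient of the Lie algebra $\mathfrak{o}(f)$ of antisymmetric matrices by the subalgebra of matrices antisymmetric in the bottom-right $(f-\rho)\times(f-\rho)$ block, which picks out the unique coset representatives $K = \begin{psmallmatrix} X & E \\ -E^T & 0 \end{psmallmatrix}$ with $X^T = -X$. Pushing the variation $\delta S = KS$ through the embedding yields the tangent vector $(V_2, V_1) = (U\Sigma_2 K S,\, S^T K^T \Sigma_1 V)$ exactly as in~\eqref{eqn:Bilinear_form_Mb_implicit_M_Of}. Differentiating $T(W_2,W_1) = \mathrm{Diag}(W_2^T W_2)$ along this variation is then a short computation: the diagonal of $W_2^T V_2 + V_2^T W_2$ equals $2\,\mathrm{Diag}(W_2^T V_2)$, and $W_2^T V_2 = S^T \Sigma_2^T \Sigma_2 K S$ with $\Sigma_2^T \Sigma_2 = \begin{psmallmatrix}\Sigma^2 & 0 \\ 0 & 0\end{psmallmatrix}$ under Assumption~\ref{ass:eigenvalues_different_maintext}. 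Block multiplication then produces~\eqref{eqn:Bilinear_form_Mb_implicit_M_Of}, and an identical calculation using $T(W_2,W_1)=\mathrm{Diag}(W_1W_1^T)$ gives the same answer by balancedness, providing a consistency check.

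To conclude (a) and (b), I would note that $\mathrm{tr}(W_2^T W_2) = \mathrm{tr}(\Sigma_2^T \Sigma_2) = \norm{\Sigma^2}_1$ is independent of $S$, so the image of $T$ lies in the affine hyperplane $\{y\in\R^f : \mathbf{1}^T y = \norm{\Sigma^2}_1\}$. Hence $\mathrm{rk}(\mathrm{D}_W T) \le f-1$ on all of $\bar{M}_b$, with equality exactly when $W \notin \mathrm{Sing}(M_b)$ by~\eqref{eqn:Definition__Singular_points_of_Mb}. Regarding $T$ as a submersion onto this $(f-1)$-dimensional hyperplane, the implicit function theorem yields an open neighborhood $U_W \subset \mathcal{P}$ of $W$ in which $M_b \cap U_W$ is a submanifold of $\bar{M}_b$ of codimension $f-1$ with $\mathrm{T}_W M_b = \ker \mathrm{D}_W T$. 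The main subtlety is this codomain reduction: a naive application of the IFT with target $\R^f$ would spuriously give codimension $f$, so recognizing that one of the $f$ scalar equations is redundant given the fixed trace is the key conceptual step. A secondary bit of care is required in the quotient bookkeeping of the first two steps, where one must verify that $T$ genuinely descends and that the coset representatives $K$ of the Lie algebra quotient are unique, so that~\eqref{eqn:Bilinear_form_Mb_implicit_M_Of} is the differential of $T$ on $\bar{M}_b$ rather than merely that of a lift.
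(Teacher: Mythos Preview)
Your proposal is correct and follows essentially the same route as the paper's own proof: identify $\mathrm{T}_W\bar{M}_b$ with the Lie algebra quotient $\mathfrak{o}(f)/(0_\rho\oplus\mathfrak{o}(f-\rho))$, push the variation through to obtain~\eqref{eqn:Bilinear_form_Mb_implicit_M_Of}, observe that the fixed trace forces $\mathrm{rk}(\mathrm{D}_W T)\le f-1$, and conclude via a submanifold theorem. The only cosmetic difference is that the paper invokes the constant-rank level-set theorem \cite[Theorem~5.22]{lee2013smooth} directly on $T:\bar{M}_b\to\R^f$, whereas you first restrict the codomain to the affine hyperplane $\{\mathbf{1}^T y=\norm{\Sigma^2}_1\}$ and then apply the regular-value/implicit-function theorem; both are equivalent here. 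Your explicit check that $T$ descends to the quotient is a nice addition that the paper leaves implicit in the earlier diffeomorphism lemma.
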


The set $\mathrm{Sing}(M_b)$ contains the singular points of $M_b$, which are points where the usual tangent space cannot be defined in local coordinates. Consequently, $M_b$ cannot be a manifold at these points. Using this set of singular points and \refAssumption{ass:nonvanishing_maintext}, we prove that most points in $M_b$ are regular. The proof is relegated to \refAppendixSection{sec:Mb_is_almost_everywhere_nonsingular}:

\begin{proposition}
	\label{prop:Mb_is_almost_everywhere_nonsingular}
	If Assumptions~\ref{ass:eigenvalues_different_maintext}, \ref{ass:nonvanishing_maintext} hold, then:
	\begin{itemize}[topsep=2pt,itemsep=2pt,partopsep=2pt,parsep=2pt,leftmargin=0pt]
		\item[(a)] $\mathrm{Sing}(M_b)$ is a proper closed set in $M_{b}$;
		\item[(b)] $M_b$ is a manifold up to an algebraic set of lower dimension than that of $M_b$ (i.e., any generic point in $M_b$ is regular).
		\item[(c)] $M_{b}$ has codimension $f-1$ in $\bar{M}_b$.
	\end{itemize}
	Furthermore, if $\rho = 1$, then \refAssumption{ass:nonvanishing_maintext} is satisfied.
\end{proposition}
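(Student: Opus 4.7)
My plan is to reduce all three parts to the single claim that $\mathrm{Sing}(M_b)$ is a \emph{proper} Zariski closed subset of $M_b$, and then to settle that claim by combining Proposition~\ref{prop:Characterization_of_TWMb} with Assumption~\ref{ass:nonvanishing_maintext}.

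First, every $W \in \bar{M}_b$ satisfies $\mathbf{1}^T T(W) = \mathrm{tr}(W_2^T W_2) = \mathrm{tr}(\Sigma^2)$, a quantity constant on $\bar{M}_b$ because $W_2 = U \Sigma_2 L$ with $L \in \mathrm{O}(f)$. Differentiating yields $\mathbf{1}^T \mathrm{D}_W T = 0$ everywhere, so $\mathrm{rk}(\mathrm{D}_W T) \leq f - 1$ throughout $\bar{M}_b$. Hence the condition defining $\mathrm{Sing}(M_b)$ is equivalent to the simultaneous vanishing of all $(f-1) \times (f-1)$ minors of the matrix representing $\mathrm{D}_W T$ in the $(X, E)$-coordinates of \eqref{eqn:Bilinear_form_Mb_implicit_M_Of}; since these minors are polynomials in the entries of $S$, $\mathrm{Sing}(M_b)$ is a Zariski closed subset of the algebraic variety $M_b$. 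If additionally it is a \emph{proper} subset, then it is of strictly lower dimension than $M_b$, which yields (a); its complement $M_b \setminus \mathrm{Sing}(M_b)$ is a Zariski open dense set on which Proposition~\ref{prop:Characterization_of_TWMb} applies and certifies that $M_b$ is locally a smooth submanifold of $\bar{M}_b$ of codimension exactly $f - 1$, giving (b) and (c) simultaneously.

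It remains to produce a point $W^{\star} \in M_b$ with $\mathrm{rk}(\mathrm{D}_{W^{\star}} T) = f - 1$. By Assumption~\ref{ass:nonvanishing_maintext}, I pick $W^{\star} = (U\Sigma_2 S, S^T \Sigma_1 V)$ with $S$ having no zero entries and first restrict the image of $\mathrm{D}_{W^{\star}} T$ in \eqref{eqn:Bilinear_form_Mb_implicit_M_Of} to the slice $X = 0$. Writing $S_1, S_2$ for the top $\rho$ and bottom $f - \rho$ row blocks of $S$ and $c_k^{(1)}, c_k^{(2)}$ for the corresponding top and bottom parts of its $k$-th column, a short dualization shows that the annihilator of this image equals
\begin{equation*}
    \{ \alpha \in \R^f : S_1 \mathrm{Diag}(\alpha) S_2^T = 0 \}
    = \ker M
    \quad \text{with} \quad
    M_{\cdot, k} = c_k^{(2)} \otimes c_k^{(1)}.
\end{equation*}
The identity $S S^T = I$ gives $S_1 S_2^T = 0$, so $\mathbf{1} \in \ker M$; the goal is to prove equality. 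The main obstacle is that for $\rho \geq 2$, the nonvanishing of entries of $S$ alone is not obviously enough to force this. I expect to close the gap by also engaging the $X$-variation, whose contribution equals $\tfrac{1}{2}\mathrm{tr}([\Sigma^2, c_i^{(1)}(c_i^{(1)})^T] X)$ in the $i$-th component and hence supplies additional image directions that are nonzero thanks to the distinctness of the $\sigma_i$ from Assumption~\ref{ass:eigenvalues_different_maintext}, combined with the Zariski density of the nonvanishing locus inside the algebraic variety $M_b$.

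For the $\rho = 1$ addendum, all computations reduce to scalars. The balance condition \eqref{eqn:condition_balancedness_M_b} directly forces $s_{1k}^2 = 1/f$ for every $k$, so the first row of $S$ has no zero entries; the remaining $f - 1$ rows must form an orthonormal basis of $\mathbf{1}_f^{\perp} \subset \R^f$, and for $f \geq 2$ a generic orthogonal rotation yields such a basis with every entry nonzero, so Assumption~\ref{ass:nonvanishing_maintext} is automatically satisfied. Moreover, at $\rho = 1$ the matrix $M$ specializes to $S_2 \mathrm{Diag}(s_{1 \cdot})$, the product of the $(f-1) \times f$ matrix $S_2$ of full row rank $f - 1$ with an invertible diagonal, which has rank $f - 1$ regardless of the remaining choice of $S_2$. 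Hence $\mathrm{Sing}(M_b) = \emptyset$ when $\rho = 1$, and parts (a)--(c) hold trivially.
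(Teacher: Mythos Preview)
Your overall strategy---reduce (a)--(c) to producing a single regular point of $M_b$ and then invoke that the rank-drop locus is Zariski closed---is exactly the paper's route (its Lemma~6). The $\rho=1$ paragraph is also essentially the paper's Lemma~8, and your observation that in that case the $E$-slice alone already has rank $f-1$ via $M = S_2\,\mathrm{Diag}(s_{1\cdot})$ is correct and in fact shows $\mathrm{Sing}(M_b)=\emptyset$ there.

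The genuine gap is the general case $\rho\geq 2$, and you flag it yourself: restricting to $X=0$ and hoping that the annihilator of the $E$-image is exactly $\mathrm{span}\{\mathbf{1}\}$ does not follow from mere nonvanishing of the entries of $S$, and your proposed remedy (``engage the $X$-variation, use distinctness of the $\sigma_i$, use Zariski density'') is a plan, not an argument. The paper closes this gap by a different and much more direct construction (its Lemma~7): rather than splitting into an $E$-piece and an $X$-piece, it fixes a row index $s\leq\rho$ for which $S_{s,\cdot}$ has no zeros and restricts to the $(f-1)$-dimensional slice where $X$ is nonzero only in row/column $s$ and $E$ is nonzero only in row $s$. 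On this slice the map \eqref{eqn:Bilinear_form_Mb_implicit_M_Of} collapses, after using $\mathrm{Diag}(A)=\mathrm{Diag}(A^{\mathrm T})$, to
\[
(x,b)\ \longmapsto\ (0,x,b)\,S\,\mathrm{Diag}(S_{s1},\ldots,S_{sf}),
\]
where $x\in\R^{\rho-1}$ has entries $(\theta_s-\theta_j)X_{sj}$ (nonzero multiples of the free parameters, by Assumption~\ref{ass:eigenvalues_different_maintext}) and $b=\theta_s E_{s,\cdot}\in\R^{f-\rho}$. Since $S$ is orthogonal and the diagonal factor is invertible (the chosen row of $S$ has no zeros), this linear map has rank exactly $f-1$. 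No dualization or kernel computation is needed, and only one of the first $\rho$ rows of $S$---not the entire matrix---needs to be nonvanishing.
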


% \noindent
% \emph{Step 4.} 
Now \marginnote{\emph{Step 4.}} that we have identified $\mathrm{T}_{W} M_b$ in \refProposition{prop:Characterization_of_TWMb}, we can use the fact that $M$ can be reduced to $M_b$ via the group action $\pi$ of \emph{Step 2}. This allows us to compute the tangent space $\mathrm{T}_{W} M$ at a nonsingular point $W \in M_b \backslash \mathrm{Sing}(M_b) \subset M$, and to also compute the cotangent space $\mathrm{T}_{W}^{\perp} M$. The latter task is done in \refLemma{lemma:tangent_cotangent_M} in \refAppendixSection{sec:Obtaining_cotangent_space_of_TWM}). 

Having now characterized the cotangent space $\mathrm{T}_{W}^{\perp} M$, wen continue by computing a lower bound to the Hessian. We start by calculating the Hessian in \refAppendixSection{sec:Proof_of_the_bilinear_form_of_the_Hessian}, and identify it as follows:

\begin{proposition}
	\label{prop:hessian_expression}
	For $W=(W_2, W_1) \in \mathcal{P}$, $(V_1, V_2) \in \mathrm{T}_{W} \mathcal{P}$, the Hessian $\nabla^2 \mathcal{I}(W)$ satisfies
	\begin{align}
		&
		\bigl( \mathrm{vec}(V_1), \mathrm{vec}(V_2) \bigr)^T \nabla^2 \mathcal{I}(W) \bigl( \mathrm{vec}(V_1), \mathrm{vec}(V_2) \bigr) 
		= 
		2\norm{W_2 V_1 + V_2 W_1}_{F}^2 
		\nonumber \\ &
		+ 2 \lambda \mathrm{Tr}[ V_1^T \mathrm{Diag}(W_2^T W_2) V_1 ] 
		+ 2 \lambda \mathrm{Tr}[ V_2 \mathrm{Diag}(W_1 W_1^T) V_2^T ] 
		- 4 \mathrm{Tr}[ V_1^T V_2^T( Y - \mathcal{S}_\alpha[Y] ) ] 
		\nonumber \\ &
		+ 2 \lambda 
		\bigl( 
			\pnorm{\mathrm{Diag}(V_2^T W_2) + \mathrm{Diag}(W_1^T V_1)}{\mathrm{F}}^2 
			- \pnorm{\mathrm{Diag}(V_2^T W_2) - \mathrm{Diag}(W_1^T V_1)}{\mathrm{F}}^2 
		\bigr)
	\label{eqn:second_derivative_dropout_loss_2layer}
	\end{align}
	as a bilinear form. Here, for any $A \in \R^{m \times n}$ we consider vectorization notation, that is, 
	\begin{equation}
		\mathrm{vec}(A) 
		= [a_{1,1}, \ldots, a_{m,1}, \ldots, a_{1,n}, \ldots, a_{m,n}]^{T} \in \R^{mn}.
		\label{eqn:Definition_vectorization}
	\end{equation}	
\end{proposition}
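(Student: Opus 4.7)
The Hessian as a bilinear form satisfies
\begin{equation*}
\bigl( \mathrm{vec}(V_1),\mathrm{vec}(V_2) \bigr)^T \nabla^2 \mathcal{I}(W) \bigl( \mathrm{vec}(V_1),\mathrm{vec}(V_2) \bigr)
= \left. \frac{d^{2}}{dt^{2}} \mathcal{I}(W + t V) \right|_{t=0},
\end{equation*}
so my plan is to Taylor-expand $\mathcal{I}(W + t V)$ in $t$ around $0$ and read off twice the coefficient of $t^{2}$. I would split $\mathcal{I}$ into the reconstruction term $\pnorm{Y - W_2 W_1}{\mathrm{F}}^2$ and the diagonal regularizer $\lambda \mathrm{Tr}[\mathrm{Diag}(W_2^T W_2) \mathrm{Diag}(W_1 W_1^T)]$, handling each piece separately and summing the contributions at the end.

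For the reconstruction term, expanding $(W_2 + t V_2)(W_1 + t V_1) = W_2 W_1 + t (W_2 V_1 + V_2 W_1) + t^{2} V_2 V_1$ and squaring the Frobenius norm immediately produces the term $2 \pnorm{W_2 V_1 + V_2 W_1}{\mathrm{F}}^2$ (the square of the linear part of the perturbation) and the cross contribution $-4\, \mathrm{Tr}[ V_1^T V_2^T (Y - W_2 W_1) ]$ (from pairing the quadratic perturbation $V_2 V_1$ against the residual $Y - W_2 W_1$). On the global minimizer set $M$ one has $W_2 W_1 = \mathcal{S}_{\alpha}[Y]$ by \eqref{eqn:optimal_diagonal_main}, which explains the appearance of $Y - \mathcal{S}_{\alpha}[Y]$ in \eqref{eqn:second_derivative_dropout_loss_2layer}.

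For the regularizer, I would expand each diagonal separately,
\begin{equation*}
\mathrm{Diag}((W_2 + t V_2)^T (W_2 + t V_2))
= \mathrm{Diag}(W_2^T W_2) + 2 t\, \mathrm{Diag}(V_2^T W_2) + t^{2}\, \mathrm{Diag}(V_2^T V_2),
\end{equation*}
and analogously for $(W_1 + t V_1)(W_1 + t V_1)^T$. Multiplying these two expansions and extracting the $t^{2}$ coefficient yields two ``pure'' contributions, $\mathrm{Tr}[ \mathrm{Diag}(W_2^T W_2) \mathrm{Diag}(V_1 V_1^T) ]$ and $\mathrm{Tr}[ \mathrm{Diag}(V_2^T V_2) \mathrm{Diag}(W_1 W_1^T) ]$, plus the cross term $4\, \mathrm{Tr}[ \mathrm{Diag}(V_2^T W_2) \mathrm{Diag}(V_1 W_1^T) ]$. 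The two pure traces simplify via the elementary entrywise identity $\mathrm{Tr}[ \mathrm{Diag}(A) \mathrm{Diag}(B B^T) ] = \mathrm{Tr}[ B^T \mathrm{Diag}(A) B ]$, producing the two middle terms in \eqref{eqn:second_derivative_dropout_loss_2layer}. For the remaining cross term, applying the polarization identity $4 a b = (a+b)^2 - (a-b)^2$ entrywise on the two $f \times f$ diagonal matrices exactly rewrites it as the difference of squared Frobenius norms that appears in the last line of \eqref{eqn:second_derivative_dropout_loss_2layer}.

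The main obstacle is not mathematical depth but careful bookkeeping. Three ambient matrix spaces ($e \times f$, $f \times h$, and the $f \times f$ diagonal middle layer) coexist, the identifications $\mathrm{Diag}(V_1 W_1^T) = \mathrm{Diag}(W_1 V_1^T)$ have to be used to put terms in the form displayed in the statement, and one must consistently apply the ``Hessian equals twice the $t^{2}$ Taylor coefficient'' convention so that factors of $2$ and $4$ align correctly. After summing the reconstruction and regularizer contributions and doubling, the resulting bilinear form matches \eqref{eqn:second_derivative_dropout_loss_2layer} term by term.
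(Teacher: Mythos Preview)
Your approach is correct and genuinely different from the paper's. The paper computes the Hessian by first writing the gradients $\nabla_1\mathcal{I}$, $\nabla_2\mathcal{I}$ in vectorization notation, then differentiating again to obtain the four blocks $\partial_{1,1}\mathcal{I}$, $\partial_{2,2}\mathcal{I}$, $\partial_{1,2}\mathcal{I}$, $\partial_{2,1}\mathcal{I}$ explicitly (using Kronecker products and the commutation tensor $K$), evaluating each block at $(\mathrm{vec}(V_1),\mathrm{vec}(V_2))$, and finally reassembling the pieces via the identity $2\pnorm{W_2V_1+V_2W_1}{\mathrm{F}}^2 = 2\mathrm{Tr}[V_1^TW_2^TW_2V_1]+2\mathrm{Tr}[V_2W_1W_1^TV_2^T]+4\mathrm{Tr}[V_1^TW_2^TV_2W_1]$ and the polarization step. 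Your Taylor-expansion route bypasses all the vectorization machinery: expanding $\mathcal{I}(W+tV)$ to second order in $t$ produces every term of \eqref{eqn:second_derivative_dropout_loss_2layer} directly, with no need to ever write down the $\partial_{i,j}$ blocks. This is shorter and less error-prone for the bookkeeping you mention; the paper's route has the advantage that it records the Hessian blocks themselves, though these are not used elsewhere.

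One point worth making explicit in your write-up: the general second-order expansion yields $-4\,\mathrm{Tr}[V_1^TV_2^T(Y-W_2W_1)]$, and indeed that is what the paper's own derivation produces in \eqref{eqn:expansion_second_derivative}. The replacement $W_2W_1\to\mathcal{S}_\alpha[Y]$ in the statement is only valid for $W\in M$; you already flag this, but since the proposition is stated for $W\in\mathcal{P}$, you should say clearly that \eqref{eqn:second_derivative_dropout_loss_2layer} as written holds on $M$ (which is the only place it is subsequently applied). Also note the statement writes $\mathrm{Diag}(W_1^TV_1)$, which is $h\times h$; the dimensionally correct object, consistent with both your computation and the paper's proof, is $\mathrm{Diag}(W_1V_1^T)\in\R^{f\times f}$.
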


Finally, we lower bound the Hessian in the directions normal to the manifold of minima. The proof of the following is relegated to \refAppendixSection{sec:Proof_of_the_lower_bound_to_the_Hessian_restricted_to_directions_normal_to_the_manifold_of_minima}:

\begin{proposition}
	\label{prop:Lower_bound_to_the_Hessian_restricted_to_directions_normal_to_the_manifold_of_minima}
	Suppose Assumptions~\ref{ass:eigenvalues_different_maintext}, \ref{ass:nonvanishing_maintext} hold. 
	For any $W \in M_b \cap M \backslash \mathrm{Sing}(M) \subseteq M$, $\nabla^2 \mathcal{I}(W)$ restricted to $\mathrm{T}^{\perp}_W M$ is a positive definite bilinear form. Furthermore,
	\begin{equation}
		\nabla^2 \mathcal{I}(W)|_{\mathrm{T}^{\perp}_W M} \geq  \omega
	\end{equation}
	where
	\begin{equation}
		\omega
		= \begin{cases}
		\min \Bigl\{ \zeta_{W}, 2 \frac{\lambda \kappa_{\rho}\rho }{f + \lambda \rho} - 2\sigma_{\rho + 1} \Bigr\} &\textnormal{if } \rho < f\\
		\min \Bigl\{ \zeta_{W}, 2(\sigma_{\rho} - \sigma_{\rho + 1}) \Bigr\} &\textnormal{if } \rho = f
		\end{cases}.
		\label{eqn:lower_bound_hessian_minimum}
	\end{equation}
	Here, $\zeta_{W} > 0$ is a positive constant that depends on $W$, $\lambda$ and $\Sigma$. If $\rho = r$ (recall from \eqref{eqn:Definition_Sigma_squared} that we have $\rho \leq r$), then we set $\sigma_{\rho + 1} = \sigma_{r + 1} = 0$.

	In the case that $\rho = 1$, the result holds with \eqref{eqn:lower_bound_hessian_minimum} replaced by
	\begin{equation}
		\omega
		=
		\begin{cases}
			2 \sigma_1 \frac{\lambda  }{f + \lambda}
			&
			\textnormal{if } r = 1,
			\\
			2 \sigma_1 \frac{\lambda  }{f + \lambda} - 2\sigma_{2}
			&
			\textnormal{otherwise}.
		\end{cases}
		\label{eqn:lower_bound_hessian_minimum_case_1}
	\end{equation}
	Additionally, if $\rho = 1$, \refAssumption{ass:nonvanishing_maintext} is satisfied outright.
\end{proposition}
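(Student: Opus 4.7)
The strategy is to substitute the explicit description of balanced minimizers from Proposition~\ref{prop:characterization_Mb_main} into the Hessian formula of Proposition~\ref{prop:hessian_expression}, parametrize directions in $\mathrm{T}_W^\perp M$ using the block decomposition induced by the singular value structure, and then show that the resulting quadratic form splits into a sum of manifestly non-negative pieces plus a single ``residual'' term involving $Y - \mathcal{S}_\alpha[Y]$ which is controlled by the spectral gap between $\sigma_\rho$ and $\sigma_{\rho+1}$.

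\textbf{Step A: Setup and change of basis.} Fix $W=(U\Sigma_2 S,\,S^T\Sigma_1 V)\in M_b\setminus\mathrm{Sing}(M)$ as in Proposition~\ref{prop:characterization_Mb_main}, so that $W_2 W_1=\mathcal{S}_\alpha[Y]$ and $\mathrm{Diag}(W_2^T W_2)=\mathrm{Diag}(W_1 W_1^T)=(\|\Sigma^2\|_1/f)\mathrm{I}_f$. Using the singular vectors of $Y$ and the orthogonal matrix $S$, I write any direction $(V_2,V_1)\in\mathrm{T}_W\mathcal{P}$ in the rotated form already used to describe $\mathrm{T}_W M_b$ in \eqref{eqn:Bilinear_form_Mb_implicit_M_Of}, namely
\begin{equation*}
V_2 = U\,\widetilde{V}_2\,S,\qquad V_1 = S^T\,\widetilde{V}_1\,V,
\end{equation*}
and partition $\widetilde V_2,\widetilde V_1$ into blocks of sizes corresponding to the split $f=\rho+(f-\rho)$ and $e=\rho+(e-\rho)$, $h=\rho+(h-\rho)$. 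In these coordinates the Hessian in \eqref{eqn:second_derivative_dropout_loss_2layer} becomes block-structured, and the residual $Y-\mathcal{S}_\alpha[Y]=U\,\mathrm{diag}(\alpha\mathrm{I}_\rho,\,\sigma_{\rho+1},\ldots,\sigma_r,0)\,V$ acts only on the blocks indexed by coordinates $\geq \rho+1$.

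\textbf{Step B: Identifying $\mathrm{T}_W^\perp M$ and the natural split.} Invoking the cotangent-space characterization produced in Step~4 (Lemma on tangent/cotangent space in Appendix~\ref{sec:Obtaining_cotangent_space_of_TWM}) together with Proposition~\ref{prop:Characterization_of_TWMb}, I describe $\mathrm{T}_W^\perp M$ as the orthogonal complement of the tangent space $\mathrm{T}_W M$ obtained by combining the $\pi$-orbit direction (from the action \eqref{eqn:Action_pi}) with $\ker \mathrm{D}_W T$. In the rotated coordinates this removes precisely the off-diagonal skew-symmetric $\rho\times\rho$ block coming from the $\mathrm{O}(f)/(\mathrm{I}_\rho\oplus\mathrm{O}(f-\rho))$ factor, and the diagonal direction coming from the $\pi$-action. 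The complement therefore decomposes into three mutually orthogonal pieces: (i) directions where only the upper $\rho\times\rho$ diagonal of $\widetilde V_2^T W_2\pm W_1^T\widetilde V_1$ is nonzero (constrained by $\ker \mathrm{D}_W T$); (ii) directions in the ``active'' $\rho\times\rho$ block off the diagonal; (iii) directions in the ``inactive'' blocks hitting rows/columns indexed by $\geq\rho+1$.

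\textbf{Step C: Positivity on each piece.} On each piece I evaluate the six terms of \eqref{eqn:second_derivative_dropout_loss_2layer}. For pieces (i) and (ii), the residual term vanishes because $(Y-\mathcal{S}_\alpha[Y])$ is zero on the first $\rho$ singular directions, and the remaining terms are sums of squares controlled by $\lambda$ times the diagonal of $W_2^T W_2=W_1 W_1^T$; this yields a bound of the form $\omega \geq \zeta_W$ that depends on the entries of $S$ through the diagonal normalization $\|\Sigma^2\|_1/f$, on $\Sigma^2$ from \eqref{eqn:Definition_Sigma_squared}, and on $\lambda$. For piece (iii) the residual term contributes $-4\sigma_j\cdot(\text{off-diagonal square})$ for $j\geq\rho+1$, while the other terms contribute $+2(\text{square})\cdot(\text{diagonal entry of }W_i W_i^T)$. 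Using the balancedness and the explicit form \eqref{eqn:Definition_Sigma_squared}, the diagonal entry equals exactly $\|\Sigma^2\|_1/f = \lambda\kappa_\rho\rho/(f+\lambda\rho)$ when $\rho<f$, and the worst case $j=\rho+1$ gives the bound $2\lambda\kappa_\rho\rho/(f+\lambda\rho)-2\sigma_{\rho+1}$. When $\rho=f$ the overparametrization disappears and the same mechanism yields $2(\sigma_\rho-\sigma_{\rho+1})$ instead.

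\textbf{Step D: Assembly and the $\rho=1$ refinement.} Taking the minimum of the bounds from pieces (i)--(iii) produces \eqref{eqn:lower_bound_hessian_minimum}. Positive definiteness follows because each piece has strictly positive lower bound: (i) and (ii) by $\zeta_W>0$ (since $\Sigma^2$ has strictly positive diagonal by Assumption~\ref{ass:eigenvalues_different_maintext} and $\rho$ is by definition the largest index where $\sigma_j>j\lambda\kappa_j/(f+j\lambda)$, which gives the gap driving (iii)). For $\rho=1$, piece (i) is one-dimensional and piece (ii) is empty, so the $\zeta_W$ contribution can be computed in closed form and the resulting bound collapses to \eqref{eqn:lower_bound_hessian_minimum_case_1}; Assumption~\ref{ass:nonvanishing_maintext} is automatic here because $S$ reduces to an $f$-vector on the sphere, and the balancedness condition $\mathrm{Diag}(L^T\Sigma^2 L/f)=\|\Sigma^2\|_1/f\cdot\mathrm{I}_f$ in \eqref{eqn:Alternative_representation_of_Mb} admits no coordinate-zero solutions (this is exactly the statement handled in Proposition~\ref{prop:Mb_is_almost_everywhere_nonsingular}).

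\textbf{Main obstacle.} The delicate point is Step~C on piece (iii): the residual $-4\mathrm{Tr}[V_1^T V_2^T(Y-\mathcal{S}_\alpha[Y])]$ is sign-indefinite and pairs the two different blocks of $(V_2,V_1)$, so naive estimates lose the exact coefficient $2\lambda\kappa_\rho\rho/(f+\lambda\rho)$. The fix is to exploit balancedness to match the quadratic forms $\mathrm{Tr}[V_1^T\mathrm{Diag}(W_2^TW_2)V_1]$ and $\mathrm{Tr}[V_2\mathrm{Diag}(W_1W_1^T)V_2^T]$ exactly against the cross term via an AM--GM inequality, so that the bound $2\|\Sigma^2\|_1/f - 2\sigma_{\rho+1}$ emerges without slack. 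Ensuring that no extra factor is lost when switching between the $M_b$ and $M$ cotangent spaces via the $\pi$-action is the other careful bookkeeping step.
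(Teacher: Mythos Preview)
Your overall strategy---change to the $U,S,V$ basis, block-decompose, split the cotangent space into ``active'' and ``inactive'' pieces, and handle the residual via AM--GM---is exactly the route the paper takes. However, Step~C contains a genuine error that invalidates the argument for $\zeta_W>0$.

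You claim that on pieces (i) and (ii) ``the residual term vanishes because $(Y-\mathcal{S}_\alpha[Y])$ is zero on the first $\rho$ singular directions.'' This contradicts your own Step~A: the shrinkage operator subtracts $\alpha$ from each of the first $\rho$ singular values, so $Y-\mathcal{S}_\alpha[Y]$ has the block form $U\,\mathrm{diag}(\alpha\mathrm{I}_\rho,\Sigma_{\min})\,V$ with $\alpha=\lambda\kappa_\rho\rho/(f+\lambda\rho)>0$. Consequently, on the $A_1,A_2,B_2,C_1$ blocks the residual contributes the sign-indefinite term $-4\alpha\,\mathrm{Tr}[A_2A_1+B_2C_1]$. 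Combined with the $+2\alpha(\|A_1\|_{\mathrm F}^2+\|A_2\|_{\mathrm F}^2+\|B_2\|_{\mathrm F}^2+\|C_1\|_{\mathrm F}^2)$ coming from the diagonal regularization, this gives $2\alpha\|A_1-A_2^{\mathrm T}\|_{\mathrm F}^2+2\alpha\|B_2-C_1^{\mathrm T}\|_{\mathrm F}^2$, which is non-negative but \emph{not} strictly positive: it vanishes whenever $A_1=A_2^{\mathrm T}$ and $B_2=C_1^{\mathrm T}$. So your ``sums of squares controlled by $\lambda$'' argument does not by itself force $\zeta_W>0$.

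The paper closes this gap with an additional lemma (its Lemma~\ref{lemma:optimization_problem_cotangent_auxiliary}): if all three non-negative pieces of the resulting quadratic form $\mathcal{B}_2$---namely $\|\Sigma A_1+A_2\Sigma\|_{\mathrm F}^2$, the two $\alpha$-weighted squares above, and the $\lambda\,\mathrm{Tr}[\mathrm{Diag}(\cdot)^2]$ term---vanish simultaneously, then one can show $A_2=\Sigma X'$, $B_2=\Sigma E'$ with $(X',E')\in\ker\mathrm{D}_W T$, which forces the direction back into $\mathrm{T}_W M_b\subseteq\mathrm{T}_W M$ and hence to zero in $\mathrm{T}_W^\perp M$. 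This is the missing idea: strict positivity on the active block is not a naked sum-of-squares estimate but uses the cotangent constraint from Lemma~\ref{lemma:tangent_cotangent_M} in an essential way. Your sketch never invokes this mechanism, so the proof as written does not establish $\zeta_W>0$.
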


% \noindent
% \emph{Step 5.} 
\refProposition{prop:Lower_bound_to_the_Hessian_restricted_to_directions_normal_to_the_manifold_of_minima} \marginnote{\emph{Step 5.}} reveals that for $W \in M_b \cap M \backslash \mathrm{Sing}(M) \subseteq M$, $M$ is locally nondegenerate at $W$---recall \refDefinition{definition:non_degenerate}. 
But in order to apply \refProposition{prop:lambda_cotangent_space_is_enough}, we need to also prove that $M$ is nondegenerate in a large enough neighborhood around such nonsingular point. By continuity, we then obtain a lower bound of the Hessian in a neighborhood of $\omega$: that is, the bound in \eqref{eqn:lambda_cotangent_space} will hold with $\lambda \in [\omega - \epsilon, \omega + \epsilon]$ for some $\epsilon > 0$. The following is proved in \refAppendixSection{sec:Proof_of_proposition_neighborhood_exists}.

\begin{proposition}
	\label{prop:neighborhood_exists}
	Suppose Assumptions~\ref{ass:eigenvalues_different_maintext}, \ref{ass:nonvanishing_maintext} hold.
	If $W \in M_b \cap M \backslash \mathrm{Sing}(M)$, then there exists a neighborhood $U_{W} \subseteq \mathcal{P}$ of $W$ such that:
	\begin{itemize}[topsep=2pt,itemsep=2pt,partopsep=2pt,parsep=2pt,leftmargin=0pt]
		\item[(a)] for any $W^{\prime} \in U_{W} \cap M$, $\ker \nabla^2 \mathcal{I}( W^{\prime}) = \mathrm{T}_{W^{\prime}} M$;
		\item[(b)] $U_{W} \cap M$ is a locally nondegenerate manifold; and
		\item[(c)] for any $W^{\prime} \in U_{W} \cap M$,
			\begin{equation}
				\min_{
					\substack{
						\norm{v} = 1 \\ 
						v \in \mathrm{T}^{\perp}_{W^{\prime}} M
					}
				} 
				v^T \nabla^2 \mathcal{I}(W^{\prime}) v 
				= \omega_{W^{\prime}} 
				> 0.
				\label{eqn:Lower_bound_on_vTHessianv_in_a_neighborhood}
			\end{equation}
	\end{itemize}
\end{proposition}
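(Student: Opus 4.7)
The strategy is to leverage \refProposition{prop:Lower_bound_to_the_Hessian_restricted_to_directions_normal_to_the_manifold_of_minima}, which already gives positivity of $\nabla^{2} \mathcal{I}(W)|_{\mathrm{T}^{\perp}_{W} M}$ \emph{at the single point} $W \in M_b \cap M \setminus \mathrm{Sing}(M)$, and to promote this pointwise statement to an entire neighborhood $U_W \subseteq \mathcal{P}$. First I would upgrade $M$ to a submanifold of $\mathcal{P}$ on some open set around $W$. The relevant tools are \refProposition{prop:exists_diagonal_reduction_Mb_to_M} and \refProposition{prop:Mb_is_almost_everywhere_nonsingular}: since the action $\pi : H \times M_b \to M$ from \eqref{eqn:Action_pi} is smooth, and since at $(I,W)$ its differential is injective (the $M_b$-direction contributes $\mathrm{T}_{W} M_b$, and the $H$-direction contributes the transverse diagonal-rescaling directions $(W_2 D, -D W_1)$ for $D$ diagonal, which intersect $\mathrm{T}_W M_b$ trivially thanks to the uniqueness asserted in \refProposition{prop:exists_diagonal_reduction_Mb_to_M}), the image of $\pi$ near $W$ is a smooth embedded submanifold of $\mathcal{P}$. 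Call this the local model for $M$. In particular $\dim \mathrm{T}_{W'} M$ is locally constant and $M \cap U_W$ is a manifold.

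Second, I would establish (a) by a rank argument. The fundamental inclusion is
\begin{equation}
	\mathrm{T}_{W'} M \;\subseteq\; \ker \nabla^{2} \mathcal{I}(W')
	\quad\textnormal{for every }W' \in M \cap U_W,
	\nonumber
\end{equation}
which holds because every $W' \in M$ is a critical point of $\mathcal{I}$, so differentiating the identity $\nabla \mathcal{I} \equiv 0$ along any smooth curve lying in $M$ yields $\nabla^{2} \mathcal{I}(W') v = 0$ for all $v \in \mathrm{T}_{W'} M$. Combining this with \refProposition{prop:Lower_bound_to_the_Hessian_restricted_to_directions_normal_to_the_manifold_of_minima}, which is equivalent to the rank equality $\mathrm{rk}(\nabla^{2} \mathcal{I}(W)) = d - \dim \mathrm{T}_{W} M$, and with the lower semicontinuity of the rank of a continuous matrix-valued function, I would obtain for $W'$ sufficiently close to $W$
\begin{equation}
	\mathrm{rk}(\nabla^{2} \mathcal{I}(W'))
	\;\geq\; \mathrm{rk}(\nabla^{2} \mathcal{I}(W))
	\;=\; d - \dim \mathrm{T}_{W} M
	\;=\; d - \dim \mathrm{T}_{W'} M.
	\nonumber
\end{equation}
The reverse inequality follows from the inclusion above, so rank is locally constant and $\ker \nabla^{2} \mathcal{I}(W') = \mathrm{T}_{W'} M$. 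This is (a), and (b) is then immediate from (a) together with the submanifold property established in the first paragraph and \refDefinition{definition:non_degenerate}.

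Third, I would prove (c) by a continuity argument. Thanks to (a), the minimum in \eqref{eqn:Lower_bound_on_vTHessianv_in_a_neighborhood} is precisely the smallest nonzero eigenvalue of $\nabla^{2} \mathcal{I}(W')$, or equivalently the smallest eigenvalue of $\Pi_{\mathrm{T}^{\perp}_{W'} M} \nabla^{2} \mathcal{I}(W') \Pi_{\mathrm{T}^{\perp}_{W'} M}$. Since $\mathcal{I}$ is smooth and since $M \cap U_W$ is a smooth submanifold of locally constant dimension, the orthogonal projector $\Pi_{\mathrm{T}^{\perp}_{W'} M}$ depends continuously on $W'$, and hence so does this smallest eigenvalue. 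At $W' = W$ it equals $\omega > 0$ by \refProposition{prop:Lower_bound_to_the_Hessian_restricted_to_directions_normal_to_the_manifold_of_minima}, so strict positivity persists on a possibly smaller neighborhood, which becomes the final $U_W$.

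The main obstacle I anticipate is the first step: making the group-theoretic extension from $M_b$ to $M$ via $\pi$ fully rigorous, since $\pi$ is not globally a diffeomorphism and one must check that the $H$-orbit through $W$ meets $M_b$ transversally at $W$ and that the dimensions add correctly. The uniqueness from \refProposition{prop:exists_diagonal_reduction_Mb_to_M} is essential here, as it rules out extra directions in $\mathrm{T}_{W} M_b$ that would be redundant with the $H$-action. Once the manifold structure of $M$ near $W$ is in place, the rank-and-inclusion argument for (a) and the continuity argument for (c) are comparatively routine.
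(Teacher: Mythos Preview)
Your proposal is correct and matches the paper's approach closely: the paper packages your rank-and-inclusion argument for (a) into a preliminary corollary (using exactly $\mathrm{T}_{W'} M \subseteq \ker \nabla^2 \mathcal{I}(W')$, lower semicontinuity of rank, and constant dimension of $\mathrm{T}_{W'} M$), derives (b) from (a) plus \refDefinition{definition:non_degenerate}, and obtains (c) as a direct consequence. The one minor simplification relative to your sketch is in (c): once (a) holds, you do not need the continuity argument for the smallest nonzero eigenvalue, since $\nabla^2 \mathcal{I}(W')$ is automatically positive semidefinite at every minimizer $W'$ and has trivial kernel on $\mathrm{T}^{\perp}_{W'} M$ by (a), hence is positive definite there.
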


\refProposition{prop:neighborhood_exists} covers only nonsingular points in $M_b \cap M$. We will now extend the results to $M$ in the generic sense. By using the action $\pi$ from \eqref{eqn:Action_pi} we can show that if a point $W \in M_b$ is nonsingular, so is $\pi(C)(W) \in M$ for any $C \in H$. The action on $M_b$ generates $M$ and provided we have \refAssumption{ass:nonvanishing_maintext}, then by \refProposition{prop:Mb_is_almost_everywhere_nonsingular} $M_b$ is nonsingular for generic points and moreover nondegenerate by \refProposition{prop:neighborhood_exists}. We prove the following in \refAppendixSection{sec:Proof_of_proposition_nondeg_almost_everywhere}.

\begin{proposition}
	If Assumptions~\ref{ass:eigenvalues_different_maintext}, \ref{ass:nonvanishing_maintext} hold, then the set $M$ is a nondegenerate manifold for generic points.
	\label{prop:nondeg_almost_everywhere}
\end{proposition}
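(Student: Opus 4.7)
The plan is to lift the generic nondegeneracy already established on $M_b$ (via \refProposition{prop:Mb_is_almost_everywhere_nonsingular} and \refProposition{prop:neighborhood_exists}) to all of $M$ by exploiting the $H$-action $\pi$ from \eqref{eqn:Action_pi}. The engine of the argument is that each $\pi(C)$ is a linear diffeomorphism of $\mathcal{P}$ under which the loss $\mathcal{I}$ is invariant, so both the local submanifold structure of $M$ and the nondegeneracy of the Hessian at points of $M$ are transported along $H$-orbits.

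First I would verify the invariance. Since $C \in H$ is diagonal, $(W_2 C)(C^{-1} W_1) = W_2 W_1$, and $\mathrm{Diag}(C W_2^T W_2 C) \cdot \mathrm{Diag}(C^{-1} W_1 W_1^T C^{-1}) = C^2 C^{-2} \mathrm{Diag}(W_2^T W_2) \mathrm{Diag}(W_1 W_1^T)$, because diagonal matrices commute. Thus $\mathcal{I} \circ \pi(C) = \mathcal{I}$ on all of $\mathcal{P}$, and in particular $\pi(C)(M) = M$. Differentiating this invariance twice at any $W \in M$ yields the intertwining relation $\nabla^2 \mathcal{I}(W) = \mathrm{D}\pi(C)(W)^T \nabla^2 \mathcal{I}(\pi(C)(W)) \mathrm{D}\pi(C)(W)$, so $\mathrm{D}\pi(C)(W)$ sends $\ker \nabla^2 \mathcal{I}(W)$ isomorphically onto $\ker \nabla^2 \mathcal{I}(\pi(C)(W))$. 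Because $\pi(C)$ is a global linear diffeomorphism of $\mathcal{P}$ that preserves $M$, wherever $M$ is a local submanifold it also sends $\mathrm{T}_W M$ isomorphically onto $\mathrm{T}_{\pi(C)(W)} M$. Hence the nondegeneracy condition $\dim \ker \nabla^2 \mathcal{I} = \dim \mathrm{T} M$ is preserved along $H$-orbits.

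Next I would use \refProposition{prop:exists_diagonal_reduction_Mb_to_M} to realize $M$ as the $H$-orbit of $M_b$. Define $\Phi : M_b \times H \to M$ by $\Phi(W, C) = \pi(C^{-1})(W)$; it is a smooth bijection with smooth inverse $W \mapsto (\pi(C_W)(W), C_W^{-1})$, because the components of $C_W$ are given by $(C_W)_i^4 = (W_1 W_1^T)_{ii} / (W_2^T W_2)_{ii}$, a smooth formula whose denominator is nowhere zero on $M$ since $\mathrm{Diag}(W_2^T W_2) \mathrm{Diag}(W_1 W_1^T) = (\norm{\mathcal{W}^{*}}_1^2 / f^2) \mathrm{I}_f$ there. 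Hence $\Phi$ is a diffeomorphism on the open set $(M_b \setminus \mathrm{Sing}(M_b)) \times H$, and by \refProposition{prop:Mb_is_almost_everywhere_nonsingular} the image $\Phi(\mathrm{Sing}(M_b) \times H)$ is a proper Zariski-closed algebraic subset of $M$ of strictly lower dimension than $M$.

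Combining these pieces: on the Zariski-open set $M \setminus \Phi(\mathrm{Sing}(M_b) \times H)$, every point has the form $\pi(C^{-1})(W)$ with $W \in M_b \setminus \mathrm{Sing}(M_b)$. \refProposition{prop:neighborhood_exists} ensures that $M$ is a locally nondegenerate manifold in a neighborhood of such $W$; pushing that neighborhood forward by the linear diffeomorphism $\pi(C^{-1})$ and applying the intertwining relation above produces a neighborhood of $\pi(C^{-1})(W)$ on which $M$ is again a locally nondegenerate manifold. The main obstacle is the algebro-geometric bookkeeping: one must verify that $\Phi$ sends a proper algebraic subset of $M_b$ to a proper algebraic subset of $M$ of strictly lower dimension, which follows from the polynomial nature of the action $\pi$ together with the explicit defining equations of $\mathrm{Sing}(M_b)$ in \eqref{eqn:Definition__Singular_points_of_Mb} and the rank-based characterization of $\mathrm{Sing}(M)$.
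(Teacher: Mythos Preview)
Your proposal is correct and follows essentially the same route as the paper: both arguments transport the generic nondegeneracy of $M_b$ (from \refProposition{prop:Mb_is_almost_everywhere_nonsingular} and \refProposition{prop:neighborhood_exists}) to all of $M$ via the $H$-action $\pi$, using the invariance $\mathcal{I}\circ\pi(C)=\mathcal{I}$ to obtain the Hessian intertwining relation $\nabla^2\mathcal{I}(W)=\mathcal{C}^{\mathrm{T}}\nabla^2\mathcal{I}(\pi(C)(W))\mathcal{C}$ (your $\mathrm{D}\pi(C)(W)$ is the paper's $\mathcal{C}$), and the smooth bijection $M_b\times H\to M$ to carry the smooth-manifold structure across orbits. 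Your treatment of the algebraic bookkeeping for the singular locus is slightly more explicit than the paper's, but the substance is the same.
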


% \noindent
% \emph{Step 6.} 
We \marginnote{\emph{Step 6.}} are now in position to prove \refTheorem{thm:main_proposition_maintext} and \refProposition{prop:convergence_final_case_one} by applying \refProposition{prop:lambda_cotangent_space_is_enough}. \refProposition{prop:nondeg_almost_everywhere} yields that $M$ is a nondegenerate manifold for generic points. Together with \refProposition{prop:neighborhood_exists}, this implies that up to an algebraic set of lower dimension than the dimension of $M$, for each $W \in M$ there exist a neighborhood $U_W \subseteq \mathcal{P}$ such that for any $W' \in U_W \cap M$ there exists a constant $\omega_{W'}$ so that \eqref{eqn:Lower_bound_on_vTHessianv_in_a_neighborhood} holds. Hence, setting $\lambda = \min_{W^{\prime} \in U_{W} \cap M} \omega_{W^{\prime}}$, we obtain a lower bound to \eqref{eqn:lambda_cotangent_space}. We obtain then a proof of convergence close to $M$. If moreover $W \in M_b$, then \refProposition{prop:Lower_bound_to_the_Hessian_restricted_to_directions_normal_to_the_manifold_of_minima}'s lower bound \eqref{eqn:lower_bound_hessian_minimum} to \eqref{eqn:lambda_cotangent_space} proves that if $U_W$ is small enough for $W \in M_b$, the bound \eqref{eqn:convergence_rate_omega} in \refTheorem{thm:main_proposition_maintext} holds by continuity.

In case $\rho = 1$, \refProposition{prop:Lower_bound_to_the_Hessian_restricted_to_directions_normal_to_the_manifold_of_minima}'s lower bound \eqref{eqn:lower_bound_hessian_minimum_case_1} to \eqref{eqn:lambda_cotangent_space} proves \eqref{eqn:convergence_rate_omega__case_e_1} in \refProposition{prop:convergence_final_case_one}.

This concludes the proof.
\qed

\section{Numerics}
\label{sec:Numerical_experiments}

In this section, we implement the gradient descent algorithm
\begin{equation}
  \itrd{W}{t+1}
  = \itrd{W}{t} - \eta \nabla \mathcal{J}( \itrd{W}{t} )
  \label{eqn:Gradient_descent},
\end{equation}
numerically\footnote{The source code of our implementation is available at \url{https://gitlab.tue.nl/20061069/asymptotic-convergence-rate-of-dropout-on-shallow-linear-neural-networks}.}, and apply it to \emph{Dropout}'s objective function in \eqref{eqn:dropout_regular}. We measure the convergence rate of gradient descent for different widths $f$ and dropout probabilities $1-p$ and conduct a comparison of these measurements to our bound on the convergence rate in \eqref{eqn:convergence_rate_Dropconnect_and_Dropout}. 

Related experimental results for the convergence of \emph{Dropout} can be found in \cite{wei2020implicit}. The number of iterations required for convergence, as well as the dependency of the performance on the initialization, have both been experimentally studied for the linear \gls{NN} case \cite{arora2018convergence} as well as for \emph{Dropout} \cite{mianjy2018implicit}.

\subsection{Setup}

\noindent
\emph{Data set.}
We choose a data set from the \emph{UCI Machine Learning Repository}.\footnote{The repository is located at \url{https://archive.ics.uci.edu/}.} We work with a data set that describes the critical temperature of superconductors \cite{HAMIDIEH2018346} with an input dimension $h = 81$ and the output dimension $e = 1$ (i.e., a value for the critical temperature). After first whitening and then normalizing the data, we obtain a matrix $Y \in \R^{1 \times 80}$ that satisfies $\pnorm{Y}{\mathrm{F}} = 1$. This matrix is used in the risk function in  \eqref{eqn:dropout_regular}. Note that \refAssumption{ass:eigenvalues_different_maintext} holds since $\mathrm{rk}(Y) = 1$, and by \refProposition{prop:convergence_final_case_one} \refAssumption{ass:nonvanishing_maintext} also holds.

\noindent
\emph{Stopping criteria.}
In all experiments, we stop the gradient descent algorithm in \eqref{eqn:Gradient_descent} either when the Frobenius norm of the gradient $\pnorm{ \nabla \mathcal{J}(\itrd{W}{t})}{\mathrm{F}}$ dives below a lower bound, or when it reaches a maximum number of iterations. Concretely, we let $T = \inf_{t} \{ t: \pnorm{ \nabla \mathcal{J}(\itrd{W}{t})}{\mathrm{F}} < 10^{-5} \} \wedge T_{\max}$ with $T_{\max} = 10^{6}/2$ be the random termination time of any one run of the gradient descent algorithm.

\noindent
\emph{Initialization.}
In each experiment we set the initial weights $\itrd{W}{0}$ according to one of two methods. The first method we will call \emph{Gaussian initialization}: we set every weight $W_{ijk} \sim \mathrm{Normal}(0,\sigma^2)$ in an independent, identically distributed manner. The second method we will call \emph{$\epsilon$-initialization}: we compute the set of balanced points $M_b$ in \eqref{eqn:Definition_Mb} explicitly, choose a point $W^* \sim \mathrm{Unif}(M_b)$ from it uniformly at random, and then set every weight $W_{ijk} \sim \mathrm{Normal}( W_{ijk}^*, \epsilon^2 )$ in an independent fashion.

\noindent
\emph{Step size.}
In each experiment, the step size is kept fixed and chosen $\eta =10^{-2}$.

\subsection{Results}

\refFigure{fig:experiment_plots} shows convergence rate fit results for different parameters $p, f$ and the two different initialization methods with different values of $\sigma$ and $\epsilon$. Our fitting procedure was a two-step procedure that worked as follows. 

\noindent
\emph{Step 1.}
For various $f \in \mathcal{F} \subset \naturalNumbersPlus$, $p \in \mathcal{P} \subset [0,1]$, we ran gradient descent as explained above. If the run terminated at a time $T < T_{\max}$, then we fitted the model
\begin{equation}
  G(t; a, \beta_{f,p}) 
  = a \e{- \beta_{f,p} t}
  \quad
  \textnormal{to the points}
  \quad
  \bigl\{
  (
  t,
  \pnorm{ \nabla \mathcal{J}(\itrd{W}{t}) }{\mathrm{F}}
  ) : 
  t = \lfloor \gamma T \rfloor, \ldots, T \bigr\}.
  \label{eqn:Data_points}
\end{equation}
Here, $\gamma \in [0,1)$. In this way, we obtain an estimate $\hat{\beta}_{f,p}$ for the parameter $\beta_{f,p}$ with which the model in \eqref{eqn:Data_points} best describes the measured convergence rate. Note that the estimate $\hat{\beta}_{f,p}$ is random because of our initialization. By conducting independent runs, we obtain a set of sample averages 
$
  \{ \langle \hat{\beta}_{f,p} \rangle \}_{ f \in \mathcal{F}, p \in \mathcal{P} }
$. If the fit did not result in a positive estimate $\hat{\beta}_{f,p} > 0$, then this estimate was discarded. This eliminates runs that pass close to a saddle point.

\noindent
\emph{Step 2.}
To obtain \refFigure{fig:experiment_plots}a we fixed $f \in \naturalNumbersPlus$ and then fitted the model
\begin{equation}
  \beta_f( p; b, \alpha )
  = \frac{b p}{f (\frac{p}{1-p})^{\alpha} + 1}
  \quad
  \textnormal{to the points}
  \quad
  \bigl\{
    (
      p
      ,
      \langle \hat{\beta}_{f,p} \rangle
    )
  \bigr\}_{ p \in \mathcal{P} }.
  \label{eqn:Model_fixed_f_varied_p}  
\end{equation}
This gives estimates $\hat{b}, \hat{\alpha}$ for the parameters $b, \alpha$ with which the model in \eqref{eqn:Model_fixed_f_varied_p} best describes the sample average convergence rate. To obtain Figures~\ref{fig:experiment_plots}b,c, we fixed $p \in [0,1]$ and then fitted the model
\begin{equation}
  \beta_p(f;b,c,\alpha)
  =
  \frac{bp(1-p)}{pf^{\alpha} + 1-p} + c
  \quad
  \textnormal{to the points}
  \quad
  \bigl\{
    (
      f
      ,
      \langle \hat{\beta}_{f,p} \rangle
    )
  \bigr\}_{ f \in \mathcal{F} }.  
  \label{eqn:Model_fixed_p_varied_f}  
\end{equation}
This similarly gives estimates $\hat{b}, \hat{c}, \hat{\alpha}$ for the best model parameters $b, c, \alpha$ in \eqref{eqn:Model_fixed_p_varied_f}. 

Note after substituting \eqref{eqn:Model_fixed_f_varied_p} or \eqref{eqn:Model_fixed_p_varied_f} into \eqref{eqn:Data_points}, that both exponents have an extra factor $p$ when compared to our bound in \refProposition{prop:convergence_final_case_one}. This is because we implemented the objective function $\mathcal{J}(W)$ in \eqref{eqn:dropout_regular} as opposed to $\mathcal{I}(W)$ in \eqref{eqn:loss_whitened_scaled_dropout}. Furthermore, note that if our bound in \refProposition{prop:convergence_final_case_one} turns out to be sufficiently sharp, that we can then expect that $\hat{\alpha} \approx 1$ in either model.

\begin{figure}
  \captionsetup[subfigure]{justification=centering}
  \centering
  \begin{subfigure}{0.45\columnwidth}
    \centering
    \includegraphics[width=\linewidth]{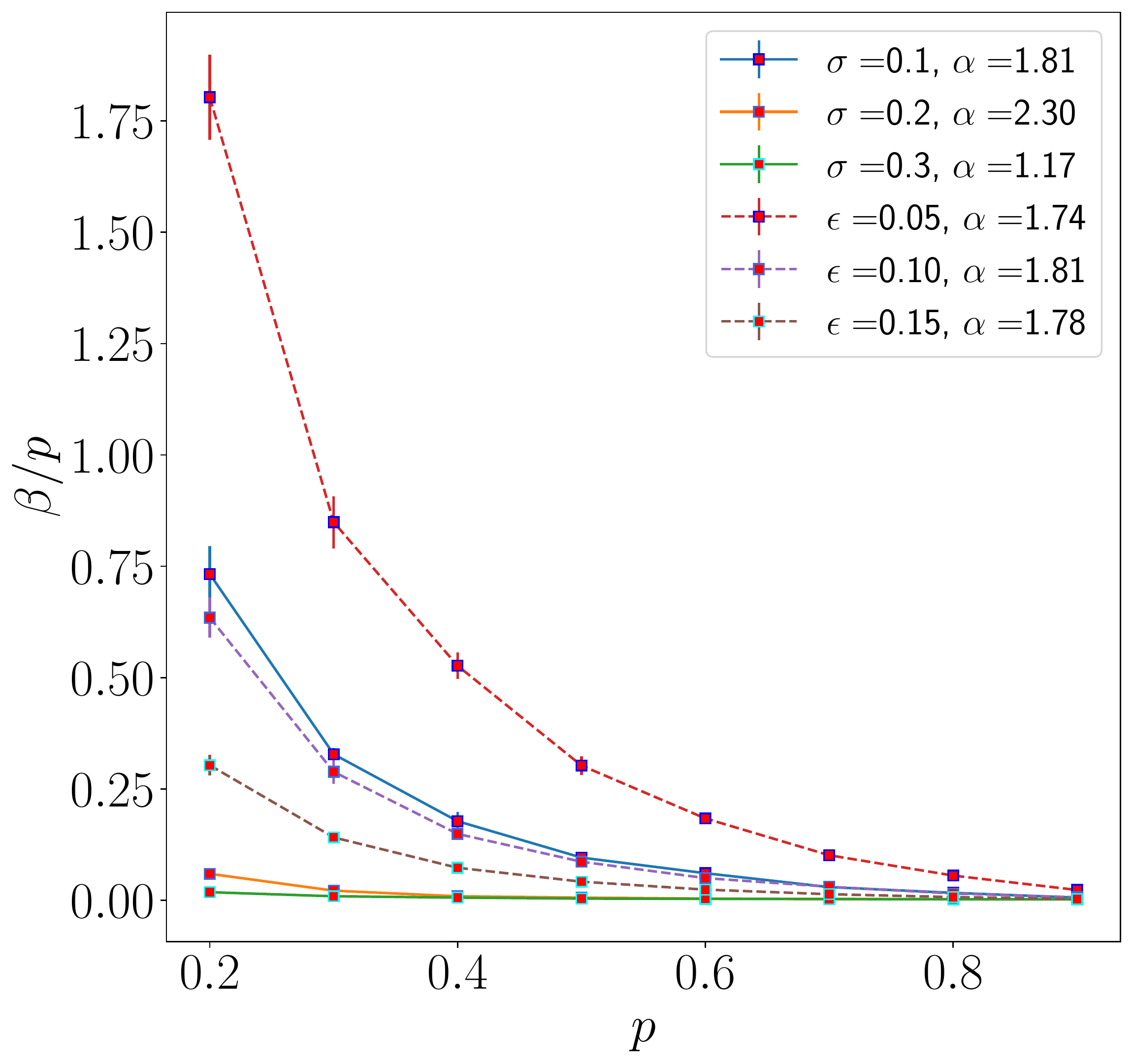}
    \caption{}
    \label{fig:comparison_p}
  \end{subfigure}
  \hfill
  \begin{subfigure}{0.45\columnwidth}
    \centering
    \includegraphics[width=\linewidth]{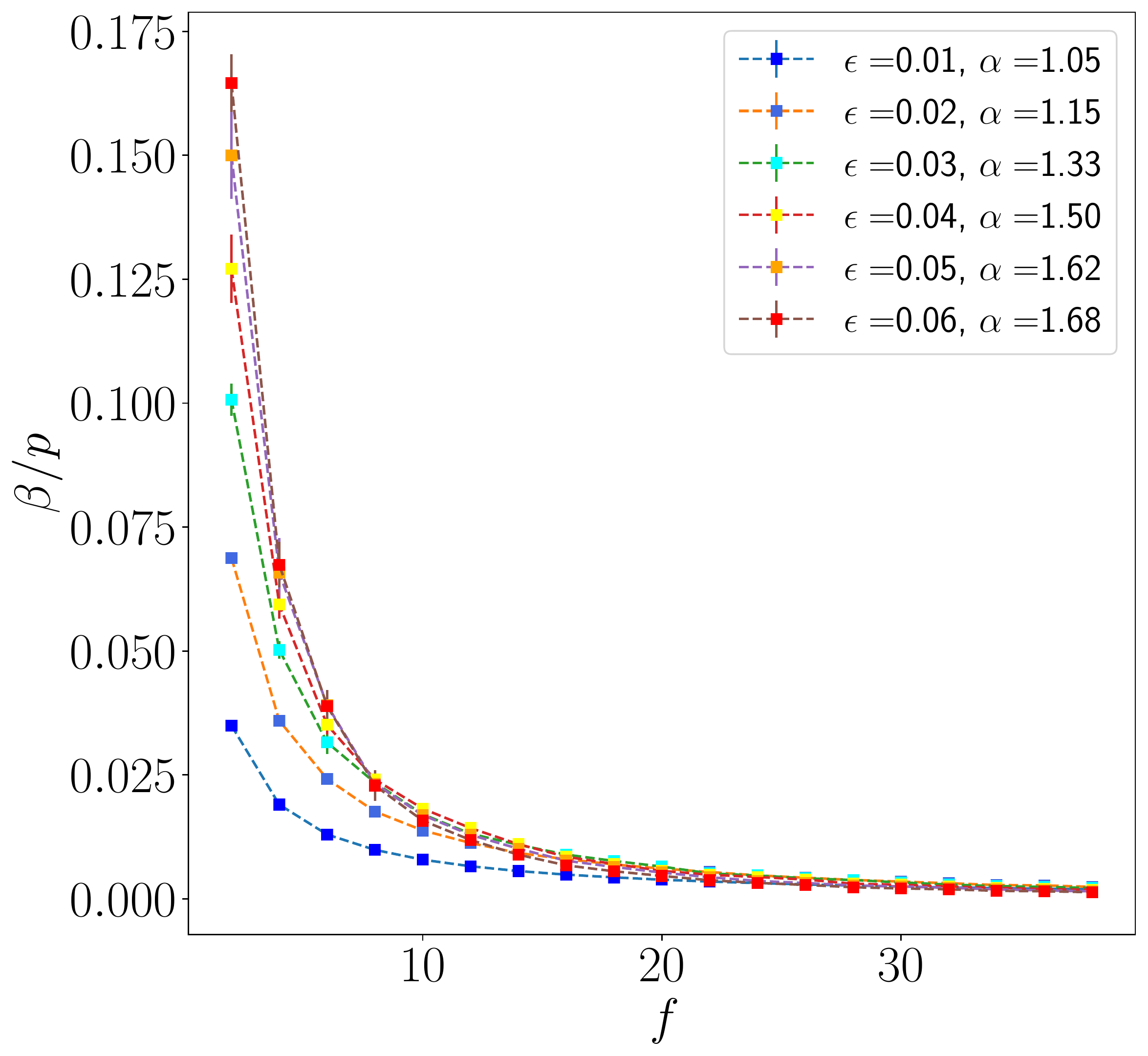}
    \caption{}
    \label{fig:comparison_f}
  \end{subfigure}
  \hfill
  \begin{subfigure}{0.45\columnwidth}
    \centering
    \includegraphics[width=\linewidth]{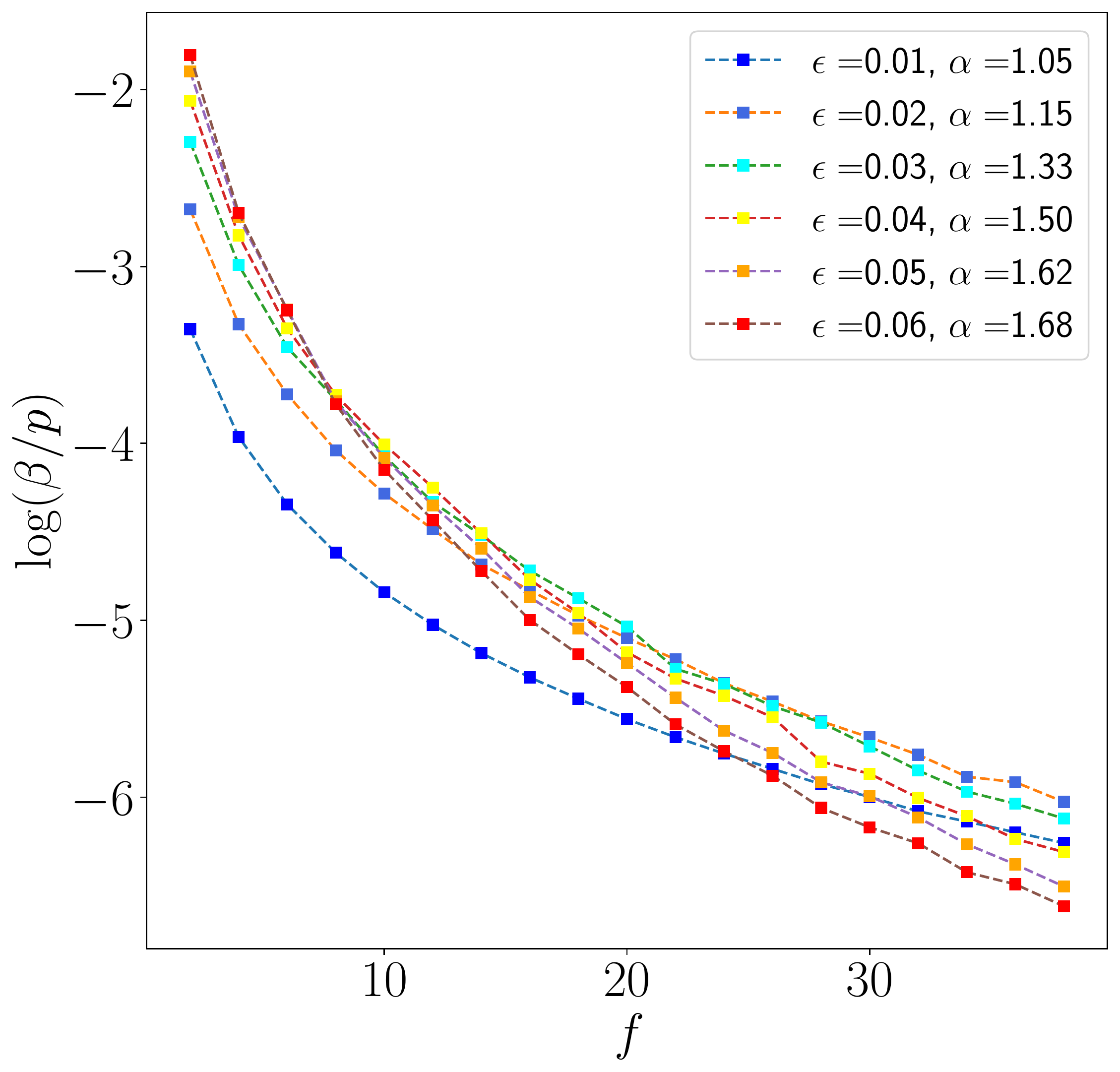}
    \caption{}
    \label{fig:comparison_f2}
  \end{subfigure}
  \hfill
  \begin{subfigure}{0.45\columnwidth}
    \centering
    \includegraphics[width=\linewidth]{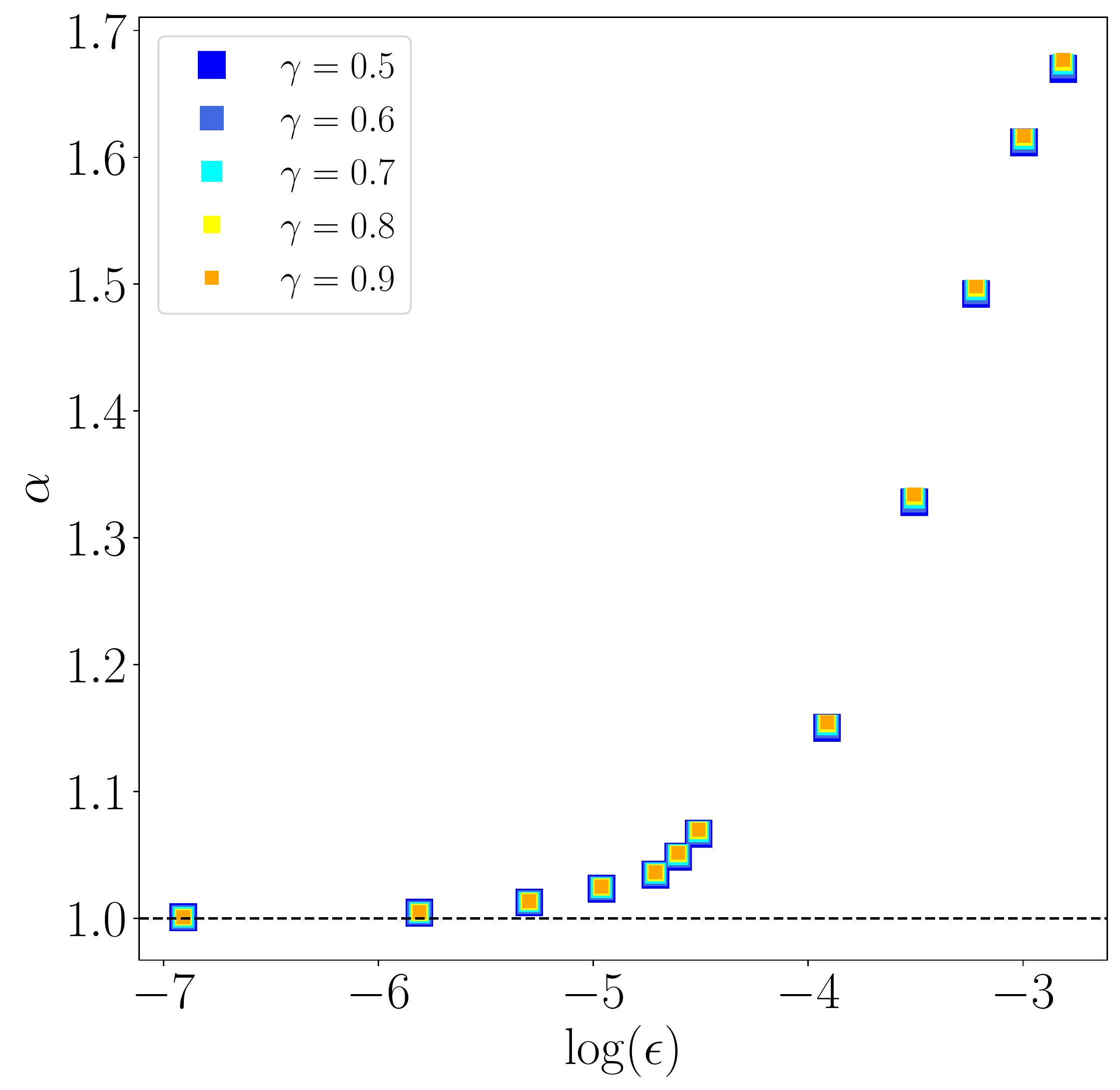}
    \caption{}
    \label{fig:comparison_alpha}
  \end{subfigure}

  \caption{(a) Sample average convergence rate as a function of $p$ for fixed $f = 20$. Here, $\gamma = 0.9$. Our fits of \eqref{eqn:Model_fixed_f_varied_p} are also shown, and the inset shows the chosen $\sigma, \epsilon$ as well as the resulting fit parameters $\hat{\alpha}$. Observe that the sample averages are decreasing in $p$, just as our bound in \eqref{eqn:convergence_rate_Dropconnect_and_Dropout}. (b) Sample average convergence rate as a function of $f$ for fixed $p = 0.7$. The fits of \eqref{eqn:Model_fixed_p_varied_f} are again shown and the inset gives the resulting fit parameters. Recall that by \eqref{eqn:convergence_rate_Dropconnect_and_Dropout}, we may expect for sufficiently small $\epsilon$ that $\beta/p \sim 1/f$ as $f \to \infty$ and consequently $\hat{\alpha} \approx 1$. This is confirmed by the different values of $\hat{\alpha}$ shown in the inset. Observe also that for large $f$, the convergence rate appears to become independent of $f$ but remains positive. (c) A zoomed in variant of (b) obtained by plotting on a logarithmic scale; note in particular that for larger $\epsilon$ we obtain a shift to lower $\beta/p$ as $f$ becomes larger. (d) All resulting fit parameters $\hat{\alpha}$ as a function of $\log \epsilon$ for fixed $p = 0.7$ and various $\gamma$. Observe that indeed $\hat{\alpha} \downarrow 1$ as $\epsilon \downarrow 0$, as predicted by our bound in \eqref{eqn:convergence_rate_Dropconnect_and_Dropout}. This shows that the bound is sharp in this regime.}
  \label{fig:experiment_plots}
\end{figure}

\subsection{Discussion}

\refFigure{fig:experiment_plots} shows that the local bound in \eqref{eqn:convergence_rate_Dropconnect_and_Dropout} characterizes the convergence rate of gradient descent close to convergence qualitatively. The characteristic decay of the convergence rate as $f$ and $p/(1-p)$ increase, as predicted by \eqref{eqn:convergence_rate_Dropconnect_and_Dropout}, is confirmed experimentally with the fits in \refFigure{fig:comparison_p}, \ref{fig:comparison_f}, respectively.
Observe that $\hat{\beta}$ and $\hat{\alpha}$ depend strongly on the initialization, which is parameterized here by $\sigma$ and $\epsilon$. This may be explained as follows. For larger $\sigma, \epsilon$, we initialize farther away from minima. Trajectories of gradient descent that follow valleys of the objective function $\mathcal{J}(W)$---regions where the loss is close to the minima and where slower convergence rates are expected---are then favored. Hence, we can then expect to see a \emph{smaller} sample average convergence rate.

Similarly, resulting fit values for $\hat{\alpha}$ that are greater than one may also explained by a bias induced by the random initialization. Observe in particular for the cases of $\epsilon$-close initialization in \refFigure{fig:comparison_f2} that while for $f \lesssim 24$ a larger $\epsilon$ yields a larger $\hat{\beta}/p$, the contrary occurs for $f \gtrsim 24$. If $f$ is small, then the number of valleys of the objective function $\mathcal{J}(W)$ is also small; consequently, initializations that are further away from minima then favor gradient descent trajectories that have higher convergence rates. As  $f$ increases, the number of valleys also increases; initializations that are further away from minima then favor gradient descent trajectories that tend to get stuck in valleys for longer times. Note that it is not clear whether this explanation also works for the cases of Gaussian initialization, because there we do not guarantee that we initialize close to a minima.

These various decay rates that were measured were used to calculate fit parameters $\hat{\alpha}$. These tend to one as $\epsilon$ becomes smaller as shown in \refFigure{fig:comparison_alpha}, which is in accordance with \refTheorem{thm:main_proposition_maintext}. Note that while the values of $\hat{\alpha}$ tend to $1$ as $\epsilon \to 0$ independently of $\gamma$, there is a shift to larger values of $\hat{\alpha}$ as we increase $\epsilon$. This is because we are seeing an average convergence rate of the trajectories and not the bound in \eqref{eqn:convergence_rate_Dropconnect_and_Dropout}.

Finally, \refFigure{fig:experiment_plots} indicates that independently of the regularization properties of \emph{Dropout} and the inherent scaling of $1/p$ in the number of iterations \cite{senen2020almost}, for $p \to 1$ or $f \to \infty$, the landscape of \emph{Dropout} close to the minimum becomes less rough and suggests that the number of valleys with approximate minima increases, thereby making finding approximate minima easier.
\section{Conclusion}
\label{sec:Conclusion}

In this paper we have analyzed the convergence rate of gradient flow on the objective functions induced by \emph{Dropout} and \emph{Dropconnect} for shallow linear \glspl{NN}. \refTheorem{thm:main_proposition_maintext} gives a lower bound on the convergence rate that depends implicitly on the data matrix $Y$, the probability of dropping nodes or edges $1-p$, and the structure parameters $e, f, h$ of the \gls{NN}. Its proof relied on the application of a state-of-the-art results for the convergence of nonconvex objective functions \cite{fehrman2019convergence}, combined with careful analyses of the set of minimizers as well as lower bounding the Hessian of the scaled dropout risk $\mathcal{I}(W)$. We also provide in \refProposition{prop:convergence_final_case_one} closed-form expression for our lower bound on the convergence rate of gradient flow in the case of a one-dimensional output for \emph{Dropout} and \emph{Dropconnect}. The bounds give insight into the dependencies of the convergence rate in dropout.

\bibliographystyle{plain}
\bibliography{biblio}

\appendix

\onecolumn

\newpage
%\documentclass[../main.tex]{subfiles}
%\begin{document}

\section*{Appendix}
\label{secappendix:main_proofs}

For any vector $a=(a_1, \cdots, a_f)$, we denote by $\mathrm{Diag}(a_1, \cdots, a_f)$ the matrix in $\R^{f \times f}$ with the vector $a$ in the diagonal and zeroes everywhere else. For a matrix $A \in \R^{f \times f}$, we denote $\mathrm{Diag}(A) = \mathrm{Diag}(A_{11}, \ldots, A_{ff})$. For a matrix $A$ with singular values $\lambda_1, \ldots, \lambda_r$ we denote the 1-norm as $\pnorm{A}{1} = \sum_{i=1}^r \lambda_i$. 

\section{On the assumptions}
\label{secappendix:on_the_assumptions}

In order to establish \refTheorem{thm:main_proposition_maintext} and \refProposition{prop:convergence_final_case_one} and for these results to be applicable in a range of scenarios, we opted for Assumptions \ref{ass:eigenvalues_different_maintext},~\ref{ass:nonvanishing_maintext}. These technical assumptions are sufficient for our proofs and in fact allow for fairly generic data matrices $Y$.

To see this, consider first that the subset of matrices of a fixed rank that do not satisfy \refAssumption{ass:eigenvalues_different_maintext} has measure zero. Hence if the data is for example drawn randomly from a data distribution with continuous support, then the assumption will hold with high probability. Consider second that \refAssumption{ass:nonvanishing_maintext} is not very restrictive either: for example, if there is a real Hadamard orthogonal matrix $S \in \mathrm{O}(f)$ in dimension $f$, that is, $|S_{ij}|^2 = 1/f$ for all $i,j$, then we can use this $S$ as an example that satisfies \refAssumption{ass:nonvanishing_maintext} independently of the eigenvalues $\sigma_1, \ldots, \sigma_r$. Furthermore like before, the measure of the subset of $S \in \mathrm{O}(f)$ not satisfying \refAssumption{ass:nonvanishing_maintext} is zero. 

Additionally, while \refAssumption{ass:nonvanishing_maintext} represents a sufficient condition for \refLemma{lemma:assumption_holds_if_nonzero} to hold, we can give the following heuristic argument as to why \refLemma{lemma:assumption_holds_if_nonzero} may likely hold independently of \refAssumption{ass:nonvanishing_maintext}: 
Observe from the definition of $\bar{M}_b$, recall \eqref{eqn:Alternative_representation_of_Mb} and \eqref{eqn:Definition_Mbbar}, that we need $f-1$ satisfied constraints on $S \in \bar{M}_b$ in order for it to belong to $M_b$. Here $S$ is to be understood as an equivalence class. Also, in order for $M_b$ to be a manifold, we at least need that the number of constraints is not larger than the dimension of the base space $\bar{M}_b$. Recall now that we have an explicit parametrization of $T_{W} \bar{M}_b$ in \eqref{eqn:Tangent_space_at_W_of_Mbbar}. Hence, the dimensional constraints require for $\dim(X) + \dim(E) = 1/2(\rho^{2} - \rho) + \rho( f - \rho) = 1/2\rho + (f - 1)\rho - (1/2)\rho^2 \geq f-1$ to hold. Note this equality as a function of $\rho$ is achieved for $\rho = 1$. On the other hand, if $\rho = f$, then the inequality reads $1/2(f-1)f \geq f-1$. This inequality holds for $f \geq 2$. Summarizing, these inequalities hold for any $\rho \leq f$ and $f \geq 2$, and we may therefore expect $\mathrm{rk}(\mathrm{D}_{W} T) = f-1$ to hold for some $W \in M_b$.

\section{Proofs of \refSection{sec:Preliminaries}}

\subsection{Proof \texorpdfstring{of \eqref{eqn:Data_whitened_objective_function}}{} -- Data whitening} 
\label{secappendix:data_whitening}

For simplicity, let $\mathcal{W} = W_2 W_1$. Recall also that $Y = \mathcal{Y} \mathcal{X}^{\mathrm{T}} (\mathcal{X} \mathcal{X}^{\mathrm{T}})^{-1/2}$. We now apply the identity
\begin{equation}
	\pnorm{A - B}{\mathrm{F}}^2
	= \pnorm{A}{\mathrm{F}}^2 - 2 \langle A, B \rangle_{\mathrm{F}} + \pnorm{B}{\mathrm{F}}^2
	= \mathrm{Tr}[AA^{\mathrm{T}}] - 2 \mathrm{Tr}[A^{\mathrm{T}}B] + \mathrm{Tr}[BB^{\mathrm{T}}]
	\label{eqn:Frobenius_norm_identity}
\end{equation}
twice, to obtain
\begin{align}
	&
	\mathcal{R}(W)
	= \norm{\mathcal{Y} - \mathcal{W}\mathcal{X}}^2_{\mathrm{F}}
	%\nonumber \\ &
	\eqcom{\ref{eqn:Frobenius_norm_identity}}= \mathrm{Tr}[ \mathcal{Y}\mathcal{Y}^{\mathrm{T}} ] - 2 \mathrm{Tr}[ \mathcal{Y}\mathcal{X}^{\mathrm{T}}\mathcal{W} ] + \mathrm{Tr}[ \mathcal{W}\mathcal{X}\mathcal{X}^{\mathrm{T}}\mathcal{W}^{\mathrm{T}} ]
	\nonumber \\ &
	= \mathrm{Tr}[\mathcal{Y}\mathcal{Y}^{\mathrm{T}}] - 2 \mathrm{Tr}[ \mathcal{Y}\mathcal{X}^{\mathrm{T}}(\mathcal{X}\mathcal{X}^{\mathrm{T}})^{-1/2}(\mathcal{W}(\mathcal{X}\mathcal{X}^{\mathrm{T}})^{1/2})^{\mathrm{T}} ]
	%\nonumber \\ &
	+ \mathrm{Tr}[ (\mathcal{W}(\mathcal{X}\mathcal{X}^{\mathrm{T}})^{1/2})(\mathcal{W}(\mathcal{X}\mathcal{X}^{\mathrm{T}})^{1/2})^{\mathrm{T}} ]
	\nonumber \\ &
	=
	\mathrm{Tr}[YY^{\mathrm{T}}] - 2 \mathrm{Tr}[ Y (\mathcal{W}(\mathcal{X}\mathcal{X}^{\mathrm{T}})^{1/2})^{\mathrm{T}} ]
	\nonumber \\ &
	\phantom{==}
	+ \mathrm{Tr}[ (\mathcal{W}(\mathcal{X}\mathcal{X}^{\mathrm{T}})^{1/2})(\mathcal{W}(\mathcal{X}\mathcal{X}^{\mathrm{T}})^{1/2})^{\mathrm{T}} ]
	+ \mathrm{Tr}[\mathcal{Y}\mathcal{Y}^{\mathrm{T}}] - \mathrm{Tr}[YY^{\mathrm{T}}]
	\nonumber \\ &
	\eqcom{\ref{eqn:Frobenius_norm_identity}}= \pnorm{Y - \mathcal{W}(\mathcal{X}\mathcal{X}^{\mathrm{T}})^{1/2} }{\mathrm{F}} + \mathrm{Tr}[ \mathcal{Y}\mathcal{Y}^{\mathrm{T}} ] - \mathrm{Tr}[ YY^{\mathrm{T}} ].
\end{align}

\subsection{Proof of \refLemma{lemma:dropout_loss_explicit_with_p}}
\label{secappendix:dropout_J}

\noindent
\emph{Proof of \eqref{eqn:dropout_regular}.}  Note that
\eqref{eqn:dropout_regular} is a known expression for \emph{Dropout} in literature. In particular, consult \cite[Eq.~(10)]{cavazza2017dropout}, \cite[Eq.~(10)]{mianjy2018implicit} and \cite[Lemma~A.1]{mianjy2019dropout}.

\noindent
\emph{Proof of \eqref{eqn:dropconnect}.}
When using \emph{Dropconnect}, we have for $i \in \{1,2\}$ that each matrix element $F_{ijk} \sim \mathrm{Ber}(p)$ is independent and identically distributed as indicated. We find that
\begin{align}
	\mathcal{J}(W)
	&
	= \expectation{ \pnorm{Y - (W_2 \odot F_2) (W_1 \odot F_1)}{\mathrm{F}}^2 } \nonumber \\ &
	= \expectationBig{ \pnorm{Y - p^2 W_2 W_1 + p^2 W_2 W_1 - (W_2 \odot F_2) (W_1 \odot F_1) }{\mathrm{F}}^2 }
	\nonumber \\ &
	= \expectationBig{ \pnorm{Y - p^2 W_2 W_1}{\mathrm{F}}^2 + \pnorm{ p^2 W_2 W_1 - (W_2 \odot F_2) (W_1 \odot F_1) }{\mathrm{F}}^2
	\nonumber \\ &
	\phantom{= \mathbb{E}\Bigl[}+ 2\sum_{ij} \bigl( ( Y - p^2 W_2 W_1 )^{\mathrm{T}}( p^2 W_2 W_1 -(W_2 \odot F_2) (W_1 \odot F_1) \bigl)_{ij} }.
	\label{eqn:bias_variance_Dropconnect}
\end{align}
Note that $\expectationWrt{ (W_2 \odot F_2) (W_1 \odot F_1) }{} = p^2 W_2 W_1$, so the right-most term equals zero. Furthermore, we can expand
\begin{align}
	\mathbb{E}\Bigl[ \pnorm{ p^2 W_2 W_1 - (W_2 \odot F_2) (W_1 \odot F_1) }{\mathrm{F}}^2 \Bigr] & = \pnorm{ p^2 W_2 W_1}{\mathrm{F}}^2 + \expectationWrt{\pnorm{(W_2 \odot F_2) (W_1 \odot F_1) }{\mathrm{F}}^2}{} \nonumber \\ &
	- 2 \mathbb{E}\Bigl[ \sum_{ij} ( (p^2 W_2 W_1)^{\mathrm{T}}(W_2 \odot F_2 W_1 \odot F_1) )_{ij}  \Bigr]
	.
	\label{eqn:bias_variance_Dropconnect2}
\end{align}
After now (i) substituting \eqref{eqn:bias_variance_Dropconnect2} into \eqref{eqn:bias_variance_Dropconnect} and rearranging terms, and then (ii) writing out the Frobenius norm, it follows that
\begin{align}
	&
	\mathcal{J}(W) - \pnorm{Y - p^2 W_2W_1}{\mathrm{F}}^2 
	\eqcom{i}= 
	\expectation{ \norm{(W_2 \odot F_2)  (W_1 \odot F_1)}_F^2 }
	\nonumber \\ &
	= \expectationBig{ \expectationBig{ \pnorm{(W_2 \odot F_2) (W_1 \odot F_1)}{\mathrm{F}}^2 \Big\vert F_1 } }
	% \nonumber \\ &
	\eqcom{ii}= \expectationBig{ \expectationBig{ \sum_{a,b} \Bigl( \sum_{i} W_{2ai} F_{2ai} W_{1ib} F_{1ib} \Bigr)^2 \Big\vert F_1 } }
	.
\end{align}
Use (iii) the fact that $( \sum a_i b_i )^2 = \sum_i a_i^2 b_i^2 + \sum_{i \neq j} a_i b_i a_j b_j$ now twice, to conclude that
\begin{align}
	&
	\mathcal{J}(W) - \pnorm{Y - p^2 W_2W_1}{\mathrm{F}}^2 	
	\nonumber \\ &
	\eqcom{iii}= 
	\expectationBig{ 
		\sum_{a,b} 
		\Bigl( 
			(p - p^2) \sum_{i} W_{2ai}^2 W_{1ib}^2 F_{1ib}^2 + p^2 \bigl( \sum_{i} W_{2ai} W_{1ib} F_{1ib} \bigr)^2 
		\Bigr)
	}
	\nonumber \\ &
	= 
	\sum_{a,b} 
	\Bigl(
		p (p - p^2)  \sum_{i} W_{2ai}^2 W_{1ib}^2 + p^2 \Bigl( (p - p^2) \sum_{i} W_{2ai}^2 W_{1ib}^2  + p^2 \bigl( \sum_i W_{2ai} W_{1ib} \bigr)^2 \Bigr)
	\Bigr)
	\nonumber \\ &
	= 
	(p^2 - p^4) \mathrm{Tr}\bigl[ \mathrm{Diag}(W_1W_1^{\mathrm{T}}) \mathrm{Diag}(W_2W_2^{\mathrm{T}}) \bigr] + p^4 \pnorm{W_2 W_1}{\mathrm{F}}^2.
	\label{eqn:variance_computation_Dropconnect}
\end{align}
Substituting \eqref{eqn:variance_computation_Dropconnect} into \eqref{eqn:bias_variance_Dropconnect} results in \eqref{eqn:dropconnect}. This completes the proof.

\subsection{Proof of \refLemma{lemma:first_integral_main}}
\label{secappendix:Proof_that_Mdb_equals_Mb}

Let $W(t) = (W_2(t), W_1(t))$ denote a solution to \eqref{eqn:gradient_flow_on_I}. We will now prove the following facts:
\begin{itemize}[topsep=2pt,itemsep=2pt,partopsep=2pt,parsep=2pt,leftmargin=0pt]
	\item[(i)] If $\mathrm{Diag}(W_1(0) W_1^{\mathrm{T}}(0)) = \mathrm{Diag}(W_2^{\mathrm{T}}(0) W_2(0))$, then $\mathrm{Diag}(W_1(t) W_1^{\mathrm{T}}(t)) = \mathrm{Diag}(W_2^{\mathrm{T}}(t) W_2(t))$ for any $t \geq 0$.
	\item[(ii)] If $W_1(0) W_1^{\mathrm{T}}(0) = W_2^{\mathrm{T}}(0) W_2(0)$, then
	$W_1(t) W_1^{\mathrm{T}}(t) = W_2^{\mathrm{T}}(t) W_2(t)$ for any $t \geq 0$.
	\item[(iii)] If $\mathrm{Diag}(W_1(0) W_1^{\mathrm{T}}(0)) = \mathrm{Diag}(W_2^{\mathrm{T}}(0) W_2(0))$ and $W(t)$ converges as $t \to \infty$, then also
	$
	\lim_{t \to \infty} W_1(t) W_1^{\mathrm{T}}(t) = \lim_{t \to \infty} W_2^{\mathrm{T}}(t) W_2(t).
	$
\end{itemize}

Note that (i), (ii) show that $M_{db}, M_b$ are invariant sets for the differential equation \eqref{eqn:gradient_flow_on_I}, respectively. In the argumentation that follows, let $\nabla_i$ denote the gradient operator in matrix form for $i \in \{1,2\}$. For example, $\nabla_2 \mathcal{I}(W) \in \R^{e \times f}$ and $ (\nabla_2 \mathcal{I}(W))_{ij} = \partial \mathcal{I}(W)/ \partial (W_2)_{ij}$. The negative gradients of \eqref{eqn:loss_whitened_scaled_dropout} are computed e.g.\ in \cite{mianjy2018implicit} and are given by:
\begin{align}
-\nabla_{1} \mathcal{I}(W) 
&= 2W_2^{\mathrm{T}}(Y - W_2W_1) - 2\lambda \mathrm{Diag}(W_2^{\mathrm{T}}W_2)  W_1 ,
\nonumber \\ 
-\nabla_{2} \mathcal{I}(W) 
&= 2(Y - W_2W_1)W_1^{\mathrm{T}} - 2\lambda W_2 \mathrm{Diag}(W_1W_1^{\mathrm{T}}).
\label{eqn:loss_matrixfact_gradient}
\end{align}

\noindent 
\emph{Proof of (i).}
We take time derivatives of $W_1(t) W_1^{\mathrm{T}}(t)$, $W_2^{\mathrm{T}}(t) W_2(t)$ and substitute \eqref{eqn:gradient_flow_on_I}, i.e., $\d{} W_i / \d{t} = - \nabla_i \mathcal{I}(W(t))$ for $i=1,2$. This results in
\begin{align}
\frac{\d{}}{\d{t}} \bigl( W_1(t) W_1^{\mathrm{T}}(t) \bigr) &= -W_1(t) \nabla_1 \mathcal{I}(W(t))^{\mathrm{T}} -   \nabla_1 \mathcal{I}(W(t))W_1(t)^{\mathrm{T}}, 
\label{eqn:IC_for_Lemma_9a}
\\
\frac{\d{}}{\d{t}} \bigl( W_2^{\mathrm{T}}(t) W_2(t) \bigr) &= -\nabla_2 \mathcal{I}(W(t))^{\mathrm{T}} W_2(t)  - W_2(t)^{\mathrm{T}}\nabla_2 \mathcal{I}(W(t)),
\label{eqn:IC_for_Lemma_9b}
\end{align}
respectively. We subtract \eqref{eqn:IC_for_Lemma_9b} from \eqref{eqn:IC_for_Lemma_9a} and then substitute \eqref{eqn:loss_matrixfact_gradient}, to find that
\begin{align}
&
\frac{\d{}}{\d{t}} \bigl( W_1(t) W_1^{\mathrm{T}}(t) - W_2^{\mathrm{T}}(t) W_2(t) \bigr)
\nonumber \\ &
= -2\lambda \Bigl( \mathrm{Diag}(W_2^{\mathrm{T}}(t) W_2(t))W_1(t) W_1^{\mathrm{T}}(t) 
+ W_1(t) W_1^{\mathrm{T}}(t)\mathrm{Diag}(W_2^{\mathrm{T}}(t) W_2(t)) 
\nonumber \\ &
\phantom{= -2\lambda \Bigl(}- W_2^{\mathrm{T}}(t) W_2(t)\mathrm{Diag}(W_1(t) W_1^{\mathrm{T}}(t)) 
- \mathrm{Diag}(W_1(t) W_1^{\mathrm{T}}(t))W_2^{\mathrm{T}}(t) W_2(t) \Bigr).
\label{eqn:lemma_db_implies_b}
\end{align}
Conclude in particular that
\begin{equation}
\frac{\d{}}{\d{t}} \bigl( \mathrm{Diag}(W_1(t) W_1^{\mathrm{T}}(t)) - \mathrm{Diag}(W_2^{\mathrm{T}}(t) W_2(t)) \bigr) 
= 0,
\label{eqn:lemma_Ats_evolution}
\end{equation}
by taking diagonals. Its solution is given by
\begin{equation}
\mathrm{Diag}(W_1(t) W_1^{\mathrm{T}}(t)) - \mathrm{Diag}(W_2^{\mathrm{T}}(t) W_2(t)) 
= \mathrm{Diag}(W_1(0) W_1^{\mathrm{T}}(0)) - \mathrm{Diag}(W_2^{\mathrm{T}}(0) W_2(0)),
\label{eqn:lemma_Ats_evolution_solution}
\end{equation}
i.e., a constant. This proves (i).

\noindent
\emph{Proof of (ii).}
The implied and weaker assumption  $\mathrm{Diag}(W_1(0) W_1^{\mathrm{T}}(0)) = \mathrm{Diag}(W_2^{\mathrm{T}}(0) W_2(0))$ combined with \eqref{eqn:lemma_Ats_evolution_solution} reveals to us that
\begin{equation}
	\mathrm{Diag}(W_1(t) W_1^{\mathrm{T}}(t)) = \mathrm{Diag}(W_2^{\mathrm{T}}(t) W_2(t)) = A(t)
	\label{eqn:Function_At}
\end{equation}
say, for any $t \geq 0$. Combining $W_1(t) W_1^{\mathrm{T}}(t) - W_2^{\mathrm{T}}(t) W_2(t) = S(t)$, say, with \eqref{eqn:Function_At} lets us reduce \eqref{eqn:lemma_db_implies_b} to
\begin{equation}
	\frac{ \d{S(t)} }{ \d{t} } 
	= -2 \lambda \bigl( A(t)S(t) + S(t)A(t) \bigr).
	\label{eqn:lemma_reduction_Mdb_to_Mb_1}
\end{equation}
The solution of \eqref{eqn:lemma_reduction_Mdb_to_Mb_1} in a neighborhood $V$ of $0$ is given by
\begin{equation}
	S(t) 
	= \e{ -2\lambda \int_0^t A(s) \d{s} } S(0) \e{ -2\lambda \int_0^t A(s) \d{s} }.
	\label{eqn:lemma_reduction_Mdb_to_Mb_2}
\end{equation}
Since $S(0) = 0$ by assumption, we have that $S(t) = 0$ for all $t \geq 0$. This proves (ii).

\noindent
\emph{Proof of (iii).} We split into cases.

\noindent
\underline{Case 1:} Suppose that there exists an $l$ such that both the row $W_{1 l \cdot}(t)$ as well as the column $W_{2 \cdot l}(t)$ converge to $0$. Then
\begin{equation}
\bigl( W_1(t) W_1^{\mathrm{T}}(t) \bigr)_{ij}
= \sum_k W_{1ik}(t) W_{1jk}(t)
\to 0
\quad 
\textnormal{whenever}
\quad 
i = l 
\quad 
\textnormal{or}
\quad  
j=l
\end{equation}
and similarly 
\begin{equation}
\bigl( W_2^{\mathrm{T}}(t) W_2(t) \bigr)_{ij}
= \sum_k W_{2ki}(t) W_{2kj}(t)
\to 0
\quad 
\textnormal{whenever}
\quad 
i = l 
\quad 
\textnormal{or}
\quad  
j=l.
\end{equation}
In particular, we have that 
\begin{equation}
	\lim_{t \to \infty} \bigl( W_2^{\mathrm{T}}(t) W_2(t) \bigr)_{ij}
	= \lim_{t \to \infty} \bigl( W_1(t) W_1^{\mathrm{T}}(t) \bigr)_{ij}
	\quad
	\textnormal{whenever}
	\quad
	i = l
	\quad
	\textnormal{or}
	\quad
	j = l.
\end{equation}

\noindent
\underline{Case 2:} 
Consider now any $l$ for which either the row $W_{1l\cdot}(t)$ or the column $W_{2 \cdot l}(t)$ does not converge to zero. In particular, there must then exist a sufficiently large $t_l \geq 0$ and $\epsilon_l > 0$ such that
\begin{equation}
	A_{ll}(t) 
	= \sum_k W_{1lk}^2(t)
	\eqcom{i}= \sum_k W_{2kl}^2(t)
	\geq \epsilon_l
\end{equation}
for all $t \geq t_l$. We therefore also have by \eqref{eqn:lemma_reduction_Mdb_to_Mb_2} that
\begin{align}
	S_{lj}(t)
	&
	= 
	\e{ -2 \lambda \int_{t_l}^t A_{ll}(s) \d{s} } S_{l,j}( t_l ) \e{ -2 \lambda \int_{t_l}^t A_{jj}(s) \d{s} }
	\nonumber \\ &
	\leq | S_{lj}(t_l) | \e{ - 2 \lambda \varepsilon_l ( t - t_l ) }
	\to 0
	\quad
	\textnormal{for}
	\quad
	j = 1, \ldots, f.
\end{align}
Hence, we obtain $\lim_{t \to \infty} S_{lj}(t) = 0$ for any $j$ and so (iii) is proven.

\noindent
\emph{Proof that $M_b = M_{db}$.}
Fact (i) implies that $M_{db}$ is an invariant set for \eqref{eqn:gradient_flow_on_I}. Fact (iii) tell us that if $W(0) \in M_{db}$ and $W(t)$ converges, then $\lim_{t \to \infty} W(t) \in M_b$. Combining facts (i) and (iii), it must be that $M_{db} \subseteq M_b$.

The inclusion $M_b \subseteq M_{db}$ follows immediately from \refDefinition{definition:balanced_maintext}. This concludes the proof.

\section{Proof of \refProposition{prop:lambda_cotangent_space_is_enough} }
\label{secappendix:local_convergence_GF}

\refProposition{prop:lambda_cotangent_space_is_enough} is a specification of \cite[Proposition~3.1]{fehrman2019convergence}. To arrive at \refProposition{prop:lambda_cotangent_space_is_enough} , all we need to do is prove that \cite[Proposition~3.1]{fehrman2019convergence} holds with the implicit convergence rate there ($\lambda$) replaced by the convergence rate
\begin{equation}
\min_{w \in \bar{V}_{R_0,\delta_0}(x_0)} \min_{\substack{\norm{v} = 1\\ v \in \ker \nabla^2 f(w)^{\perp}}} \abs{v^{\mathrm{T}} \nabla^2 f(w) v},
\end{equation}
where $V_{R, \delta}(x_0)$ is defined in \eqref{eqn:definition_V_set}. The convergence rate appears implicitly in the proof of \cite[Proposition~3.1]{fehrman2019convergence} after the application of \cite[Lemma 2.9]{fehrman2019convergence} at \cite[(3.11)]{fehrman2019convergence}. Hence, we need to make small but appropriate modifications to these steps in the proof of \cite[Lemma 2.9]{fehrman2019convergence}.

\noindent
\emph{Modifications to the proof of \cite[Lemma 2.9]{fehrman2019convergence}.}
Let $x_0 \in M \cap U$. Since $M \cap U$ is a nonempty $\mathfrak{d}$-dimensional submanifold of $\R^d$, we have by \cite[Proposition 2.1]{fehrman2019convergence} that there exists a neighborhood $V_*(x_0)$ of $x_0$ such that:
\begin{itemize}[topsep=2pt,itemsep=2pt,partopsep=2pt,parsep=2pt,leftmargin=0pt]
\item[(a)] For every $x \in V_*(x_0)$, there exists a unique projection $x_{*} \in M \cap U$ say such that
$
\pnorm{x - x_{*}}{} = d(x, M \cap U).
$
\item[(b)] This projection map $x \to x_*$ is locally $C^1$-smooth.
\end{itemize}
Fix $R_0, \delta_0 > 0$ such that for any $\delta \in (0, \delta_0] , R \in (0, R_0]$, it holds that $\bar{V}_{R, \delta}(x_0) \subset V_{*}(x_0)$. 
There exists an $r \in (0, \infty)$ such that
\begin{equation}
\max_{y \in \bar{B}_{R_0}(x_0) \cap M \cap U} \norm{\nabla^2 f(y_*)}  \leq \frac{1}{r}.
\label{eqn:Upper_bound_1_over_r}
\end{equation}

Our modified proof will be complete when we find a $\lambda$ that satisfies the following conditions:
\begin{itemize}[topsep=2pt,itemsep=2pt,partopsep=2pt,parsep=2pt,leftmargin=0pt]
	\item[(i)] $0 < \lambda \leq \max_{y \in \bar{B}_{R_0}(x_0) \cap M \cap U} \pnorm{\nabla^2 f(y_*)}{}$;
	\item[(ii)] for any $x \in V_{R, \delta}(x_0)$, $\pnorm{(x - x_*) - r \nabla^2 f(x_*) \cdot (x - x_*) }{} \leq (1 - r \lambda) \pnorm{x - x_*}{}$; and
	\item[(iii)] for any $x \in V_{R, \delta}(x_0)$, $\bigl( \nabla^2 f(x_*) \cdot ( x - x_* ) \bigr) \cdot  ( x - x_* ) \geq \lambda \pnorm{ x - x_* }{}^2$.
\end{itemize}
Condition (iii) is essentially our addendum to the proof of \cite[Lemma 2.9]{fehrman2019convergence}. 

Note that by assumption, $M \cap U$ is a nondegenerate submanifold of $\mathcal{P}$ (see \refDefinition{definition:non_degenerate}), so there is an embedding $M \cap U \to \mathcal{P}$ inducing an orthogonal decomposition 
\begin{equation}
	\mathrm{T}_{w_*} \R^{d} = \mathrm{T}_{w_*}(M \cap U) \oplus N_{w_*} = P_{w_*} \oplus N_{w_*}
	\label{eqn:Candidate_convergence_rate__Tangent_space}
\end{equation}
for which $\nabla^2 f({w_*})|_{P_{w_*}} > 0$ and $\nabla^2 f({w_*})|_{N_{w_*}} = 0$ for any $w_{*}$. It holds moreover that for any $w^{\prime} \in \bar{B}_{R_0}(x_0) \cap M \cap U$ that $\mathrm{dim}(\ker \nabla^2 f(w^{\prime})) = s$.

Taking inspiration from the decomposition in \eqref{eqn:Candidate_convergence_rate__Tangent_space}, we will now prove that the candidate
\begin{equation}
\tilde{\lambda} 
= \min_{w \in \bar{V}_{R_0, \delta_0}(x_0)} \min_{\substack{\norm{v} = 1\\ v \in \ker \nabla^2 f(w_*)^{\perp} = P_{w_*}}} \abs{v^{\mathrm{T}} \nabla^2 f(w_*) v}
\label{eqn:Candidate_convergence_rate}
\end{equation}
satisfies Conditions (i)--(iii).

\noindent
\underline{Condition (i):}
The orthogonal decomposition in \eqref{eqn:Candidate_convergence_rate__Tangent_space} together with the compactness of $\bar{V}_{R_0, \delta_0}$ guarantees the strict positivity of \eqref{eqn:Candidate_convergence_rate}. That is, $\tilde{\lambda} > 0$. 

For any $w \in \bar{V}_{R_0, \delta_0}(x_0)$, it holds that $w_* \in \bar{B}_{R}(x_0) \cap M \cap U$. This implies that
\begin{equation}
\tilde{\lambda} 
\leq \max_{w \in \bar{V}_{R_0, \delta_0}(x_0)} \pnorm{\nabla^2 f(w_*)}{} 
\leq \max_{w \in \bar{B}_{R_0}(x_0) \cap M \cap U} \pnorm{\nabla^2 f(w_*)}{}.
\end{equation}

\noindent
\underline{Condition (ii):}
Let $x \in V_{R, \delta}(x_0)$. Since $x - x_* \in P_{x_*}$, it follows that
\begin{equation}
	\pnorm{ (x - x_*)  - r \nabla^2 f(x_*) \cdot (x - x_*)}{}^2 
	= \pnorm{ (1 - r \nabla^2 f(x_*)) \cdot (x - x_*)}{}^2.
\end{equation} 
Recall now that we have the positive bilinear form $\nabla^2 f(x_*))|_{P_{x_*}}$ on $P_{x_*}$. Let $\lambda_{\min}(\nabla^2f(x_*)|_{P_{x_*}}) > 0$ be the minimal eigenvalue of $\nabla^2 f(x_*))|_{P_{x_*}}$. By \eqref{eqn:Upper_bound_1_over_r}, 
\begin{equation}
	0 
	< (1 - r \nabla^2 f(x_*))|_{P_{x_*}} 
	\leq 1 - r\lambda_{\min}(\nabla^2f(x_*)|_{P_{x_*}})
\end{equation}
as a positive bilinear form, so that 
\begin{equation}
	\pnorm{(1 - r \nabla^2 f(x_*)) \cdot (x - x_*)}{}^2 
	\leq \bigl( 1 - r\lambda_{\min}(\nabla^2 f(x_*)|_{P_{x_*}}) \bigr) \pnorm{x - x_*}{}^2.
\end{equation}
We have by nondegeneracy that $P_{x_*} = \ker \nabla^2 f(x_*) ^{\perp}$. Therefore $\lambda_{\min}(\nabla^2 f(x_*)|_{P_{x_*}}) \geq \tilde{\lambda}$ for any $x \in V_{R, \delta}(x_0)$.

\noindent
\underline{Condition (iii).} Let $x \in V_{R,\delta}(x_0)$. Similar to (ii), from $\lambda_{\min}(\nabla^2 f(x_*)|_{P_{x_*}}) \geq \tilde{\lambda}$ we conclude also
\begin{equation}
	\bigl( \nabla^2 f(x_*) \cdot ( x - x_* ) \bigr) \cdot  ( x - x_* ) 
	%\geq \lambda_{\min}(\nabla^2 f(x_*)|_{P_{x_*}}) \pnorm{ x - x_* }{}^2
	\geq \tilde{\lambda} \pnorm{ x - x_* }{}^2.
\end{equation}
This completes the proof.

\section{Proofs of \refSection{sec:outline_proof}}
\label{secappendix:outline_proofs}

\subsection{Proof \texorpdfstring{of \refProposition{prop:exists_diagonal_reduction_Mb_to_M}}{} -- Reduction from \texorpdfstring{$M$ to $M_b$}{M to Mb}}
\label{secappendix:reduction_M_M_b}

Let $W = (W_2, W_1) \in M$ and let $\pi$ be the action from \eqref{eqn:Action_pi}. Note that $\pi(C)(W_2,W_1) \in M$, since $\pi$ preserves the conditions in \eqref{eqn:optimal_diagonal_main} for $W$ to be a minimum. Hence, $\pi$ is well defined. Note now also that the same conditions imply that for $i=1, \ldots, f$,
\begin{equation}
	\bigl( \mathrm{Diag}(W_2^{\mathrm{T}} W_2^{\mathrm{T}}) \bigr)_{ii} 
	> 0, 
	\quad
	\bigl( \mathrm{Diag}(W_1 W_1^{\mathrm{T}}) \bigr)_{ii} 
	> 0.
\end{equation} 
This enables us to define
\begin{equation}
	C_{W} 
	= \mathrm{Diag}(W_1W_1^{\mathrm{T}})^{1/4} \mathrm{Diag}(W_2W_2^{\mathrm{T}})^{-1/4}
	\label{eqn:Intermediate__Definition_DW}
\end{equation}
and then consider the point $\pi(C_{W})(W) = (\tilde{W}_2, \tilde{W}_1) $ say. For this particular point,
\begin{align}
	\mathrm{Diag}(\tilde{W}_2^{\mathrm{T}} \tilde{W}_2) 
	&
	\eqcom{\ref{eqn:Action_pi}}= 
	C_{W}^{\mathrm{T}} \mathrm{Diag}(W_2^{\mathrm{T}}W_2) C_{W} 
	\nonumber \\ &
	\eqcom{\ref{eqn:Intermediate__Definition_DW}}= 
	\mathrm{Diag}(W_2^{\mathrm{T}}W_2)^{1/2} \mathrm{Diag}(W_1W_1^{\mathrm{T}})^{1/2} 
	\eqcom{\ref{eqn:optimal_diagonal_main}}= \frac{ \norm{\mathcal{W}^*}_1 }{ f } \mathrm{I}_{f}
	\label{eqn:Intermediate__Diag_Wtilde_1}
	\\ &
	= \mathrm{Diag}(\tilde{W}_1 \tilde{W}_1^{\mathrm{T}}).
	\label{eqn:Intermediate__Diag_Wtilde_2}
\end{align}
Here, \eqref{eqn:Intermediate__Diag_Wtilde_2} follows using the same (but appropriately modified) argumentation as for \eqref{eqn:Intermediate__Diag_Wtilde_1}. 
Consequently, $\pi(C_{W})(W) \in M_{db}$. 
Recalling that $M_b = M_{db}$ by \refLemma{lemma:first_integral_main} concludes the proof. \qed

\subsection{Proof \texorpdfstring{of \refProposition{prop:characterization_Mb_main}}{} -- Characterization of \texorpdfstring{$M_b$}{Mb}.}
\label{secappendix:characterazing_Mb}

Recall $M_b$, $M_{db}$'s definitions in \eqref{eqn:Definition_Mb}, \eqref{eqn:Definition_Mdb}, respectively. We now introduce the following two extended sets:
\begin{gather}
	\bar{M}_b 
	= \{ W = (W_2, W_1) \in \mathcal{P} : W_2^{\mathrm{T}} W_2 = W_1 W_1^{\mathrm{T}}, W_2 W_1 = \mathcal{S}_\alpha[Y] \},
	\quad
	\textnormal{and}
	\label{eqn:Definition_Mbbar}
	\\
	\bar{M}_{db} 
	= \{ W = (W_2, W_1) \in \mathcal{P} : \mathrm{Diag}(W_2^{\mathrm{T}} W_2) = \mathrm{Diag}(W_1 W_1^{\mathrm{T}}), W_2W_1 = \mathcal{S}_\alpha[Y] \}.
	\label{eqn:Definition_Mdbbar}
\end{gather}
The sets $\bar{M}_{db}, \bar{M}_{b}$ also contain diagonally balanced and balanced points respectively, but these points are not necessarily minima. They are extensions because
\begin{equation}
	M_{db} 
	= \bar{M}_{db} \cap M, 
	\quad
	\textnormal{and} 
	\quad
	M_b 
	= \bar{M}_b \cap M.
\end{equation}

Recall the definitions of $\rho$ in (\ref{eqn:Definition_rho_and_alpha}), $\Sigma_2, \Sigma_1$ in \eqref{eqn:Definition_Sigma1_and_Sigma2}, and $\Sigma$ in \eqref{eqn:Definition_Sigma_squared}. 

\begin{lemma}
	\label{lemma:Specific_SVD_decomposition_of_W_in_Mbbar}
	If \refAssumption{ass:eigenvalues_different_maintext} holds, then there exist a full \gls{SVD} of $W = (W_2, W_1) \in \bar{M}_b$ of the form $(U \Sigma_2 S, S^{\mathrm{T}} \Sigma_1 V)$ where $S \in \mathrm{O}(f)$.
\end{lemma}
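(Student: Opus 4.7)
The plan is to peel off the structure of $W_2$ from the two conditions defining $\bar{M}_b$, and then bootstrap the structure of $W_1$.

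\emph{Step 1 (Reformulation of $\mathcal{S}_\alpha[Y]$).} First I would rewrite $\mathcal{S}_\alpha[Y]$ in terms of the full SVD $Y=U\Sigma_Y V$ with $U\in\mathrm{O}(e)$ and $V\in\mathrm{O}(h)$. A direct block computation shows that $\Sigma_2\Sigma_1$, as defined in \eqref{eqn:Definition_Sigma1_and_Sigma2}--\eqref{eqn:Definition_Sigma_squared}, equals the $e\times h$ diagonal matrix with entries $\sigma_i-\alpha$ in positions $i\le\rho$ and zero elsewhere. Consequently $\mathcal{S}_\alpha[Y]=U\Sigma_2\Sigma_1V$, which is the factorization my target form $(U\Sigma_2 S,\,S^T\Sigma_1 V)$ is designed to reproduce.

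\emph{Step 2 (SVD of $W_2$).} From $W_2W_1=\mathcal{S}_\alpha[Y]$ and the balanced condition $W_1W_1^T=W_2^TW_2$ I combine to obtain
\begin{equation*}
(W_2W_2^T)^2 \;=\; W_2W_2^TW_2W_2^T \;=\; W_2W_1W_1^TW_2^T \;=\; \mathcal{S}_\alpha[Y]\mathcal{S}_\alpha[Y]^T \;=\; U(\Sigma_2\Sigma_2^T)^2U^T.
\end{equation*}
Taking the unique positive-semidefinite square root yields $W_2W_2^T=U\Sigma_2\Sigma_2^TU^T=(U\Sigma_2)(U\Sigma_2)^T$. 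I then invoke the standard fact that whenever $A,B\in\R^{e\times f}$ satisfy $AA^T=BB^T$ there exists $S\in\mathrm{O}(f)$ with $A=BS$ (proved by comparing full SVDs and exploiting the orthogonal freedom within each eigenspace of $AA^T$). Applied to $A=W_2$ and $B=U\Sigma_2$, this delivers $W_2=U\Sigma_2S$ for some $S\in\mathrm{O}(f)$.

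\emph{Step 3 (Determining $W_1$).} Inserting $W_2=U\Sigma_2S$ into $W_2W_1=U\Sigma_2\Sigma_1V$ and cancelling $U$ leaves $\Sigma_2(SW_1-\Sigma_1V)=0$. Because the top-left $\rho\times\rho$ block of $\Sigma_2$ is invertible (its diagonal entries are the positive $\sqrt{\sigma_i-\alpha}$) and all other blocks vanish, the first $\rho$ rows of $SW_1-\Sigma_1V$ must be zero. Combined with the fact that $\Sigma_1V$ has zero rows in positions $\rho+1,\dots,f$, this forces
\begin{equation*}
SW_1 \;=\; \begin{pmatrix} \Sigma V_{\mathrm{top}} \\ T \end{pmatrix}
\end{equation*}
for some $T\in\R^{(f-\rho)\times h}$, where $V_{\mathrm{top}}$ denotes the first $\rho$ rows of $V$. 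Plugging this into the balancedness relation rewritten as $(SW_1)(SW_1)^T=\Sigma_2^T\Sigma_2$, the upper-left block is automatically satisfied using $V_{\mathrm{top}}V_{\mathrm{top}}^T=\mathrm{I}_\rho$, while the lower-right $(f-\rho)\times(f-\rho)$ block forces $TT^T=0$ and hence $T=0$. Therefore $SW_1=\Sigma_1V$ and $W_1=S^T\Sigma_1V$, completing the decomposition.

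The only nontrivial ingredient is the SVD-matching fact used in Step~2; everything else is bookkeeping once one observes that the two defining conditions of $\bar{M}_b$ conspire to pin down $W_2W_2^T$ exactly. \refAssumption{ass:eigenvalues_different_maintext} enters through the well-definedness of $\rho$ and $\Sigma$, but the argument would go through even for coincident singular values---the decomposition simply becomes less unique in $S$.
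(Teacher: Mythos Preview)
Your proof is correct and takes a genuinely different route from the paper's own argument.

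The paper starts from arbitrary compact SVDs $W_2 = U_2\tilde\Sigma_2 S_2$ and $W_1 = S_1^{\mathrm T}\tilde\Sigma_1 V_1$, uses balancedness to force $\tilde\Sigma_2 = \tilde\Sigma_1$, shows $L = S_2S_1^{\mathrm T}$ commutes with this diagonal matrix, then matches against the compact SVD of $\mathcal{S}_\alpha[Y]$. \refAssumption{ass:eigenvalues_different_maintext} is invoked to force $L$ to be diagonal with $\pm 1$ entries, after which the identification of $U_2$, $V_1$ with $U_{\mathrm c}$, $V_{\mathrm c}$ (up to signs) follows, and everything is finally extended to a full SVD. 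Your argument instead squares: the key observation $(W_2W_2^{\mathrm T})^2 = \mathcal{S}_\alpha[Y]\mathcal{S}_\alpha[Y]^{\mathrm T}$ pins down $W_2W_2^{\mathrm T}$ in one shot via the unique PSD square root, and the polar-decomposition fact $AA^{\mathrm T}=BB^{\mathrm T}\Rightarrow A=BS$ immediately delivers $W_2 = U\Sigma_2 S$. Back-substitution then recovers $W_1$.

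Your approach is shorter and, as you note, does not actually rely on the distinctness of the singular values: the polar-decomposition fact holds regardless of multiplicities, and your Step~3 is pure block algebra. The paper's argument, by contrast, genuinely uses \refAssumption{ass:eigenvalues_different_maintext} to collapse $L$ to a sign matrix. What the paper's route buys is a slightly more explicit bookkeeping of how the eventual $S$ relates to the original SVD factors of $W_2$ and $W_1$ separately (via $S = DLS_2$), which is not needed for the lemma as stated but clarifies the degrees of freedom. Your route buys concision and a cleaner conceptual picture: the two constraints of $\bar M_b$ determine $W_2W_2^{\mathrm T}$ exactly, and everything else follows.
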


\begin{proof} 
Let $W = (W_2, W_1) \in \bar{M}_b$. Consider a compact \gls{SVD} of the form $W =(U_2 \tilde{\Sigma}_2 S_2, S_1^{\mathrm{T}} \tilde{\Sigma}_1 V_1)$. Note that for this compact \gls{SVD} in particular
\begin{equation}
	S_2 S_2^{\mathrm{T}} 
	= \mathrm{Id}_{\dim{\tilde{\Sigma}_2}}
	\quad
	\textnormal{and}
	\quad
	S_1 S_1^{\mathrm{T}} 
	= \mathrm{Id}_{\dim{\tilde{\Sigma}_1}}.
\end{equation}
We also suppose (without loss of generality) that the singular values of $\tilde{\Sigma}_2$ and $\tilde{\Sigma}_1$ are both ordered in the diagonal from largest to smallest. 

Observe that
\begin{equation}
	S_2^{\mathrm{T}} \tilde{\Sigma}_2^2 S_2
	= W_2^{\mathrm{T}} W_2
	\eqcom{\ref{eqn:Definition_Mbbar}}= W_1 W_1^{\mathrm{T}}
	= S_1^{\mathrm{T}} \tilde{\Sigma}_1^2 S_1.
	\label{eqn:SVD_decomposition_lemma_step}
\end{equation}
Uniqueness of the singular values combined with \eqref{eqn:SVD_decomposition_lemma_step} implies that there exists a permutation matrix $P$ such that $\tilde{\Sigma}_2 = P \tilde{\Sigma}_1$. Moreover, because the singular values of $\tilde{\Sigma}_2$ $\tilde{\Sigma}_1$ are ordered by construction, we must have that 
(i) $\tilde{\Sigma}_2 = \tilde{\Sigma}_1 = \tilde{\Sigma}$ say. 
From \eqref{eqn:SVD_decomposition_lemma_step}, it follows in particular that
\begin{equation}
	% S_2^{\mathrm{T}} \tilde{\Sigma}_2^2 S_2
	% = 
	S_2^{\mathrm{T}} \tilde{\Sigma}^2 S_2
	% = 
	% S_1^{\mathrm{T}} \tilde{\Sigma}_1^2 S_1
	= 
	S_1^{\mathrm{T}} \tilde{\Sigma}^2 S_1.
	\label{eqn:SVD_decomposition_lemma_step3}
\end{equation}

Suppose now that $\tilde{\Sigma} \in \R^{l \times l}$, that the singular values are given by $\lambda_1, \ldots, \lambda_s$ (each distinct), and that their multiplicities are given by $r_1, \ldots, r_s$. Recall that $\sum_{i=1}^s r_i = l$ necessarily. After left-, right-multiplying \eqref{eqn:SVD_decomposition_lemma_step3} by $S_2$, $S_1^T$, respectively, it follows that the matrix $L = S_2 S_1^{\mathrm{T}}$ commutes with $\tilde{\Sigma}^2$:
$
	\tilde{\Sigma}^2 L 
	= L \tilde{\Sigma}^2
$.
Combining this fact with the fact that
\begin{equation}
	\tilde{\Sigma}^2
	=
	\begin{psmallmatrix}
		\lambda_1^2 \mathrm{I}_{r_1 \times r_1} & & \\
		& \lambda_2^2 \mathrm{I}_{r_2 \times r_2} & & \\
		& & \ddots & \\
		& & & \lambda_s^2 \mathrm{I}_{r_s \times r_s} \\
	\end{psmallmatrix}
	,
\end{equation}
in which all off-diagonal elements are equal to zero, leads to the conclusion that the matrix $L$ must be a conformally partitioned block-diagonal matrix of the form
\begin{equation}
	L 
	=
	\begin{psmallmatrix}
		L_1 & & \\
		& L_2 & & \\
		& & \ddots & \\
		& & & L_s \\
	\end{psmallmatrix}
	.
\end{equation}
Furthermore, $L$ must have strictly positive entries and $L_1 \in O(r_1), \ldots, L_s \in O(r_s)$ because of the uniqueness of the eigenspaces for each eigenvalue and therefore $L \in O(l)$. Consequently, $L$ also commutes with $\tilde{\Sigma}$:
\begin{equation}
	\tilde{\Sigma} L = L \tilde{\Sigma}.
	\label{eqn:SVD_decomposition_lemma_step4}
\end{equation}

Let $U_{\mathrm{c}} \Sigma^2 V_{\mathrm{c}}$ now be a compact \gls{SVD} of $\mathcal{S}_\alpha[Y]$. Recall \eqref{eqn:Definition_rho_and_alpha} and \eqref{eqn:Shrinkage_thresholding_operator}, and conclude that $\Sigma^2$ is given by \eqref{eqn:Definition_Sigma_squared}. Observe that
\begin{equation}
	U_{\mathrm{c}} \Sigma^2 V_{\mathrm{c}}
	= \mathcal{S}_\alpha[Y]
	\eqcom{\ref{eqn:Definition_Mbbar}}= 
	W_2 W_1
	\eqcom{SVD}= 
	U_2 \tilde{\Sigma}_2 S_2 S_1^{\mathrm{T}} \tilde{\Sigma}_1 V_1
	\eqcom{i}= 
	U_2 \tilde{\Sigma} L \tilde{\Sigma} V_1 
	\eqcom{\ref{eqn:SVD_decomposition_lemma_step4}}= 
	U_2 L \tilde{\Sigma}^2 V_1
	.
	\label{eqn:SVD_decomposition_lemma_step5}
\end{equation}
Remark now that $(U_2 L)^{\mathrm{T}}(U_2 L) = \mathrm{Id}_{l}$. Consequently, the left-hand side as well as the right-hand side of \eqref{eqn:SVD_decomposition_lemma_step5} are compact \glspl{SVD}. By uniqueness of the singular values we must again have that there exists a permutation matrix $P^{\prime}$ such that $\tilde{\Sigma}^2 = P^{\prime} \Sigma^2$. By construction, the singular values of both diagonal matrices $\tilde{\Sigma}^2$ and $\Sigma^2$ were put in the same order. This implies that we must have $\tilde{\Sigma}^2 = \Sigma^2$. By positivity of the entries, we must consequently also have 
(ii) $\tilde{\Sigma} = \Sigma$. 
The singular values in $\Sigma^2$ have no multiplicity by \refAssumption{ass:eigenvalues_different_maintext}, so equating multiplicities yields $r_1 = \ldots = r_{s} = 1$, $l = \rho$. Moreover, $L \in O(r)$ is a diagonal matrix with $\{ -1, + 1\}$-valued entries.

The uniqueness of the left and right eigenvectors in the left-hand side as well as the right-hand side of \eqref{eqn:SVD_decomposition_lemma_step5} together with the fact that all eigenvalues of $\Sigma$ have multiplicity one, implies that there exists a diagonal matrix $D$ with entries in $\{ -1, + 1\}$ such that $U_{\mathrm{c}} D = U_2 L $ and 
(iii) $D V_{\mathrm{c}} = V_1$. 
In particular, 
(iv) $U_2 = U_{\mathrm{c}} D L^T = U_{\mathrm{c}} D L$. 
Also, from the facts that $L$ is a diagonal matrix with $\{ -1, + 1\}$-valued entries and both $S_1$, $S_2$ have orthonormal rows, we obtain 
from $L = S_2S_1^{\mathrm{T}}$ that
(v) $L S_1 = S_2$. 
Utilizing (i--v), together with  
(vi) the fact that $D, L, \Sigma$ are diagonal matrices which are thus symmetric and commute, 
we can rewrite the compact \gls{SVD} of $W$ as
\begin{align}
	\bigl( 
		U_2 \tilde{\Sigma}_2 S_2, 
		S_1^{\mathrm{T}} \tilde{\Sigma}_1 V_1
	\bigr) 
	& 
	\eqcom{i,ii}= 
	\bigl(
		U_2 \Sigma S_2, 
		S_1^{\mathrm{T}} \Sigma V_1
	\bigr) 
	\eqcom{iii,iv}= 
	\bigl(
		U_{\mathrm{c}} D L \Sigma S_2, 
		S_1^{\mathrm{T}} \Sigma D V_{\mathrm{c}}
	\bigr) 	
	\nonumber \\
	& 
	\eqcom{v}= 
	\bigl(
		U_{\mathrm{c}}  D L \Sigma S_2, 
		S_2^{\mathrm{T}} L \Sigma D V_{\mathrm{c}}
	\bigr) 
	\eqcom{vi}= 
	\bigl(
		U_{\mathrm{c}}  \Sigma (D L  S_2), 
		(S_2 L D)^{\mathrm{T}} \Sigma V_{\mathrm{c}}
	\bigr).
	\label{eqn:SVD_decomposition_lemma_step6}
\end{align}

We can extend the compact \gls{SVD} in \eqref{eqn:SVD_decomposition_lemma_step6} to a full \gls{SVD} by noting that $S$ will be the extension of $D L  S_2$  to an orthogonal matrix in $\mathrm{O}(f)$, and $U$ and $V$ will be the extensions of $U_{\mathrm{c}}$ and $V_{\mathrm{c}}$ to $\mathrm{O}(e)$ and $\mathrm{O}(h)$, respectively. Similarly, $\Sigma_2$ and $\Sigma_1$ will be the extension to a full \gls{SVD}.
\end{proof}

We next characterize $\bar{M}_b$ from \eqref{eqn:Definition_Mbbar} as a homogeneous manifold. Let us summarize the method first. Suppose that $G$ is a finite dimensional Lie group, that is, a group with a smooth manifold structure (for example, $\mathrm{GL}(n)$ or $\mathrm{SL}(n)$). Suppose for a moment that $\bar{M}_b$ is a set, and that there is a \emph{transitive} Lie group action $\pi: G \times \bar{M}_b \to \bar{M}_b$. A transitive action means that for any $a, b \in \bar{M}_b$, there exist a $g \in G$ such that $\pi(g)(a) = b$. We define the \emph{stabilizer subgroup} (also called \emph{isotropy subgroup}) of $\pi$ at $a \in \bar{M}_b$ as $\mathrm{Stab}_{G}(a) = \{ g \in G : \pi(g)(a) = a \}$. We will use that if for a point $a \in \bar{M}_b$, $\mathrm{Stab}_{G}(a) \subseteq G$ is a closed smooth Lie subgroup (closed in the topology of $G$), then there exists a smooth manifold structure on $\bar{M}_b$ which is that of the homogeneous manifold $G / \mathrm{Stab}_{G}(a)$ \cite[Thm.~21.20]{lee2013smooth}. Once we have a \emph{diffeomorphism} $\bar{M}_b \simeq G / \mathrm{Stab}_{G}(a)$ (a differentiable isomorphism with differentiable inverse) we can consider the projection map $\Pi: G \to G / \mathrm{Stab}_{G}(a) \simeq \bar{M}_{b}$ and look at the differential $\mathrm{D}\Pi: \mathfrak{g} \to \mathrm{T}_{0}( G / \mathrm{Stab}_{G}(a)) $ at $\Pi(Id) = [ \mathrm{Stab}_{G}(a) ] = 0$, where $\mathfrak{g}$ is the Lie algebra of $G$. The linear map $\mathrm{D}\Pi$ is surjective and the kernel is the Lie algebra of $\mathrm{Stab}_{G}(a)$, denoted by $\mathrm{Lie}
( \mathrm{Stab}_{G}(a))$. Hence as vector spaces
\begin{equation}
	\frac{\mathfrak{g}}{\mathrm{Lie}( \mathrm{Stab}_{G}(a))} 
	\simeq \mathrm{T}_a \bar{M}_b.
	\label{eqn:isomorphism_tangent_space_M_b_bar}
\end{equation}
We refer the reader to \cite[Ch. 4]{arvanitogeorgos2003introduction} for more details on homogeneous spaces.

\begin{lemma}
\label{lemma:Diffeomorphism_of_Mbbar}
If \refAssumption{ass:eigenvalues_different_maintext} holds, then there is a diffeomorphism $\bar{M}_b \simeq \mathrm{O(f) / (I_{\rho}  \oplus O(f-\rho))}$, i.e., the manifold $\bar{M}_b$ is a homogeneous space.
\end{lemma}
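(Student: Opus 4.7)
My plan is to realize $\bar{M}_b$ as the orbit of a single element under a smooth transitive left action of the compact Lie group $\mathrm{O}(f)$, identify the isotropy subgroup of a chosen base point as $\mathrm{I}_\rho \oplus \mathrm{O}(f-\rho)$, and then apply the homogeneous-space theorem. Concretely, I would define $\pi : \mathrm{O}(f) \times \bar{M}_b \to \bar{M}_b$ by $\pi(L)(W_2, W_1) = (W_2 L^T, L W_1)$. This map is polynomial, hence smooth, and a direct check shows that the two defining conditions of $\bar{M}_b$ in \eqref{eqn:Definition_Mbbar} are preserved: the product condition reads $(W_2 L^T)(L W_1) = W_2 W_1 = \mathcal{S}_\alpha[Y]$, and the balancedness condition becomes $(W_2 L^T)^T(W_2 L^T) = L W_2^T W_2 L^T = L W_1 W_1^T L^T = (L W_1)(L W_1)^T$. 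The associativity identity $\pi(L_1 L_2) = \pi(L_1)\circ\pi(L_2)$ is immediate, so $\pi$ is a smooth left action.

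Next, take the base point $W^{(0)} = (U \Sigma_2, \Sigma_1 V) \in \bar{M}_b$, which corresponds to $S = \mathrm{I}_f$ in the SVD representation of \refLemma{lemma:Specific_SVD_decomposition_of_W_in_Mbbar}. Under \refAssumption{ass:eigenvalues_different_maintext}, that lemma guarantees that every $W \in \bar{M}_b$ has the form $(U \Sigma_2 S, S^T \Sigma_1 V)$ for some $S \in \mathrm{O}(f)$, and then $\pi(S^T)(W^{(0)}) = W$; the action is therefore transitive. For the isotropy, $\pi(L)(W^{(0)}) = W^{(0)}$ is equivalent to $\Sigma_2 L^T = \Sigma_2$ and $L \Sigma_1 = \Sigma_1$. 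Using the block form of $\Sigma_2, \Sigma_1$ from \eqref{eqn:Definition_Sigma1_and_Sigma2} together with the invertibility of $\Sigma$, the first equation forces the first $\rho$ columns of $L$ to be $e_1, \ldots, e_\rho$; combined with $L \in \mathrm{O}(f)$ this forces $L = \mathrm{I}_\rho \oplus L'$ for some $L' \in \mathrm{O}(f-\rho)$, after which the second equation is automatically satisfied. Hence $\mathrm{Stab}_{\mathrm{O}(f)}(W^{(0)}) = \mathrm{I}_\rho \oplus \mathrm{O}(f - \rho)$, which is a closed embedded Lie subgroup of $\mathrm{O}(f)$. The homogeneous-space theorem \cite[Thm.\ 21.17]{lee2013smooth} then endows $\mathrm{O}(f)/(\mathrm{I}_\rho \oplus \mathrm{O}(f-\rho))$ with a unique smooth structure such that the orbit map descends to a smooth bijection $\bar{\phi}$ onto $\bar{M}_b$.

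The main obstacle is ensuring that the abstract smooth structure coming from the quotient coincides with the one $\bar{M}_b$ inherits from the embedding $\bar{M}_b \hookrightarrow \mathcal{P}$, i.e., that $\bar{\phi}$ is actually a diffeomorphism onto an embedded submanifold rather than just an abstract bijection. This is exactly where the compactness of $\mathrm{O}(f)$ enters: the orbit map $\mathrm{O}(f) \to \mathcal{P}$, $L \mapsto \pi(L)(W^{(0)})$, is a smooth equivariant map, hence of constant rank, and properness plus constant rank imply that its image $\bar{M}_b$ is an embedded submanifold of $\mathcal{P}$. The continuous bijection $\bar{\phi}$ from the compact homogeneous space onto the Hausdorff space $\bar{M}_b$ is then automatically a homeomorphism, and the constant-rank property upgrades it to a diffeomorphism, completing the proof.
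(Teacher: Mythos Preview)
Your proof is correct and follows essentially the same approach as the paper: define an $\mathrm{O}(f)$-action on $\bar{M}_b$, verify it preserves the defining conditions, use \refLemma{lemma:Specific_SVD_decomposition_of_W_in_Mbbar} for transitivity, compute the isotropy subgroup as $\mathrm{I}_\rho \oplus \mathrm{O}(f-\rho)$, and invoke the homogeneous-space theorem. The only cosmetic differences are that the paper uses the right-action convention $\pi(L)(W_2,W_1) = (W_2 L, L^T W_1)$ and computes the stabilizer at a generic $W = (U\Sigma_2 S, S^T\Sigma_1 V)$ (obtaining the conjugate $S^T(\mathrm{I}_\rho \oplus \mathrm{O}(f-\rho))S$), whereas you use a left action and the particular base point $S = \mathrm{I}_f$; your final paragraph on matching the quotient smooth structure with the embedded one via compactness and constant rank is a point the paper absorbs into its citation of \cite[Thm.~21.20]{lee2013smooth}.
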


\begin{proof}
Consider the smooth Lie group action $\pi: \mathrm{O}(f) \times \bar{M}_b \to \bar{M}_b$ given by
\begin{equation}
	\pi(L)(W_2, W_1) 
	= (W_2 L, L^{\mathrm{T}} W_1).
	\label{eqn:definition_orthogonalgroup_action_lemma}
\end{equation}

%\noindent
%\underline{Determining the stabilizer subgroup.} 
For $W = (W_2,W_1) \in \bar{M}_b$, we are first going to determine the stabilizer subgroup
\begin{equation}
	\mathrm{Stab}_{\mathrm{O}(f)}(W)
	= \{ S' \in \mathrm{O}(f) : \pi(S')(W) = W \}.
	\label{eqn:Stabilizer_subgroup}
\end{equation}
Let to that end $(U \Sigma_2 S, S^{\mathrm{T}} \Sigma_1 V)$ be an \gls{SVD} of $W$, which exists by \refLemma{lemma:Specific_SVD_decomposition_of_W_in_Mbbar}. 
Note then that for any orthogonal matrix $S' \in \mathrm{O}(f)$ of the form
\begin{equation}
	S^{\prime} 
	= S^{\mathrm{T}} 
	\begin{psmallmatrix}
		A & B \\
		C & D \\
	\end{psmallmatrix} 
	S 
	\quad
	\textnormal{where}
	\quad
	A \in \R^{\rho \times \rho},
	\label{eqn:Form_of_Sprime}
\end{equation}
we have that
\begin{align}
	\pi(S^{\prime})(W)
	&	
	\eqcom{\ref{eqn:definition_orthogonalgroup_action_lemma}}= \bigl( W_2 S', (S')^{\mathrm{T}} W_1 \bigr)
	\eqcom{SVD}= \bigl( U \Sigma_2 S S', (S^{\prime})^{\mathrm{T}} S^{\mathrm{T}} \Sigma_1 V \bigr)
	\nonumber \\ &
	\eqcom{\ref{eqn:Form_of_Sprime}}= \bigl(
	U \Sigma_2
	\begin{psmallmatrix}
		A & B \\
		C & D \\
	\end{psmallmatrix}
	S
	,
	S^{\mathrm{T}}
	\begin{psmallmatrix}
		A & B \\
		C & D \\
	\end{psmallmatrix}^{\mathrm{T}} \Sigma_1 V
	\bigr)
	\eqcom{\ref{eqn:Definition_Sigma1_and_Sigma2}}= \bigl(
	U
	\begin{psmallmatrix}
		\Sigma A & \Sigma B \\
		0 & 0 \\
	\end{psmallmatrix}
	S
	,
	S^{\mathrm{T}}
	\begin{psmallmatrix}
		A^{\mathrm{T}} \Sigma & 0 \\
		B^{\mathrm{T}} \Sigma & 0 \\
	\end{psmallmatrix} V
	\bigr)
	= W
\end{align}
if and only if 
\begin{equation}
	\begin{psmallmatrix}
		\Sigma A & \Sigma B \\
		0 & 0 \\
	\end{psmallmatrix}
	= \Sigma_2
	\quad
	\textnormal{and}
	\quad
	\begin{psmallmatrix}
		A^{\mathrm{T}} \Sigma & 0 \\
		B^{\mathrm{T}} \Sigma & 0 \\
	\end{psmallmatrix}
	= \Sigma_1.
	\label{eqn:SigmaAB_equals_Sigma2_and_ATBTSigma_equals_Sigma_1}
\end{equation}
Because of our \refAssumption{ass:eigenvalues_different_maintext} on the multiplicity of the eigenvalues, \eqref{eqn:SigmaAB_equals_Sigma2_and_ATBTSigma_equals_Sigma_1} holds if and only if $B = 0$ and $A = \mathrm{Id}_{\rho}$. We must then furthermore have that $C = 0$ and $D \in \mathrm{O}(f - \rho)$ because $S^{\prime} \in \mathrm{O}(f)$. We have shown that 
\begin{equation}
	\mathrm{Stab}_{\mathrm{O}(f)}(W) 
	\simeq S^{\mathrm{T}} (\mathrm{I_{\rho}  \oplus O(f-\rho)} )S,
\end{equation} 
the right-hand side of which is a closed, smooth Lie subgroup of $\mathrm{O}(f)$. 

%\noindent
%\underline{Showing the diffeomorphism.}
Next, we prove the diffeomorphism.
\refLemma{lemma:Specific_SVD_decomposition_of_W_in_Mbbar} ensures that $\pi$ is transitive. Transitiveness ensures that the choice of $L$ in \eqref{eqn:definition_orthogonalgroup_action_lemma} only changes the stabilizer subgroup by conjugation, i.e., 
\begin{equation}
	\mathrm{Stab}_{\mathrm{O}(f)}(\pi(L)W) 
	= L^{-1} \mathrm{Stab}_{\mathrm{O}(f)}(W) L.
\end{equation}
The set $\bar{M}_b$ admits therefore a smooth manifold structure, and we have the following diffeomorphism of smooth manifolds \cite[Thm.~21.20]{lee2013smooth}:
\begin{equation}
	\bar{M}_b \simeq \frac{\mathrm{O}(f)}{\mathrm{I_{\rho}  \oplus O(f-\rho)}}.
\end{equation}
This completes the proof.
\end{proof}

\refLemma{lemma:Specific_SVD_decomposition_of_W_in_Mbbar} guarantees that each point $W \in \bar{M}_b$ has an \gls{SVD} of the form $(U \Sigma_2 S, S^{\mathrm{T}} \Sigma_1 V)$ where $[S] \in \mathrm{O(f) / (I_{\rho}  \oplus O(f-\rho))}$. Here, we understand $S$ as being any representative of the equivalence class $[S]$. 
Conclude using \eqref{eqn:optimal_diagonal_main}, \eqref{eqn:Definition_Mb} and \eqref{eqn:Definition_Mbbar} that if $W \in \bar{M}_b$, then $W \in M_b$ also if and only if moreover $\mathrm{Diag}(W_2^{\mathrm{T}} W_2) = \mathrm{Diag}(W_1 W_1^{\mathrm{T}}) = \pnorm{\Sigma^2}{1} \mathrm{I}_f / f$. 
Combined with the isomorphism in \refLemma{lemma:Diffeomorphism_of_Mbbar}, this provides us with the alternative representation in \eqref{eqn:Alternative_representation_of_Mb}.

All that remains is to prove that $M_b \neq \emptyset$. This fact was also proven in \cite{mianjy2018implicit}, but for completeness we will prove it here using the \emph{theory of majorization} instead. For any vector $a \in \realNumbers^f$, denote by $a^\downarrow \in \realNumbers^f$ the vector with the same components but sorted in descending order. Given two vectors $a, b \in \realNumbers^f$, we say that $a$ is \emph{majorized} by $b$, written as $a \prec b$, if
\begin{equation}
	\sum_{i=1}^l a_i^\downarrow 
	\leq \sum_{i=1}^l b_i^\downarrow
	\quad
	\textnormal{for} 
	\quad
	l=1, \ldots, f
	\quad
	\textnormal{and furthermore}
	\quad
	\sum_{i=1}^f a_i 
	= \sum_{i=1}^f b_i.
\end{equation}

\begin{lemma}
	\label{lemma:balanced_critical_points_nonnul}	
	It holds that $M_b \neq \emptyset$.
\end{lemma}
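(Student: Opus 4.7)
The strategy is to read off the parameterization given in \refProposition{prop:characterization_Mb_main} and use it to reduce the nonemptiness of $M_b$ to the existence of a single orthogonal matrix with a prescribed diagonal after conjugation. Concretely, writing $\tilde{D} = \Sigma_2^{\mathrm{T}} \Sigma_2 = \begin{psmallmatrix}\Sigma^2 & 0 \\ 0 & 0\end{psmallmatrix} \in \R^{f\times f}$, it suffices to find an $L \in \mathrm{O}(f)$ with
\begin{equation}
\mathrm{Diag}(L^{\mathrm{T}} \tilde{D} L) = \frac{\pnorm{\Sigma^2}{1}}{f}\, \mathrm{I}_f,
\label{eqn:proof_plan_balanced_ne}
\end{equation}
since then $W = (U \Sigma_2 L, L^{\mathrm{T}} \Sigma_1 V)$ lies in $M_b$ by the representation in \eqref{eqn:Alternative_representation_of_Mb}. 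Note that the right-multiplicative action of $\mathrm{I}_\rho \oplus \mathrm{O}(f-\rho)$ on $L$ leaves the left-hand side of \eqref{eqn:proof_plan_balanced_ne} invariant, because $\tilde{D}(\mathrm{I}_\rho \oplus Q) = \tilde{D}$ for every $Q \in \mathrm{O}(f-\rho)$, so the problem is well posed at the level of equivalence classes.

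To produce such an $L$, I would invoke the converse direction of the Schur--Horn theorem: for any real symmetric matrix with spectrum $\mu \in \R^{f}$ (counted with multiplicity) and any vector $d \in \R^{f}$ satisfying $d \prec \mu$, there exists $L \in \mathrm{O}(f)$ such that the diagonal of $L^{\mathrm{T}} \tilde{D} L$ equals $d$. Taking $d = (c, \ldots, c)$ with $c = \pnorm{\Sigma^2}{1}/f$ immediately yields \eqref{eqn:proof_plan_balanced_ne} once the majorization $d \prec \mu$ is verified.

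The majorization check proceeds as follows. Because $\tilde{D}$ is diagonal, its spectrum sorted in decreasing order is $\mu = (\Sigma^2_{11}, \ldots, \Sigma^2_{\rho\rho}, 0, \ldots, 0)$, and by \refAssumption{ass:eigenvalues_different_maintext} together with the definition of $\rho$ in \eqref{eqn:Definition_rho_and_alpha}, the entries $\Sigma^2_{ii}$ are strictly positive and strictly decreasing for $i = 1, \ldots, \rho$. The equality of total sums $\sum_i d_i = f c = \pnorm{\Sigma^2}{1} = \sum_i \mu_i$ is immediate. For the partial-sum inequalities, split into cases $l \leq \rho$ and $l > \rho$: in the first case, $\sum_{i=1}^l d_i = lc$ is bounded above by $\sum_{i=1}^l \Sigma^2_{ii}$ because the top-$l$ average of the $\Sigma^2_{ii}$ dominates the overall average $c$, which is diluted by the $f-\rho$ extra zero entries; in the second case, the claim reduces to $lc \leq f c$, valid since $l \leq f$.

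The main obstacle in a fully rigorous write-up is locating the converse direction of Schur--Horn in a standard reference and quoting it cleanly; the majorization bookkeeping and the invariance check above are routine, and together they reduce the whole proof to a single application of a classical spectral-diagonal realization result.
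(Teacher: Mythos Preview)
Your proposal is correct and follows essentially the same approach as the paper: reduce to finding an orthogonal $L$ whose conjugation of $\tilde{D}$ has constant diagonal, verify the majorization of the constant vector by the eigenvalue vector, and invoke the converse Schur--Horn/Horn theorem. The paper phrases the last step via Horn's theorem on orthostochastic matrices (citing Marshall--Olkin), which is equivalent to the Schur--Horn converse you invoke, and simply asserts $a \prec b$ without the case split you provide.
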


\begin{proof}
We temporarily abuse our notation and let $\mathrm{Diag}$ denote the map that: (a) maps vectors $y \in \R^{f}$ to an $\realNumbers$-valued diagonal $f \times f$ matrices with $y_1, \ldots, y_f$ for its diagonal entries, and (b) maps matrices $A \in \R^{f \times f}$ to $\realNumbers$-valued vectors with entries $A_{11}, \ldots, A_{ff}$. 

If $a \in [0,\infty)^{f}$ and $L \in \mathrm{O}(f)$, then as a linear map $\mathrm{Diag}(L \mathrm{Diag}(y) L^{\mathrm{T}}) = Py$ for some \emph{orthostochastic} matrix $P$; specifically, the doubly stochastic matrix that is formed by taking the square of the entries of $L \in \mathrm{O}(f)$ \cite[Definition B.5, p.34]{marshall1979inequalities}. Because $P$ is doubly stochastic, we have that $Pa \prec a$. Note now that \emph{Horn's theorem} states that the converse is also true \cite[Theorem B.6, p.35]{marshall1979inequalities}: if $a \prec b$, then there exists a orthostochastic matrix $Q$ such that $Qb = a$. In particular, there exists some $L \in \mathrm{O}(f)$ satisfying $\mathrm{Diag}(L \mathrm{Diag}(b) L^{\mathrm{T}}) = a$ whenever $a \prec b$. 

Consider now the two $f$-dimensional vectors
\begin{gather}
	a 
	= \Bigl( \frac{\pnorm{\Sigma^2}{}}{f}, \ldots, \frac{\pnorm{\Sigma^2}{}}{f} \Bigr),
	\\
	b
	= \Bigl( \sigma_1 - \frac{ \rho \lambda \kappa_{\rho} }{ f + \rho \lambda }, \ldots, \sigma_{\rho} - \frac{ \rho \lambda \kappa_{\rho} }{ f + \rho \lambda }, 0, \ldots, 0 \Bigr)
\end{gather}
specifically, and note in particular that $a \prec b$. Applying Horn's theorem proves that there exists an orthogonal matrix $L \in O(f)$ such that 
\begin{equation}
	\mathrm{Diag}
	\bigl( 
		L 
		\begin{psmallmatrix}
			\Sigma^2 & 0 \\
			0 & 0 \\
		\end{psmallmatrix}
		L^{\mathrm{T}} 
	\bigr) 
	= \frac{ \pnorm{\Sigma^2}{} }{f} \mathrm{I}_f.
\end{equation}
In particular, we have shown that the condition in \eqref{eqn:Alternative_representation_of_Mb} holds. Consequently, $M_b \neq \emptyset$.
\end{proof}

\subsection{Proof of \texorpdfstring{\refProposition{prop:Characterization_of_TWMb}}{} -- Characterization of \texorpdfstring{$\mathrm{T}_{W} M_b$}{TW Mb}.}
\label{sec:Appendix__Characterization_of_TWMb}

We start by describing the tangent space of $\bar{M}_b$. Using the diffeomorpishm in \refLemma{lemma:Diffeomorphism_of_Mbbar} together with \eqref{eqn:isomorphism_tangent_space_M_b_bar}, we find that for any $W \in \bar{M}_b$,
\begin{equation}
	\mathrm{T}_W \bar{M}_b 
	\simeq 
	\mathrm{T}_{0}\Bigl( \frac{ \mathrm{O}(f) }{ \mathrm{I_{\rho} \oplus O(f-\rho)} } \Bigr) 
	\simeq 
	\frac{ \mathfrak{o}(f)) }{ 0_{\rho} \oplus \mathfrak{o}(f-\rho) }.
	\label{eqn:Intermediate__TWMbbar_diffeomorpishm_with_of_0rho_ofmrho}
\end{equation}
Here, $\mathfrak{o}(s)$ denotes the Lie algebra of the orthogonal group $\mathrm{O(s)}$, and
\begin{equation}
	\frac{ \mathfrak{o}(f) }{ 0_{\rho}  \oplus \mathfrak{o}(f-\rho) }
	= 
	\Bigl\{ 
		\begin{psmallmatrix}
			X & E \\
			-E^{\mathrm{T}} & 0 \\
		\end{psmallmatrix}
		: 
		X \in \mathrm{Skew}(\R^{\rho \times \rho}), E \in \R^{\rho \times (f - \rho)} 
	\Bigr\}.
	\label{eqn:Intermediate__ofmod0rhoofmrho}
\end{equation}
Note also that the isomorphism in \eqref{eqn:Intermediate__TWMbbar_diffeomorpishm_with_of_0rho_ofmrho} is given by the differential $\mathrm{D}_{\mathrm{Id}} \pi$ of the action $\pi$ in \eqref{eqn:Action_pi} at the identity of $\mathrm{O}(f)$; that is, $\mathrm{T}_W ( \bar{M}_b ) = \mathrm{D}_{\mathrm{Id}}\pi(\mathfrak{o}(f) / 0_{\rho}  \oplus \mathfrak{o}(f-\rho))(W)$. 

Recall now that $W$ has a \gls{SVD} decomposition of the form $(U \Sigma_2 S, S^{\mathrm{T}} \Sigma_1 V)$ by \refLemma{lemma:Specific_SVD_decomposition_of_W_in_Mbbar}. We therefore have that for any $( X, E; -E^{\mathrm{T}}, 0 ) \in \mathfrak{o}(f) / 0_{\rho}  \oplus \mathfrak{o}(f-\rho)$, 
\begin{align}
	\mathrm{D}_{\mathrm{Id}} \pi\Bigl( 
	S^{\mathrm{T}} 
	\begin{psmallmatrix}
	X & E \\ 
	-E^{\mathrm{T}} & 0 \\
	\end{psmallmatrix}
	S 
	\Bigr)
	(W) 
	&
	\eqcom{\ref{eqn:definition_orthogonalgroup_action_lemma}}= \Bigl( 
		W_2 
		S^{\mathrm{T}} 
		\begin{psmallmatrix}
		X & E \\ 
		-E^{\mathrm{T}} & 0 \\
		\end{psmallmatrix}
		S 
		, 
		S^{\mathrm{T}} 
		\begin{psmallmatrix}
		X^{\mathrm{T}} & - E \\ 
		E^{\mathrm{T}} & 0 \\
		\end{psmallmatrix}
		S 
		W_1 
		\Bigr)
	\nonumber \\ &
	= \Bigl( U \Sigma_2 
	\begin{psmallmatrix}
	X & E \\ 
	-E^{\mathrm{T}} & 0 \\
	\end{psmallmatrix}
	S, 
	S^{\mathrm{T}} 
	\begin{psmallmatrix}
	X^{\mathrm{T}} & -E \\ 
	E^{\mathrm{T}} & 0 \\
	\end{psmallmatrix}
	\Sigma_1 V
	\Bigr).
	\label{eqn:Differential_of_pi_at_the_identity_element_of_Of}
\end{align}
Consequently,
\begin{align}
	&
	\mathrm{T}_W ( \bar{M}_b )
	= \mathrm{D}_{\mathrm{Id}}\pi(\mathfrak{o}(f) /0_{\rho}  \oplus \mathfrak{o}(f-\rho))(W)
	\nonumber \\ &
	= \Bigl\{ \Bigl(
	U \Sigma_2
	\begin{psmallmatrix}
			X & E \\
			-E^{\mathrm{T}} & 0 \\
		\end{psmallmatrix}
	S,
	S^{\mathrm{T}}
	\begin{psmallmatrix}
			X^{\mathrm{T}} & -E \\
			E^{\mathrm{T}} & 0 \\
		\end{psmallmatrix}
	\Sigma_1 V \Bigr)
	:
	X \in \mathrm{Skew}(\R^{\rho \times \rho}),
	E \in \R^{\rho \times (f -\rho)}
	\Bigr\}.
	\label{eqn:Tangent_space_at_W_of_Mbbar}
\end{align}

Next, recall that $\mathrm{D}_W T: \mathrm{T}_W \bar{M}_b  \to \mathrm{T}_{T(W)}\R^{f}$. Concretely, for any
\begin{equation}
	(V_2,V_1)
	= \Bigl(
	U \Sigma_2
	\begin{psmallmatrix}
			X & E \\
			-E^{\mathrm{T}} & 0
		\end{psmallmatrix}
	S,
	S^{\mathrm{T}}
	\begin{psmallmatrix}
			X^{\mathrm{T}} & -E \\
			E^{\mathrm{T}} & 0 \\
		\end{psmallmatrix}
	\Sigma_1 V
	\Bigr)
	\in \mathrm{T}_W \bar{M}_b
	\label{eqn:Intermediate__Point_V}
\end{equation}
say, we have that
\begin{align}
	&
	\mathrm{D}_W T(V_2,V_1)
	\nonumber \\ &
	\eqcom{\ref{eqn:Definition_T_map}}= \mathrm{Diag}\Bigl( \Bigl( \mathrm{D}_{\mathrm{Id}} \pi
	\Bigl(
	S^{\mathrm{T}}
	\begin{psmallmatrix}
		X & E \\
		-E^{\mathrm{T}} & 0 \\
	\end{psmallmatrix}
	S
	\Bigr)(W) \Bigr)_1 W_1^{\mathrm{T}}
	+ W_1 \Bigl( \mathrm{D}_{\mathrm{Id}} \pi
	\Bigl(
	S^{\mathrm{T}}
	\begin{psmallmatrix}
		X & E \\
		-E^{\mathrm{T}} & 0 \\
	\end{psmallmatrix}
	S
	\Bigr)(W) \Bigr)_1^{\mathrm{T}} \Bigr)
	\nonumber \\ &
	\eqcom{\ref{eqn:Differential_of_pi_at_the_identity_element_of_Of}}= \mathrm{Diag}\Bigl(
	S^{\mathrm{T}}
	\begin{psmallmatrix}
		X^{\mathrm{T}} & -E \\
		E^{\mathrm{T}} & 0 \\
	\end{psmallmatrix}
	\Sigma_1 V W_1^{\mathrm{T}}
	+
	W_1 V^{\mathrm{T}} \Sigma_1^{\mathrm{T}}
	\begin{psmallmatrix}
		X & E \\
		-E^{\mathrm{T}} & 0 \\
	\end{psmallmatrix}
	S
	\Bigr)
	\nonumber \\ &
	\eqcom{SVD}= \mathrm{Diag}\Bigl(
	S^{\mathrm{T}}
	\begin{psmallmatrix}
		X^{\mathrm{T}} & -E \\
		E^{\mathrm{T}} & 0 \\
	\end{psmallmatrix}
	\Sigma_1 V V^{\mathrm{T}} \Sigma_1^{\mathrm{T}} S
	+
	S^{\mathrm{T}} \Sigma_1 V V^{\mathrm{T}} \Sigma_1^{\mathrm{T}}
	\begin{psmallmatrix}
		X & E \\
		-E^{\mathrm{T}} & 0 \\
	\end{psmallmatrix}
	S
	\Bigr)
	\nonumber \\ &
	\eqcom{\ref{eqn:Definition_Sigma1_and_Sigma2},\ref{eqn:Definition_Sigma_squared}}= \mathrm{Diag}\Bigl(
	S^{\mathrm{T}}
	\begin{psmallmatrix}
		X^{\mathrm{T}} & -E \\
		E^{\mathrm{T}} & 0 \\
	\end{psmallmatrix}
	\begin{psmallmatrix}
		\Sigma^2 & 0 \\
		0 & 0 \\
	\end{psmallmatrix}
	S
	+
	S^{\mathrm{T}}
	\begin{psmallmatrix}
		\Sigma^2 & 0 \\
		0 & 0 \\
	\end{psmallmatrix}
	\begin{psmallmatrix}
		X & E \\
		-E^{\mathrm{T}} & 0 \\
	\end{psmallmatrix}
	S
	\Bigr)
	\nonumber \\ &
	= \mathrm{Diag}\Bigl(
	S^{\mathrm{T}}
	\begin{psmallmatrix}
		X^{\mathrm{T}} \Sigma^2 & 0 \\
		E^{\mathrm{T}} \Sigma^2 & 0 \\
	\end{psmallmatrix}
	S
	+
	S^{\mathrm{T}}
	\begin{psmallmatrix}
		\Sigma^2 X & \Sigma^2 E \\
		0 & 0 \\
	\end{psmallmatrix}
	S
	\Bigr)
	= 2 \mathrm{Diag}\Bigl( S^{\mathrm{T}}
	\begin{psmallmatrix}
		\Sigma^2 X  & \Sigma^2 E \\
		0 & 0 \\
	\end{psmallmatrix}
	S \Bigr).
\end{align}

Let now $W \in M_b \backslash \mathrm{Sing}(M_b)$. By \eqref{eqn:Definition__Singular_points_of_Mb}, $\mathrm{D}_{W} T$ has maximal rank $f-1$. By continuity, the full rank property holds in an open set. There thus exists an open neighborhood $\mathcal{N}_W \subseteq \bar{M}_{b}$ say of $W$ such that for any $W' \in \mathcal{N}_W$ the rank of $\mathrm{D}_{W'} T$ is constant and equal to $f-1$.
Note now that $T : \mathcal{N}_W \to \realNumbers^f$ is a smooth function and we have $T(W) = \pnorm{\Sigma^2}{1} / f$ by \eqref{eqn:optimal_diagonal_main} and \eqref{eqn:Definition_Mb}. In particular for any $W \in \mathcal{N}_W \cap M_b$ we have $\mathrm{D}_{W'} T$ is maximal and $T(W') = \pnorm{\Sigma^2}{1} / f$. The \emph{constant rank theorem} \cite[Theorem~5.22]{lee2013smooth} therefore applies, and there exists an open neighborhood $\mathcal{U}_{W} \subseteq \mathcal{N}_W$ of $W$ such that 
\begin{equation}
	T^{-1}\Bigl( \frac{ \pnorm{ \Sigma^2}{1} }{f} \Bigr) \cap \mathcal{U}_W 
	= M_b \cap U_{W}
\end{equation}
is a smooth embedded manifold in $\bar{M}_b$ of codimension $f-1$.

Note now furthermore that for any $W \in \bar{M}_{b}$, $\mathrm{Tr}[T(W)] = \pnorm{ \Sigma^2 }{}$ by the diffeomorphism in \refLemma{lemma:Diffeomorphism_of_Mbbar}. The map $\mathrm{D}_{W} T$ can therefore have rank $f - 1$ at most in particular. That is, any $f-1$ components of $T$ are regular at $W$ and we can therefore consider $M_b$ as being an embedded manifold in $\bar{M}_b$ \cite[Proposition~5.28]{lee2013smooth}. Hence, by \cite[Lemma~5.29]{lee2013smooth} we also have that for any $Q \in T^{-1} ( \pnorm{ \Sigma^2}{1} /f) \cap U_{W} = M_b \cap U_{W}$ we have the representation
\begin{equation}
	\ker \mathrm{D}_{Q} T 
	= \mathrm{T}_{Q} M_b,
\end{equation}
where we understand $\mathrm{T}_{Q} M_b$ as a subspace of $\mathrm{T}_{Q} \bar{M}_b$.

This concludes the proof. \qed

\subsection{Proof \texorpdfstring{of \refProposition{prop:Mb_is_almost_everywhere_nonsingular}}{} -- The set \texorpdfstring{$\mathrm{Sing}(M_b)$}{Sing(Mb)}}
\label{sec:Mb_is_almost_everywhere_nonsingular}

We start by proving that if there exists a point $W \in M_b$ such that $\mathrm{rk}( \mathrm{D}_W T ) = f-1$, or in other words $M_{b} \backslash \mathrm{Sing}(M_{b}) \neq \emptyset$, then \refProposition{prop:Mb_is_almost_everywhere_nonsingular} holds. The proof relies on an established fact for the singular loci in affine algebraic varieties, of which $M_{b}$ is one.

\begin{lemma}
	\label{lemma:one_point_implies_almost_everywhere}
	If there exists a point $W \in M_b$ such that $\mathrm{rk}( \mathrm{D}_W T ) = f-1$, then \refProposition{prop:Mb_is_almost_everywhere_nonsingular} holds.
\end{lemma}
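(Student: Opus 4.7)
My strategy is to exploit the algebraic structure of $M_b$. Because $M_b$ is cut out by polynomial equations inside $\bar{M}_b$, both $M_b$ and $\mathrm{Sing}(M_b)$ are real algebraic sets, and the classical fact that the smooth locus of a real algebraic variety is open and dense will imply that $\mathrm{Sing}(M_b)$ has strictly smaller dimension than $M_b$. Combined with the local manifold structure supplied by \refProposition{prop:Characterization_of_TWMb}, this will deliver parts (a), (b), and (c) of \refProposition{prop:Mb_is_almost_everywhere_nonsingular}.

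First, I will check that $\mathrm{Sing}(M_b)$ is a Zariski-closed subset of $M_b$ and, under the lemma's hypothesis, a proper one. The variety $\bar{M}_b$ is real algebraic---its homogeneous-space description in \refLemma{lemma:Diffeomorphism_of_Mbbar} is obtained from the polynomial orthogonality conditions $S^{T}S = \mathrm{I}_f$---and the constraints $T(W) = (\pnorm{\Sigma^2}{1}/f)\mathbf{1}_f$ defining $M_b$ inside $\bar{M}_b$ are polynomial in the matrix entries of $W$. The defining condition of $\mathrm{Sing}(M_b)$, namely $\mathrm{rk}(\mathrm{D}_W T) < f-1$, is the simultaneous vanishing of all $(f-1)\times(f-1)$ minors of the Jacobian of $T$, each of which is polynomial in $W$. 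Hence $\mathrm{Sing}(M_b)$ is Zariski-closed (and thus Euclidean-closed) in $M_b$. The hypothesis of the lemma provides a $W^{*} \in M_b$ with $\mathrm{rk}(\mathrm{D}_{W^{*}} T) = f-1$, so $W^{*} \notin \mathrm{Sing}(M_b)$ and part (a) follows.

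Next, I will upgrade this to the dimension statement in (b). Decomposing $M_b$ into irreducible components, the standard result that the singular locus of an irreducible real algebraic variety is a proper Zariski-closed subvariety of strictly smaller dimension---together with the observation that the pairwise intersections of distinct irreducible components are proper closed subsets of each---shows that $\mathrm{Sing}(M_b)$ is contained in an algebraic set of strictly smaller dimension than $M_b$. Finally, on the complement $M_b \setminus \mathrm{Sing}(M_b)$, \refProposition{prop:Characterization_of_TWMb} exhibits $M_b$ locally as a smooth submanifold of $\bar{M}_b$ of codimension $f-1$, which simultaneously gives (b) and (c).

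The main obstacle I expect is the careful handling of the dimension bound on $\mathrm{Sing}(M_b)$ when $M_b$ is not equidimensional. The cleanest route should be to fix the irreducible component of $M_b$ containing the given $W^{*}$---which, by \refProposition{prop:Characterization_of_TWMb}, has the expected codimension $f-1$ in $\bar{M}_b$ and hence the expected dimension---and then control the remaining components and their pairwise intersections by dimension arguments applied componentwise. A minor technicality is ensuring that the local manifold structure furnished by \refProposition{prop:Characterization_of_TWMb} agrees with the structure inherited from the algebraic embedding of $M_b$ into $\bar{M}_b$; this is immediate because both structures are cut out by the same polynomial equations $T = \text{const}$.
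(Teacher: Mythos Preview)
Your proposal is correct and follows essentially the same route as the paper: both arguments recognize $M_b$ as a real algebraic variety inside the smooth variety $\bar{M}_b$, identify $\mathrm{Sing}(M_b)$ as the Zariski-closed locus where the Jacobian of the defining polynomials drops rank, use the hypothesized regular point $W^{*}$ to show properness, and then invoke standard results on the singular locus of real algebraic varieties (the paper cites \cite[Prop.~3.3.10, 3.3.14]{bochnak2013real}) to bound its dimension. The paper is slightly more explicit in assembling the full Jacobian $\mathcal{S}$ from the equations of $\bar{M}_b$ together with the components of $T$, whereas you work directly with $\mathrm{D}_W T$ restricted to $\mathrm{T}_W\bar{M}_b$; since $\bar{M}_b$ is smooth these are equivalent, as you note in your final paragraph.
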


\begin{proof}
	Fix any point $W = (W_2, W_1) \in \bar{M}_{b}$. Let $\pi$ be the action defined in \eqref{eqn:Action_pi}, and recall the representation of $M_b$ in \eqref{eqn:Alternative_representation_of_Mb} as well as the definition of $\bar{M}_b$ in \eqref{eqn:Definition_Mbbar}. Observe that the set $M_{b}$ can be defined as the set of solutions to the algebraic equations
	\begin{gather}
		L
		\in [L]
		\in \frac{\mathrm{O}(f)}{ \mathrm{I}_{\rho} \oplus \mathrm{O}(f - \rho)} \simeq \bar{M}_b
		\nonumber \\
		\frac{\norm{\Sigma^2}_1}{f}
		\mathrm{I}_{f \times f}
		=
		\mathrm{Diag}(L^{\mathrm{T}} W_2^{\mathrm{T}} W_2 L)
		=
		\mathrm{Diag}(L^{\mathrm{T}} W_1 W_1^{\mathrm{T}} L)
		\eqcom{\ref{eqn:Definition_T_map}}= 
		T( \pi(L)(W)).
		\label{eqn:lemma_nonsingular_algebraic_variety_explicit}
	\end{gather}
	We may therefore consider $M_{b}$ as a \emph{real algebraic variety} of $\mathcal{P}$; that is, the zero loci (the set of real solutions) of a set of real polynomials of finite degree with variables in $\mathcal{P}$. Let $P_1, \ldots, P_s$ with $s = \dim \mathcal{P} - \dim(\bar{M}_b)$ be the polynomials defining $\bar{M}_b$ at zero, that is, $\bar{M}_b = P_1^{-1}(0) \cap \ldots \cap P_s^{-1}(0)$. If we denote the gradient with respect to the coordinates in $\mathcal{P}$ by $\nabla$, then the matrix composed by $( \nabla P_i)_{i=1}^{s}$ has rank $\dim \mathcal{P} - \dim(\bar{M}_b)$ at $W$ whenever $P_1(W) = \ldots = P_s(W) = 0$. Eq.\ \eqref{eqn:lemma_nonsingular_algebraic_variety_explicit} also shows that $T$ defines $M_b$ and its differential $\mathrm{D}_{W} T$ via $f-1$ polynomials $Q_1, \ldots, Q_{f - 1}$ say (one less than $f$, since the trace of $T$ is fixed) plus the polynomials $\{ P_{i} \}_{i=1}^s$ needed to define $\bar{M}_b$. Recall that $\bar{M}_b$ is a smooth manifold and has no singular points. In particular, we have a matrix
	\begin{equation}
		\mathcal{S} 
		= 
		\bigl( 
			\nabla P_1, \ldots, \nabla P_s, \nabla Q_1, \ldots, \nabla Q_{f-1}
		\bigr)
	\end{equation}
	that satisfies the following: if $W \in \mathcal{P}$ is such that $P_i(W) = Q_j(W) = 0$ for all $i, j$, then $\mathcal{S}$ has rank at most $\dim \mathcal{P} - \dim(\bar{M}_b) + f-1$. The set of singular points $\mathrm{Sing}(M_b)$ can be then understood as the set of points $W \in \mathcal{P} \cap M_b$ that are not regular points; that is, the set of points $W \in M_b$ where $\mathcal{S}$ does not have maximal rank. This is a closed Zariski set in the algebraic variety $M_b$.

	Recall now that there exists $W^{*} \in M_b$ such that $\mathrm{rk}(\mathrm{D}_{W^{*}} T) = f-1$ is maximal by assumption. This implies that for $W^{\prime}$ in a neighborhood of $W^{*}$, $\mathrm{D}_{W^{\prime}} T$ has also constant rank $f-1$. Noting that the polynomials $Q_{i}$ for $i=1, \ldots, f-1$ are defined as $f-1$ components of $T(\pi(L)(W))$ up to a constant, we then have that $\mathcal{S}$ has exactly rank $\dim \mathcal{P} - \dim(\bar{M}_b) + f-1$ at $W^{*}$. By \cite[Prop.~3.3.10, (iii) $\to$ (ii)]{bochnak2013real}, there is an irreducible component of $M_b$ of codimension $f-1$ in $\bar{M}_b$ (or of dimension $\dim(\bar{M}_b) - (f - 1)$ in $\mathcal{P}$), and there is a unique component containing $W^{*}$.  By \cite[Prop.~3.3.14]{bochnak2013real}, $\mathrm{Sing}(M_b)$ is then an algebraic set of codimension strictly larger than $f-1$ in $\bar{M}_b$ (see also \cite[\S{6.2}]{smith2004invitation}). Alternatively we can say that $\mathrm{Sing}(M_b)$ is a proper closed Zariski set in $M_b$. Hence, $M_b$ is generically smooth or up to a closed algebraic set of dimension smaller than $M_b$. From the previous computation moreover, $M_{b}$ is then smooth manifold of codimension $f-1$ in $\bar{M}_b$ up to the closed lower dimensional algebraic set $\mathrm{Sing}(M_b)$.
\end{proof}

We next prove that \refAssumption{ass:nonvanishing_maintext} implies that there exists a point $W \in M_b$ such that $\mathrm{rk}( \mathrm{D}_W T ) = f-1$.

\begin{lemma}
	\label{lemma:assumption_holds_if_nonzero}
	Suppose \refAssumption{ass:eigenvalues_different_maintext} holds. 
	Let $W \in \bar{M}_b \cap T^{-1}(\norm{\Sigma^2}_1/f)$ with $(U \Sigma_2 S, S^{\mathrm{T}} \Sigma_1 V)$ an \gls{SVD} for it. If one of the first $\rho$ rows of $S \in O(f)$ has no zeros (\refAssumption{ass:nonvanishing_maintext}), then there exists a point $W \in M_b$ such that $\mathrm{rk}( \mathrm{D}_W T ) = f-1$. 
\end{lemma}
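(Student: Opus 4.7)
The plan is to start from the $W \in M_b$ supplied by \refAssumption{ass:nonvanishing_maintext}, with \gls{SVD} $W = (U \Sigma_2 S, S^{\mathrm{T}} \Sigma_1 V)$, and to single out an index $1 \leq j_0 \leq \rho$ for which the $j_0$-th row of $S$ has no zero entries; such a $j_0$ exists by hypothesis. Using the explicit formula for $\mathrm{D}_W T$ from \refProposition{prop:Characterization_of_TWMb}, I want to show that at this particular $W$ the image of $\mathrm{D}_W T$ has dimension $f - 1$.

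Write $M := \begin{psmallmatrix} \Sigma^2 X & \Sigma^2 E \\ 0 & 0 \end{psmallmatrix}$ for the bracketed matrix in \eqref{eqn:Bilinear_form_Mb_implicit_M_Of}, where $X$ varies over skew $\rho \times \rho$ matrices and $E$ varies over $\R^{\rho \times (f - \rho)}$. Since $\mathrm{Diag}(A) = \mathrm{Diag}(A^{\mathrm{T}})$, the diagonal of $S^{\mathrm{T}} M S$ depends only on $M_{\mathrm{sym}} := (M + M^{\mathrm{T}})/2$. A short computation, using \refAssumption{ass:eigenvalues_different_maintext} to guarantee that the scalars $\sigma_j^2 - \sigma_k^2$ (for $j \neq k$ with $j, k \leq \rho$) and $\sigma_j^2$ (for $j \leq \rho$) are all nonzero, shows that as $(X, E)$ ranges over its domain $M_{\mathrm{sym}}$ spans exactly
\[
    \mathcal{V}
    = \bigl\{ P \in \mathrm{Sym}(\R^{f \times f}) : \mathrm{Diag}(P) = 0,\ P_{jk} = 0 \text{ whenever } j, k > \rho \bigr\}.
\]
Because every $P \in \mathcal{V}$ is traceless and $S \in \mathrm{O}(f)$ preserves trace, the image of $\mathrm{D}_W T$ is contained in the hyperplane $\mathcal{H} = \{v \in \R^f : \sum_i v_i = 0\}$, which is $(f-1)$-dimensional. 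The problem therefore reduces to showing the image equals $\mathcal{H}$.

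I plan to dualize: a vector $u \in \R^f$ annihilates the image of $\mathrm{D}_W T$ iff $\mathrm{Tr}\bigl( S \mathrm{Diag}(u) S^{\mathrm{T}} M_{\mathrm{sym}}\bigr) = 0$ for every $M_{\mathrm{sym}} \in \mathcal{V}$, i.e., iff $N := S \mathrm{Diag}(u) S^{\mathrm{T}} \in \mathcal{V}^{\perp}$ in the trace inner product on symmetric matrices. A direct calculation gives
\[
    \mathcal{V}^{\perp}
    = \bigl\{ P \in \mathrm{Sym}(\R^{f \times f}) : P_{jk} = 0 \text{ for all } j \neq k \text{ with } \min(j, k) \leq \rho \bigr\}.
\]
Thus $N \in \mathcal{V}^{\perp}$ forces $N_{jk} = 0$ whenever $j \neq k$ and $\min(j, k) \leq \rho$; in particular, $N_{j_0, k} = 0$ for every $k \neq j_0$.

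The final step is where the nonvanishing-row assumption enters. Expanding $N_{j_0, k} = \sum_l S_{j_0, l} S_{k, l} u_l$, the vanishing for all $k \neq j_0$ says that the vector $v := (S_{j_0, l} u_l)_{l=1}^f$ is orthogonal to every row of $S$ other than the $j_0$-th. Because the rows of $S$ form an orthonormal basis of $\R^f$, this forces $v = c \, S_{j_0, \cdot}$ for some scalar $c$, and then the nonvanishing of every entry $S_{j_0, l}$ lets me cancel to conclude $u_l = c$ for all $l$, i.e., $u \propto \mathbf{1}$. Hence the orthogonal complement of the image of $\mathrm{D}_W T$ equals $\langle \mathbf{1} \rangle$, so the image is all of $\mathcal{H}$ and $\mathrm{rk}(\mathrm{D}_W T) = f - 1$, as required. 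I expect the main (mildly delicate) step to be the clean identification of $\mathcal{V}$, which genuinely uses the strict separation of singular values from \refAssumption{ass:eigenvalues_different_maintext}; once $\mathcal{V}$ is pinned down, the duality calculation is short and the nonvanishing-row hypothesis is used only in the very last line.
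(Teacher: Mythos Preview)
Your proof is correct and takes a genuinely different route from the paper's. The paper argues constructively: for a row index $s \le \rho$ with no zeros, it restricts to pairs $(X,E)$ that are nonzero only in the $s$-th row/column, leaving exactly $f-1$ free scalars, and then computes $\mathrm{D}_W T$ on this slice explicitly as the linear map $(0,x,b)\mapsto (0,x,b)\,S\,\mathrm{Diag}(S_{s1},\ldots,S_{sf})$, which has rank $f-1$ once all $S_{sl}\neq 0$. You instead compute the \emph{entire} image of $\mathrm{D}_W T$ by duality: first you identify the span $\mathcal{V}$ of the symmetrized matrices $M_{\mathrm{sym}}$ (this is precisely where \refAssumption{ass:eigenvalues_different_maintext} enters, to make the off-diagonal entries of the top-left block free), then you characterize the annihilator of the image as $\{u : S\,\mathrm{Diag}(u)\,S^{\mathrm{T}} \in \mathcal{V}^{\perp}\}$, and finally show this annihilator is $\langle \mathbf{1}\rangle$ using the nonvanishing row. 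Your argument is more structural and makes the role of each hypothesis transparent (in particular, it shows that \emph{only} the $j_0$-th row constraints from $\mathcal{V}^{\perp}$ are needed, so the other rows of $S$ play no role); the paper's argument is shorter and more hands-on, but requires spotting the right $(f-1)$-dimensional slice of $(X,E)$'s. Both invoke the nonvanishing-row hypothesis at the same moment---cancelling a factor of $S_{j_0,l}$ to conclude $u_l$ is constant.
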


\begin{proof}
	The \gls{SVD} exists by \refLemma{lemma:Specific_SVD_decomposition_of_W_in_Mbbar}. Let
	\begin{equation}
		X 
		= 
		\begin{psmallmatrix}
			0 & X_{12} & \cdots & X_{1 \rho} \\
			- X_{12} & 0 & \cdots &  0 \\
			\vdots & \vdots & & \vdots \\
			- X_{1 \rho} & 0 & \cdots & 0 \\
		\end{psmallmatrix}
		\in \mathrm{Skew}(\R^{\rho \times \rho}),
		\quad
		B
		=
		\begin{psmallmatrix}
			B_{11} & B_{12} & \cdots & B_{1 \rho} \\
			0 & 0 & \cdots & 0 \\
			\vdots & \vdots & & \vdots \\
			0 & 0 & \cdots & 0 \\
		\end{psmallmatrix}
		\in \R^{\rho \times (f-\rho)}.
	\end{equation}
	Hence, the first row of $X$ is the vector $X_{1,\cdot} = (0, X_{1,2:\rho})$ where $X_{1,2:\rho} = (X_{12}, \ldots, X_{1 \rho})\in \R^{\rho -1}$ say, and the first row of $B$ is a vector $B_{1 \cdot} \in \R^{(f-\rho)}$ say. Let $\Sigma^{2} = \mathrm{Diag}(\theta_1, \ldots, \theta_{\rho}) \in \R^{\rho \times \rho}$ be as in \eqref{eqn:Definition_Sigma_squared}, and define $\theta_{2:\rho} = (\theta_2, \ldots, \theta_{\rho}) \in \R^{\rho -1}$. 
	
	Consider now the map
	\begin{align}
		&				
		\mathrm{Diag} 
		\Bigl( 
			S^{\mathrm{T}} 
			\begin{psmallmatrix}
				\Sigma^2 X  & \Sigma^2 B \\
				0_{(f-\rho) \times \rho} & 0_{ (f-\rho) \times (f-\rho)} \\
			\end{psmallmatrix} 
			S 
		\Bigr)
		\nonumber \\ &
		= 
		\mathrm{Diag}
		\bigl( 
			S^{\mathrm{T}} 
			\begin{psmallmatrix}
				\Sigma^2 X & 0_{\rho \times (f-\rho)} \\
				0_{(f-\rho) \times \rho} & 0_{(f-\rho) \times (f-\rho)} \\
			\end{psmallmatrix} 
			S 
		\bigr)
		+ 
		\mathrm{Diag}
		\bigl( 
			S^{\mathrm{T}} 
			\begin{psmallmatrix}
				0_{\rho \times \rho} & \Sigma^2 B \\
				0_{(f-\rho) \times \rho} & 0_{(f-\rho) \times (f-\rho)} \\
			\end{psmallmatrix} 
			S 
		\bigr)
		\nonumber \\ &
		= 
		\mathrm{Diag}
		\Bigl( 
			S^{\mathrm{T}} 
			\begin{psmallmatrix}
				0_{1 \times 1} & \theta_1 X_{1,2:\rho} & 0_{1 \times (f-\rho)} \\
				-( \theta_{2:\rho} \odot X_{1,2:\rho} )^{\mathrm{T}} & 0_{(\rho - 1) \times (\rho - 1) } & 0_{(\rho-1) \times (f-\rho)} \\
				0_{(f-\rho) \times 1} & 0_{(f-\rho) \times (\rho-1)} & 0_{(f-\rho) \times (f-\rho)} \\
			\end{psmallmatrix} 
			S 
		\Bigr) 
		\nonumber \\ &
		\phantom{=}
		+
		\mathrm{Diag}
		\Bigl( 
			S^{\mathrm{T}} 
				\begin{psmallmatrix}
				0_{1 \times 1} & 0_{1 \times (\rho-1)} & \theta_1 B_{1,:} \\
				0_{(\rho-1)\times 1} & 0_{(\rho-1) \times (\rho-1)} & 0_{(\rho-1) \times (f-\rho)} \\
				0_{(f-\rho) \times 1} & 0_{(f-\rho) \times (\rho-1)} & 0_{(f-\rho) \times (f-\rho)} \\
			\end{psmallmatrix} 
			S 
		\Bigr).
	\end{align}
	Observe now that because $\mathrm{Diag}(A) = \mathrm{Diag}(A^{\mathrm{T}})$ for any square matrix $A \in \R^{f \times f}$, we have
	\begin{align}
		&
		\mathrm{Diag}
		\Bigl( 
			S^{\mathrm{T}} 
			\begin{psmallmatrix}
				0_{1 \times 1} & \theta_1 X_{1,2:\rho} & 0_{1 \times (f-\rho)} \\
				-( \theta_{2:\rho} \odot X_{1,2:\rho} )^{\mathrm{T}} & 0_{(\rho - 1) \times (\rho - 1) } & 0_{(\rho-1) \times (f-\rho)} \\
				0_{(f-\rho) \times 1} & 0_{(f-\rho) \times (\rho-1)} & 0_{(f-\rho) \times (f-\rho)} \\
			\end{psmallmatrix} 
			S 
		\Bigr)
		\nonumber \\ &
		=
		\mathrm{Diag}
		\Bigl( 
			S^{\mathrm{T}} 
			\begin{psmallmatrix}
				0_{1 \times 1} & \theta_1 X_{1,2:\rho} - \theta_{2:\rho} \odot X_{1,2:\rho} & 0_{1 \times (f-\rho)} \\
				0_{(f-1) \times 1} & 0_{(f-1) \times (\rho-1)} & 0_{(f-1) \times (f-\rho)} \\
			\end{psmallmatrix} 
			S 
		\Bigr).		
	\end{align}
	Define now $x = ((\theta_1 - \theta_2) X_{12}, \ldots, (\theta_{1} - \theta_{\rho}) X_{1 \rho})$, $b = \theta_1 B_{1,:}$. We have shown that
	\begin{equation}			
		\mathrm{Diag} 
		\Bigl( 
			S^{\mathrm{T}} 
			\begin{psmallmatrix}
				\Sigma^2 X  & \Sigma^2 B \\
				0_{(f-\rho) \times \rho} & 0_{ (f-\rho) \times (f-\rho)} \\
			\end{psmallmatrix} 
			S 
		\Bigr)		
		= 
		\mathrm{Diag}
		\Bigl( 
			S^{\mathrm{T}} 
			\begin{psmallmatrix}
				0_{1 \times 1} & x & b \\
				0_{(f-1) \times 1} & 0_{(f-1) \times (\rho - 1)} & 0_{(f-1) \times (f-\rho)} \\
			\end{psmallmatrix} 
			S 
		\Bigr).
	\end{equation}

	Let $S_{\cdot,1}, \ldots, S_{\cdot,f}$ be the columns of $S$, and denote the $j$-th component of the column $S_{\cdot, i}$ by $S_{ij}$. We have now
	\begin{align}
		&				
		\mathrm{Diag} 
		\Bigl( 
			S^{\mathrm{T}} 
			\begin{psmallmatrix}
				\Sigma^2 X  & \Sigma^2 B \\
				0_{(f-\rho) \times \rho} & 0_{ (f-\rho) \times (f-\rho)} \\
			\end{psmallmatrix} 
			S 
		\Bigr)	
		=		
		\mathrm{Diag}
		\Bigl(
			S^{\mathrm{T}} 
			\begin{psmallmatrix}
				(0, x, b)  \\
				0_{(f-1) \times f} \\
			\end{psmallmatrix} 
			S 
		\Bigr) 
		\nonumber \\ &		
		= 
		\mathrm{Diag}
		\Bigl( 
			\begin{psmallmatrix}
				S_{11}(0, x, b)  \\
				\cdots \\
				S_{1f}(0, x, b)
			\end{psmallmatrix} 
			S 
		\Bigr)  
		= 
		\mathrm{Diag}
		\Bigl( 
			S_{11} \langle S_{\cdot, 1}, (0,x, b) \rangle, \ldots, S_{1f} \langle S_{\cdot, f}, (0,x, b) \rangle
		\Bigr).
		\label{eqn:lemma_map_rank_f-1_needed}
	\end{align}

	Recall that $\theta_i \neq \theta_j$ for $i \neq j$, and note that $x \in \R^{\rho-1}$, $b \in \R^{f-\rho}$ are free variables. To prove \refLemma{lemma:assumption_holds_if_nonzero}, we need to find $x$, $b$ such that the map in \eqref{eqn:lemma_map_rank_f-1_needed} has rank $f-1$. Note that the vector
	\begin{equation}
		\bigl( 
			S_{11} \langle S_{\cdot, 1}, (0,x, b) \rangle, 
			\ldots, 
			S_{1f} \langle S_{\cdot, f}, (0,x, b) \rangle
		\bigr)^{\mathrm{T}}
		=
		(0,x, b) 
		S 
		\mathrm{Diag}( S_{11}, \ldots, S_{1f}).
		\label{eqn:lemma_map_rank_f-1}
	\end{equation}
	The subspace spanned by vectors of the form $(0,x, b) \in \R^{f}$ has dimension $f-1$. Therefore, since $S$ is an orthogonal matrix (note that taking any representative of $S$ from the homogeneous space $\bar{M}_b$ in \eqref{eqn:Definition_Mbbar} also works), we have that the linear map in \eqref{eqn:lemma_map_rank_f-1} has rank $f-1$ if $\mathrm{Diag}( S_{11}, \ldots, S_{1f})$ has maximal rank. This happens whenever $S_{1i} \neq 0$ for all $i=1, \ldots, f$. 
	
	The argument above also works if $s \leq \rho$: consider then instead
	\begin{equation}
		X 
		= 
		\begin{psmallmatrix}
			0 & \cdots & 0 & - X_{s1} & 0 & \cdots & 0 \\
			% 0 & \cdots & 0 & - X_{s2} & 0 & \cdots & 0 \\
			\vdots & & \vdots & \vdots & \vdots & & \vdots \\
			0 & \cdots & 0 & - X_{s(s-1))} & 0 & \cdots & 0 \\			
			X_{s1} & \cdots & X_{s(s-1)} & 0 & X_{s(s+1))} & \cdots & X_{s\rho}\\
			0 & \cdots & 0 & - X_{s(s+1))} & 0 & \cdots & 0 \\						
			\vdots & & & \vdots & \vdots & & \vdots \\
			0 & \cdots & 0 & - X_{s\rho} & 0 & \cdots & 0 \\
		\end{psmallmatrix}
		,
		\quad
		B
		=
		\begin{psmallmatrix}
			0 & 0 & \cdots & 0 \\
			\vdots & \vdots & & \vdots \\
			0 & 0 & \cdots & 0 \\
			B_{s1} & B_{s2} & \cdots & B_{s \rho} \\	
			0 & 0 & \cdots & 0 \\
			\vdots & \vdots & & \vdots \\
			0 & 0 & \cdots & 0 \\
		\end{psmallmatrix}
		\in \R^{\rho \times (f-\rho)}
		.
	\end{equation}
	The diagonal matrix in \eqref{eqn:lemma_map_rank_f-1} will then turn out to be $\mathrm{Diag}(S_{s1}, \ldots, S_{sf})$ instead. Hence, in case one of the first $\rho$ rows of $S$ has no zeros then the map has rank $f-1$. This concludes the proof.
\end{proof}

In the case that $\rho = 1$, we can let go of \refAssumption{ass:nonvanishing_maintext} and compute $M_b$ exactly. This is implied by the following lemma.

\begin{lemma}
	Suppose \refAssumption{ass:eigenvalues_different_maintext} holds and $\rho = 1$. For any $W = (U \Sigma_2 S, S^{\mathrm{T}} \Sigma_1 V) \in M_b$ that satisfies \eqref{eqn:condition_balancedness_M_b}, it holds that $| S_{1j}|^2 = 1/f$ for $j \in \{ 1, \ldots, f \}$. Furthermore, $M_{b}$ is a union of finitely many points.
	\label{lemma:case_rho_is_one_M_b}
\end{lemma}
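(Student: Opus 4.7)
The plan is two-part: first I will derive the algebraic constraint $|S_{1j}|^2 = 1/f$ directly from the defining equations of $M_b$, and then exploit the resulting rigidity of $S$ modulo the quotient action to conclude that $M_b$ is finite.

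For the first part, since $\rho = 1$ the block $\Sigma^{2} \in \R^{1 \times 1}$ is a single positive scalar, so the matrix $\begin{psmallmatrix}\Sigma^{2} & 0 \\ 0 & 0\end{psmallmatrix} \in \R^{f \times f}$ has only its $(1,1)$-entry nonzero. A direct expansion then gives $\bigl(S^{T} \begin{psmallmatrix}\Sigma^{2} & 0 \\ 0 & 0\end{psmallmatrix} S\bigr)_{jj} = \Sigma^{2}\, S_{1j}^{2}$ for every $j$. The defining condition of $M_b$ in \eqref{eqn:Alternative_representation_of_Mb} equates each such diagonal entry to $\|\Sigma^{2}\|_1 / f = \Sigma^{2}/f$, which yields $S_{1j}^{2} = 1/f$ for every $j \in \{1,\ldots,f\}$.

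For the second part, I will show that each equivalence class in the quotient $\mathrm{O}(f)/(\mathrm{I}_1 \oplus \mathrm{O}(f-1))$ is uniquely determined by the first row of any representative. Tracing the construction in the proof of \refLemma{lemma:Diffeomorphism_of_Mbbar}, two representatives $S_1, S_2$ give the same point of $\bar{M}_b$ precisely when $S_1 S_2^{T} \in \mathrm{I}_1 \oplus \mathrm{O}(f-1)$, i.e.\ when $(S_1 S_2^{T})_{1,j} = \delta_{1,j}$ for all $j$. Expanding $(S_1 S_2^{T})_{1,j} = \langle (S_1)_{1,\cdot}, (S_2)_{j,\cdot} \rangle$ and using orthonormality of the rows of $S_2$, this forces $(S_1)_{1,\cdot} = (S_2)_{1,\cdot}$; conversely, any unit vector in $\R^{f}$ is realized as the first row of some $S \in \mathrm{O}(f)$ by completion to an orthonormal basis. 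Combined with the first part, the admissible equivalence classes are in bijection with the unit vectors in $\R^{f}$ whose components all have magnitude $1/\sqrt{f}$, namely the $2^{f}$ sign patterns $(\pm 1/\sqrt{f},\ldots,\pm 1/\sqrt{f})$. The explicit formulas $W_2 = \sigma^{\prime}\, U_{\cdot,1}\, S_{1,\cdot}$ and $W_1 = \sigma^{\prime}\, S_{1,\cdot}^{T}\, V_{1,\cdot}$ (with $\sigma^{\prime} = \Sigma^{2}$ when $\rho = 1$) show that distinct first rows yield distinct $(W_2, W_1)$, so $M_b$ consists of exactly $2^{f}$ points.

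The main subtlety lies in identifying the first row of $S$—rather than, say, the first column—as the invariant of the appropriate coset: this requires care since $\pi$ is a right action and the stabilizer is $\mathrm{I}_1 \oplus \mathrm{O}(f-1)$ acting on $S$ in the right way. Once the coset bookkeeping is settled, the finiteness conclusion follows immediately from the combinatorial count of admissible sign patterns.
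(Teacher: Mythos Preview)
Your proof is correct and follows essentially the same route as the paper: both compute the diagonal entries of $S^{T}\begin{psmallmatrix}\Sigma^{2} & 0 \\ 0 & 0\end{psmallmatrix}S$ directly to obtain $|S_{1j}|^{2}=1/f$, and both argue finiteness via the observation that the coset in $\mathrm{O}(f)/(\mathrm{I}_{1}\oplus\mathrm{O}(f-1))$ is determined by the first row of $S$ (your coset bookkeeping is more explicit, and you even obtain the exact cardinality $2^{f}$). One small slip: in your explicit formulas the scalar should be $\sigma'=\Sigma$ (the positive square root of $\Sigma^{2}$), not $\Sigma^{2}$, though this does not affect the injectivity argument.
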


\begin{proof}
	We first calculate $S$'s entries. Recall $\Sigma_2$'s and $\Sigma_1$'s expressions in \eqref{eqn:Definition_Sigma1_and_Sigma2}. If $\rho = 1$, then $\Sigma^2 = \Sigma_{\mathcal{W}^*} = \eta \in \R$. Consequently,
	\begin{equation}
		\mathrm{Diag}(W_2^{\mathrm{T}} W_2)
		\eqcom{SVD}= 
		\mathrm{Diag}
		\bigl( 
			S^{\mathrm{T}} 
			\begin{psmallmatrix}
				\eta & 0 \\
				0 & 0 \\
			\end{psmallmatrix} 
			S 
		\bigr) 
		= 
		\mathrm{Diag}
		\bigl(
			|S_{11}|^2 \eta, \cdots, |S_{1f}|^2 \eta 
		\bigr)
		.
		\label{eqn:case_rho_is_one_1}
	\end{equation}
	By \refAssumption{ass:eigenvalues_different_maintext} and \eqref{eqn:condition_balancedness_M_b}, \eqref{eqn:case_rho_is_one_1} must equal $(\eta / f) \mathrm{I}_{f}$. Consequently,
	\begin{equation}
		|S_{11}|^2 = \ldots = |S_{1f}|^2 = \frac{1}{f}.
		\label{eqn:case_rho_is_one_2}
	\end{equation}

	We next show that $M_b$ is a union of finitely many points. Whatever choice for $S_{1, \cdot}$ is made (as long as it satisfies \eqref{eqn:case_rho_is_one_2}), we can then complete the system $\{ S_{1,\cdot}\}$ to an orthonormal basis $\{ S_{1,\cdot}, \ldots, S_{f,\cdot} \}$ say. In other words, this procedure constructs a matrix $S$ that is moreover in $\mathrm{O}(f)$. The resulting $S$ also gives an element of $\bar{M}_b$ and under the quotient $\mathrm{O}(f) \to \mathrm{O}(f)/(\mathrm{1} \oplus \mathrm{O}(f-1))$ all the basis completions of $S$ are equivalent modulo $\mathrm{1} \oplus \mathrm{O}(f-1)$. Note that we can \emph{a priori} choose the signs of the one dimensional `seed' subspace corresponding to the singular value $\Sigma$ in $U$ and $V$. In particular by \refLemma{lemma:assumption_holds_if_nonzero} we obtain that \refAssumption{ass:nonvanishing_maintext} holds in the case that $\rho = 1$. Moreover, for each choice of signs in the vector $S_{1, \cdot}$, we can select a unique disjoint element in $M_b$. Hence, there are only finitely many points in $M_{b}$ when $\rho = 1$.
\end{proof}

\subsection{Proof of \texorpdfstring{\refProposition{prop:hessian_expression}}{} -- Computing \texorpdfstring{$\nabla^2 \mathcal{I}(W)$}{the second derivative of I(W)}}
\label{sec:Proof_of_the_bilinear_form_of_the_Hessian}

We first recall several properties of vectorization in \eqref{eqn:Definition_vectorization}:
\begin{itemize}[topsep=2pt,itemsep=2pt,partopsep=2pt,parsep=2pt,leftmargin=0pt]
	\item[--] If $A, B \in \R^{a \times b}$, then $\mathrm{Tr}[ A^{\mathrm{T}} B ] = \mathrm{vec}(A)^{\mathrm{T}} \mathrm{vec}(B)$. 
	\item[--] If $A \in \R^{e \times f}, B \in \R^{f \times h}$, then $\mathrm{vec}(AB) = B^{\mathrm{T}} \otimes \mathrm{I}_{e \times e} \mathrm{vec}(A) = \mathrm{I}_{h \times h} \otimes A \mathrm{vec}(B)$. Here, $\otimes$ is understood as the Kronecker tensor product compatible with the vectorization $\mathrm{vec}$.
	\item[--] If $A \in \realNumbers^{e \times f}, B \in \realNumbers^{f \times h}, C \in \realNumbers^{h \times g}$, then $\mathrm{vec}(ABC) = (C^{\mathrm{T}} \otimes A) \mathrm{vec}(B)$.  
	\item[--] Let $K: \R^{ab} \to \R^{ba}$ be the linear map such that $\mathrm{vec}(A^{\mathrm{T}}) = K \mathrm{vec}(A)$ holds for any $A \in \R^{a \times b}$. 
\end{itemize}
We also rewrite \eqref{eqn:loss_matrixfact_gradient} using the vectorization notation. Specifically, we have that
\begin{align}
	&
	\mathrm{vec}(\nabla_{1} \mathcal{I}(W)) 
	= \mathrm{vec}(-2W_2^{\mathrm{T}}Y + 2W_2^{\mathrm{T}}W_2W_1) + \mathrm{vec}(2\lambda \mathrm{Diag}(W_2^{\mathrm{T}}W_2)  W_1 )
	\nonumber \\ &
	= -2\mathrm{vec}(W_2^{\mathrm{T}}Y) + 2 \mathrm{I}_{h \times h } \otimes W_2^{\mathrm{T}} W_2 \mathrm{vec}(W_1) + 2 \lambda \mathrm{I}_{h \times h } \otimes \mathrm{Diag}(W_2^{\mathrm{T}} W_2) \mathrm{vec}(W_1),
	\label{eqn:vectorization_hessian1}
\end{align}
and
\begin{align}
	&
	\mathrm{vec}(\nabla_{2} \mathcal{I}(W)) 
	= \mathrm{vec}(-2YW_1^{\mathrm{T}} + 2W_2W_1W_1^{\mathrm{T}}) + \mathrm{vec}(2\lambda W_2 \mathrm{Diag}(W_1W_1^{\mathrm{T}})) 
	\nonumber \\ &
	= -2 \mathrm{vec}(Y W_1^{\mathrm{T}}) + 2  W_1W_1^{\mathrm{T}} \otimes \mathrm{I}_{e \times e } \mathrm{vec}(W_2) + 2 \lambda \mathrm{Diag}(W_1W_1^{\mathrm{T}}) \otimes \mathrm{I}_{e \times e} \mathrm{vec}(W_2).
	\label{eqn:vectorization_hessian2}
\end{align}
As a final preparatory step, let us denote for $i,j \in \{1,2\}$ the partial derivatives with respect to matrices $i$ and $j$ by $\partial_{i,j}$. For example, 
\begin{equation}
	\partial_{1,1} \mathcal{I} 
	\in \R^{(f \times h) \times (f \times h)}
	\quad 
	\textnormal{and}
	\quad
	(\partial_{1,1} \mathcal{I})_{kl, mn} 
	= \frac{ \partial^2 \mathcal{I}(W) }{ \partial W_{1kl} \partial W_{1mn} }.
\end{equation}

\noindent
\emph{Step 1: Calculating the partial derivatives $\partial_{ij} \mathcal{I}(W)$.} 
We start by computing the partial derivatives $\partial_{11} \mathcal{I}, \partial_{22} \mathcal{I}$ directly from \eqref{eqn:vectorization_hessian1}, \eqref{eqn:vectorization_hessian2}. For any vector $v \in \R^n$ and matrix $A \in \R^{n \times n}$, it holds that $\partial(Av)/ \partial v = A$. Therefore
\begin{gather}
	\partial_{1,1} \mathcal{I}(W) 
	= 
	2  \mathrm{I}_{h \times h } \otimes W_2^{\mathrm{T}} W_2 + 2 \lambda \mathrm{I}_{h \times h } \otimes \mathrm{Diag}(W_2^{\mathrm{T}} W_2),
	\nonumber \\
	\partial_{2,2} \mathcal{I}(W) 
	= 
	2  W_1W_1^{\mathrm{T}} \otimes \mathrm{I}_{e \times e } + 2 \lambda \mathrm{Diag}(W_1W_1^{\mathrm{T}}) \otimes \mathrm{I}_{e \times e }.
	\label{eqn:Partial_derivatives__d11IW_d22IW}
\end{gather}

Next we are going to calculate $\partial_{1,2} \mathcal{I}$. We first rewrite terms of $\nabla_1 \mathcal{I}(W)$ and $\nabla_2 \mathcal{I}(W)$ in \eqref{eqn:loss_matrixfact_gradient}. Specifically, note that
\begin{align}
	\mathrm{vec}\bigl( -2W_2^{\mathrm{T}}(Y - W_2W_1) \bigr) 
	& 
	\eqcom{i}= 
	-2 \mathrm{vec}(W_2^{\mathrm{T}} Y) 
	+ 2 W_1^{\mathrm{T}} \otimes W_2^{\mathrm{T}} \mathrm{vec}(W_2),
	\nonumber \\ & 
	\eqcom{ii}= 
	-2((Y - W_2W_1)^{\mathrm{T}} \otimes \mathrm{I}_{f \times f}) K \mathrm{vec}(W_2).
	\label{eqn:step_computation_hessian1}
\end{align}
Here, we have isolated 
(i) $W_2$ by using the identity $\mathrm{vec}(ABC) = C^{\mathrm{T}} \otimes A \mathrm{vec}(B)$; and 
(ii) $W_2^{\mathrm{T}}$ using the tensor $K$ that satisfies $\mathrm{vec}(W_2^{\mathrm{T}}) = K \mathrm{vec}(W_2)$. Similarly, note that
\begin{align}
	\mathrm{vec}( 2\lambda \mathrm{Diag}(W_2^{\mathrm{T}} W_2) W_1) 
	& 
	\eqcom{iii}= \mathrm{vec}( 2\lambda \sum_{i} P_i W_2^{\mathrm{T}} W_2 P_i W_1) \label{eqn:hessian_comp}
	\\ &
	\eqcom{iv}= 2 \lambda \sum_{i} (P_iW_1)^{\mathrm{T}} \otimes P_i W_2^{\mathrm{T}} \mathrm{vec}(W_2) 
	\nonumber \\ & 
	\eqcom{v}= 2 \lambda \sum_{i} ((W_2 P_i W_1)^{\mathrm{T}} \otimes P_i) K \mathrm{vec}(W_2).
	\label{eqn:step_computation_hessian2} 
\end{align}
Here, we 
(iii) utilized the fact that that $\mathrm{Diag}(A) = \sum_{i}^d P_i A P_i$ for some set of symmetric matrices $\{ P_i \}_i$, and then isolated 
(iv) $W_2$ as a vector from \eqref{eqn:hessian_comp} as well as
(v) $W_2^{\mathrm{T}}$ and using the tensor $K$. 

Recall \eqref{eqn:vectorization_hessian1}. We take the derivative of $\mathrm{vec}( \nabla_1 \mathcal{I}(W) )$ with respect to $W_2$ in vectorization notation. While some terms are linear in $W_2$, we use Leibniz's rule on terms including $W_2^{\mathrm{T}} W_2$. Leibniz's rule yields the expressions in \eqref{eqn:step_computation_hessian1}, \eqref{eqn:step_computation_hessian2} resulting in  the tensor
\begin{align}
	\partial_{2,1} \mathcal{I}(W) 
	&
	= -2 \bigl( (Y - W_2 W_1)^{\mathrm{T}} \otimes \mathrm{I}_{f \times f} \bigr) K 
	+ 2W_1^{\mathrm{T}} \otimes W_2^{\mathrm{T}} 
	\nonumber \\ &
	\phantom{=}
	+ 2 \lambda 
		\sum_{i} 
		\Bigl( 
		\bigl( (W_2 P_i W_1)^{\mathrm{T}} \otimes P_i \bigr) K 
		+ (P_i W_1)^{\mathrm{T}} \otimes P_i W_2^{\mathrm{T}} 
		\Bigr).
	\label{eqn:Partial_derivatives__d21IW_d12IW}
\end{align}

\noindent
\emph{Step 2: Evaluation at a vector.} Now that we have the partial derivatives of the Hessian, we want to apply it to vectors of the form $(V_2, V_1) \in \mathrm{T}_{W} \R^{e \times f} \times \R^{f \times h}$. Concretely, we will consider the vectorization of $(V_2, V_1)$ and then compute the elements of the left-hand side of \eqref{eqn:second_derivative_dropout_loss_2layer} one by one. 

First,
\begin{align}
	&
	\mathrm{vec}(V_1)^{\mathrm{T}} \partial_{1,1} \mathcal{I}(W) \mathrm{vec}(V_1) 
	\nonumber \\ &
	\eqcom{\ref{eqn:Partial_derivatives__d11IW_d22IW}}= 
	\mathrm{vec}(V_1)^{\mathrm{T}} 
	\bigl(
		2  \mathrm{I}_{h \times h } \otimes W_2^{\mathrm{T}} W_2 
		+ 2 \lambda \mathrm{I}_{h \times h } \otimes \mathrm{Diag}(W_2^{\mathrm{T}} W_2)
	\bigr) 
	\mathrm{vec}(V_1) 
	\nonumber \\ &
	= 
	\mathrm{vec}(V_1)^{\mathrm{T}} 
	\bigl(
		2  \mathrm{I}_{h \times h } \otimes W_2^{\mathrm{T}} W_2 \mathrm{vec}(V_1) 
		+ 2 \lambda \mathrm{I}_{h \times h } \otimes \mathrm{Diag}(W_2^{\mathrm{T}} W_2) \mathrm{vec}(V_1)
	\bigr) 
	\nonumber \\ &
	= 
	\mathrm{vec}(V_1)^{\mathrm{T}}
	\bigl( 
		2 \mathrm{vec}( W_2^{\mathrm{T}} W_2 V_1) 
		+ 2 \lambda \mathrm{vec}(\mathrm{Diag}(W_2^{\mathrm{T}} W_2) V_1)
	\bigr) 
	\nonumber \\ & 
	= 
	2 \mathrm{Tr}[ V_1^{\mathrm{T}} W_2^{\mathrm{T}} W_2 V_1 ] 
	+ 2 \lambda \mathrm{Tr}[ V_1^{\mathrm{T}} \mathrm{Diag}(W_2^{\mathrm{T}} W_2) V_1 ] 
	\label{eqn:computation_partial_derivatives2}
\end{align}
where we have used that $\mathrm{vec}(B)^{\mathrm{T}} \mathrm{vec}(A) = \mathrm{Tr}[B^{\mathrm{T}}A]$. Similarly
\begin{equation}
	\mathrm{vec}(V_2)^{\mathrm{T}} \partial_{2,2} \mathcal{I}(W) \mathrm{vec}(V_2) 
	\eqcom{\ref{eqn:Partial_derivatives__d11IW_d22IW}}= 
	2 \mathrm{Tr}[ V_2 W_1 W_1^{\mathrm{T}} V_2^{\mathrm{T}} ] 
	+ 2 \lambda \mathrm{Tr}[ V_2 \mathrm{Diag}(W_1 W_1^{\mathrm{T}}) V_2^{\mathrm{T}} ],
\end{equation}
and
\begin{align}
	\mathrm{vec}(V_1)^{\mathrm{T}} \partial_{1,2} \mathcal{I}(W) \mathrm{vec}(V_2) 
	&
	\eqcom{\ref{eqn:Partial_derivatives__d21IW_d12IW}}= 
	-2\mathrm{Tr}[ V_1^{\mathrm{T}} V_2^{\mathrm{T}}(Y - W_2 W_1) ] 
	+ 2\mathrm{Tr}[ V_1^{\mathrm{T}} W_2^{\mathrm{T}} V_2 W_1 ] 
	\\ &
	\phantom{=} 
	+ 2 \lambda 
	\bigl( 
		\mathrm{Tr}[ V_1^{\mathrm{T}} \mathrm{Diag}(V_2^{\mathrm{T}}W_2) W_1 ]
		+ \mathrm{Tr}[ V_1^{\mathrm{T}} \mathrm{Diag}(W_2^{\mathrm{T}} V_2) W_1 ] 
	\bigr).
	\nonumber 
\end{align}
We also have that $\partial_{2,1} \mathcal{I}(W)  = \partial_{1,2} \mathcal{I}(W)^{\mathrm{T}}$ because $\mathcal{I}$ is a smooth function. Therefore 
\begin{equation}
	\mathrm{vec}(V_2)^{\mathrm{T}} \partial_{2,1} \mathcal{I}(W) \mathrm{vec}(V_1) 
	= \mathrm{vec}(V_1)^{\mathrm{T}} \partial_{1,2} \mathcal{I}(W) \mathrm{vec}(V_2). 
	\label{eqn:vecV2Tpartial21IvecV2_equals_vecV1Tpartial12IvecV2}
\end{equation}

Adding \eqref{eqn:Partial_derivatives__d21IW_d12IW}--\eqref{eqn:vecV2Tpartial21IvecV2_equals_vecV1Tpartial12IvecV2} yields:
\begin{align}
	&
	\bigl( \mathrm{vec}(V_1), \mathrm{vec}(V_2) \bigr)^{\mathrm{T}} 
	\nabla^2 \mathcal{I}(W) 
	\bigl(\mathrm{vec}(V_1), \mathrm{vec}(V_2) \bigr) 
	\nonumber \\ &
	= 
	2 \mathrm{Tr}[ V_1^{\mathrm{T}} W_2^{\mathrm{T}} W_2 V_1 ] 
	+ 2 \lambda \mathrm{Tr}[ V_1^{\mathrm{T}} \mathrm{Diag}(W_2^{\mathrm{T}} W_2) V_1 ]
	\nonumber \\ &
	\phantom{=}
	+ 2 \mathrm{Tr}[ V_2 W_1 W_1^{\mathrm{T}} V_2^{\mathrm{T}} ] 
	+ 2 \lambda \mathrm{Tr}[ V_2 \mathrm{Diag}(W_1 W_1^{\mathrm{T}}) V_2^{\mathrm{T}} ]
	\nonumber \\ &
	\phantom{=}
	- 4 \mathrm{Tr}[ V_1^{\mathrm{T}} V_2^{\mathrm{T}}(Y - W_2 W_1) ] 
	+ 4 \mathrm{Tr}[ V_1^{\mathrm{T}} W_2^{\mathrm{T}} V_2 W_1 ]
	\nonumber \\ &
	\phantom{=}
	+ 4 \lambda 
	\bigl( 
		\mathrm{Tr}[ V_1^{\mathrm{T}} \mathrm{Diag}(V_2^{\mathrm{T}}W_2) W_1 ] 
		+ \mathrm{Tr}[ V_1^{\mathrm{T}} \mathrm{Diag}(W_2^{\mathrm{T}} V_2) W_1 ] 
	\bigr).
	\label{eqn:expansion_second_derivative}
\end{align}

Finally, note that
\begin{align}
	2 \pnorm{W_2 V_1 + V_2 W_1}{\mathrm{F}}^2 
	&
	= 
	2 \mathrm{Tr}[ (W_2 V_1 + V_2 W_1)^{\mathrm{T}}(W_2 V_1 + V_2 W_1) ]
	\nonumber \\ 
	&
	= 
	2 \mathrm{Tr}[ V_1^{\mathrm{T}} W_2^{\mathrm{T}} W_2 V_1 ] 
	+ 2 \mathrm{Tr}[W_1^{\mathrm{T}} V_2^{\mathrm{T}} V_2 W_1 ] 
	\nonumber \\
	& \phantom{= 
	2 \mathrm{Tr}[ V_1^{\mathrm{T}} W_2^{\mathrm{T}} W_2 V_1 ]} + 2 \mathrm{Tr}[ V_1^{\mathrm{T}} W_2^{\mathrm{T}} V_2 W_1 ] + 2 \mathrm{Tr}[ W_1^{\mathrm{T}} V_2^{\mathrm{T}} W_2 V_1 ]  \nonumber \\
	&
	= 
	2 \mathrm{Tr}[ V_1^{\mathrm{T}} W_2^{\mathrm{T}} W_2 V_1 ] 
	+ 2 \mathrm{Tr}[ V_2 W_1 W_1^{\mathrm{T}} V_2^{\mathrm{T}} ] 
	+ 4 \mathrm{Tr}[ V_1^{\mathrm{T}} W_2^{\mathrm{T}} V_2 W_1 ];
	\label{eqn:Frobenius_norm_squared_of_W2V1plusV1W2_in_terms_of_traces}
\end{align}
where in the last equality we have used the cyclic property of the trace. Now, for any $A, B \in \R^{n}$, $\pnorm{ A + B}{\mathrm{F}}^2 - \pnorm{ A - B}{\mathrm{F}}^2 = 4 \langle A, B \rangle$, so that
\begin{align}
	&
	2
	\bigl( 
		\pnorm{\mathrm{Diag}(V_2^{\mathrm{T}} W_2) + \mathrm{Diag}(W_1 V_1^{\mathrm{T}})}{\mathrm{F}}^2 
		- \pnorm{\mathrm{Diag}(V_2^{\mathrm{T}} W_2) - \mathrm{Diag}(W_1 V_1^{\mathrm{T}})}{\mathrm{F}}^2
	\bigr) 
	\nonumber \\ & 
	= 
	8 
	\bigl\langle 
		\mathrm{Diag}(W_1V_1^{\mathrm{T}}), 
		\mathrm{Diag}(V_2^{\mathrm{T}}W_2) 
	\bigr\rangle 
	= 
	8 \mathrm{Tr}[ \mathrm{Diag}(W_1V_1^{\mathrm{T}})\mathrm{Diag}(V_2^{\mathrm{T}}W_2) ] 
	\nonumber \\ & 
	\eqcom{vi}= 
	8 \mathrm{Tr}[ W_1V_1^{\mathrm{T}} \mathrm{Diag}(V_2^{\mathrm{T}}W_2) ] 
	\eqcom{vii}= 
	8 \mathrm{Tr}[ V_1^{\mathrm{T}} \mathrm{Diag}(V_2^{\mathrm{T}}W_2)W_1 ].
	\label{eqn:Inner_product_relation_Frobenius_norm_applied_to_DiagV2TW2plusDiagW1V1T}
\end{align}
Here, we have used 
(vi) that 
$
	\mathrm{Tr}[ A \mathrm{Diag}(B) ] 
	= \mathrm{Tr}[ \mathrm{Diag}(A)\mathrm{Diag}(B)]
$ for any $A, B$ square matrices of the same dimension,
and 
(vii) the cyclic property of the trace. 
Substituting \eqref{eqn:Frobenius_norm_squared_of_W2V1plusV1W2_in_terms_of_traces} and \eqref{eqn:Inner_product_relation_Frobenius_norm_applied_to_DiagV2TW2plusDiagW1V1T} into \eqref{eqn:expansion_second_derivative} completes the proof.
\qed

\subsection{Proof \texorpdfstring{of \refProposition{prop:Lower_bound_to_the_Hessian_restricted_to_directions_normal_to_the_manifold_of_minima}}{of the Hessian's lower bound}}
\label{sec:Proof_of_the_lower_bound_to_the_Hessian_restricted_to_directions_normal_to_the_manifold_of_minima}

\subsubsection{Obtaining \texorpdfstring{$\mathrm{T}_{W} M$}{the tangent space of M at W}}

We compute first $\mathrm{T}_{W} M_b$ in \refLemma{lemma:tangent_space_M_b}, which we will use to compute $\mathrm{T}_{W} M$ later:

\begin{lemma}
	If \refAssumption{ass:eigenvalues_different_maintext} holds, then for any $W \in M_b \backslash \mathrm{Sing}(M_b)$
	\begin{align}
		\mathrm{T}_{W} M_b 
		= 
		\Bigl\{ 
			&			
			\bigl( 
				U 
				\begin{psmallmatrix}
					\Sigma X & \Sigma E \\
					0_{(e-\rho) \times \rho} & 0_{(e-\rho) \times (f-\rho)} \\
				\end{psmallmatrix} 
				S 
				,
				S^{\mathrm{T}} 
				\begin{psmallmatrix}
					X^{\mathrm{T}} \Sigma & 0_{\rho \times (h-\rho)} \\
					E^{\mathrm{T}} \Sigma & 0_{(f-\rho) \times (h-\rho)} \\
				\end{psmallmatrix} 
				V 
			\bigr) 
			\nonumber
			\\ &
			: 			
			X \in \mathrm{Skew}(\R^{\rho \times \rho}), 
			E \in \R^{\rho \times (f - \rho)}, 
			(X, E) \in \ker \mathrm{D}_W T 
		\Bigr\}. 
		\label{eqn:tangent_M_b}
	\end{align}
	\label{lemma:tangent_space_M_b}
\end{lemma}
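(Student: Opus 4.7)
The plan is to combine Proposition~\ref{prop:Characterization_of_TWMb} with the parametrization of $\mathrm{T}_W \bar{M}_b$ already established in \eqref{eqn:Tangent_space_at_W_of_Mbbar}, and then to simplify using the block structure of $\Sigma_1, \Sigma_2$ given in \eqref{eqn:Definition_Sigma1_and_Sigma2}. Concretely, since $W \in M_b \setminus \mathrm{Sing}(M_b)$, part (b) of Proposition~\ref{prop:Characterization_of_TWMb} gives the identification $\mathrm{T}_W M_b = \ker \mathrm{D}_W T$, where the kernel is taken inside $\mathrm{T}_W \bar{M}_b$.

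The next step is to rewrite the parametrization of $\mathrm{T}_W \bar{M}_b$ from \eqref{eqn:Tangent_space_at_W_of_Mbbar} in the simpler form appearing in the statement. By \eqref{eqn:Definition_Sigma1_and_Sigma2}, $\Sigma_2 = \bigl( \begin{smallmatrix} \Sigma & 0 \\ 0 & 0 \end{smallmatrix} \bigr) \in \R^{e \times f}$ and $\Sigma_1 = \bigl( \begin{smallmatrix} \Sigma & 0 \\ 0 & 0 \end{smallmatrix} \bigr) \in \R^{f \times h}$ with $\Sigma \in \R^{\rho \times \rho}$ in the upper-left corner. A direct block multiplication then gives
\begin{equation}
\Sigma_2 \begin{pmatrix} X & E \\ -E^{\mathrm{T}} & 0 \end{pmatrix} = \begin{pmatrix} \Sigma X & \Sigma E \\ 0_{(e-\rho) \times \rho} & 0_{(e-\rho) \times (f-\rho)} \end{pmatrix}, \quad \begin{pmatrix} X^{\mathrm{T}} & -E \\ E^{\mathrm{T}} & 0 \end{pmatrix} \Sigma_1 = \begin{pmatrix} X^{\mathrm{T}} \Sigma & 0_{\rho \times (h-\rho)} \\ E^{\mathrm{T}} \Sigma & 0_{(f-\rho) \times (h-\rho)} \end{pmatrix},
\end{equation}
which substituted into \eqref{eqn:Tangent_space_at_W_of_Mbbar} rewrites an arbitrary element of $\mathrm{T}_W \bar{M}_b$ in the form appearing inside the braces on the right-hand side of \eqref{eqn:tangent_M_b}.

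Finally, I intersect with $\ker \mathrm{D}_W T$. By \eqref{eqn:Bilinear_form_Mb_implicit_M_Of}, the restriction $\mathrm{D}_W T$ acts on the pair $(X, E)$ via the formula
\begin{equation}
\mathrm{D}_W T(V_2, V_1) = 2\, \mathrm{Diag}\Bigl( S^{\mathrm{T}} \begin{pmatrix} \Sigma^2 X & \Sigma^2 E \\ 0 & 0 \end{pmatrix} S \Bigr),
\end{equation}
so the membership condition $(X, E) \in \ker \mathrm{D}_W T$ is exactly the constraint that picks out the subspace \eqref{eqn:tangent_M_b} inside \eqref{eqn:Tangent_space_at_W_of_Mbbar}. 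Combining these facts yields the claimed description of $\mathrm{T}_W M_b$.

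I do not expect a serious obstacle: every ingredient is already in place from Lemma~\ref{lemma:Specific_SVD_decomposition_of_W_in_Mbbar}, Proposition~\ref{prop:Characterization_of_TWMb}, and \eqref{eqn:Tangent_space_at_W_of_Mbbar}. The only care required is to keep the block sizes consistent and to remember that $\mathrm{T}_W M_b$ is realized as a linear subspace of $\mathrm{T}_W \bar{M}_b$ rather than of the ambient $\mathcal{P}$, so that the assumption $W \notin \mathrm{Sing}(M_b)$ is needed precisely to invoke the constant-rank conclusion of Proposition~\ref{prop:Characterization_of_TWMb}(b).
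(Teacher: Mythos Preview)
Your proposal is correct and follows essentially the same route as the paper: invoke \refProposition{prop:Characterization_of_TWMb}(b) to identify $\mathrm{T}_W M_b = \ker \mathrm{D}_W T$ inside $\mathrm{T}_W \bar{M}_b$, then substitute the block forms of $\Sigma_1,\Sigma_2$ from \eqref{eqn:Definition_Sigma1_and_Sigma2} into the parametrization \eqref{eqn:Tangent_space_at_W_of_Mbbar} to obtain \eqref{eqn:tangent_M_b}. The paper's proof is slightly terser but the logic and ingredients are identical.
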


\begin{proof}
	Let $W = (U \Sigma_2 S, S^{\mathrm{T}} \Sigma_1 V ) \in M_b \backslash \mathrm{Sing}(M_b)$; such \gls{SVD} exists by \refLemma{lemma:Specific_SVD_decomposition_of_W_in_Mbbar}. By \refProposition{prop:Characterization_of_TWMb}, $\mathrm{T}_W M_{b} = \ker \mathrm{D}_W T$ where $\mathrm{D}_W T: \mathrm{T}_W \bar{M}_b  \to \mathrm{T}_{T(W)}\R^{f}$. Next, write
	\begin{align}
		&
		\ker \mathrm{D}_W T 
		= \{ (V_2,V_1) \in \mathrm{T}_W \bar{M}_b : \mathrm{D}_W T(V_2,V_1) = 0 \}
		\nonumber \\ &
		\eqcom{\ref{eqn:Bilinear_form_Mb_implicit_M_Of},\ref{eqn:Tangent_space_at_W_of_Mbbar}}= 
		\Bigl\{ 
			\Bigl( 
				U \Sigma_2 
				\begin{psmallmatrix}
				X & E \\
				-E^{\mathrm{T}} & 0
				\end{psmallmatrix} 
				S,
				S^{\mathrm{T}} 
				\begin{psmallmatrix}
				X^{\mathrm{T}} & -E \\
				E^{\mathrm{T}} & 0 \\
				\end{psmallmatrix} 
				\Sigma_1 V 
			\Bigr) 
			:
			\nonumber \\ &
			X \in \mathrm{Skew}(\R^{\rho \times \rho}), 
			E \in \R^{\rho \times (f -\rho)},		
			2 \mathrm{Diag}
			\bigr( 
				S 
				\begin{psmallmatrix}
				\Sigma^2 X & \Sigma^2 E \\
				0 & 0 \\
				\end{psmallmatrix} 
				S 
			\bigr)
			= 0
		\Bigr\}
		. 
		\label{eqn:Intermediate__kerDWT}
	\end{align}
	Hence, from the bilinear form $\mathrm{D}_{W} T$ defined in \eqref{eqn:Bilinear_form_Mb_implicit_M_Of} in \refProposition{prop:Characterization_of_TWMb}(b), we take the pairs $(X, E) \in \mathrm{Skew}(\R^{\rho \times \rho}) \times \R^{\rho \times (f - \rho)}$ that also belong to $\ker \mathrm{D}_W T$.
	The last step required to arrive at \eqref{eqn:tangent_M_b} is to substitute the definitions of $\Sigma_2$ and $\Sigma_1$, recall \eqref{eqn:Definition_Sigma1_and_Sigma2}, into \eqref{eqn:Intermediate__kerDWT}.
\end{proof}

Observe from \eqref{eqn:Definition_Mb} and \eqref{eqn:Action_pi} that, under the action of $H = (\R^{*})^{f}$, we always have $\pi(H)(M_b) \subseteq M$. \refProposition{prop:exists_diagonal_reduction_Mb_to_M} implies that $M \subseteq \pi(H)(M_b)$, and hence $\pi(H)(M_b) = M$. \refProposition{prop:exists_diagonal_reduction_Mb_to_M} also yields that the group action is free and so the map $\pi: H \times M_b \to M$ is bijective. We have moreover that on the open set $\pi(H \times M_b)$, $\pi$ has a continuous inverse given by
\begin{equation}
	\pi^{-1}(W)= (C_{W},\pi(C_{W})(W)) .
	\label{eqn:definition_inverse_action_pi}
\end{equation}
Here, $C_{W} = \mathrm{Diag}(W_1W_1^{\mathrm{T}})^{1/4} \mathrm{Diag}(W_2W_2^{\mathrm{T}})^{-1/4}$, which is discussed in the proof of \refProposition{prop:exists_diagonal_reduction_Mb_to_M}.
If $\pi$ is smooth, this allows us to obtain the tangent space of $\mathrm{T}_W M$ at every point $W = \pi(C)(W^{\prime}) \in M$ such that $W^{\prime} \in M_{b} \backslash \mathrm{Sing}(M_b)$.

For every point $W \in M_b \backslash \mathrm{Sing}(M_b)$, the action $\pi$ restricted to a smooth neighborhood $R_{\mathrm{Id}} \times U_{W} \subset H \times M_b$ is a map $\mathrm{D}_{(\mathrm{Id}, W)} \pi: \mathcal{H} \times \mathrm{T}_{W} \mathcal{P} \to \mathrm{T}_{W} \mathcal{P}$ with $\mathcal{H} = \mathrm{T}_{\mathrm{Id}} H = \mathrm{Lie}((\R^{*})^f)$ the Lie algebra of $H$. Furthermore, for every point $W \in M_b$, the differential $\mathrm{D}_{(\mathrm{Id},W)} \pi (D, V)$ at $D \in \mathcal{H}, V \in \mathrm{T}_{W} M_b$ is given by
\begin{align}
	\mathrm{D}_{(\mathrm{Id},W)} \pi (0, V) 
	& 
	\eqcom{\ref{eqn:Action_pi}}= V 
	,
	\nonumber \\
	\mathrm{D}_{(\mathrm{Id},W)} \pi (D, 0) 
	& 
	\eqcom{\ref{eqn:Action_pi}}= (W_2D, -DW_1)
	\eqcom{\ref{eqn:Alternative_representation_of_Mb}}=
	\bigl(
	U
	\begin{psmallmatrix}
		\Sigma & 0 \\
		0 & 0 \\
	\end{psmallmatrix}
	S D,
	-D S^{\mathrm{T}}
	\begin{psmallmatrix}
		\Sigma & 0 \\
		0 & 0 \\
	\end{psmallmatrix}
	\Sigma_1 V
	\bigr)
	\label{eqn:Differential_at_W_of_pi}
\end{align}
say. For every point $W \in M_b$, we will define the vector space
\begin{equation}
	\mathrm{D}_W \pi(\mathcal{H}) 
	= \{ \mathrm{D}_{(\mathrm{Id},W)} \pi (D, 0) : D \in \mathcal{H} \}.
	\label{eqn:definition_lie_algebra_directions_tangent_space} 
\end{equation}
Recall now finally that for $V = (V_2, V_1)$ and $R=(R_2, R_1) \in \mathrm{T}_{W} \mathcal{P}$ we have the Euclidean inner product $ \langle \cdot, \cdot \rangle : \mathrm{T}_{W} \mathcal{P} \times \mathrm{T}_{W} \mathcal{P} \to \R$ defined as $\langle V, R \rangle = \langle V_2, R_2 \rangle_{\mathrm{F}} + \langle V_1, R_1 \rangle_{\mathrm{F}}$.

We are now in position to prove the following:

\begin{lemma}
	Suppose Assumptions~\ref{ass:eigenvalues_different_maintext}, \ref{ass:nonvanishing_maintext} hold. Let $\pi$ be the Lie group action of $H$ on $M$ defined in \eqref{eqn:Action_pi} and $\mathcal{H} = \mathrm{Lie}((\R^{*})^f)$. If $W = (U \Sigma_2 S, S^{\mathrm{T}} \Sigma_1 V) \in M_b \backslash \mathrm{Sing}(M_b)$, then
	\begin{align}
		&
		\mathrm{T}_W M 
		\label{eqn:Tangent_space_at_W_of_M}		
		\\ &
		= 
		\mathrm{T}_{W} M_b \oplus \mathrm{D}_W \pi(\mathcal{H}) 
		\nonumber \\ &
		= 
		\Bigl\{ 
			\Bigl( 
			U 
			\begin{psmallmatrix}
				\Sigma X & \Sigma B \\
				0_{(e-\rho) \times \rho} & 0_{(e-\rho) \times (f-\rho)} \\
			\end{psmallmatrix} 
			S,
			S^{\mathrm{T}} 
			\begin{psmallmatrix}
				X^{\mathrm{T}} \Sigma & 0_{\rho \times (h-\rho)} \\
				B^{\mathrm{T}} \Sigma & 0_{(f-\rho) \times (h-\rho)} \\
			\end{psmallmatrix} 
			\Sigma_1 V 
			\Bigr) 
			\nonumber \\ &
			\phantom{= \Bigl\{}			
			:
			X \in \mathrm{Skew}(\R^{\rho \times \rho}), B \in \R^{\rho \times (f - \rho)}, (X, B) \in \ker \mathrm{D}_W T 
		\Bigr\} 
		\nonumber \\ &
		\phantom{=}
		\oplus 
		\Bigl\{
			\Bigl( 
			U 
			\begin{psmallmatrix}
				\Sigma & 0_{\rho \times (f-\rho)} \\
				0_{(e-\rho) \times \rho} & 0_{(e-\rho) \times (f-\rho)} \\
			\end{psmallmatrix} 
			S D,
			-D S^{\mathrm{T}} 
			\begin{psmallmatrix}
				\Sigma & 0_{\rho \times (h-\rho)} \\
				0_{(f-\rho) \times \rho} & 0_{(f-\rho) \times (h-\rho)} \\
			\end{psmallmatrix} 
			V 
			\Bigr) 
			:
			D \in \mathrm{Diag}(\R^{f \times f}) 
		\Bigr\}.
		\nonumber
	\end{align}
	\label{lemma:Tangent_space_at_W_of_M}
\end{lemma}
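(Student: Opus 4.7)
The plan is to establish that $\pi$ is a local diffeomorphism near $(\mathrm{Id}, W) \in H \times M_b$ and then apply the chain rule. By \refProposition{prop:exists_diagonal_reduction_Mb_to_M} the map $\pi: H \times M_b \to M$ is a smooth bijection with inverse \eqref{eqn:definition_inverse_action_pi}. First I would check that this inverse is smooth on a neighborhood of $W$, which amounts to showing that the diagonal entries of $\mathrm{Diag}(W_2 W_2^{\mathrm{T}})$ remain strictly positive nearby; this follows from \eqref{eqn:optimal_diagonal_main}, which forces every diagonal entry at $W \in M_b$ to equal $\pnorm{\mathcal{W}^{*}}{1}/f > 0$, and positivity is an open condition. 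Once smoothness of $\pi^{-1}$ is established on an appropriate open neighborhood, $\pi$ restricts to a local diffeomorphism between a neighborhood of $(\mathrm{Id}, W)$ in $H \times (M_b \backslash \mathrm{Sing}(M_b))$ and a neighborhood of $W$ in $M$, so that $\mathrm{D}_{(\mathrm{Id}, W)} \pi : \mathcal{H} \times \mathrm{T}_W M_b \to \mathrm{T}_W M$ becomes a linear isomorphism.

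Next I would use linearity of the differential together with \eqref{eqn:Differential_at_W_of_pi} to write $\mathrm{D}_{(\mathrm{Id}, W)} \pi(D, V) = \mathrm{D}_W \pi(D) + V$ for every $(D, V) \in \mathcal{H} \times \mathrm{T}_W M_b$. Surjectivity of $\mathrm{D}_{(\mathrm{Id}, W)} \pi$ then immediately yields $\mathrm{T}_W M = \mathrm{T}_W M_b + \mathrm{D}_W \pi(\mathcal{H})$, while injectivity gives $\mathrm{T}_W M_b \cap \mathrm{D}_W \pi(\mathcal{H}) = \{0\}$: any relation $V + \mathrm{D}_W \pi(D) = 0$ would put $(D, V)$ into the kernel of an injective linear map, forcing $D = 0$ and $V = 0$. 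Combining these two conclusions produces the claimed direct-sum decomposition $\mathrm{T}_W M = \mathrm{T}_W M_b \oplus \mathrm{D}_W \pi(\mathcal{H})$.

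Finally, the explicit parametrization on the right-hand side of \eqref{eqn:Tangent_space_at_W_of_M} follows mechanically: the first summand is exactly the description supplied by \refLemma{lemma:tangent_space_M_b}, and the second summand is obtained by identifying $\mathcal{H} \simeq \mathrm{Diag}(\R^{f \times f})$ and evaluating \eqref{eqn:Differential_at_W_of_pi}. The main obstacle I anticipate is the first step, namely verifying smoothness of $\pi^{-1}$. One must confirm positivity of $\mathrm{Diag}(W_2 W_2^{\mathrm{T}})$ at $W$ (so that the fractional powers entering $C_{W'}$ define smooth maps on a neighborhood) and carefully choose the open set on which $\pi^{-1}$ is defined so that $\pi(C_{W'})(W')$ still lands in $M_b \backslash \mathrm{Sing}(M_b)$; after this topological bookkeeping is settled, the remaining linear-algebra argument is routine.
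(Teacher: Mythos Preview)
Your proposal is correct and follows essentially the same architecture as the paper: establish that $\pi:H\times M_b\to M$ is a local diffeomorphism near $(\mathrm{Id},W)$, read off $\mathrm{T}_W M$ as the image of $\mathrm{D}_{(\mathrm{Id},W)}\pi$, and then plug in \refLemma{lemma:tangent_space_M_b} and \eqref{eqn:Differential_at_W_of_pi} for the explicit form. The one substantive difference is how the direct sum is obtained. You deduce $\mathrm{T}_W M_b\cap\mathrm{D}_W\pi(\mathcal{H})=\{0\}$ from injectivity of the differential of a diffeomorphism, after first arguing smoothness of $\pi^{-1}$ via positivity of the diagonal entries. The paper instead checks directly that $\mathrm{T}_W M_b$ and $\mathrm{D}_W\pi(\mathcal{H})$ are \emph{orthogonal} in the ambient Euclidean inner product (the relevant matrix turns out to be skew-symmetric after conjugation by $S$, so its diagonal vanishes), and only then invokes bijectivity plus a dimension count to get surjectivity. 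Your route is arguably cleaner on the diffeomorphism side, since you verify smoothness of the inverse rather than appealing to ``dimension counting''; the paper's route yields the slightly stronger orthogonality statement, though as you can check this extra strength is not actually needed downstream (the identity $(\mathrm{T}_W M)^\perp=(\mathrm{T}_W M_b)^\perp\cap(\mathrm{D}_W\pi(\mathcal{H}))^\perp$ used in \refLemma{lemma:tangent_cotangent_M} holds for any sum, orthogonal or not).
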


\begin{proof}
	Let $W = (U \Sigma_2 S, S^{\mathrm{T}} \Sigma_1 V) \in M_b \backslash \mathrm{Sing}(M_b)$. Start by noting that \refProposition{prop:Characterization_of_TWMb} implies that $\mathrm{T}_W M_{b} = \ker \mathrm{D}_{W} T$. Here, we understand that $\ker \mathrm{D}_{W} T \subseteq \mathrm{T}_{W} \bar{M}_b$. 
	In order to expand this result to $\mathrm{T}_{W} M$, we will use the smooth action of $H = (\R^{*})^{f}$ on $M_b$. 
		
	Let $R_{\mathrm{Id}} \times U_{W} \subset H \times M_b$ be a neighborhood such that we can compute the differential of $\pi$ in \eqref{eqn:Differential_at_W_of_pi}. Note by combining \eqref{eqn:tangent_M_b}, \eqref{eqn:Differential_at_W_of_pi} and \eqref{eqn:definition_lie_algebra_directions_tangent_space} that for any $K = (K_2, K_1) \in \mathrm{T}_{W} M_b$ and $Q = (Q_2, Q_1) \in \mathrm{D}_{W} \pi(\mathcal{H})$ we have that $\langle K, Q \rangle = 0$. In other words, $\mathrm{T}_{W} M_b$ is orthogonal to $\mathrm{D}_{W} \pi(\mathcal{H})$. Hence, the sum of the subspaces $\mathrm{T}_{W} M_b$ and $\mathrm{D}\pi( \mathcal{H})$ is orthogonal. We also know that $R_{\mathrm{Id}} \times U_{W}$ is a smooth submanifold of $H \times M_b$ and $\pi$ is smooth, bijective and with continuous inverse. Therefore, by dimension counting, we must have that $\pi$ is a local diffeomorphism and so $\mathrm{T}_{W} M = \mathrm{T}_{W} M_{b} \oplus \mathrm{D}_W \pi( \mathcal{H})$. 

	To arrive at the expression in \eqref{eqn:Tangent_space_at_W_of_M}, we simply use the expressions for $\mathrm{T}_{W} M_b$ from \refLemma{lemma:tangent_space_M_b} together with \eqref{eqn:Differential_at_W_of_pi}.
\end{proof}

\subsubsection{Obtaining \texorpdfstring{$\mathrm{T}^{\perp}_{W} M$}{the cotangent space}}
\label{sec:Obtaining_cotangent_space_of_TWM}

We now compute the cotangent space $\mathrm{T}^{\perp}_{W} M$ by embedding $\mathrm{T}^{\perp}_{W} M \subset \mathrm{T}_{W} \mathcal{P}$ and obtaining the orthogonal complement of $\mathrm{T}_{W} M$.

\begin{lemma}
	Suppose Assumptions~\ref{ass:eigenvalues_different_maintext}, \ref{ass:nonvanishing_maintext} hold.
	For $W = (U \Sigma_2 S, S^{\mathrm{T}} \Sigma_1 V) \in M \cap M_{b} \backslash \mathrm{Sing}(M)$,
	\begin{align}
		\mathrm{T}^{\perp}_W M 
		= 
		\Bigl\{ 
			&
			(K_2, K_1) 
			= \bigl( 
				U 
				\begin{psmallmatrix}
				A_2 & B_2 \\
				C_2 & D_2 \\
				\end{psmallmatrix} 
				S,
				S^{\mathrm{T}} \begin{psmallmatrix}
				A_1 & B_1 \\
				C_1 & D_1 \\
				\end{psmallmatrix} 
				V 
			\bigr) 
			\in \mathrm{T}_W \mathcal{P}
			: 
			\nonumber \\ & 
			X \in \mathrm{Skew}(\R^{\rho \times \rho}), 
			B \in \R^{\rho \times (f - \rho)}, 
			\nonumber \\ & 
			\langle \Sigma (A_2 + A_1^{\mathrm{T}} ), X \rangle + \langle \Sigma (B_2 + C_1^{\mathrm{T}}), B \rangle 
			= 0, 
			\nonumber \\ & 
			\mathrm{Diag}(K_2^{\mathrm{T}} W_2) 
			= \mathrm{Diag}(K_1 W_1^{\mathrm{T}}), 
			% \nonumber \\ &
			2 \mathrm{Diag}
			\bigr( 
				S 
				\begin{psmallmatrix}
					\Sigma^2 X & \Sigma^2 B \\
					0_{(f-\rho) \times \rho} & 0_{(f-\rho) \times (f-\rho)} \\
				\end{psmallmatrix} 
				S 
			\bigr)
			= 0 
		\Bigr\}.
		\label{eqn:Cotangent_space_of_M}
	\end{align}
	\label{lemma:tangent_cotangent_M}
\end{lemma}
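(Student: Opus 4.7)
The plan is to leverage the decomposition $\mathrm{T}_W M = \mathrm{T}_W M_b \oplus \mathrm{D}_W \pi(\mathcal{H})$ established in \refLemma{lemma:Tangent_space_at_W_of_M}: a vector $(K_2, K_1) \in \mathrm{T}_W \mathcal{P}$ belongs to $\mathrm{T}^{\perp}_W M$ if and only if it is orthogonal (with respect to the ambient Frobenius inner product) to each of the two summands. Each of these summands has already been described explicitly, so the proof reduces to computing two orthogonality conditions and recognizing them as those stated in the lemma.

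First, I would parameterize an arbitrary $(K_2, K_1) \in \mathrm{T}_W \mathcal{P}$ in the basis induced by the full \gls{SVD} of $W = (U\Sigma_2 S, S^{\mathrm{T}}\Sigma_1 V)$, writing
\begin{equation*}
K_2 = U \begin{psmallmatrix} A_2 & B_2 \\ C_2 & D_2 \end{psmallmatrix} S,
\quad
K_1 = S^{\mathrm{T}} \begin{psmallmatrix} A_1 & B_1 \\ C_1 & D_1 \end{psmallmatrix} V.
\end{equation*}
Since $U, S, V$ are orthogonal, the Frobenius inner product is invariant under this change of variables, so inner products of $(K_2,K_1)$ with vectors in $\mathrm{T}_W M$ (which are written in the same basis in \refLemma{lemma:Tangent_space_at_W_of_M}) reduce to inner products between the block matrices.

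Second, I would impose orthogonality against $\mathrm{D}_W \pi(\mathcal{H})$. Using the formula \eqref{eqn:Differential_at_W_of_pi}, vectors in this space have the form $(W_2 D, -D W_1)$ for $D \in \mathrm{Diag}(\R^{f \times f})$. Expanding
$\langle K_2, W_2 D\rangle_{\mathrm{F}} - \langle K_1, D W_1\rangle_{\mathrm{F}} = \mathrm{Tr}\bigl(D(K_2^{\mathrm{T}} W_2 - W_1 K_1^{\mathrm{T}})\bigr)$ via the cyclic property, and requiring vanishing for every diagonal $D$, yields exactly the condition $\mathrm{Diag}(K_2^{\mathrm{T}} W_2) = \mathrm{Diag}(K_1 W_1^{\mathrm{T}})$ appearing in \eqref{eqn:Cotangent_space_of_M}.

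Third, I would impose orthogonality against $\mathrm{T}_W M_b$. By \refLemma{lemma:tangent_space_M_b}, vectors in $\mathrm{T}_W M_b$ are parameterized by pairs $(X,B) \in \mathrm{Skew}(\R^{\rho\times\rho}) \times \R^{\rho \times (f-\rho)}$ subject to $(X,B) \in \ker \mathrm{D}_W T$. Substituting the block expression into the Frobenius inner product and applying the cyclic trace identity to each of the four block contributions collapses the condition to
\begin{equation*}
\langle \Sigma(A_2 + A_1^{\mathrm{T}}), X\rangle_{\mathrm{F}} + \langle \Sigma(B_2 + C_1^{\mathrm{T}}), B\rangle_{\mathrm{F}} = 0
\end{equation*}
for every admissible $(X,B)$, which is precisely the remaining constraint in \eqref{eqn:Cotangent_space_of_M}; the final displayed equation in \eqref{eqn:Cotangent_space_of_M} is simply a restatement of $(X,B) \in \ker \mathrm{D}_W T$ via \eqref{eqn:Bilinear_form_Mb_implicit_M_Of}. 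The argument is essentially bookkeeping with block matrices, orthogonal invariance, and cyclic trace manipulations, and I do not foresee any substantial obstacle; the only delicate point is keeping track of which blocks of $A_i, B_i, C_i, D_i$ pair against which blocks of $\Sigma_2, \Sigma_1$, but this is routine given the shapes in \eqref{eqn:Definition_Sigma1_and_Sigma2}.
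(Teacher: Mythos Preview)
Your proposal is correct and follows essentially the same approach as the paper: both use the decomposition $\mathrm{T}_W M = \mathrm{T}_W M_b \oplus \mathrm{D}_W\pi(\mathcal{H})$ from \refLemma{lemma:Tangent_space_at_W_of_M}, take orthogonal complements to write $\mathrm{T}^{\perp}_W M = \mathrm{T}^{\perp}_W M_b \cap (\mathrm{D}_W\pi(\mathcal{H}))^{\perp}$, and then compute each orthogonality condition in the block-matrix basis using orthogonal invariance of the Frobenius inner product and cyclic trace identities. The only cosmetic difference is the order in which the two summands are handled.
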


\begin{proof}
Let $W = (U \Sigma_2 S, S^{\mathrm{T}} \Sigma_1 V) \in M \cap M_b\backslash \mathrm{Sing}(M)$. Taking the orthogonal complement in \eqref{eqn:Tangent_space_at_W_of_M}, we obtain that 
\begin{equation}
	\mathrm{T}^{\perp}_W M 
	= 
	\mathrm{T}^{\perp}_{W} M_b \cap (\mathrm{D}{\pi}(\mathcal{H}))^{\perp}.
	\label{eqn:Cotangent_space_of_M_is_the_interaction_of_two_cotangent_spaces}
\end{equation} 
We will now determine both subspaces in the right-hand side of \eqref{eqn:Cotangent_space_of_M_is_the_interaction_of_two_cotangent_spaces}. Taking the intersection of these two sets will then immediately result in \eqref{eqn:Cotangent_space_of_M}.

\noindent
\emph{Determining $\mathrm{T}_{W}^{\perp} M_b$.} 
Recall first the definition of a cotangent space, that is
\begin{equation}
	\mathrm{T}^{\perp}_W M_b
	= \bigl\{ K \in \mathrm{T}_{W} \mathcal{P} = \mathrm{T}_{W_2} \R^{e \times f} \times \mathrm{T}_{W_1} \R^{f \times h} 
	: 
	\forall R \in \mathrm{T}_{W} M_b, \langle K, R \rangle = 0 \bigr\}.
	\label{eqn:Arbitrary_cotangent_space_at_W}
\end{equation}
Furthermore, note that for any $K = (K_2, K_1) \in T_W \mathcal{P}$, there exist matrices $A_1, A_2 \in \R^{\rho \times \rho}$ and matrices $B_2, B_1, C_2, C_1, D_2, D_1$ of appropriate dimensions such that 
\begin{equation}
	K 
	= \Bigl( 
		U 
		\begin{psmallmatrix}
		A_2 & B_2 \\
		C_2 & D_2 \\
		\end{psmallmatrix} 
		S,
		S^{\mathrm{T}} 
		\begin{psmallmatrix}
		A_1 & B_1 \\
		C_1 & D_1 \\
		\end{psmallmatrix} 
		V 
	\Bigr).
	\label{eqn:Representation_of_any_K_in_TWP}
\end{equation}
This is because $U$, $S$, and $V$ are orthogonal matrices and thus
\begin{equation}
	(a,b) \in \mathrm{T}_{W} \mathcal{P}
	\Rightarrow
	(U a S, S^{\mathrm{T}} b V) \in \mathrm{T}_{W} \mathcal{P}.
\end{equation}	

We now investigate the inner product condition in \eqref{eqn:Arbitrary_cotangent_space_at_W}. \refLemma{lemma:tangent_space_M_b} implies that if $R \in \mathrm{T}_{W} M_b$, then there exist $(X,E) \in \mathrm{Skew}(\realNumbers^{\rho \times \rho}) \times \realNumbers^{\rho \times (f-\rho)}$ such that
\begin{equation}
	R 
	= (R_2, R_1)
	= \Bigl( 
		U 
		\begin{psmallmatrix}
			\Sigma X & \Sigma E \\
			0 & 0 \\
		\end{psmallmatrix} 
		S,
		S^{\mathrm{T}} 
		\begin{psmallmatrix}
			X^{\mathrm{T}} \Sigma & 0 \\
			E^{\mathrm{T}}\Sigma & 0 \\
		\end{psmallmatrix} 
		V 
	\Bigr)
	\quad
	\textnormal{and}
	\quad
	2 \mathrm{Diag}
	\bigr( 
		S 
		\begin{psmallmatrix}
			\Sigma^2 X & \Sigma^2 E \\
			0 & 0 \\
		\end{psmallmatrix} 
		S 
	\bigr)
	= 0.
	\label{eqn:Representation_of_any_R_in_TWMb}
\end{equation}
For any $K \in \mathrm{T}_W \mathcal{P}$, the inner product condition in \eqref{eqn:Arbitrary_cotangent_space_at_W} reduces to 
\begin{align}
	0 
	= \langle K, R \rangle 
	&
	= \langle K_2, R_2 \rangle + \langle K_1, R_1 \rangle 
	\nonumber \\ & 
	\eqcom{i}= 
	\langle A_2 , \Sigma X \rangle 
	+ \langle B_2 , \Sigma E \rangle 
	+ \langle A_1 , X^{\mathrm{T}} \Sigma \rangle 
	+ \langle C_1 , E^{\mathrm{T}} \Sigma \rangle 
	\nonumber \\ & 
	\eqcom{ii}= 
	\langle \Sigma A_2 , X \rangle 
	+ \langle  A_1 \Sigma , X^{\mathrm{T}}  \rangle  
	+ \langle \Sigma B_2 , E \rangle 
	+ \langle  C_1 \Sigma , E^{\mathrm{T}} \rangle 
	\nonumber \\ & 
	\eqcom{iii}= 
	\langle \Sigma (A_2 + A_1^{\mathrm{T}}) , X \rangle 
	+ \langle \Sigma (B_2 + C_1^{\mathrm{T}}), E \rangle.
\end{align}
Here, we used (i) the representations in \eqref{eqn:Representation_of_any_K_in_TWP} and \eqref{eqn:Representation_of_any_R_in_TWMb}, (ii) that $\langle M, \Sigma N \rangle = \langle \Sigma M,  N \rangle$ for any matrices $M, N$ of appropriate size because $\Sigma$ is diagonal, and (iii) that $\langle M, N \rangle = \langle M^{\mathrm{T}}, N^{\mathrm{T}} \rangle$ for any matrices $M, N$ of the same size. 

Summarizing, we have that
\begin{align}
	T^{\perp}_W M_b 
	= \Big\{ & 
	(K_2,K_1)
	=
	\bigl( 
		U 
		\begin{psmallmatrix}
		A_2 & B_2 \\
		C_2 & D_2 \\ 
		\end{psmallmatrix} 
		S,
		S^{\mathrm{T}} 
		\begin{psmallmatrix}
		A_1 & B_1 \\
		C_1 & D_1 \\
		\end{psmallmatrix} 
		V 
	\bigr) 
	\in \mathrm{T}_W \mathcal{P}
	: 
	\nonumber \\ & 
	\langle \Sigma (A_2 + A_1^{\mathrm{T}}) , X \rangle 
	+ \langle \Sigma (B_2 + C_1^{\mathrm{T}}) , E \rangle 
	= 0, 
	\nonumber \\ &
	X \in \mathrm{Skew}(\R^{\rho \times \rho}), 
	E \in \R^{\rho \times (f - \rho)}, 
	2 \mathrm{Diag}
	\bigr( 
		S 
		\begin{psmallmatrix}
		\Sigma^2 X & \Sigma^2 E \\
		0 & 0 \\
		\end{psmallmatrix} 
		S 
	\bigr)
	= 0
	\Big\}.
	\label{eqn:Cotangent_space_of_Mb}
\end{align}

\noindent
\emph{Determining $(\mathrm{D}{\pi}(\mathcal{H}))^{\perp}$.}
Recall the definition of an orthogonal complement, that is
\begin{equation}
	(\mathrm{D}_W{\pi}(\mathcal{H}))^{\perp}
	= \bigl\{ 
		K \in \mathrm{T}_W \mathcal{P}
		: 
		\langle K, \mathrm{D}_W \pi(D)(W) \rangle = 0 
		\, \forall D \in \mathrm{Diag}(\R^f)
	\bigr\}
	.
	\label{eqn:Orthogonal_complement_of_DWpiH}
\end{equation}

We now investigate the inner product condition in \eqref{eqn:Orthogonal_complement_of_DWpiH}. For any $K \in \mathrm{T}_W \mathcal{P}, D \in \mathcal{H} = \mathrm{Diag}(\R^f)$, recalling \eqref{eqn:Differential_at_W_of_pi}, this condition reduces to
\begin{align}
	0 
	= 
	\langle K, \mathrm{D}_W \pi(D)(W) \rangle 
	&
	= 
	\langle K_2, W_2 D \rangle 
	+ \langle K_1, - D W_1 \rangle  
	\nonumber \\ & 
	\eqcom{i}= 
	\langle W_2^{\mathrm{T}} K_2, 
	D \rangle 
	- \langle K_1 W_1^{\mathrm{T}}, D \rangle 
	= 
	\langle W_2^{\mathrm{T}} K_2 - K_1 W_1^{\mathrm{T}}, D \rangle
	.
	\label{eqn:Intermediate__Inner_product_of_K_and_DpiHW}
\end{align}
Here, we used (i) that $\langle M, NO \rangle = \langle N^{\mathrm{T}} M, O \rangle = \langle M O^{\mathrm{T}}, N \rangle$ for any matrices $M, N, O$ with compatible dimensions. Now, because \eqref{eqn:Intermediate__Inner_product_of_K_and_DpiHW} holds for any $D \in \mathrm{Diag}(\R^f)$, we must have that 
\begin{equation}
	\mathrm{Diag}( W_2^{\mathrm{T}}K_2 - K_1 W_1^{\mathrm{T}} ) 
	= 0.
\end{equation} 

Summarizing, we have that
\begin{equation} 
	(\mathrm{D}{\pi}(\mathcal{H}))^{\perp} 
	= \{ 
		(K_2, K_1) 
		\in T_{W} \mathcal{P} 
		: 
		\mathrm{Diag}(K_1 W_1^{\mathrm{T}}) 
		= \mathrm{Diag}(W_2^{\mathrm{T}} K_2) 
	\}.
	\label{eqn:Cotangent_space_of_DpiH}
\end{equation}

\noindent
\emph{Concluding.}
As mentioned before, taking the intersection of \eqref{eqn:Cotangent_space_of_Mb} and \eqref{eqn:Cotangent_space_of_DpiH} results in \eqref{eqn:Cotangent_space_of_M}. This completes the proof.
\end{proof}

\subsubsection{\texorpdfstring{Lower bound of $\nabla^2 \mathcal{I}(W)$ restricted to $\mathrm{T}^{\perp}_W M$}{Lower bound of the Hessian in the directions orthogonal to the manifold of minima}}

We require the following lemma. This will be used in an optimization problem we encounter when looking for a lower bound to $\nabla^2 \mathcal{I}(W)|_{T^{\perp}_W M}$.

\begin{lemma}
Suppose \refAssumption{ass:eigenvalues_different_maintext} holds. Let
\begin{align}
	\mathcal{K} 
	&
	= 
	\Bigl\{ 
		\bigl( 
			U 
			\begin{psmallmatrix} 
				A_2 & B_2 \\
				0_{(e-\rho) \times \rho} & 0_{(e-\rho) \times (f-\rho)} \\
			\end{psmallmatrix} 
			S
			,
			S^{\mathrm{T}} 
			\begin{psmallmatrix}
				A_1 & 0_{\rho \times (h-\rho)} \\
				C_1 & 0_{(f-\rho) \times (h-\rho)} \\
			\end{psmallmatrix} 
			V 
		\bigr) 
		\in \mathrm{T}^{\perp}_{W} M
		\nonumber \\ &
		\phantom{= \Bigl\{}
		: 
		A_1, A_2 \in \R^{\rho \times \rho},
		B_2, C_1^{\mathrm{T}} \in \R^{\rho \times (f-\rho)}
	\Bigr\}
	.
	\label{eqn:subspace_V}
\end{align}
For any $W = ( U \Sigma_2 S, S^{\mathrm{T}} \Sigma_1 V ) \in M_b \backslash \mathrm{Sing}(M_b)$,
the following holds:
if 
\begin{equation}
	K 
	= 
	\bigl( 
		U 
		\begin{psmallmatrix} 
			A_2 & B_2 \\
			0_{(e-\rho) \times \rho} & 0_{(e-\rho) \times (f-\rho)} \\
		\end{psmallmatrix} 
		S
		,
		S^{\mathrm{T}} 
		\begin{psmallmatrix}
			A_1 & 0_{\rho \times (h-\rho)} \\
			C_1 & 0_{(f-\rho) \times (h-\rho)} \\
		\end{psmallmatrix} 
		V 
	\bigr)	
	\in \mathcal{K}
\end{equation}
say and
\begin{equation}
	\pnorm{A_2 \Sigma + \Sigma A_1}{\mathrm{F}} = 0,
	\quad
	A_1 = A_2^{\mathrm{T}}
	\quad
	\textnormal{and}
	\quad
	B_2 = C_1^{\mathrm{T}}
	,
	\label{eqn:Lemmas_assumptions_on_A1A2B2C1}
\end{equation}
then
\begin{equation}
	A_2 = A_1^{\mathrm{T}} = \Sigma X^{\prime}
	\quad
	\textnormal{and}
	\quad
	B_2 = C_1^{\mathrm{T}} = \Sigma E^{\prime}
	\label{eqn:Lemmas_consequences_on_A1A2B2C1}
\end{equation}
for some $X^{\prime} \in \mathrm{Skew}(\R^{\rho \times \rho})$ and $E^{\prime} \in \R^{\rho \times (f -\rho)}$. 
If additionally
\begin{equation}
	\mathrm{Diag}
	\Bigl(
	S^{\mathrm{T}}
	\begin{psmallmatrix}
		\Sigma (A_2 + A_1^{\mathrm{T}} ) & \Sigma (B_2+ C_1^{\mathrm{T}} ) \\
		0_{(f-\rho) \times \rho} & 0_{(f-\rho) \times (f-\rho)} \\
	\end{psmallmatrix}
	S
	\Bigr)
	% = \bar{T}_{W}(A_2 + A_1^{\mathrm{T}}, B_2+ C_1^{\mathrm{T}} )
	= 0,
	\label{eqn:lemma_bilinear_form_is_in_tangent_space}
\end{equation}
then $K = 0$.
\label{lemma:optimization_problem_cotangent_auxiliary}
\end{lemma}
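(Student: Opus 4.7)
The plan is to exploit the invertibility of $\Sigma = \mathrm{Diag}(\theta_1, \ldots, \theta_\rho)$, whose entries are strictly positive by the definition of $\rho$ in \eqref{eqn:Definition_rho_and_alpha} together with \eqref{eqn:Definition_Sigma_squared}, in order to extract a skew-symmetric structure from the Frobenius condition, and then to recognize $K$ as an element of $\mathrm{T}_W M_b$ so that the orthogonality built into the definition of $\mathcal{K}$ forces $K = 0$.

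For the first conclusion, I would rewrite $\pnorm{A_2 \Sigma + \Sigma A_1}{\mathrm{F}} = 0$ as the matrix identity $A_2 \Sigma + \Sigma A_2^{\mathrm{T}} = 0$ using $A_1 = A_2^{\mathrm{T}}$, and then set $X^{\prime} := \Sigma^{-1} A_2$ so that $A_2 = \Sigma X^{\prime}$. Left- and right-multiplying the identity by $\Sigma^{-1}$ yields $X^{\prime} + (X^{\prime})^{\mathrm{T}} = 0$, i.e.\ $X^{\prime} \in \mathrm{Skew}(\realNumbers^{\rho \times \rho})$; consequently $A_1 = A_2^{\mathrm{T}} = (X^{\prime})^{\mathrm{T}} \Sigma$. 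For the $B_2$ part the claim is essentially cosmetic: I would set $E^{\prime} := \Sigma^{-1} B_2 \in \realNumbers^{\rho \times (f-\rho)}$, so that $B_2 = \Sigma E^{\prime}$, and then $B_2 = C_1^{\mathrm{T}}$ gives $C_1 = (E^{\prime})^{\mathrm{T}} \Sigma$. This establishes \eqref{eqn:Lemmas_consequences_on_A1A2B2C1}.

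For the second conclusion I would substitute these representations into the Diag hypothesis \eqref{eqn:lemma_bilinear_form_is_in_tangent_space}. Under $A_1 = A_2^{\mathrm{T}}$ and $B_2 = C_1^{\mathrm{T}}$, one has $\Sigma(A_2 + A_1^{\mathrm{T}}) = 2\Sigma A_2 = 2\Sigma^2 X^{\prime}$ and $\Sigma(B_2 + C_1^{\mathrm{T}}) = 2\Sigma B_2 = 2\Sigma^2 E^{\prime}$, so that \eqref{eqn:lemma_bilinear_form_is_in_tangent_space} reduces to $\mathrm{Diag}\bigl( S^{\mathrm{T}} \begin{psmallmatrix} \Sigma^2 X^{\prime} & \Sigma^2 E^{\prime} \\ 0 & 0 \end{psmallmatrix} S \bigr) = 0$. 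By the explicit formula for $\mathrm{D}_W T$ in \refProposition{prop:Characterization_of_TWMb}(b), this is precisely the statement $(X^{\prime}, E^{\prime}) \in \ker \mathrm{D}_W T$. Matching the expressions for $K_2, K_1$ in \eqref{eqn:subspace_V} against the parametrization of $\mathrm{T}_W M_b$ in \refLemma{lemma:tangent_space_M_b}, this exhibits $K$ as an element of $\mathrm{T}_W M_b$. Since $K \in \mathcal{K} \subseteq \mathrm{T}^{\perp}_W M$ by definition of $\mathcal{K}$, and $\mathrm{T}_W M_b \subseteq \mathrm{T}_W M$ by \refLemma{lemma:Tangent_space_at_W_of_M}, one concludes $K \in \mathrm{T}_W M \cap \mathrm{T}^{\perp}_W M = \{0\}$.

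There is no genuine obstacle beyond careful bookkeeping: the content of the lemma is that the algebraic constraints imposed on $K$ by the Frobenius-norm and Diag hypotheses coincide exactly with the defining relations of $\mathrm{T}_W M_b$, after which orthogonality closes the argument. The one point requiring attention is the invertibility of $\Sigma$, which is available here because $\rho$ has been chosen so that each $\theta_i = \sigma_i - \rho \lambda \kappa_\rho/(f+\rho\lambda)$ is strictly positive for $i \leq \rho$.
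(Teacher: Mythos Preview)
Your proposal is correct and follows essentially the same route as the paper's proof: define $X' = \Sigma^{-1}A_2$, $E' = \Sigma^{-1}B_2$ using the invertibility of $\Sigma$, verify skew-symmetry of $X'$ from $A_2\Sigma + \Sigma A_2^{\mathrm{T}} = 0$, then observe that the Diag hypothesis is exactly the kernel condition for $\mathrm{D}_W T$ so that $K \in \mathrm{T}_W M_b \subseteq \mathrm{T}_W M$, whence $K \in \mathrm{T}_W M \cap \mathrm{T}_W^{\perp} M = \{0\}$. The paper's argument is identical in structure and in the key steps.
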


\begin{proof}
We first prove \eqref{eqn:Lemmas_consequences_on_A1A2B2C1}. It follows from \eqref{eqn:Lemmas_assumptions_on_A1A2B2C1} that if 
$\norm{A_2 \Sigma + \Sigma A_1}_{\mathrm{F}} = 0$, then $A_2 \Sigma + \Sigma A_1 = A_2 \Sigma + \Sigma A_2^{\mathrm{T}}= 0$ by property of the Frobenius norm. If now $A_2 = \Sigma X^{\prime}$ say, then $\Sigma X^{\prime} \Sigma + \Sigma (X^{\prime})^{\mathrm{T}} \Sigma =0$. Since $\Sigma$ is invertible, left and right multiplication with its inverse shows that $X^{\prime} \in \mathrm{Skew}(\R^{\rho \times \rho})$. The identity $B_2 = \Sigma B^{\prime} = C_1^{\mathrm{T}}$ follows similarly. 

We next prove that if \eqref{eqn:lemma_bilinear_form_is_in_tangent_space} holds besides \eqref{eqn:Lemmas_assumptions_on_A1A2B2C1}, then in fact $K=0$. We will do so by showing that $K \in \mathrm{T}_W M$, because then
\begin{equation}
	K 
	\in \mathcal{K} \cap \mathrm{T}_W M
	\eqcom{\ref{eqn:subspace_V}}\subseteq \mathrm{T}_W^\perp M \cap \mathrm{T}_W M 
	= \{ 0 \}.
\end{equation}

\noindent
\emph{Verification that $K \in \mathrm{T}_W M$.}
Recall that
\begin{align}
	&
	\ker \mathrm{D}_W T
	= \bigl\{  
		(V_2,V_1) \in \mathrm{T}_W \bar{M}_b 
		: 
		\mathrm{D}_W T(V_2,V_1) = 0
	\bigr\}
	\nonumber \\ &
	\eqcom{\ref{eqn:Bilinear_form_Mb_implicit_M_Of}}= \Bigl\{  
		\Bigl(
			U \Sigma_2
			\begin{psmallmatrix}
					X & E \\
					-E^{\mathrm{T}} & 0
				\end{psmallmatrix}
			S,
			S^{\mathrm{T}}
			\begin{psmallmatrix}
					X^{\mathrm{T}} & -E \\
					E^{\mathrm{T}} & 0 \\
				\end{psmallmatrix}
			\Sigma_1 V
		\Bigr)
		\in T_W \bar{M}_b 
		: 
		2 \mathrm{Diag} 
		\bigr( 
			S^{\mathrm{T}}
			\begin{psmallmatrix}
			\Sigma^2 X & \Sigma^2 E \\
			0 & 0 \\
			\end{psmallmatrix} 
			S 
		\bigr)
		= 0
	\Bigr\}.
	\label{eqn:Intermediate__Recall_ker_DWT}	
\end{align}
Thus since
\begin{equation}
	2 \mathrm{Diag} 
	\bigr( 
		S^{\mathrm{T}}
		\begin{psmallmatrix}
		\Sigma^2 X' & \Sigma^2 E' \\
		0 & 0 \\
		\end{psmallmatrix} 
		S 
	\bigr)	
	\eqcom{\ref{eqn:Lemmas_consequences_on_A1A2B2C1}}= 
	\mathrm{Diag}
	\Bigl(
	S^{\mathrm{T}}
	\begin{psmallmatrix}
		\Sigma (A_2 + A_1^{\mathrm{T}} ) & \Sigma (B_2+ C_1^{\mathrm{T}} )  \\
		0 & 0 \\
	\end{psmallmatrix}
	S
	\Bigr)
	= 0
\end{equation}
by assumption \eqref{eqn:lemma_bilinear_form_is_in_tangent_space}, clearly also
\begin{equation}
	\Bigl(	
		U \Sigma_2
		\begin{psmallmatrix}
				X' & E' \\
				-(E')^{\mathrm{T}} & 0
			\end{psmallmatrix}
		S,
		S^{\mathrm{T}}
		\begin{psmallmatrix}
				X^{\mathrm{T}} & -E' \\
				(E')^{\mathrm{T}} & 0 \\
			\end{psmallmatrix}
		\Sigma_1 V
	\Bigr)	
	\eqcom{\ref{eqn:Intermediate__Recall_ker_DWT}}\in \ker \mathrm{D}_W T.
	\label{eqn:Intermediate__Xprime_Bprime_in_kernel_DWT}	
\end{equation}
Note now lastly that
\begin{equation}
	K 
	\eqcom{\ref{eqn:Lemmas_consequences_on_A1A2B2C1}}= 	
	\bigl( 
		U 
		\begin{psmallmatrix}
		\Sigma X' & \Sigma E' \\
		0 & 0 \\
		\end{psmallmatrix} 
		S,
		S^{\mathrm{T}} \begin{psmallmatrix}
		( \Sigma X' )^{\mathrm{T}} & 0 \\
		( \Sigma E' )^{\mathrm{T}} & 0 \\
		\end{psmallmatrix} 
		V 
	\bigr). 
	\label{eqn:Intermediate__Ks_form_as_function_of_Xprime_Eprime}
\end{equation}
Utilizing \eqref{eqn:Intermediate__Xprime_Bprime_in_kernel_DWT} and \eqref{eqn:Intermediate__Ks_form_as_function_of_Xprime_Eprime} together with \eqref{eqn:tangent_M_b} of \refLemma{lemma:tangent_space_M_b}, we conclude that $K \in \mathrm{T}_{W} M_b \subseteq \mathrm{T}_{W} M$. This finishes the proof.
\end{proof}

We now define a bilinear form that will appear in the computation of the lower bound of the Hessian $\nabla^2 \mathcal{I}(W)$.

\begin{definition}
	Let $W = (U \Sigma_2 S, S^{\mathrm{T}} \Sigma_1 V) \in \bar{M}_b$ where $S \in \mathrm{O}(f)$ and let $\Sigma \in \mathrm{Diag}(\R^{\rho \times \rho})$ be defined as in \eqref{eqn:Definition_Sigma_squared}. Define the map $\bar{T}_{W}: \R^{\rho \times \rho} \times \R^{\rho \times (f -\rho) } \to \R^{f}$ by
	\begin{equation}
		\bar{T}_{W}(A,B)
		= \mathrm{Diag}
		\bigl(
		S^{\mathrm{T}}
		\begin{psmallmatrix}
			\Sigma A & \Sigma B \\
			0 & 0 \\
		\end{psmallmatrix}
		S
		\bigr),
		\label{eqn:definition_T_bar_map}
	\end{equation}
	and the bilinear form $\mathcal{T}: (\R^{\rho \times \rho} \times \R^{\rho \times (f -\rho) }) \times (\R^{\rho \times \rho} \times \R^{\rho \times (f -\rho) }) \to \R$ by
	\begin{align}
		\mathcal{T}_W \bigl( (A,B), (A^{\prime}, B^{\prime}) \bigr)
		&
		= \bigl\langle \bar{T}_W(A,B), \bar{T}_W(A^{\prime}, B^{\prime}) \bigr\rangle
		\nonumber \\
		&
		= \mathrm{Tr}
		\Bigl[
		\mathrm{Diag}
		\bigl(
			S^{\mathrm{T}}
			\begin{psmallmatrix}
				\Sigma A & \Sigma B \\
				0 & 0 \\
			\end{psmallmatrix}
			S
			\bigr)
		\mathrm{Diag}
		\bigl(
			S^{\mathrm{T}}
			\begin{psmallmatrix}
				\Sigma A^{\prime} & \Sigma B^{\prime}\\
				0 & 0
			\end{psmallmatrix} S
			\bigr)
		\Bigr].
		\label{eqn:Definition__mathcalTW_bilinear_form}
	\end{align}
	\label{def:bilinear_form}
\end{definition}
Observe that, when using notation as in \eqref{eqn:Bilinear_form_Mb_implicit_M_Of}, we have $\mathrm{D}_{W} T (V_2, V_1) = \bar{T}_{W}( \Sigma X, \Sigma B)$.

We also introduce some extra notation. For a positive definite symmetric bilinear form $A: E \times E \to \R$ on a real vector space $E$ with norm $\pnorm{\cdot}{}$, we denote $A > l$ for $l \in \R_{+}$ to indicate that $v^{\mathrm{T}}Av > l \pnorm{v}{}^2$ for all $v \in E$. We are now in position to prove a lower bound on the Hessian using $\mathrm{T}^{\perp}_W M$. 

\begin{lemma}
	Suppose Assumptions~\ref{ass:eigenvalues_different_maintext}, \ref{ass:nonvanishing_maintext} hold.
	Let $W \in M_b \cap M \backslash \mathrm{Sing}(M) \subseteq M$. We have that $\nabla^2 \mathcal{I}(W)$ restricted to $\mathrm{T}^{\perp}_W M$ is a positive definite bilinear form. Furthermore,
	\begin{equation}
		\nabla^2 \mathcal{I}(W)|_{\mathrm{T}^{\perp}_W M} 
		\geq \omega
	\end{equation}
	where
	\begin{equation}
		\omega 
		= 
		\begin{cases}
			\min 
			\{
				\zeta_{W}, 2 \frac{\lambda \kappa_{\rho}\rho }{f + \lambda \rho} - 2\sigma_{\rho + 1}
			\} 
			\quad 
			\textnormal{if} 
			\quad 
			\rho < f, 
			\\
			\min 
			\{
				\zeta_{W}, 2(\sigma_{\rho} -\sigma_{\rho + 1})
			\} 
			\quad 
			\textnormal{if} 
			\quad 
			\rho = f.
		\end{cases}				
	\end{equation}
	Here, $\zeta_{W} > 0$ is strictly positive and depends on the point $W$, $\lambda$ and $\Sigma$. If $\rho = r$ (recall from \eqref{eqn:Definition_Sigma_squared} that we have $\rho \leq r$), then we set $\sigma_{\rho + 1} = \sigma_{r + 1} = 0$.
	\label{lemma:bound_hessian}
\end{lemma}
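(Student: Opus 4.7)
Parametrize an arbitrary $K = (K_2, K_1) \in \mathrm{T}^{\perp}_W M$ in the SVD basis adapted to $W = (U \Sigma_2 S, S^{\mathrm{T}} \Sigma_1 V)$ by the eight blocks $A_1, A_2, B_1, B_2, C_1, C_2, D_1, D_2$ as in \refLemma{lemma:tangent_cotangent_M}. These blocks satisfy three linear constraints: the inner-product condition with $T_W M_b$ (pairing $A_2 + A_1^{\mathrm{T}}$ against skew $X$ and $B_2 + C_1^{\mathrm{T}}$ against $E$), the balanced-reduction diagonality condition $\mathrm{Diag}(K_2^{\mathrm{T}} W_2) = \mathrm{Diag}(K_1 W_1^{\mathrm{T}})$, and the implicit kernel condition $\bar{T}_W(\Sigma X, \Sigma B) = 0$ carried over from $\mathrm{T}_W M_b \subset \mathrm{T}_W M$. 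The first step is to substitute this parametrization into the Hessian bilinear form of \refProposition{prop:hessian_expression}, using the balanced identities $W_2^{\mathrm{T}} W_2 = W_1 W_1^{\mathrm{T}} = S^{\mathrm{T}} \mathrm{diag}(\Sigma^2,0) S$, $\mathrm{Diag}(W_2^{\mathrm{T}} W_2) = (\|\Sigma^2\|_1/f)\mathrm{I}_f$, and the explicit form $Y - \mathcal{S}_\alpha[Y] = U\,\mathrm{diag}(\alpha \mathrm{I}_\rho, \mathrm{diag}(\sigma_{\rho+1},\ldots,\sigma_r)\oplus 0)\,V$.

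After this substitution the bilinear form decouples into contributions from the ``active'' $\rho \times \rho$ block (indices $\le \rho$ on both sides) and from the remaining ``tail'' blocks involving $B_1, C_2, D_1, D_2$, and mixed rows. The tail part consolidates into a sum of pieces of the shape $2\lambda (\|\Sigma^2\|_1/f)\,\|\cdot\|_{\mathrm{F}}^2 - 4\sigma_j\,\langle\cdot,\cdot\rangle$ with $j \ge \rho+1$, which after completing squares is bounded below by $2(\lambda \rho\kappa_\rho/(f+\lambda\rho) - \sigma_{\rho+1})$ times the squared norm of those blocks (using $\alpha = \lambda\rho\kappa_\rho/(f+\lambda\rho)$). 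When $\rho = f$ the tail is empty and the corresponding bound $2(\sigma_\rho - \sigma_{\rho+1})$ appears instead from the cross-term $-4\mathrm{Tr}[K_1^{\mathrm{T}} K_2^{\mathrm{T}}(Y-\mathcal{S}_\alpha[Y])]$, giving the two cases of \eqref{eqn:lower_bound_hessian_minimum}.

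For the active $\rho \times \rho$ block (governed by $A_1, A_2, B_2, C_1$), the Frobenius term $2\|W_2 K_1 + K_2 W_1\|_{\mathrm{F}}^2$ reduces to $2\|A_2 \Sigma + \Sigma A_1\|_{\mathrm{F}}^2 + 2\|\Sigma C_1 + \cdot\|_{\mathrm{F}}^2$, and the diagonal-Dropout terms contribute the bilinear form $\mathcal{T}_W$ of \refDefinition{def:bilinear_form} evaluated on $(A_2 + A_1^{\mathrm{T}}, B_2 + C_1^{\mathrm{T}})$. Strict positivity on this block is established by \emph{contradiction} via \refLemma{lemma:optimization_problem_cotangent_auxiliary}: if the active contribution vanishes, then $\|A_2\Sigma + \Sigma A_1\|_{\mathrm{F}} = 0$, $A_1 = A_2^{\mathrm{T}}$, $B_2 = C_1^{\mathrm{T}}$, and the $\mathrm{Diag}$-condition \eqref{eqn:lemma_bilinear_form_is_in_tangent_space} all hold, forcing $K = 0$ by that lemma. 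Since the set of unit-norm active directions is compact and the quadratic form is strictly positive on it, we extract a positive constant $\zeta_W > 0$ depending on $W$, $\lambda$, and $\Sigma$. Taking the minimum of the active and tail bounds yields $\omega$.

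The hardest step is the active-block analysis: the four ingredients (the Frobenius norm, the balancedness diagonal trace, the sign-difference diagonal squares, and the $Y-\mathcal{S}_\alpha[Y]$ cross term) only combine into a coercive form after the three cotangent-space constraints are simultaneously exploited, and quantitative positivity requires invoking the non-singularity hypothesis (\refAssumption{ass:nonvanishing_maintext} together with $W \notin \mathrm{Sing}(M)$) to guarantee that $\bar T_W$ has full rank $f-1$. The case $\rho = 1$ is considerably simpler since $\mathrm{Skew}(\mathbb{R}^{1\times 1}) = 0$ eliminates the $A_1, A_2$ degrees of freedom, $\zeta_W$ can be computed explicitly, and its value exceeds $2\sigma_1 \lambda/(f+\lambda) - 2\sigma_2$, yielding the clean bound \eqref{eqn:lower_bound_hessian_minimum_case_1}; \refLemma{lemma:case_rho_is_one_M_b} additionally verifies \refAssumption{ass:nonvanishing_maintext} automatically in this case.
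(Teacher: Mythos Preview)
Your plan is essentially the paper's: parametrize $K\in\mathrm T_W^\perp M$ by the eight blocks in the SVD basis, substitute into the Hessian of \refProposition{prop:hessian_expression} using the balanced identities, split into an ``active'' piece on $(A_1,A_2,B_2,C_1)$ whose strict positivity is extracted as $\zeta_W>0$ via \refLemma{lemma:optimization_problem_cotangent_auxiliary} and compactness, and a ``tail'' piece on $(B_1,C_2,D_1,D_2)$ that yields the explicit constants. Two block-level slips to fix when you write it out: (i) the Frobenius term $\|W_2K_1+K_2W_1\|_{\mathrm F}^2$ produces $\|\Sigma A_1+A_2\Sigma\|_{\mathrm F}^2+\|\Sigma B_1\|_{\mathrm F}^2+\|C_2\Sigma\|_{\mathrm F}^2$ (no $\Sigma C_1$ term---$C_1$ is annihilated by the zero rows of $\Sigma_2$); (ii) when $\rho=f$ the tail is \emph{not} empty---only $B_2,C_1,D_1,D_2$ disappear, while $B_1\in\mathbb R^{\rho\times(h-\rho)}$ and $C_2\in\mathbb R^{(e-\rho)\times\rho}$ remain, and it is precisely their contribution $2(\sigma_\rho-\sigma_{\rho+1})(\|B_1\|_{\mathrm F}^2+\|C_2\|_{\mathrm F}^2)$ that produces the second branch of $\omega$.
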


\begin{proof}
To arrive at the result, we will give a lower bound to the solution 
\begin{align}
	\mathcal{H}^{\mathrm{opt}}_W
	=
	\begin{cases}
		\textnormal{minimum of} 
		& 
		\bigl(\mathrm{vec}(V_1), \mathrm{vec}(V_2) \bigr)^{\mathrm{T}}
		\nabla^2 \mathcal{I}(W)
		\bigl(\mathrm{vec}(V_1), \mathrm{vec}(V_2) \bigr)
		\\
		\textnormal{obtained over} 
		& 
		(V_2, V_1) \in \mathrm{T}_{W} \mathcal{P} 
		\\
		\textnormal{subject to}
		&
		\pnorm{ (V_2,V_1) }{\mathrm{F}}
		= 1,
		(V_2, V_1) \in \mathrm{T}^{\perp}_W M
	\end{cases}
	\label{eqn:optimization_hessian}
\end{align}
say, that holds for any $W \in M_b \cap M \backslash \mathrm{Sing}(M)$. We consider first the case that $\rho < f$.

\noindent
\emph{Step 1: Simplifying the objective function.}
Let $(V_2, V_1) \in T_W \mathcal{P}$, $W \in M_b \cap M \backslash \mathrm{Sing}(M)$. Since $W \in M_b$, we have by \eqref{eqn:condition_balancedness_M_b} that 
\begin{align}
	\mathrm{Tr}
	\bigl[ 
		V_1^{\mathrm{T}} \mathrm{Diag}(W_2^{\mathrm{T}} W_2) V_1 
	\bigr]
	&
	= 
	\frac{\pnorm{\Sigma^{2}}{1}}{f} \pnorm{V_1}{\mathrm{F}}^2
	\quad
	\textnormal{and similarly}
	\nonumber \\
	\mathrm{Tr}
	\bigl[ 
		V_2 \mathrm{Diag}(W_1 W_1^{\mathrm{T}}) V_2^{\mathrm{T}} 
	\bigr]
	&
	= 
	\frac{\pnorm{\Sigma^{2}}{1}}{f} \pnorm{V_2}{\mathrm{F}}^2.
	\label{eqn:Intermediate__Tr_V1TDiagW2TW2V1_and_Tr_V2DiagW1W1TV2T}
\end{align}
Substituting \eqref{eqn:Intermediate__Tr_V1TDiagW2TW2V1_and_Tr_V2DiagW1W1TV2T} into \eqref{eqn:second_derivative_dropout_loss_2layer}, we find that
\begin{align}
	&
	\bigl( 
		\mathrm{vec}(V_1), 
		\mathrm{vec}(V_2) 
	\bigr)^{\mathrm{T}}
	\nabla^2 \mathcal{I}(W)
	\bigl( 
		\mathrm{vec}(V_1), 
		\mathrm{vec}(V_2) 
	\bigr)
	\nonumber \\ &
	=
	2 \pnorm{W_2 V_1 + V_2 W_1}{\mathrm{F}}^2
	+ 2\lambda \frac{\pnorm{\Sigma^{2}}{1}}{f} 
	\bigl( 
		\pnorm{V_1}{\mathrm{F}}^2 
		+ \pnorm{V_2}{\mathrm{F}}^2 
	\bigr)
	- 4 \mathrm{Tr}
	\bigl[ 
		V_1^{\mathrm{T}} V_2^{\mathrm{T}} ( Y - \mathcal{S}_\alpha[Y] ) 
	\bigr]
	\nonumber \\ &
	\phantom{=}
	+ 2 \lambda
	\bigl(
		\pnorm{ \mathrm{Diag}(V_2^{\mathrm{T}} W_2) + \mathrm{Diag}( W_1^{\mathrm{T}} V_1 ) }{\mathrm{F}}^2
		- \pnorm{ \mathrm{Diag}(V_2^{\mathrm{T}} W_2) - \mathrm{Diag}( W_1^{\mathrm{T}} V_1 ) }{\mathrm{F}}^2
	\bigr)
	.
	\label{eqn:simplified_hessian}
\end{align}
Substituting \eqref{eqn:simplified_hessian} into \eqref{eqn:optimization_hessian} and using the facts that:
\begin{itemize}[topsep=2pt,itemsep=2pt,partopsep=2pt,parsep=2pt,leftmargin=0pt]
	\item[--] if $\pnorm{ (V_2,V_1) }{\mathrm{F}} = 1$, then $\pnorm{(V_2,V_1)}{\mathrm{F}}^2 = \pnorm{V_1}{\mathrm{F}}^2 + \pnorm{V_2}{\mathrm{F}}^2 = 1$;
	\item[--] if $(V_2,V_1) \in \mathrm{T}^{\perp}_W M$, then $\mathrm{Diag}(V_2^{\mathrm{T}} W_2) - \mathrm{Diag}(W_1 V_1^{\mathrm{T}}) = 0$ by \refLemma{lemma:tangent_cotangent_M};
	\item[--] and
		$
			\pnorm{\Sigma^2}{1}
			= \sum_{i=1}^{\rho} ( \sigma_i -  {\lambda \rho \kappa_{\rho}} / {(f + \lambda \rho)} )
			= \rho \kappa_{\rho} - \rho \frac{\lambda  \rho \kappa_{\rho}}{f + \lambda \rho}
			= {\rho \kappa_{\rho} f} / {( f + \lambda \rho )}
		$,
		which can be seen from $\Sigma^2$'s singular values shown in \eqref{eqn:Definition_Sigma_squared} and then recalling \eqref{eqn:Definition_rho_and_alpha};
\end{itemize}
we find that
\begin{align}
	\mathcal{H}^{\mathrm{opt}}_W
	=
	\begin{cases}
		\textnormal{minimum of} 
		& 
		2 \pnorm{W_2 V_1 + V_2 W_1}{\mathrm{F}}^2
		+ 2 \lambda \frac{\rho \kappa_{\rho}}{ f + \lambda \rho }
		\\ &
		- 4\mathrm{Tr}[ V_1^{\mathrm{T}} V_2^{\mathrm{T}} ( Y - \mathcal{S}_\alpha[Y] ) ]
		+ 8 \lambda \pnorm{\mathrm{Diag}(V_2^{\mathrm{T}} W_2)}{\mathrm{F}}^2	
		\\
		\textnormal{obtained over} 
		& 
		V_2, V_1 \in \mathrm{T}_{W} \mathcal{P} 
		\\
		\textnormal{subject to}
		&
		\pnorm{ V_2 }{\mathrm{F}}^{2}
		+ 
		\pnorm{ V_1 }{\mathrm{F}}^{2}
		= 1,
		(V_2, V_1) \in T^{\perp}_W M
		.
	\end{cases}
	\label{eqn:simplified_hessian2}
\end{align}

\noindent
\emph{Step 2: Change of variables.}
We now apply a change of variables to the minimization problem in \eqref{eqn:simplified_hessian2}. Specifically, we utilize the orthogonal matrices $U,S,V$ of the \gls{SVD} $W = (U \Sigma_2 S, S^{\mathrm{T}} \Sigma_1 V)$ by letting
\begin{equation}
	U \tilde{V}_2 S 
	= V_2
	\quad
	\textnormal{and}
	\quad
	S^{\mathrm{T}} \tilde{V}_1 V 
	= V_1
	\label{eqn:Intermediate__Change_of_variables_V1_V2_using_USV}
\end{equation}
say. We examine next the consequences of this change of variables to the three relevant terms in \eqref{eqn:simplified_hessian2}.

Under the change of variables in \eqref{eqn:Intermediate__Change_of_variables_V1_V2_using_USV}, the first term in \eqref{eqn:simplified_hessian2} satisfies
\begin{align}
	2 \pnorm{W_2 V_1 + V_2 W_1}{\mathrm{F}}^2
	&	
	\eqcom{SVD}= 
	2 \pnorm{ U \Sigma_2 S V_1 + V_2 S^{\mathrm{T}} \Sigma_1 V }{\mathrm{F}}^2
	\nonumber \\ &
	\eqcom{\ref{eqn:Intermediate__Change_of_variables_V1_V2_using_USV}}= 
	2 \pnorm{ U \Sigma_2 S S^{\mathrm{T}} \tilde{V}_1 V + U \tilde{V}_2 S  S^{\mathrm{T}} \Sigma_1 V }{\mathrm{F}}^2
	% \nonumber \\ &	
	\eqcom{i,ii}=
	2 \pnorm{ \Sigma_2 \tilde{V}_1 + \tilde{V}_2 \Sigma_1 }{\mathrm{F}}^2
	\label{eqn:Intermediate__First_term_after_change_of_variables}
\end{align}
since (i) $S S^{\mathrm{T}} = \mathrm{Id}$ and (ii) the Frobenius norm is unitarily invariant, i.e., $\pnorm{ U(\cdot) V }{\mathrm{F}} = \pnorm{\cdot}{\mathrm{F}}$. 

Recall the definition of $\Sigma_{Y}$ in \refSection{sec:Characterization_of_the_set_of_global_minimizers}. Introducing
\begin{equation}
	\Lambda
	= \begin{psmallmatrix}
		\Sigma_Y & 0_{r \times (h - r)} \\
		0_{(e - r) \times r} & 0_{(e - r) \times (h - r)} \\
	\end{psmallmatrix},
	\label{eqn:definition_Lambda}
\end{equation}
note that
\begin{align}
	U^{\mathrm{T}} 
	\bigl( Y - \mathcal{S}_\alpha[Y] \bigr) 
	V^{\mathrm{T}} 
	&
	\eqcom{iii}= 
	U^{\mathrm{T}} 
	\bigl( 
		U 
		\begin{psmallmatrix}
			\Sigma_Y & 0 \\
			0 & 0 \\	
		\end{psmallmatrix}
		V 
		- \mathcal{S}_\alpha[Y] 
	\bigr) 
	V^{\mathrm{T}} 
	\nonumber \\ &
	\eqcom{iv}= 
	U^{\mathrm{T}} ( U \Lambda V - U \Sigma_2 \Sigma_1 V ) V^{\mathrm{T}} 
	\eqcom{v}= 
	\Lambda - \Sigma_2 \Sigma_1
	\label{eqn:Intermediate__UTYminWstarVT}
\end{align}
by 
(iii) lifting $Y$'s compact \gls{SVD} defined in \refSection{sec:Characterization_of_the_set_of_global_minimizers} to a full \gls{SVD}, 
and since 
(iv) $W \in M$ and therefore $\mathcal{S}_\alpha[Y] = W_2 W_1 = U \Sigma_2 S S_T \Sigma_1 V = U \Sigma_2 \Sigma_1 V$ by \eqref{eqn:optimal_diagonal_main}, 
and 
(v) $U^{\mathrm{T}} U = \mathrm{Id}_{e \times e}$ and $V V^{\mathrm{T}} = \mathrm{Id}_{h \times h}$. 
Conclude then that under the change of variables in \eqref{eqn:Intermediate__Change_of_variables_V1_V2_using_USV} the third term in \eqref{eqn:simplified_hessian2} satisfies
\begin{align}
	&
	- 4\mathrm{Tr}
	[ 
		V_1^{\mathrm{T}} V_2^{\mathrm{T}} ( Y - \mathcal{S}_\alpha[Y] ) 
	]
	% &
	\eqcom{\ref{eqn:Intermediate__Change_of_variables_V1_V2_using_USV}}=
	- 4\mathrm{Tr}
	[ 
		(S^{\mathrm{T}} \tilde{V}_1 V)^{\mathrm{T}} (U \tilde{V}_2 S)^{\mathrm{T}} ( Y - \mathcal{S}_\alpha[Y] ) 
	]
	\nonumber \\ &
	\eqcom{vi}= 
	- 4\mathrm{Tr}
	[ 
		\tilde{V}_1^{\mathrm{T}} \tilde{V}_2^{\mathrm{T}} U^{\mathrm{T}} ( Y - \mathcal{S}_\alpha[Y] ) V 
	]
	% \nonumber \\ &
	\eqcom{\ref{eqn:Intermediate__UTYminWstarVT}}= 
	- 4\mathrm{Tr}
	[ 
		\tilde{V}_1^{\mathrm{T}} \tilde{V}_2^{\mathrm{T}} ( \Lambda - \Sigma_2 \Sigma_1 ) 
	],	
	\label{eqn:Intermediate__Third_term_after_change_of_variables}
\end{align}
because (vi) of $S S^{\mathrm{T}} = \mathrm{Id}$ and the cyclic property of the trace.

Under the change of variables in \eqref{eqn:Intermediate__Change_of_variables_V1_V2_using_USV}, the fourth term in \eqref{eqn:simplified_hessian2} satisfies
\begin{align}
	8 \lambda \pnorm{\mathrm{Diag}(V_2^{\mathrm{T}} W_2)}{\mathrm{F}}^2
	&
	\eqcom{\ref{eqn:Intermediate__Change_of_variables_V1_V2_using_USV}}= 
	8 \lambda \pnorm{\mathrm{Diag}( (U \tilde{V}_2 S)^{\mathrm{T}} W_2 )}{\mathrm{F}}^2
	\eqcom{W's SVD}= 
	8 \lambda \pnorm{\mathrm{Diag}( (U \tilde{V}_2 S)^{\mathrm{T}} U \Sigma_2 S )}{\mathrm{F}}^2	
	\nonumber \\ &
	\eqcom{vi}= 8 \lambda \pnorm{\mathrm{Diag}( S^{\mathrm{T}} \tilde{V}_2^{\mathrm{T}} \Sigma_2 S )}{\mathrm{F}}^2,
	\label{eqn:Intermediate__Fourth_term_after_change_of_variables}
\end{align}
since (vi) $U^{\mathrm{T}} U = \mathrm{Id}_{e \times e}$.

Applying the change of coordinates in \eqref{eqn:Intermediate__Change_of_variables_V1_V2_using_USV} to \eqref{eqn:simplified_hessian2}---by substituting \eqref{eqn:Intermediate__First_term_after_change_of_variables},  \eqref{eqn:Intermediate__Third_term_after_change_of_variables}, and \eqref{eqn:Intermediate__Fourth_term_after_change_of_variables} into \eqref{eqn:simplified_hessian2}---thus yields
\begin{align}
	\mathcal{H}^{\mathrm{opt}}_W
	=
	\begin{cases}
		\textnormal{minimum of} 
		& 
		2\pnorm{\Sigma_{2} \tilde{V}_1 + \tilde{V}_2 \Sigma_{1}}{\mathrm{F}}^2
		+ 2 \frac{\rho \kappa_{\rho} \lambda}{f + \lambda \rho}
		\\ &
		- 4 \mathrm{Tr}[ \tilde{V}_1^{\mathrm{T}}\tilde{V}_2^{\mathrm{T}} (\Lambda - \Sigma_2 \Sigma_1) ]
		+ 8 \lambda \pnorm{\mathrm{Diag}( S^{\mathrm{T}} \tilde{V}_2^{\mathrm{T}} \Sigma_2 S )}{\mathrm{F}}^2		
		\\
		\textnormal{obtained over} 
		& 
		\tilde{V}_2, \tilde{V}_1
		\\
		\textnormal{subject to}
		&
		\pnorm{ \tilde{V}_2 }{\mathrm{F}}^{2}
		+ 
		\pnorm{ \tilde{V}_1 }{\mathrm{F}}^{2}
		= 1,
		(U \tilde{V}_2 S, S^{\mathrm{T}} \tilde{V}_1 V) \in T^{\perp}_W M
		.
	\end{cases}
	\label{eqn:minimization_1}
\end{align}

\noindent
\emph{Step 3: Block matrix parametrization.}
We will now write $\tilde{V}_2$ and $\tilde{V}_1$ as block matrices in a manner similar to the parametrization in \refLemma{lemma:tangent_cotangent_M}. In particular, we let
\begin{equation}
	\tilde{V}_2
	=
	\begin{psmallmatrix}
		A_2 & B_2 \\
		C_2 & D_2 \\
	\end{psmallmatrix}
	\enskip
	\textnormal{where}
	\enskip
	A_2 \in \R^{\rho \times \rho},
	B_2 \in \R^{\rho \times (f - \rho)},
	C_2 \in \R^{(e - \rho) \times \rho},
	D_2 \in \R^{(e - \rho) \times (f - \rho)},
	\label{eqn:Block_matrix_parametrization_for_V2}
\end{equation}
and
\begin{equation}
	\tilde{V}_1
	=
	\begin{psmallmatrix}
		A_1 & B_1 \\
		C_1 & D_1 \\
	\end{psmallmatrix}
	\enskip
	\textnormal{where}
	\enskip
	A_1 \in \R^{\rho \times \rho},
	B_1 \in \R^{\rho \times (h - \rho)},
	C_1 \in \R^{(f -\rho) \times \rho},
	D_1 \in \R^{(f -\rho) \times (h -\rho)}.
	\label{eqn:Block_matrix_parametrization_for_V1}
\end{equation}

We expand the first term of \eqref{eqn:minimization_1}. Utilizing $\Sigma_2, \Sigma_1$'s definitions in \eqref{eqn:Definition_Sigma_squared}, we find that
\begin{align}
	\pnorm{\Sigma_{2} \tilde{V}_1 + \tilde{V}_2 \Sigma_{1}}{\mathrm{F}}^2
    &
	= \|
	\begin{psmallmatrix}
		\Sigma & 0 \\
		0 & 0 \\
	\end{psmallmatrix}\begin{psmallmatrix}
		A_1 & B_1 \\
		C_1 & D_1 \\
	\end{psmallmatrix}  + \begin{psmallmatrix}
		A_2 & B_2 \\
		C_2 & D_2 \\
	\end{psmallmatrix} \begin{psmallmatrix}
		\Sigma & 0 \\
		0 & 0 \\
	\end{psmallmatrix}
	\|_{\mathrm{F}}^2
	% \nonumber \\ &
	=
	\|
	\begin{psmallmatrix}
		\Sigma A_1 + A_2 \Sigma & \Sigma B_1 \\
		C_2 \Sigma & 0 \\
	\end{psmallmatrix}
	\|_{\mathrm{F}}^2 
	\nonumber \\ &
	= \pnorm{ \Sigma A_1 + A_2 \Sigma }{\mathrm{F}}^2 + \pnorm{\Sigma B_1}{\mathrm{F}}^2 + \pnorm{C_2\Sigma}{\mathrm{F}}^2.
	\label{eqn:minimization_intercomp1}
\end{align}

We now tackle the third term of \eqref{eqn:minimization_1}. Recall the definitions of $\Sigma_2$, $\Sigma_1$, $\Lambda$ in \eqref{eqn:Definition_Sigma_squared}, \eqref{eqn:definition_Lambda} respectively, and let $\Sigma_{\min} \in \R^{(e - \rho) \times (h - \rho)}$ be defined such that
\begin{equation}
	\Lambda - \Sigma_2 \Sigma_1 
	= 
	\begin{psmallmatrix}
		{\lambda \kappa_{\rho} \rho} \mathrm{Id}_{\rho \times \rho} / {(f + \lambda \rho)} & 0 \\
		0 & \Sigma_{\min} \\
	\end{psmallmatrix}.
	\label{eqn:definition_sigma_min}
\end{equation}
Note that $\Sigma_{\min}$ consists of values $\sigma_{\rho + 1}, \ldots, \sigma_r$ in its upper left diagonal. Substituting \eqref{eqn:definition_sigma_min} into the third term of \eqref{eqn:minimization_1}, we find that
\begin{align}
	\mathrm{Tr}
	[ 
		\tilde{V}_1^{\mathrm{T}} \tilde{V}_2^{\mathrm{T}} (\Lambda - \Sigma_2 \Sigma_1) 
	]
	&
	= \mathrm{Tr}
	\Bigl[
		\begin{psmallmatrix}
			A_1 & B_1 \\
			C_1 & D_1 \\
		\end{psmallmatrix}^{\mathrm{T}}
		\begin{psmallmatrix}
			A_2 & B_2 \\
			C_2 & D_2 \\
		\end{psmallmatrix}^{\mathrm{T}}
		\begin{psmallmatrix}
			{\lambda \kappa_{\rho} \rho} \mathrm{Id}_{\rho \times \rho} / {(f + \lambda \rho)} & 0 \\
			0 & \Sigma_{\min} \\
		\end{psmallmatrix}
		\Bigr]
	\nonumber \\ &
	= \mathrm{Tr}
	\Bigl[
		\begin{psmallmatrix}
			A_2 A_1 + B_2 C_1 & A_2 B_1 + B_2 D_1 \\
			C_2 A_1 + D_2 C_1 & C_2 B_1 + D_2 D_1 \\
		\end{psmallmatrix}^{\mathrm{T}}
		\begin{psmallmatrix}
			{\lambda \kappa_{\rho} \rho} \mathrm{Id}_{\rho \times \rho} / {(f + \lambda \rho)} & 0 \\
			0 & \Sigma_{\min} \\
		\end{psmallmatrix}
		\Bigr]
	\nonumber \\ &
	\eqcom{i}= \frac{\lambda \kappa_{\rho} \rho}{f + \lambda \rho} \mathrm{Tr}[A_2A_1 + B_2C_1] + \mathrm{Tr}[ \Sigma_{\min}^{\mathrm{T}} (C_2B_1 + D_2D_1) ].
	\label{eqn:minimization_intercomp2}		
\end{align}
where (i) we used that $\mathrm{Tr}[ A^{\mathrm{T}}B ] = \mathrm{Tr}[ B^{\mathrm{T}}A ]$ for any pair of matrices $A, B$ of compatible dimensions.

We finally simplify the fourth term of \eqref{eqn:minimization_1}. Recall again $W$'s  \gls{SVD} $(U \Sigma_2 S, S^{\mathrm{T}} \Sigma_1 V)$; and that if $(V_2,V_1) \in \mathrm{T}^{\perp}_W M$, then $\mathrm{Diag}(V_2^{\mathrm{T}} W_2) = \mathrm{Diag}(W_1 V_1^{\mathrm{T}})$ by \refLemma{lemma:tangent_cotangent_M}. If we use the parametrization from \eqref{eqn:Block_matrix_parametrization_for_V2} and \eqref{eqn:Block_matrix_parametrization_for_V1}, this latter relation on diagonals is equivalent to equating
\begin{align}
	\mathrm{Diag}( W_2^{\mathrm{T}} V_2 )
	&
	\eqcom{W's SVD, \ref{eqn:Intermediate__Change_of_variables_V1_V2_using_USV}}= \mathrm{Diag}( S^{\mathrm{T}} \Sigma_2^{\mathrm{T}} U^{\mathrm{T}} U \tilde{V}_2 S ) 
	\eqcom{\ref{eqn:Definition_Sigma_squared},\ref{eqn:Block_matrix_parametrization_for_V2}}=\mathrm{Diag}
	\bigl(
		S^{\mathrm{T}}
		\begin{psmallmatrix}
			\Sigma^{\mathrm{T}} & 0 \\
			0 & 0 \\
		\end{psmallmatrix}
		\begin{psmallmatrix}
			A_2 & B_2 \\
			C_2 & D_2 \\
		\end{psmallmatrix}
		S
	\bigr)
	\nonumber \\ &
	= \mathrm{Diag}
	\bigl(
		S^{\mathrm{T}}
		\begin{psmallmatrix}
			\Sigma^{\mathrm{T}} A_2 & \Sigma^{\mathrm{T}} B_2 \\
			0 & 0 \\
		\end{psmallmatrix}
		S
	\bigr)
	\label{eqn:Diagonals_are_equal_2}
\end{align}
to the expression
\begin{align}
	\mathrm{Diag}( W_1 V_1^{\mathrm{T}} )
	= \mathrm{Diag}
	\bigl(
		S^{\mathrm{T}}
		\begin{psmallmatrix}
			\Sigma A_1^{\mathrm{T}} & \Sigma C_1^{\mathrm{T}} \\
			0 & 0 \\
		\end{psmallmatrix}
		S
	\bigr)
	,
	\label{eqn:Diagonals_are_equal_1}
\end{align}
the latter of hich can be shown in fashion similar to \eqref{eqn:Diagonals_are_equal_2}. Recall (i) that for any pair of matrices $A, B$, $\mathrm{Diag}(A^{\mathrm{T}}B) = \mathrm{Diag}(B^{\mathrm{T}}A)$. Therefore, 
\begin{align}
	2\mathrm{Diag}( V_2^{\mathrm{T}} W_2 )
	&
	\eqcom{i}= \mathrm{Diag}( W_1 V_1^{\mathrm{T}} ) + \mathrm{Diag}(W_2^{\mathrm{T}}V_2)
	\nonumber \\ &
	\eqcom{\ref{eqn:Diagonals_are_equal_2},\ref{eqn:Diagonals_are_equal_1}}= \mathrm{Diag}
	\bigl(
	S^{\mathrm{T}}
	\begin{psmallmatrix}
			\Sigma^{\mathrm{T}} A_2 + \Sigma A_1^{\mathrm{T}} & \Sigma^{\mathrm{T}} B_2+ \Sigma C_1^{\mathrm{T}}  \\
			0 & 0 \\
		\end{psmallmatrix}
	S
	\bigr)
	\nonumber \\ &
	\eqcom{ii}= \mathrm{Diag}
	\bigl(
	S^{\mathrm{T}}
	\begin{psmallmatrix}
			\Sigma ( A_2 + A_1^{\mathrm{T}} ) & \Sigma ( B_2+ C_1^{\mathrm{T}} )  \\
			0 & 0 \\
		\end{psmallmatrix}
	S
	\bigr)
	\label{eqn:minimization_intercom0}
\end{align}
where (ii) we have used that $\Sigma$ is a diagonal matrix.

Applying the matrix parametrization in \eqref{eqn:Block_matrix_parametrization_for_V2}, \eqref{eqn:Block_matrix_parametrization_for_V1} to \eqref{eqn:minimization_1}---by substituting \eqref{eqn:minimization_intercomp1}, \eqref{eqn:minimization_intercomp2} and \eqref{eqn:minimization_intercom0} into \eqref{eqn:minimization_1}---yields
\begin{align}
	\mathcal{H}^{\mathrm{opt}}_W
	=
	\begin{cases}
		\textnormal{minimum of}
		&
		2 \bigl(
		\pnorm{\Sigma A_1 + A_2 \Sigma}{\mathrm{F}}^2
		+ \pnorm{\Sigma B_1}{\mathrm{F}}^2
		+ \pnorm{C_2 \Sigma}{\mathrm{F}}^2
		\bigr)
		\\ &
		+ 2 \frac{ \lambda \rho \kappa_{\rho}}{f + \lambda \rho}
		\\ &
		- 4 \frac{\lambda \kappa_{\rho} \rho}{f + \lambda \rho} \mathrm{Tr}[ A_2 A_1 + B_2 C_1 ]
		\\ &
		- 4\mathrm{Tr}[ \Sigma_{\min}^{\mathrm{T}} (C_2B_1 + D_2D_1) ]
		\\ &
		+ 2 \lambda \mathrm{Tr}
		\Bigl[
			\mathrm{Diag}
			\bigl(
			S^{\mathrm{T}}
			\begin{psmallmatrix}
				\Sigma (A_2 + A_1^{\mathrm{T}} ) & \Sigma (B_2 + C_1^{\mathrm{T}}) \\
				0 & 0 \\
			\end{psmallmatrix}
			S
			\bigr)^2
		\Bigr]
		\\
		\textnormal{obtained over}
		&
		A_1, B_1, C_1, D_1; A_2, B_2, C_2, D_2
		\\
		\textnormal{subject to}
		&
		\pnorm{A_1}{\mathrm{F}}^{2}
		+ \pnorm{B_1}{\mathrm{F}}^{2}
		+ \cdots
		+ \pnorm{D_2}{\mathrm{F}}^{2}
		= 1,
		\\ &
		\bigl(
		U
		\begin{psmallmatrix}
			A_2 & B_2 \\
			C_2 & D_2 \\
		\end{psmallmatrix}
		S,
		S^{\mathrm{T}}
		\begin{psmallmatrix}
			A_1 & B_1 \\
			C_1 & D_1 \\
		\end{psmallmatrix}
		V
		\bigr)
		\in T^{\perp}_W M
		.
	\end{cases}
	\label{eqn:minimization_2}
\end{align}

\noindent
\emph{Step 5: The first bounds.}
We now start with bounding the objective function in \eqref{eqn:minimization_2}. Here, we utilize an auxiliary lemma---\refLemma{lemma:inequalities_GM}---twice. \refLemma{lemma:inequalities_GM} and its proof can be found in \refAppendixSection{secappendix:Proof_auxiliary_lemmas} .

First, we lower bound the second part of the first term in \eqref{eqn:minimization_2}. Denote the singular values of $\Sigma$ by $\chi_1, \ldots, \chi_{\rho}$; these satisfy $\chi_i > \chi_{i + 1}$ and $\chi_i^{2} = \sigma_{i} - (\lambda \kappa_{\rho} \rho) / (f + \lambda \rho)$ for $i=1, \ldots, \rho - 1$. From the fact that $\Sigma$ is an invertible, positive and diagonal matrix with minimal eigenvalue $\chi_{\rho}$, we conclude using (i) \refLemma{lemma:inequalities_GM}c that
\begin{equation}
	\pnorm{\Sigma B_1}{\mathrm{F}}^2 + \pnorm{C_2 \Sigma}{\mathrm{F}}^2
	\eqcom{i}\geq \chi_{\rho}^{2} 
	\bigl(
		\pnorm{B_1}{\mathrm{F}}^2 
		+ \pnorm{C_2}{\mathrm{F}}^2
	\bigr)
	=
	\Bigl(
		\sigma_{\rho}
		- \frac{\lambda \kappa_{\rho}\rho }{f + \lambda \rho}
	\Bigr)
	\bigl(
		\pnorm{B_1}{\mathrm{F}}^2
		+ \pnorm{C_2}{\mathrm{F}}^2
	\bigr).
	\label{eqn:minimization_intercom4}
\end{equation}

Next, we upper bound the next-to-last term in \eqref{eqn:minimization_2}. Recall that the largest singular value of $\Sigma_{\min}$ is $\sigma_{\rho+1}$; all of its singular values are in the set $\{ \sigma_{\rho+1}, \sigma_{\rho+2}, \ldots, \sigma_r, 0 \}$. Using (ii) the cyclic property of the trace, and (iii) \refLemma{lemma:inequalities_GM}b, we therefore have
\begin{align}
	\mathrm{Tr}[ \Sigma_{\min}^{\mathrm{T}} ( C_2B_1 + D_2D_1 ) ] 
	& 
	= 
	\mathrm{Tr}[ \Sigma_{\min}^{\mathrm{T}} C_2 B_1 ] 
	+ \mathrm{Tr}[ \Sigma_{\min}^{\mathrm{T}} D_2 D_1 ] 
	\nonumber \\ & 
	\eqcom{ii}= 
	\mathrm{Tr}[ C_2 B_1 \Sigma_{\min}^{\mathrm{T}} ] 
	+ \mathrm{Tr}[ D_2 D_1 \Sigma_{\min}^{\mathrm{T}} ] 	
	\nonumber \\ & 
	\eqcom{iii}\leq 
	\frac{ \sigma_{\rho+1} }{2} 
	\bigl(
		\mathrm{Tr}[ C_2 C_2^{\mathrm{T}} ]
		+ \mathrm{Tr}[ B_1 B_1^{\mathrm{T}} ]
		+ \mathrm{Tr}[ D_2 D_2^{\mathrm{T}} ]
		+ \mathrm{Tr}[ D_1 D_1^{\mathrm{T}} ]
	\bigr)
	\nonumber \\ &
	=
	\frac{ \sigma_{\rho+1} }{2} 
	\bigl(
		\pnorm{B_1}{\mathrm{F}}^2
		+ \pnorm{C_2}{\mathrm{F}}^2
		+ \pnorm{D_1}{\mathrm{F}}^2		
		+ \pnorm{D_2}{\mathrm{F}}^2 
	\bigr).
	\label{eqn:inequality_GM_minimization}
\end{align}

Using \eqref{eqn:minimization_intercom4} and \eqref{eqn:inequality_GM_minimization} to bound their respective terms in \eqref{eqn:minimization_2}, together with the constraint 
$
	\pnorm{A_1}{\mathrm{F}}^{2}
	+ \pnorm{B_1}{\mathrm{F}}^{2}
	+ \cdots
	+ \pnorm{D_2}{\mathrm{F}}^{2}
	= 1
$,
we obtain the following lower bound to \eqref{eqn:minimization_2}:
\begin{align}
	\mathcal{H}^{\mathrm{opt}}_W
	\geq
	\begin{cases}
		\textnormal{minimum of}
		&
		2 
		\pnorm{\Sigma A_1 + A_2 \Sigma}{\mathrm{F}}^2
		% \\ &
		+ 2 
		\bigl(
			\sigma_{\rho}
			- \frac{\lambda \kappa_{\rho}\rho }{f + \lambda \rho}
		\bigr)
		\bigl(
			\pnorm{B_1}{\mathrm{F}}^2
			+ \pnorm{C_2}{\mathrm{F}}^2
		\bigr)
		\\ &
		+ 2 \frac{ \lambda \rho \kappa_{\rho}}{f + \lambda \rho}
		\bigl( 
			\pnorm{A_1}{\mathrm{F}}^{2}
			+ \pnorm{B_1}{\mathrm{F}}^{2}
			+ \cdots
			+ \pnorm{D_2}{\mathrm{F}}^{2}
		\bigr)
		\\ &
		- 4 \frac{\lambda \kappa_{\rho} \rho}{f + \lambda \rho} \mathrm{Tr}[ A_2 A_1 + B_2 C_1 ]
		\\ &
		- 2 \sigma_{\rho+1} 
		\bigl(
			\pnorm{B_1}{\mathrm{F}}^2
			+ \pnorm{C_2}{\mathrm{F}}^2
			+ \pnorm{D_1}{\mathrm{F}}^2		
			+ \pnorm{D_2}{\mathrm{F}}^2 
		\bigr)
		\\ &
		+ 2 \lambda \mathrm{Tr}
		\bigl[
			\mathrm{Diag}
			\bigl(
			S^{\mathrm{T}}
			\begin{psmallmatrix}
				\Sigma (A_2 + A_1^{\mathrm{T}} ) & \Sigma (B_2 + C_1^{\mathrm{T}}) \\
				0 & 0 \\
			\end{psmallmatrix}
			S
			\bigr)^2
			\bigr]
		\\
		\textnormal{obtained over}
		&
		A_1, B_1, C_1, D_1; A_2, B_2, C_2, D_2
		\\
		\textnormal{subject to}
		&
		\pnorm{A_1}{\mathrm{F}}^{2}
		+ \pnorm{B_1}{\mathrm{F}}^{2}
		+ \cdots
		+ \pnorm{D_2}{\mathrm{F}}^{2}
		= 1,
		\\ &
		\bigl(
		U
		\begin{psmallmatrix}
			A_2 & B_2 \\
			C_2 & D_2 \\
		\end{psmallmatrix}
		S,
		S^{\mathrm{T}}
		\begin{psmallmatrix}
			A_1 & B_1 \\
			C_1 & D_1 \\
		\end{psmallmatrix}
		V
		\bigr)
		\in T^{\perp}_W M
		.
	\end{cases}
	\label{eqn:minimization_2_intercom}
\end{align}

\noindent
\emph{Step 6: Splitting the minimization over two subspaces, with two quadratic forms.}
We examine now \eqref{eqn:minimization_2_intercom} closely. We split the objective function into the sum of
\begin{align}
	\mathcal{B}_1( B_1, C_2, D_1, D_2 ) 
	&	
	=
	2 
	(
		\sigma_{\rho}
		- \sigma_{\rho+1}
	)
	\bigl(
		\pnorm{B_1}{\mathrm{F}}^2
		+ \pnorm{C_2}{\mathrm{F}}^2
	\bigr)
	% \nonumber \\ &
	+
	2
	\Bigl(
		\frac{ \lambda \rho \kappa_{\rho}}{f + \lambda \rho}
		- \sigma_{\rho+1} 
	\Bigr)
	\bigl( 
		\pnorm{D_1}{\mathrm{F}}^{2}
		+ \pnorm{D_2}{\mathrm{F}}^{2}
	\bigr)
	\label{eqn:Definition__mathcalB1_B1C2D1D2}
\end{align}
and
\begin{align}
	&
	\mathcal{B}_2( A_1, A_2, B_2, C_1 )
	=
	2 \pnorm{\Sigma A_1 + A_2 \Sigma}{\mathrm{F}}^2
	+ 2 \frac{ \lambda \rho \kappa_{\rho}}{f + \lambda \rho}
	\Bigl( 
		\pnorm{A_1}{\mathrm{F}}^{2}
		+ \pnorm{A_2}{\mathrm{F}}^{2}		
		- 2 \mathrm{Tr}[ A_2 A_1 ]
		\nonumber \\ &
		+ \pnorm{B_2}{\mathrm{F}}^{2}		
		+ \pnorm{C_1}{\mathrm{F}}^{2}
		- 2 \mathrm{Tr}[ B_2 C_1 ]
	\Bigr)	
	+ 2 \lambda \mathrm{Tr}
	\bigl[
		\mathrm{Diag}
		\bigl(
		S^{\mathrm{T}}
		\begin{psmallmatrix}
			\Sigma (A_2 + A_1^{\mathrm{T}} ) & \Sigma (B_2 + C_1^{\mathrm{T}}) \\
			0 & 0 \\
		\end{psmallmatrix}
		S
		\bigr)^2
	\bigr].	
	\label{eqn:Definition__mathcalB2_A1A2B2C1}
\end{align} 
Also observe in \eqref{eqn:Definition__mathcalB1_B1C2D1D2} that the coefficients in front of $\pnorm{B_1}{\mathrm{F}}^2 + \pnorm{C_2}{\mathrm{F}}^2$ and $\pnorm{D_1}{\mathrm{F}}^2 + \pnorm{D_2}{\mathrm{F}}^2$ are both strictly positive; visit \refSection{sec:Characterization_of_the_set_of_global_minimizers} and recall \eqref{eqn:Definition_rho_and_alpha} specifically.

In the dimensions provided in \eqref{eqn:Block_matrix_parametrization_for_V2} and \eqref{eqn:Block_matrix_parametrization_for_V1}, let
\begin{align}
	\mathcal{V}_1
	= \mathrm{span}
	\Bigl\{ 
		&	
		\bigl(
		U
		\begin{psmallmatrix}
			0 & 0 \\
			C_2 & 0 \\
		\end{psmallmatrix}
		S,
		S^{\mathrm{T}}
		\begin{psmallmatrix}
			0 & 0 \\
			0 & 0 \\
		\end{psmallmatrix}
		V
		\bigr),
		\bigl(
		U
		\begin{psmallmatrix}
			0 & 0 \\
			0 & D_2 \\
		\end{psmallmatrix}
		S,
		S^{\mathrm{T}}
		\begin{psmallmatrix}
			0 & 0 \\
			0 & 0 \\
		\end{psmallmatrix}
		V
		\bigr),
		\label{eqn:Span_V1}
		\\ &		
		\bigl(
		U
		\begin{psmallmatrix}
			0 & 0 \\
			0 & 0 \\
		\end{psmallmatrix}
		S,
		S^{\mathrm{T}}
		\begin{psmallmatrix}
			0 & B_1 \\
			0 & 0 \\
		\end{psmallmatrix}
		V
		\bigr),
		\bigl(
		U
		\begin{psmallmatrix}
			0 & 0 \\
			0 & 0 \\
		\end{psmallmatrix}
		S,
		S^{\mathrm{T}}
		\begin{psmallmatrix}
			0 & 0 \\
			0 & D_1 \\
		\end{psmallmatrix}
		V
		\bigr)
		\nonumber \\ &
		: 
		% \text{ for any } C_2, D_2, B_1, D_1
		B_1 \in \R^{\rho \times (h-\rho)},
		C_2 \in \R^{(e-\rho) \times \rho}, 
		D_1 \in \R^{(f-\rho) \times (h-\rho)},
		D_2 \in \R^{(e-\rho) \times (f-\rho)}
	\Bigr\}
	.
	\nonumber
\end{align}
Note now that $\mathcal{V}_1 \subseteq \mathrm{T}_{W}^{\perp} M$. Indeed, when we examine the definition of $\mathrm{T}_{W}^{\perp} M$ in \refLemma{lemma:tangent_cotangent_M}, we can see that nearly every condition pertains to $A_1, A_2, B_2, C_1$ only and not $B_1, C_2, D_1, D_2$---the only exception is possibly the condition $\mathrm{Diag}(V_2^{\mathrm{T}} W_2) = \mathrm{Diag}(W_1 V_1^{\mathrm{T}})$. But in fact in \eqref{eqn:Diagonals_are_equal_2} and \eqref{eqn:Diagonals_are_equal_1}, we can see that the matrices $B_1, C_2, D_1, D_2$ do not appear in this constraint. Hence, any $v \in \mathcal{V}_1$ will satisfy the conditions in the definition of $\mathrm{T}_{W}^{\perp} M$ in \refLemma{lemma:tangent_cotangent_M}. This observation yields therefore that $\mathcal{V}_1 \subseteq \mathrm{T}_{W}^{\perp} M$.

Consider now the orthogonal complement of $\mathcal{V}_1$ in $\mathrm{T}^{\perp}_{W} M$ given by 
\begin{equation}
	\mathcal{V}_2 
	= 
	\Bigl\{ 
		\bigl(
		U
		\begin{psmallmatrix}
			A_2 & B_2 \\
			C_2 & D_2 \\
		\end{psmallmatrix}
		S,
		S^{\mathrm{T}}
		\begin{psmallmatrix}
			A_1 & B_1 \\
			C_1 & D_1 \\
		\end{psmallmatrix}
		V
		\bigr)
		\in \mathrm{T}_{W}^{\perp} M
		: 
		B_1, C_2, D_1, D_2 = 0
	\Bigr\}
	\cap 
	\mathrm{T}^{\perp}_{W} M
	.
	\label{eqn:Definition_subspace_V2}
\end{equation}

From the definitions of $\mathcal{V}_1$ and $\mathcal{V}_2$ we have:
\begin{itemize}[topsep=2pt,itemsep=2pt,partopsep=2pt,parsep=2pt,leftmargin=0pt]
	\item[--] $\mathcal{V}_1 \subseteq \mathrm{T}_{W}^{\perp} M$,
	\item[--] $\mathcal{V}_1 \oplus \mathcal{V}_2 = \mathrm{T}_{W}^{\perp} M$, and
	\item[--] $\mathcal{V}_1 \perp \mathcal{V}_2$.
\end{itemize}
Using \eqref{eqn:Definition__mathcalB1_B1C2D1D2}--\eqref{eqn:Definition_subspace_V2} we can then lower bound
\begin{align}
	\mathcal{H}^{\mathrm{opt}}_W
	\geq
	\begin{cases}
		\textnormal{minimum of}
		&
		\xi
		\bigl( 
			\pnorm{B_1}{\mathrm{F}}^{2}
			+ \pnorm{C_2}{\mathrm{F}}^{2}		
			+ \pnorm{D_1}{\mathrm{F}}^{2}
			+ \pnorm{D_2}{\mathrm{F}}^{2}
		\bigr)
		+ \mathcal{B}_2( A_1, A_2, B_2, C_1 )
		\\
		\textnormal{obtained over}
		&
		( 0, 0, B_1, 0, C_2, 0, D_1, D_2 ) \in \mathcal{V}_1,
		\\ 
		&
		( A_1, A_2, 0, B_2, 0, C_1, 0, 0 ) \in \mathcal{V}_2,
		\\
		\textnormal{subject to}
		&
		\pnorm{A_1}{\mathrm{F}}^{2}
		+ \pnorm{B_1}{\mathrm{F}}^{2}
		+ \cdots
		+ \pnorm{D_2}{\mathrm{F}}^{2}
		= 1
		.
	\end{cases}
\end{align}
Here,
\begin{equation}
	\xi
	= 2 \min
	\Bigl\{
		\frac{ \lambda \rho \kappa_{\rho}}{f + \lambda \rho}
		- \sigma_{\rho+1} 
		,
		\sigma_\rho - \sigma_{\rho+1}		
	\Bigr\}
	> 0.
	\label{eqn:Definition_zeta}
\end{equation}

Now critically, note that $\mathcal{B}_1$ and $\mathcal{B}_2$ are quadratic forms, i.e., for any $\eta \in \realNumbers$, $\mathcal{B}_1(\eta \cdot) = \eta^2 \mathcal{B}_1(\cdot)$ and $\mathcal{B}_2(\eta \cdot) = \eta^2 \mathcal{B}_2(\cdot)$. We can therefore apply \refLemma{lemma:decoupling_minimization_subspace}, to find that
\begin{align}
	\mathcal{H}^{\mathrm{opt}}_W
	\geq
	\min
	\Bigl\{
		\xi
		,
		\min_{
			\substack{
				\pnorm{v_2}{\mathrm{F}}^2 = 1 \\
				v_2 \in \mathcal{V}_2
			}
		} 
		\mathcal{B}_2(v))
	\Bigr\}
	.
	\label{eqn:minimization_decoupling_lowerbounds}
\end{align}

\noindent
\emph{Step 7: Lower bounding the minimum of $\mathcal{B}_2$.}
We will now prove that
\begin{align}
	&
	\min_{
		\substack{
			\pnorm{v_2}{\mathrm{F}}^2 = 1 \\
			v_2 \in \mathcal{V}_2
		}
	} 
	\mathcal{B}_2(v)
	=	
	\min_{
		\substack{
			\pnorm{v_2}{\mathrm{F}}^2 = 1 \\
			v_2 \in \mathcal{V}_2
		}
	} 	
	2 \pnorm{\Sigma A_1 + A_2 \Sigma}{\mathrm{F}}^2
	+ 2 \frac{ \lambda \rho \kappa_{\rho}}{f + \lambda \rho}
	\Bigl( 
		\pnorm{A_1}{\mathrm{F}}^{2}
		+ \pnorm{A_2}{\mathrm{F}}^{2}		
		- 2 \mathrm{Tr}[ A_2 A_1 ]
		\nonumber \\ &
		+ \pnorm{B_2}{\mathrm{F}}^{2}		
		+ \pnorm{C_1}{\mathrm{F}}^{2}
		- 2 \mathrm{Tr}[ B_2 C_1 ]
	\Bigr)	
	+ 2 \lambda \mathrm{Tr}
	\bigl[
		\mathrm{Diag}
		\bigl(
		S^{\mathrm{T}}
		\begin{psmallmatrix}
			\Sigma (A_2 + A_1^{\mathrm{T}} ) & \Sigma (B_2 + C_1^{\mathrm{T}}) \\
			0 & 0 \\
		\end{psmallmatrix}
		S
		\bigr)^2
	\bigr]	
	\label{eqn:minimization_3}
\end{align}
has a strictly positive lower bound. Note that \eqref{eqn:minimization_3} can only equal zero if and only if at every solution $(A_1^*,A_2^*,B_2^*,C_1^*)$,
\begin{itemize}[topsep=2pt,itemsep=2pt,partopsep=2pt,parsep=2pt,leftmargin=0pt]
	\item[C1.] $2 \pnorm{\Sigma A_1^* + A_2^* \Sigma}{\mathrm{F}}^2 = 0$, and
	\item[C2.] 
	$
	2 \frac{ \lambda \rho \kappa_{\rho}}{f + \lambda \rho}
	\Bigl( 
		\pnorm{A_1^*}{\mathrm{F}}^{2}
		+ \pnorm{A_2^*}{\mathrm{F}}^{2}		
		- 2 \mathrm{Tr}[ A_2^* A_1^* ]
		+ \pnorm{B_2^*}{\mathrm{F}}^{2}		
		+ \pnorm{C_1^*}{\mathrm{F}}^{2}
		- 2 \mathrm{Tr}[ B_2^* C_1^* ]
	\Bigr)
	= 0
	$, and
	\item[C3.]
	$
	2 \lambda \mathrm{Tr}
	\bigl[
		\mathrm{Diag}
		\bigl(
		S^{\mathrm{T}}
		\begin{psmallmatrix}
			\Sigma (A_2^* + (A_1^*)^{\mathrm{T}} ) & \Sigma (B_2^* + (C_1^*)^{\mathrm{T}}) \\
			0 & 0 \\
		\end{psmallmatrix}
		S
		\bigr)^2
	\bigr]
	= 0
	$.
\end{itemize}
This is because both 
\begin{equation}
	\pnorm{A_1}{\mathrm{F}}^2 + \pnorm{A_2}{\mathrm{F}}^2 - 2\mathrm{Tr}(A_2A_1) 
	\geq 0
	\quad
	\textnormal{and} 
	\quad
	\pnorm{B_2}{\mathrm{F}}^2 + \pnorm{C_1}{\mathrm{F}}^2 - 2\mathrm{Tr}(B_2C_1) 
	\geq 0
	\label{eqn:minimization_3prime}
\end{equation}
are nonnegative: see \refLemma{lemma:inequalities_GM}a in \refAppendixSection{secappendix:Proof_auxiliary_lemmas}. We next prove that if conditions C1--C3 hold, then necessarily $(A_1^*,A_2^*,B_1^*,C_2^*) = 0$. Consequently, we must then have a positive lower bound as there are no such solutions in the optimization domain of \eqref{eqn:minimization_3}.

Condition C1 equals zero if and only if $A_2^* \Sigma + \Sigma A_1^* = 0$. This is a standard property of a norm. Condition C2 equals zero if and only if $(A_1^*)^{\mathrm{T}} = A_2^*$, $B_2^* = (C_1^*)^{\mathrm{T}}$. This is an additional consequence of \refLemma{lemma:inequalities_GM}a. Equivalent to conditions C1, C2 are therefore the statements that
\begin{equation}
	A_2^* \Sigma + \Sigma A_1^* = 0, 
	\quad
	(A_1^*)^{\mathrm{T}} = A_2^*
	\quad
	\textnormal{and} 
	\quad
	B_2^* = (C_1^*)^{\mathrm{T}}.
	\label{eqn:Equivalent_conditions_of_C1_and_C2_as_prerequisites_for_bilinear_form_argument}
\end{equation}
Condition C3 is equivalent to
\begin{align}
	&
	2 \lambda \mathrm{Tr}
	\bigl[
		\mathrm{Diag}
		\bigl(
		S^{\mathrm{T}}
		\begin{psmallmatrix}
			\Sigma ( A_2^* + (A_1^*)^{\mathrm{T}} ) & \Sigma ( B_2^* + (C_1^*)^{\mathrm{T}} ) \\
			0 & 0 \\
		\end{psmallmatrix}
		S
		\bigr)^2
	\bigr]
	\nonumber \\ &	
	= 
	2 \lambda \mathcal{T}_{W}
	\bigl( 
		A_2^* + (A_1^*)^{\mathrm{T}}, B_2^* + (C_1^*)^{\mathrm{T}}, 
		A_2^* + (A_1^*)^{\mathrm{T}}, B_2^* + (C_1^*)^{\mathrm{T}}
	\bigr)
	= 0
\end{align}
by \refDefinition{def:bilinear_form}. By $\bar{T}_{W}$'s definition in \eqref{eqn:definition_T_bar_map} and $\mathcal{T}_W$'s definition in \eqref{eqn:Definition__mathcalTW_bilinear_form}, we must then have that 
\begin{equation}
	\bar{T}_W
	\bigl( 
		A_2^* + (A_1^*)^{\mathrm{T}}, 
		B_2^* + (C_1^*)^{\mathrm{T}} 
	\bigr)
	=
	\mathrm{Diag}
	\bigl(
		S^{\mathrm{T}}
		\begin{psmallmatrix}
			\Sigma ( A_2^* + (A_1^*)^{\mathrm{T}} ) & \Sigma ( B_2^* + (C_1^*)^{\mathrm{T}} ) \\
			0 & 0 \\
		\end{psmallmatrix}
		S
	\bigr)
	= 0
	\label{eqn:Equivalent_condition_of_C3_as_prerequisite_for_bilinear_form_argument}
\end{equation} 
also. We have proven that if conditions C1--C3 are all met, then all prerequisites of \refLemma{lemma:optimization_problem_cotangent_auxiliary} are met; compare \eqref
{eqn:Equivalent_conditions_of_C1_and_C2_as_prerequisites_for_bilinear_form_argument} to \eqref{eqn:Lemmas_assumptions_on_A1A2B2C1} and \eqref{eqn:Equivalent_condition_of_C3_as_prerequisite_for_bilinear_form_argument} to \eqref{eqn:lemma_bilinear_form_is_in_tangent_space}. \refLemma{lemma:optimization_problem_cotangent_auxiliary} implies that $( A_1^*, A_2^*, B_2^*, C_1^* ) = 0$. 

We finally form the lower bound. We have proven that there is no solution in the optimization domain that satisfies conditions C1--C3 simultaneously. Consequently, 
\begin{equation}
	\zeta_W 
	=
	\min_{
		\substack{
			\pnorm{v_2}{\mathrm{F}}^2 = 1 \\
			v_2 \in \mathcal{V}_2
		}
	} 
	\mathcal{B}_2(v)
	> 0.
	\label{eqn:Positivity_of_zetaW}
\end{equation}
Substituting \eqref{eqn:Positivity_of_zetaW} into \eqref{eqn:minimization_decoupling_lowerbounds}, we obtain that
\begin{equation}
	\mathcal{H}^{\mathrm{opt}}_W
	\geq
	\min 
	\Big\{ 
		\zeta_{W}
		, 
		2 \frac{\lambda \kappa_{\rho}\rho }{f + \lambda \rho} - 2\sigma_{\rho + 1}		
		, 
		2( \sigma_{\rho} - \sigma_{\rho + 1})
	\Bigr\}.
\end{equation}
Because $\sigma_{\rho} \geq {\lambda \kappa_{\rho}\rho } / ( {f + \lambda \rho} )$, we also have that
\begin{equation}
	\mathcal{H}^{\mathrm{opt}}_W
	\geq 
	\min 
	\Big\{ 
		\zeta_{W}
		, 
		2 \frac{\lambda \kappa_{\rho}\rho }{f + \lambda \rho} - 2\sigma_{\rho + 1}
	\Bigr\}
	.
	\label{eqn:minimization_lower_bound_unknown}
\end{equation}
This concludes the case that $\rho < f$.

Now consider the case $\rho = f$. The proof is mostly the same except for the fact that in Lemmas~\ref{lemma:tangent_space_M_b}--\ref{lemma:optimization_problem_cotangent_auxiliary}, all coordinates indicated to `have dimension $f - \rho = 0$' need to be removed from the subsequent calculations. Furthermore, we then also use that $\mathcal{W}^*$ equals the rank-$f$ approximation of $S_{\alpha}[Y]$ as $f = \rho \leq r$---recall the discussion below \eqref{eqn:optimal_diagonal_main}. Concretely, the matrices $B_2, C_1, D_1, D_2$ do not appear in the calculations and ultimately this will yield functions $\mathcal{B}_1$, $\mathcal{B}_2$ different from \eqref{eqn:Definition__mathcalB1_B1C2D1D2}, \eqref{eqn:Definition__mathcalB2_A1A2B2C1}, respectively. Specifically, we find the function
\begin{align}
	\mathcal{B}_1( B_1, C_2 ) 
	&	
	=
	2 
	(
		\sigma_{\rho}
		- \sigma_{\rho+1}
	)
	\bigl(
		\pnorm{B_1}{\mathrm{F}}^2
		+ \pnorm{C_2}{\mathrm{F}}^2
	\bigr)
	\label{eqn:Definition__mathcalB1_D1D2_case_rho_equals_f}
\end{align}
and similarly the function
\begin{align}
	\mathcal{B}_2( A_1, A_2 )
	&	
	=
	2 \pnorm{\Sigma A_1 + A_2 \Sigma}{\mathrm{F}}^2
	+ 2 \frac{ \lambda \rho \kappa_{\rho}}{f + \lambda \rho}
	\Bigl( 
		\pnorm{A_1}{\mathrm{F}}^{2}
		+ \pnorm{A_2}{\mathrm{F}}^{2}		
		- 2 \mathrm{Tr}[ A_2 A_1 ]
	\Bigr)	
	\nonumber \\ &
	\phantom{=}
	+ 2 \lambda \mathrm{Tr}
	\bigl[
		\mathrm{Diag}
		\bigl(
		S^{\mathrm{T}}
			\Sigma (A_2 + A_1^{\mathrm{T}} ) 
		S
		\bigr)^2
	\bigr].	
	\label{eqn:Definition__mathcalB2_A1A2_case_rho_equals_f}
\end{align} 
Observe now that in \eqref{eqn:Definition__mathcalB1_D1D2_case_rho_equals_f} there is only one quadratic term involving $B_1$ and $C_2$, and its coefficient results in a replacement for \eqref{eqn:Definition_zeta}: 
\begin{equation}
	\xi 
	= 2 
	(
		\sigma_\rho - \sigma_{\rho+1}
	)
	.
\end{equation}
These changes carry over to \eqref{eqn:minimization_lower_bound_unknown}, and the minimum becomes
\begin{equation}
	\mathcal{H}^{\mathrm{opt}}_W
	\geq 
	\min 
	\Big\{ 
		\zeta_{W}
		, 
		2(\sigma_{\rho} - \sigma_{\rho + 1})
	\Bigr\}.
	\label{eqn:minimization_lower_bound_unknown_second}
\end{equation}
This concludes the case that $\rho = f$.

Note finally that if $\rho = r$, then $\sigma_{\rho + 1} = 0$ by \eqref{eqn:Definition_Sigma_squared}. This concludes the proof.
\end{proof}

We can improve the result in \refLemma{lemma:bound_hessian} in case $\rho = 1$. Note that whenever $e = 1$, then necessarily $\rho = 1$ also by \eqref{eqn:optimal_diagonal_main}. In this case we can explicitly calculate the minima and we do not need \refAssumption{ass:nonvanishing_maintext}. Note that this case occurs when $p$ is either sufficiently small or when we have rank one data (that is, when $r = 1$). 

\begin{lemma}
	Suppose that \refAssumption{ass:eigenvalues_different_maintext} holds and that $\rho = 1$. If $W \in M_b \backslash \mathrm{Sing}(M_b)$, then \refLemma{lemma:bound_hessian} holds with 
	\begin{equation}
		\omega 
		= 
		\begin{cases}
			\frac{ 2\lambda \sigma }{ f + \lambda }
			&
			\textnormal{if } r = 1, 
			\\ 
			\frac{ 2\lambda \sigma_{1} }{ f + \lambda } - 2 \sigma_2
			&
			\textnormal{otherwise}
		\end{cases}	
	\end{equation}
	instead.
	\label{lemma:bound_hessian_case_1}
\end{lemma}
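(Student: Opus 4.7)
The plan is to specialize the proof of \refLemma{lemma:bound_hessian} to the case $\rho=1$ and to exploit the finer information available there. In this regime $\Sigma = \chi_1 \in \realNumbers$ is a single scalar with $\chi_1^2 = f\sigma_1/(f+\lambda)$, the skew block $X \in \mathrm{Skew}(\realNumbers^{1 \times 1})$ vanishes automatically, and $A_1, A_2 \in \realNumbers$ reduce to scalars. The crucial input is \refLemma{lemma:case_rho_is_one_M_b}: the first row of $S$ satisfies $|S_{1j}|^2 = 1/f$ for every $j$, which renders every diagonal computation involving $S$ completely explicit and automatically validates \refAssumption{ass:nonvanishing_maintext}, as already observed in \refProposition{prop:Mb_is_almost_everywhere_nonsingular}.

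First I would verify that $\ker \mathrm{D}_W T = \{0\}$ in this setting, so that $\mathrm{T}_W M_b = \{0\}$ and $\mathrm{T}_W M = \mathrm{D}_W \pi(\mathcal{H})$ (consistent with $M_b$ being a discrete union of points). From the expression in \eqref{eqn:Bilinear_form_Mb_implicit_M_Of}, the diagonal constraint, after dividing the $j$-th entry by $S_{1j} \neq 0$, reduces to the vanishing of $(0, \chi_1^2 E) S e_j$ for every $j$; invertibility of $S$ then forces $E = 0$. Consequently the inner-product condition of \refLemma{lemma:tangent_cotangent_M} parameterized by $(X, B) \in \ker \mathrm{D}_W T$ is vacuous and $\mathrm{T}_W^\perp M$ coincides with $(\mathrm{D}_W \pi(\mathcal{H}))^\perp$. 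The same diagonal computation shows that the remaining orthogonality constraint $\mathrm{Diag}(V_2^\mathrm{T} W_2) = \mathrm{Diag}(W_1 V_1^\mathrm{T})$, restricted to the subspace $\mathcal{V}_2$ of \eqref{eqn:Definition_subspace_V2}, is equivalent to the pair of linear relations $A_2 = A_1$ and $B_2 = C_1^\mathrm{T}$.

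Finally I would substitute these relations into the quadratic form $\mathcal{B}_2$ of \eqref{eqn:Definition__mathcalB2_A1A2B2C1}: the two Gram-type differences $\pnorm{A_1}{\mathrm{F}}^2 + \pnorm{A_2}{\mathrm{F}}^2 - 2\mathrm{Tr}[A_2 A_1]$ and $\pnorm{B_2}{\mathrm{F}}^2 + \pnorm{C_1}{\mathrm{F}}^2 - 2\mathrm{Tr}[B_2 C_1]$ collapse to zero, the Frobenius term reduces to $4 \chi_1^2 A_1^2$, and evaluating the residual $\mathrm{Diag}(\cdots)^2$ term via $|S_{1j}|^2 = 1/f$ together with the orthogonality of $S$ yields $(8 \lambda \chi_1^2 / f)(A_1^2 + \pnorm{B_2}{\mathrm{F}}^2)$. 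Under the normalization $2A_1^2 + 2\pnorm{B_2}{\mathrm{F}}^2 = 1$ on $\mathcal{V}_2$, a one-parameter minimization gives $\zeta_W = 4\lambda \sigma_1 / (f+\lambda)$, attained at $A_1 = 0$. Comparing with the $\mathcal{V}_1$-bound $\xi = 2(\lambda\sigma_1/(f+\lambda) - \sigma_2)$ inherited from \eqref{eqn:Definition_zeta} (with the convention $\sigma_2 = 0$ when $r = 1$), one verifies $\xi \leq \zeta_W$ unconditionally, so the overall lower bound $\min\{\xi, \zeta_W\} = \xi$ is precisely the claimed $\omega$; the main obstacle is the bookkeeping in this paragraph, specifically the evaluation of the diagonal trace term via $|S_{1j}|^2 = 1/f$, while the rest of the argument is an algebraic simplification of the computations already carried out for \refLemma{lemma:bound_hessian}.
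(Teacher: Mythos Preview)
Your proposal is correct and follows essentially the same route as the paper: specialize the proof of \refLemma{lemma:bound_hessian} to $\rho=1$, use \refLemma{lemma:case_rho_is_one_M_b} to get $|S_{1j}|^2=1/f$, deduce that $\ker\mathrm{D}_W T=\{0\}$, and compute $\zeta_W$ explicitly before comparing with $\xi$.

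There is one minor but genuine refinement in your argument. From the diagonal constraint $\mathrm{Diag}(V_2^{\mathrm T}W_2)=\mathrm{Diag}(W_1V_1^{\mathrm T})$ you extract \emph{both} $A_1=A_2$ and $B_2=C_1^{\mathrm T}$ on $\mathcal V_2$, whereas the paper only takes the trace of that constraint to obtain $A_1=A_2$ and then minimizes $\mathcal B_2$ over the larger set $\{(a,a,B_2,C_1)\}$, invoking an auxiliary minimization lemma (\refLemma{lemma:minimization_case_rho_1_explicitbound}) to handle the two free vectors $B_2,C_1$. Your sharper use of the constraint collapses the problem to a single parameter and avoids that lemma entirely; both computations yield the same $\zeta_W=4\lambda\sigma_1/(f+\lambda)$, and the final comparison $\xi\le\zeta_W$ is identical.
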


\begin{proof}
Use \refProposition{prop:Mb_is_almost_everywhere_nonsingular} to note immediately that \refAssumption{ass:nonvanishing_maintext} is also met due to us here assuming \refAssumption{ass:eigenvalues_different_maintext} and $\rho = 1$.
	
Recall now from \eqref{eqn:Intermediate__TWMbbar_diffeomorpishm_with_of_0rho_ofmrho} and \eqref{eqn:Intermediate__ofmod0rhoofmrho} that we are able to characterize elements of $\mathrm{T}_{W} \bar{M}_b$ using pairs $(X, E)$ where $X \in \mathrm{Skew}(\R^{\rho \times \rho})$ and $E \in \R^{\rho \times (f - \rho)}$. For the particular case $\rho = 1$, we have that $\mathfrak{o}(1) = \{ 0 \}$ and $E \in \R^{f-1}$. Conclude therefore from \eqref{eqn:Bilinear_form_Mb_implicit_M_Of} that if 
\begin{equation}
	\mathrm{Diag} 
	\bigl(
		S^{\mathrm{T}} 					
		\begin{psmallmatrix}
			0 & \eta^2 E  \\
			0 & 0 \\
		\end{psmallmatrix}
		S 
	\bigr) 
	= 0,
	\label{eqn:case_1_satisfies_ass_comp}
\end{equation}
then $(0,E) \in \ker \mathrm{D}_{W} T$. Here, $\eta^2 = \Sigma^2 = f \sigma / (f + \lambda) \in \R$. Note now that in fact $\eta \neq 0$ under \refAssumption{ass:eigenvalues_different_maintext}: this allows us next to argue that in the present case $\rho = 1$ and $\eta \neq 0$, \eqref{eqn:case_1_satisfies_ass_comp} holds if and only if $E = 0$. This critical observation for the case $\rho = 1$ allows us to extend \refLemma{lemma:bound_hessian}, since we will see that the term in \eqref{eqn:case_1_satisfies_ass_comp} is proportional to $\pnorm{E}{\mathrm{F}}^2$. This allows us to explicitly compute $\zeta_{W}$.

\noindent
\emph{Proof that \eqref{eqn:case_1_satisfies_ass_comp} holds if and only if $E = 0$.}
Let $W = (U \Sigma_2 S, S^{\mathrm{T}} \Sigma_1 V) \in M_{b} \backslash \mathrm{Sing}(M_b)$---by \refLemma{lemma:assumption_holds_if_nonzero} $M_b \backslash \mathrm{Sing}(M_b) \neq \emptyset$---and refer to the rows of $S$ as $S_{1 \cdot}, \ldots, S_{f \cdot}$. By \refLemma{lemma:case_rho_is_one_M_b}, these satisfy
\begin{equation}
	|S_{1j}| 
	= \frac{1}{\sqrt{f}}
	\quad
	\textnormal{for}
	\quad
	j = 1, \ldots, f
	.
	\label{eqn:case_1_satisfies_ass_orth_condition}
\end{equation} 
We therefore have for any $s \neq 0$,
\begin{align}
	\mathrm{Tr} 
	\Bigl[ 
		\mathrm{Diag} 
		\bigl( 
			S^{\mathrm{T}} 								\begin{psmallmatrix}
				0 & s E  \\
				0 & 0 \\
			\end{psmallmatrix}
			S 
		\bigr)^2 
	\Bigr] 
	& 
	\eqcom{\ref{eqn:case_1_satisfies_ass_orth_condition}}= 
	\mathrm{Tr} 
	\Bigl[ 
		\mathrm{Diag} 
		\Bigl(
			\begin{psmallmatrix}
			0 & s E / f \\
			\cdots & \cdots \\
			0 & s E / f \\
		\end{psmallmatrix}
		(S_{\cdot 1}, \ldots, S_{\cdot f}) 
		\Bigr)^2 
	\Bigr]
	\nonumber \\ & 
	=
	\sum_{i=1}^{f} \frac{s^2}{f} 
	\bigl\langle 
		(0, E), 
		S_{\cdot i}^{\mathrm{T}} 
	\bigr\rangle^2	 
	\eqcom{i}=
	\frac{s^2}{f} \pnorm{E}{2}^2
	\label{eqn:case_1_minimization_intercom0}
\end{align}
because (i) the columns of $S$ form an orthonormal basis and we could therefore use Parseval's identity. Consequently,
\begin{equation}
	\mathrm{Diag} 
	\bigl(
		S^{\mathrm{T}} 					
		\begin{psmallmatrix}
			0 & s E  \\
			0 & 0 \\
		\end{psmallmatrix}
		S 
	\bigr) 
	= 0
\end{equation}
if and only if $E = 0$.

\noindent
\emph{Modification of step 4:}
The equality of \eqref{eqn:Diagonals_are_equal_2} and \eqref{eqn:Diagonals_are_equal_1} implies that
\begin{equation}
	\mathrm{Diag} 
	\bigl( 
		S^{\mathrm{T}} 
		\begin{psmallmatrix}
			\eta A_2 & \eta B_2\\
			0 & 0 \\
		\end{psmallmatrix}
		S 
	\bigr)
	=
	\mathrm{Diag} 
	\bigl( 
		S^{\mathrm{T}}
		\begin{psmallmatrix}
			\eta A_1 ^{\mathrm{T}} & \eta C_1 ^{\mathrm{T}} \\
			0 & 0 \\
		\end{psmallmatrix}
		S 
	\bigr)
	.	
	\label{eqn:Diagonals_equal_case_1}
\end{equation}
By taking traces in \eqref{eqn:Diagonals_equal_case_1}, we find that $\eta A_1 = \eta A_2$. Because in the present case we have $A_1, A_2 \in \R$, we can restrict to the solutions of the form $A_1 = A_2 = a$ say as $\eta \neq 0$. Thus, we can optimize over vectors of the type $v = (a, a, B_2, C_1)$ in a similar way as explained in Step 6 of \refLemma{lemma:bound_hessian}'s proof. Next, we conduct the minimization (mimicking Step 7 of \refLemma{lemma:bound_hessian}'s proof).

\noindent
\emph{Modification of step 7: Lower bounding the minimum of $\mathcal{B}_2$.}
In the present setting,
\begin{equation}
	\pnorm{A_1}{\mathrm{F}}^2
	+ \pnorm{A_2}{\mathrm{F}}^2 
	- 2\mathrm{Tr}[ A_2A_1 ] 
	= |a|^2 + |a|^2 - 2 a^2
	= 0
	.
\end{equation}
Furthermore, $\kappa_{\rho} = \sigma$ since $\rho = 1$. The optimization problem in \eqref{eqn:minimization_3} therefore reduces to
\begin{align}
	\min_{
		\substack{
			\norm{v} = 1 \\
			v \in \mathcal{V} \\
		}
	}
	\mathcal{B}_2(v) 
	=
	\min_{
		\substack{
			\norm{v} = 1 \\
			v \in \mathcal{V} \\
		}
	}
	& 	
	8 \eta^2 a^2 
	+ 2 \frac{ \lambda \sigma}{f + \lambda}
	\bigl(
		\pnorm{B_1}{\mathrm{F}}^2 
		+ \pnorm{C_1}{\mathrm{F}}^2
		- 2\mathrm{Tr}(B_2C_1)
	\bigr) 
	\nonumber \\ & 
	+ 2\lambda 
	\mathrm{Tr} 
	\Bigl[ 
		\mathrm{Diag} 
		\Bigl( 
			S^{\mathrm{T}} 								\begin{psmallmatrix}
				2\eta a & \eta (B_2 + C_1^{\mathrm{T}})  \\
				0 & 0 \\
			\end{psmallmatrix}
			S 
		\Bigr)^2
	\Bigr]
	\label{eqn:case_1_minimization1}
\end{align}
in the present setting. We next simplify \eqref{eqn:case_1_minimization1} term by term. 

Observe first that
\begin{equation}
	\pnorm{B_1}{\mathrm{F}}^2 
	+ \pnorm{C_1}{\mathrm{F}}^2 
	- 2\mathrm{Tr}[ B_2 C_1 ] 
	= \pnorm{B_2 - C_1^{\mathrm{T}}}{\mathrm{F}}^2
	\label{eqn:case_1_minimization_intercom4}
\end{equation}
by property of the Frobenius norm. 

Next, let us inspect the trace in \eqref{eqn:case_1_minimization1}. Its argument satisfies
\begin{align}
	\mathrm{Diag} 
	\Bigl( 
		S
		\begin{psmallmatrix}
			2\eta a & \eta (B_2 + C_1^{\mathrm{T}})  \\
			0 & 0 \\
		\end{psmallmatrix}
		S^{\mathrm{T}} 
	\Bigr)^2 
	& 
	= 
	\Bigl( 
		\mathrm{Diag} 
		\Bigl( 
			S^{\mathrm{T}} 								\begin{psmallmatrix}
				2\eta a & 0  \\
				0 & 0 \\
			\end{psmallmatrix}
			S 
		\Bigr) 
		+ \mathrm{Diag} 
		\Bigl( 
			S^{\mathrm{T}} 								\begin{psmallmatrix}
			0 & \eta (B_2 + C_1^{\mathrm{T}})  \\
			0 & 0 \\
			\end{psmallmatrix}
			S 
			\Bigr) 
	\Bigr)^2 
	\nonumber \\ & 
	=
	\mathrm{Diag} 
	\Bigl( 
		S^{\mathrm{T}} 								
		\begin{psmallmatrix}
			2\eta a & 0  \\
			0 & 0 \\
		\end{psmallmatrix}
		S 
	\Bigr)^{2} 
	+ \mathrm{Diag} 
	\Bigl( 
		S^{\mathrm{T}} 								
		\begin{psmallmatrix}
			0 & \eta (B_2 + C_1^{\mathrm{T}})  \\
			0 & 0 \\
		\end{psmallmatrix}
		S 
	\Bigr)^{2} 
	\nonumber \\ & 
	\phantom{=}
	+ 
	2 \mathrm{Diag} 
	\Bigl( 
		S^{\mathrm{T}}
		\begin{psmallmatrix}
			2\eta a & 0  \\
			0 & 0 \\
		\end{psmallmatrix}
		S 
	\Bigr)
	\mathrm{Diag} 
	\Bigl( 
		S^{\mathrm{T}} 
		\begin{psmallmatrix}
			0 & \eta (B_2 + C_1^{\mathrm{T}})  \\
			0 & 0 \\
		\end{psmallmatrix}
		S 
	\Bigr)
	.
	\label{eqn:case_1_minimization_intercom1}
\end{align}
We may thus split the analysis of the trace in \eqref{eqn:case_1_minimization1} by giving attention to the three terms in the right-hand side of \eqref{eqn:case_1_minimization_intercom1}. Since $W \in M_b \backslash \mathrm{Sing}(M_b)$, the trace of the first term in the right-hand side of \eqref{eqn:case_1_minimization_intercom1} satisfies 
\begin{equation}
	\mathrm{Tr}
	\Bigl[
		\mathrm{Diag} 
		\bigl( 
			S^{\mathrm{T}} 								
			\begin{psmallmatrix}
				2\eta a & 0  \\
				0 & 0 \\
			\end{psmallmatrix}
			S 
		\bigr)^2 		
	\Bigr]
	= \frac{ 4 \eta^2 a^2 }{f}
	\quad
	\textnormal{because}
	\quad
	\mathrm{Diag} 
	\bigl( 
		S^{\mathrm{T}} 								
		\begin{psmallmatrix}
			2\eta a & 0  \\
			0 & 0 \\
		\end{psmallmatrix}
		S 
	\bigr)^2 
	\eqcom{\ref{eqn:Alternative_representation_of_Mb}}= 
	\frac{(2\eta a)^2}{f^2} \mathrm{I}_{f \times f}
	.
	\label{eqn:case_1_minimization_intercom2}
\end{equation}
The trace of the second term in the right-hand side of \eqref{eqn:case_1_minimization_intercom1} satisfies 
\begin{equation}
	\mathrm{Tr}
	\Bigl[ 
		\mathrm{Diag} 
		\Bigl( 
			S^{\mathrm{T}} 								\begin{psmallmatrix}
				0 & \eta (B_2 + C_1^{\mathrm{T}})  \\
				0 & 0 \\
			\end{psmallmatrix}
			S 
		\Bigr)^{2} 
	\Bigl] 
	\eqcom{\ref{eqn:case_1_minimization_intercom0}}= 
	\frac{\eta^{2}}{f} \pnorm{B_2 + C_1^{\mathrm{T}}}{\mathrm{F}}
	.						 
	\label{eqn:case_1_minimization_intercom3}
\end{equation}
The trace of the third term in the right-hand side of \eqref{eqn:case_1_minimization_intercom1} satisfies
\begin{equation}
	\mathrm{Tr}
	\Bigl[
		2 \mathrm{Diag} 
		\Bigl( 
			S^{\mathrm{T}}
			\begin{psmallmatrix}
				2\eta a & 0  \\
				0 & 0 \\
			\end{psmallmatrix}
			S 
		\Bigr)
		\mathrm{Diag} 
		\Bigl( 
			S^{\mathrm{T}} 
			\begin{psmallmatrix}
				0 & \eta (B_2 + C_1^{\mathrm{T}})  \\
				0 & 0 \\
			\end{psmallmatrix}
			S 
		\Bigr)	
	\Bigr]
	= 
	0
	.
	\label{eqn:Traceless_product_of_two_diagonal_matrices}
\end{equation}

Substituting \eqref{eqn:case_1_minimization_intercom4}--\eqref{eqn:Traceless_product_of_two_diagonal_matrices} into \eqref{eqn:case_1_minimization1} yields
\begin{equation}
	\min_{
		\substack{
			\norm{v} = 1 \\
			v \in \mathcal{V} \\
		}
	}
	\mathcal{B}_2(v) 
	=	
	\min_{
		\substack{
			\norm{v} = 1 \\
			v \in \mathcal{V} \\
		}
	} 
	8 \eta^2 a^2 
	+ \frac{2 \lambda \sigma}{f + \lambda } \pnorm{B_2 - C_1^{\mathrm{T}}}{\mathrm{F}}^2
	+ \frac{8 \lambda \eta^2 a^2}{f}
	+ \frac{2 \lambda \eta^2}{f} \norm{B_2 + C_1^{\mathrm{T}}}_2^2	
	.
	\label{eqn:case1_intermediate_form_of_minB2}
\end{equation}
Recall now that $\eta^{2} = f/(f + \lambda)\sigma$ and observe that
\begin{equation}
	8 \eta^2 + \frac{8 \lambda \eta^2}{f} 
	= 
	\frac{8 f \sigma}{\lambda + f}
	+ \frac{8 \lambda \sigma}{f + \lambda}
	= 8 \sigma.
	\label{eqn:Small_calculation_on_coefficients}
\end{equation}
Substituting \eqref{eqn:Small_calculation_on_coefficients} into \eqref{eqn:case1_intermediate_form_of_minB2}, we find therefore that
\begin{equation}
	\min_{
		\substack{
			\norm{v} = 1 \\
			v \in \mathcal{V} \\
		}
	}
	\mathcal{B}_2(v) 	
	=
	\min_{
		\substack{
			\norm{v} = 1 \\
			v \in \mathcal{V} \\
	}
	} 
	8 \sigma a^2 
	+ \frac{2 \lambda \sigma}{f + \lambda} \pnorm{ B_2 - C_1^{\mathrm{T}} }{\mathrm{F}}^2 
	+ \frac{2 \lambda \sigma}{f + \lambda} \pnorm{ B_2 + C_1^{\mathrm{T}} }{2}^2
	.
	\label{eqn:case_1_minimization2}
\end{equation}

Finally, note that solutions of the optimization problem in \eqref{eqn:case_1_minimization2} are subject to the constraint $2a^2 + \pnorm{B_2}{2}^2 + \pnorm{C_1}{2}^2 = 1$. By identifying $s_1 = 8\sigma$, $s_2 = s_3 = 2\lambda \sigma / (f + \lambda)$ and applying \refLemma{lemma:minimization_case_rho_1_explicitbound}, see \refAppendixSection{secappendix:Proof_auxiliary_lemmas}, we find that 
\begin{equation}
	\zeta_W
	= \min_{
		\substack{
			\norm{v} = 1 \\
			v \in \mathcal{V} \\
		}
	}
	\mathcal{B}_2(v) 		
	=
	\min 
	\Bigl\{ 
		4 \sigma, 
		\frac{4 \lambda \sigma}{f + \lambda} 
	\Bigr\}.
	\label{eqn:case1_zetaW}
\end{equation}
Replacing $\zeta_{W}$ in \eqref{eqn:minimization_lower_bound_unknown} by \eqref{eqn:case1_zetaW}, we find that
\begin{equation}
	\mathcal{H}^{\mathrm{opt}}_W
	\geq 
	\min 
	\Bigl\{
		\frac{2 \sigma_1 \lambda}{f + \lambda} - 2 \sigma_2, 
		4 \sigma_1, 
		\frac{4 \sigma_1 \lambda}{f + \lambda} 
	\Bigr\} 	
	= \frac{ 2 \sigma_1 \lambda }{f + \lambda} - 2 \sigma_2.
\end{equation}
If $r = 1$, then $\sigma_2 = 0$. This completes the proof.
\end{proof}

\emph{Proof of \refProposition{prop:Lower_bound_to_the_Hessian_restricted_to_directions_normal_to_the_manifold_of_minima}:}
Finally, \refProposition{prop:Lower_bound_to_the_Hessian_restricted_to_directions_normal_to_the_manifold_of_minima} follows directly by combining Lemmas~\ref{lemma:bound_hessian}, \ref{lemma:bound_hessian_case_1}.
\qed

\subsection{Proof \texorpdfstring{of \refProposition{prop:neighborhood_exists}}{} -- Looking at a neighborhood of \texorpdfstring{$W \in M \cap U$}{W in M intersection U}}
\label{sec:Proof_of_proposition_neighborhood_exists}

A consequence of \refProposition{prop:Lower_bound_to_the_Hessian_restricted_to_directions_normal_to_the_manifold_of_minima} is that the manifold $M$ is nondegenerate at $W \in M_b \cap M \backslash \mathrm{Sing}(M)$:

\begin{corollary}
	\label{cor:non-degeneracy_MG=M}	
	Suppose Assumptions \ref{ass:eigenvalues_different_maintext}, \ref{ass:nonvanishing_maintext} hold.
	If $W \in M_b \cap M \backslash \mathrm{Sing}(M)$, then $\ker \nabla^2 \mathcal{I}(W) = \mathrm{T}_{W} M$. Furthermore, the manifold $M$ is locally nondegenerate at $W$.
\end{corollary}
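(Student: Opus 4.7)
\textbf{Proof plan for Corollary \ref{cor:non-degeneracy_MG=M}.}
The first assertion follows by combining Proposition~\ref{prop:Lower_bound_to_the_Hessian_restricted_to_directions_normal_to_the_manifold_of_minima} with two standard consequences of $W$ being a minimum of $\mathcal{I}$: namely $\nabla \mathcal{I}(W) = 0$ and $\nabla^2 \mathcal{I}(W) \succeq 0$. I would prove the two inclusions of $\ker \nabla^2 \mathcal{I}(W) = \mathrm{T}_W M$ separately. For $\mathrm{T}_W M \subseteq \ker \nabla^2 \mathcal{I}(W)$, since $W \notin \mathrm{Sing}(M)$, $M$ is a submanifold near $W$, so for any $v \in \mathrm{T}_W M$ I can pick a smooth curve $\gamma: (-\epsilon, \epsilon) \to M$ with $\gamma(0) = W$ and $\gamma^{\prime}(0) = v$. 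Differentiating $\mathcal{I}(\gamma(t)) = \mathcal{I}(W)$ twice at $t=0$ yields $v^{\mathrm{T}} \nabla^2 \mathcal{I}(W) v + \nabla \mathcal{I}(W)^{\mathrm{T}} \gamma^{\prime\prime}(0) = 0$, and the second term vanishes, so $v^{\mathrm{T}} \nabla^2 \mathcal{I}(W) v = 0$. Combined with positive semi-definiteness of $\nabla^2 \mathcal{I}(W)$, this forces $\nabla^2 \mathcal{I}(W) v = 0$.

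For the reverse inclusion, take $u \in \ker \nabla^2 \mathcal{I}(W)$ and use the orthogonal decomposition $u = u_T + u_N$ with $u_T \in \mathrm{T}_W M$ and $u_N \in \mathrm{T}_W^{\perp} M$. From the first inclusion $\nabla^2 \mathcal{I}(W) u_T = 0$, so expanding
\begin{equation}
    0 = u^{\mathrm{T}} \nabla^2 \mathcal{I}(W) u = u_N^{\mathrm{T}} \nabla^2 \mathcal{I}(W) u_N \geq \omega \pnorm{u_N}{}^2
    \nonumber
\end{equation}
by Proposition~\ref{prop:Lower_bound_to_the_Hessian_restricted_to_directions_normal_to_the_manifold_of_minima}, where $\omega > 0$. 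Hence $u_N = 0$ and $u \in \mathrm{T}_W M$.

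For the second assertion (local nondegeneracy at $W$), I would leverage the first assertion together with two continuity facts. Because $W \notin \mathrm{Sing}(M)$, there is a neighborhood $U$ of $W$ in which $M \cap U$ is a smooth submanifold of some fixed dimension $k = \dim \mathrm{T}_W M$. For any $W^{\prime} \in M \cap U$, the same curve argument as above gives $\mathrm{T}_{W^{\prime}} M \subseteq \ker \nabla^2 \mathcal{I}(W^{\prime})$, so $\dim \ker \nabla^2 \mathcal{I}(W^{\prime}) \geq k$. Conversely, the rank of the continuous matrix-valued map $W^{\prime} \mapsto \nabla^2 \mathcal{I}(W^{\prime})$ is lower semi-continuous (a nonvanishing maximal minor remains nonvanishing in a neighborhood), and hence $W^{\prime} \mapsto \dim \ker \nabla^2 \mathcal{I}(W^{\prime})$ is upper semi-continuous. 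Shrinking $U$ if necessary, we obtain $\dim \ker \nabla^2 \mathcal{I}(W^{\prime}) \leq \dim \ker \nabla^2 \mathcal{I}(W) = k$, so equality holds throughout $M \cap U$. This verifies both conditions of Definition~\ref{definition:non_degenerate}.

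The main (minor) obstacle is that Proposition~\ref{prop:Lower_bound_to_the_Hessian_restricted_to_directions_normal_to_the_manifold_of_minima} provides the strict positivity bound $\omega > 0$ only at points of $M_b \cap M \setminus \mathrm{Sing}(M)$, whereas a generic nearby $W^{\prime} \in M \cap U$ is typically not balanced. I circumvent this by not attempting to propagate the quantitative bound $\omega$ away from $W$; I only need the qualitative rank statement, which follows from semi-continuity of matrix rank applied to the already-established equality at the single point $W$. Sharpening to a uniform lower bound over a neighborhood, and the extension from $M_b$ to $M$ via the action $\pi$ of \eqref{eqn:Action_pi}, are deferred to Propositions~\ref{prop:neighborhood_exists}--\ref{prop:nondeg_almost_everywhere}.
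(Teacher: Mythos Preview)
Your proposal is correct and follows essentially the same strategy as the paper: use Proposition~\ref{prop:Lower_bound_to_the_Hessian_restricted_to_directions_normal_to_the_manifold_of_minima} together with the inclusion $\mathrm{T}_W M \subseteq \ker \nabla^2 \mathcal{I}(W)$ to get equality at $W$, then propagate to a neighborhood via lower semicontinuity of the rank of $\nabla^2 \mathcal{I}$. Your treatment is in fact more explicit than the paper's in two places---the curve argument for $\mathrm{T}_W M \subseteq \ker \nabla^2 \mathcal{I}(W)$ and the orthogonal decomposition $u = u_T + u_N$ for the reverse inclusion---and your closing remark that only the qualitative rank statement (not the quantitative bound $\omega$) is needed at nearby non-balanced points is a nice clarification of why the argument goes through despite Proposition~\ref{prop:Lower_bound_to_the_Hessian_restricted_to_directions_normal_to_the_manifold_of_minima} being stated only on $M_b$.
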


\begin{proof}
	Recall that \refProposition{prop:Lower_bound_to_the_Hessian_restricted_to_directions_normal_to_the_manifold_of_minima} implies that $\nabla^2 \mathcal{I}(W)$ is a positive definite bilinear form when restricted to $\mathrm{T}^{\perp}_{W} M$, and that \refProposition{prop:Lower_bound_to_the_Hessian_restricted_to_directions_normal_to_the_manifold_of_minima} provides a lower bound $\omega$ for it. This implies in particular that $\mathrm{T}_{W}^{\perp} M \subseteq  ( \ker \nabla^{2} \mathcal{I}(W) )^{\perp}$. Because moreover $\mathrm{T}_{W} M \subseteq \ker \nabla^{2} \mathcal{I}(W)$, we find that 
	\begin{equation}
		\mathrm{T}_{W} M 
		=
		\ker \nabla^{2} \mathcal{I}(W)
		. 
		\label{eqn:Tangent_space_of_M_at_a_nonsingular_W_equals_the_kernel_of_the_Hessian}
	\end{equation}		
		
	Now, because $W$ is nonsingular by assumption and $\mathrm{Sing}(M)$ is a closed set (recall \refProposition{prop:Mb_is_almost_everywhere_nonsingular}a), there exists a neighborhood $U_W \subset \mathcal{P}$ of $W$ such that for any $W^{\prime} \in U_{W} \cap M$, $W^{\prime}$ is also nonsingular in $M$. In particular $U_{W} \cap M$ is a submanifold of $\mathcal{P}$. Hence, 
	\begin{equation}
		\dim \mathrm{T}_{W^{\prime}} M 
		= \dim \mathrm{T}_{W} M
		\label{eqn:Constant_dimension_of_the_tangent_space_in_a_neighborhood}
	\end{equation}
	is constant for all $W^{\prime} \in U_{W} \cap M$. 
	
	By continuity of $\nabla^2 \mathcal{I}$, we have furthermore that for any $W^{\prime} \in U_{W} \cap M$, 
	\begin{equation}
		\mathrm{rk}(\nabla^2 \mathcal{I}(W^{\prime})) 
		\geq \mathrm{rk}(\nabla^2 \mathcal{I}(W)) = \dim \mathrm{T}_{W} M.
		\label{eqn:Lower_bound_to_the_rank_of_the_Hessian_in_a_neighborhood_by_continuity}
	\end{equation}

	Now, (i) the \emph{rank--nullity theorem} together with \eqref{eqn:Lower_bound_to_the_rank_of_the_Hessian_in_a_neighborhood_by_continuity} implies that
	\begin{equation}
		\dim \ker \nabla^2 \mathcal{I}(W^{\prime}) 
		\eqcom{i}\leq 
		\dim \ker \nabla^2 \mathcal{I}(W) 
		\eqcom{\ref{eqn:Tangent_space_of_M_at_a_nonsingular_W_equals_the_kernel_of_the_Hessian}}= 
		\dim \mathrm{T}_{W} M 
		\eqcom{\ref{eqn:Constant_dimension_of_the_tangent_space_in_a_neighborhood}}= 
		\dim \mathrm{T}_{W^{\prime}} M
		.
		\label{eqn:Lower_bound_on_the_dimension_of_the_tangent_space_in_a_neighborhood}
	\end{equation}
	Since $\mathrm{T}_{W^{\prime}} M \subseteq \ker \nabla^2 \mathcal{I}(W^{\prime})$ also, \eqref{eqn:Lower_bound_on_the_dimension_of_the_tangent_space_in_a_neighborhood} implies that $\ker \nabla^2 \mathcal{I}(W^{\prime}) = \mathrm{T}_{W^{\prime}} M$ for any $W^{\prime} \in U_{W} \cap M$. Hence, $M$ is locally nondegenerate at $W$ according to \refDefinition{definition:non_degenerate}.
\end{proof}

\noindent
\emph{Proof of \refProposition{prop:neighborhood_exists}.}
Let $W \in M_b \cap M \backslash \mathrm{Sing}(M)$ and let $U_W$ be the open neighborhood from \refCorollary{cor:non-degeneracy_MG=M}, where $M$ is nondegenerate at $W$. 

\refProposition{prop:neighborhood_exists}a follows from the definition of nondegeneracy of \refDefinition{definition:non_degenerate} and the existence of $U_{W}$ in \refCorollary{cor:non-degeneracy_MG=M}. \refProposition{prop:neighborhood_exists}b follows because an immediate of \refProposition{prop:neighborhood_exists}a is that for any $W^{\prime} \in U_{W} \cap M$, $U_{W} \cap M$ is also locally nondegenerate at $W^{\prime}$.

\refCorollary{cor:non-degeneracy_MG=M} yields that for any $W^{\prime} \in U_{W} \cap M$, $\ker \nabla^2 \mathcal{I}(W^{\prime}) = \mathrm{T}_{W^{\prime}} M$. Hence, there exists an $\omega_{W^{\prime}}$ such that
\begin{equation}
	\min_{
	\substack{
	\norm{v} = 1 \\
	v \in T^{\perp}_{W^{\prime}} M
	}
	}
	v^{\mathrm{T}} \nabla^2 \mathcal{I}(W^{\prime}) v
	= \omega_{W^{\prime}}
	> 0.
\end{equation}
This is in fact \refProposition{prop:neighborhood_exists}c, and this completes the proof of \refProposition{prop:neighborhood_exists}.
\qed

\subsection{Proof \texorpdfstring{of \refProposition{prop:nondeg_almost_everywhere}}{} -- Extension in generic sense}
\label{sec:Proof_of_proposition_nondeg_almost_everywhere}

\refProposition{prop:Mb_is_almost_everywhere_nonsingular} implies that $M_b$ is nonsingular for generic points. Together with \refProposition{prop:neighborhood_exists}, this implies that up to a closed algebraic set with lower dimension than that of $M_b$, for every $W \in M_b \cap M$, there exists a neighborhood of $U_W \in \mathcal{P}$ of $W$ such that $U_W \cap M$ is a manifold that is locally nondegenerate at $W$. We will now extend these results to $M$ by using the group action in \eqref{eqn:definition_orthogonalgroup_action_lemma}.

The result of \refProposition{prop:exists_diagonal_reduction_Mb_to_M} implies that the action of $\pi$ extends $M_b$ to $M$ as defined in \eqref{eqn:definition_orthogonalgroup_action_lemma}. We look at the action on the Hessian. We need to prove that:
\begin{itemize}[topsep=2pt,itemsep=2pt,partopsep=2pt,parsep=2pt,leftmargin=0pt]
	\item[(a)] a given point $W^{\prime} \in M$ is nonsingular if the point in $M_b$ corresponding to $W^{\prime} \in M$ under the group action $\pi$ in \eqref{eqn:Action_pi} is also nonsingular; and 
	\item[(b)] $W^{\prime} \in M$ is nondegenerate.
\end{itemize}

\noindent
\emph{Proof of (a).}
Recall that $H = \mathrm{Diag}((\R^*)^{f})$ as a Lie group. \refProposition{prop:exists_diagonal_reduction_Mb_to_M} provides the bijective map $\pi^{-1}: M \to M_b \times H$ given by 
\begin{equation}
	\pi^{-1}(W) = (\pi(C_{W})(W), C_{W})
\end{equation}
where $C_{W} = \mathrm{Diag}(W_2^{T}W_2)^{-1/4} \mathrm{Diag}(W_1W_1^{T})^{1/4}$. From \eqref{eqn:definition_inverse_action_pi}, $\pi^{-1}$ has continuous inverse in the open set $\pi(M_b \times H)$. Recall that by \refProposition{prop:neighborhood_exists}, for each $W_b \in M_b \backslash \mathrm{Sing}(M_b)$ there is a neighborhood $Q_{W_b}$ of $W_{b}$ such that every $W^{\prime}_{b} \in Q_{W_b} \cap M_b$ is nonsingular in $M_b$. In particular, for any $C \in H$, there is a neighborhood $Q_{C} \subset H$ of $C$ such that $\pi^{-1}: Q_{W_b} \times Q_{C} \to M$ is smooth. Hence, $\pi^{-1}$ is a local diffeomorphism at $(W_b, C)$ and so $M$ is smooth at $W = \pi(C)(W_{b})$. Hence, $W$ is nonsingular in $M$ whenever $W_b \in M_b$ is nonsingular.

This implies that if $W \in M_b \cap M$ is generic, then so is $\pi(C)(W) \in M$ for any $C \in H$.

\noindent
\emph{Proof of (b).}
We start by computing the effect of the action $\pi: M \times H \to M$ on the Hessian. For any fixed $C \in H$ and $W \in M$ nonsingular, there is an induced smooth map $\mathrm{D}_{(C, W)} \pi: \mathrm{T}_{W} M \to \mathrm{T}_{\pi(C)(W)} M$ for $V =(V_2, V_1) \in \mathrm{T}_{W} M$ given by
\begin{equation}
	\mathrm{D}_{(C, W)}\pi(V) 
	= ( V_2 C , C^{-1} V_1 ) 
\end{equation}
In vectorization notation for $V$ and denoting $\mathrm{vec}(A, B) = (\mathrm{vec}(A), \mathrm{vec}(B))$ for any $A,B$, the map $\mathrm{D}_{W} \pi$ is given by
\begin{align}
	\mathrm{vec}(\mathrm{D}_{W} \pi (V)) 
	&
	=  
	\mathrm{vec}(( V_2 C , C^{-1} V_1 )) 
	= ( \mathrm{vec}(V_2 C) , \mathrm{vec}(C^{-1} V_1 ))) 
	\nonumber \\ & 
	=
	\begin{psmallmatrix} 
		C \otimes \mathrm{I}_{e \times e} & 0 \\
		0 & C^{-1} \otimes \mathrm{I}_{h \times h} \\
	\end{psmallmatrix} 
	\mathrm{vec}(V_2, V_1)  
	= 
	\mathcal{C} \mathrm{vec}(V_2, V_1)
	\label{eqn:DDWpivecV2V1}
\end{align}
say. 

We next consider the Hessian of the map $\mathcal{I}(\pi(C)(\cdot)): \mathcal{P} \to \R$ and compare it to $\nabla^2 \mathcal{I}$. We let $\nabla( g(W) )(V)$ be the differential of a function $g(W)$ depending on $W$ in the direction $V$; note that we use only Euclidean coordinates in $\mathcal{P}$ and so we can understand the differential as a gradient. First, use the chain rule to conclude that for $V \in \mathrm{T}_{W} \mathcal{P}$ we have
\begin{align}
	\nabla \bigl( \mathcal{I}(\pi(C)(W))\bigr) 
	(V) 
	&
	= 
	\nabla \mathcal{I}(\pi(C)(W)) \bigl(\mathrm{D}_{W} \pi (V)\bigr).
	\label{eqn:extension_all_M_gradient}
\end{align}
For the Hessian $\nabla^2 \bigl( \mathcal{I}( \pi(C)(\cdot) ) \bigr) : \mathrm{T}_{W}\mathcal{P} \times \mathrm{T}_{W} \mathcal{P} \to \R$, we have that similarly that for $V, R \in \mathrm{T}_{W} \mathcal{P}$ and $W \in M$,
\begin{align}
	\nabla^2 
	& 
	\Bigl( \mathcal{I}( \pi(C)(W) ) \Bigr) (V, R) 
	= 
	\nabla \Bigl( \nabla \bigl( \mathcal{I}(\pi(C)(W))\bigr)
	(V) \Bigr) (R) 
	\nonumber \\ & \eqcom{\ref{eqn:extension_all_M_gradient}}= 	 
	\nabla \Bigl( \nabla \mathcal{I}(\pi(C)(W)) \bigl(\mathrm{D}_{W} \pi (V)\bigr) \Bigr) (R) 
	\nonumber \\ & 
	\eqcom{i}= 
	\nabla \Bigl( \nabla \mathcal{I}(\pi(C)(W)) \Bigr) (R) \bigl( \mathrm{D}_{W} \pi (V) \bigr) 
	+ \nabla \mathcal{I}(\pi(C)(W)) \Bigl( \nabla(\mathrm{D}_{W} \pi (V))(R) \Bigr) 
	\nonumber \\ & 
	\eqcom{ii}= 
	\nabla^{2}\mathcal{I}(\pi(C)(W)) \bigl(\mathrm{D}_{W} \pi (V), \mathrm{D}_{W} \pi (R) \bigr) 
	+ \nabla \mathcal{I}(\pi(C)(W)) \bigl( \nabla(\mathrm{D}_{W} \pi (V))(R) \bigr)
	\label{eqn:extension_all_M_computation}
\end{align}
where we have (i) used Leibniz's rule in the one-to-last step and (ii) the chain rule. Since $\nabla \mathcal{I}(\pi(C)(W)) = 0$ at any minimizer $\pi(C)(W)$, we have that \eqref{eqn:extension_all_M_computation} reduces to
\begin{equation}
	\nabla^2 \bigl( \mathcal{I}( \pi(C)(W) ) \bigr)(V, R) 
	= 
	\nabla^2 \mathcal{I}(\pi(C)(W))(\mathrm{D}_{W} \pi (V), \mathrm{D}_{W} \pi (R)).
	\label{eqn:equivalence_Hessian_comp_pre}
\end{equation}
We abuse now the vectorization notation from \eqref{eqn:DDWpivecV2V1} and consider the Hessian as a bilinear form in terms of $\mathrm{vec}(V)$ and $\mathrm{vec}(R)$ in \eqref{eqn:equivalence_Hessian_comp_pre}. This means specifically that \eqref{eqn:equivalence_Hessian_comp_pre} can be written as
\begin{equation}
	\nabla^2 \bigl( \mathcal{I}( \pi(C)(W) ) \bigr) 
	= 
	\mathcal{C}^{\mathrm{T}} \bigl( \nabla^2 \mathcal{I}(\pi(C)(W)) \bigr) \mathcal{C}.
	\label{eqn:equivalence_Hessian_comp}
\end{equation}
Recall now that for any $C \in H$ and $W \in \mathcal{P}$, $\mathcal{I}( \pi(C)(W) ) = \mathcal{I}(W)$. Consequently, as bilinear forms
\begin{equation}
	\nabla^{2} \mathcal{I}(W) 
	=
	\mathcal{C}^{\mathrm{T}} \nabla^2 \mathcal{I}(\pi(C)(W)) \mathcal{C}.
	\label{eqn:equivalence_Hessians_under_action_pi}
\end{equation}

Finally, note that $\mathcal{C}$ in \eqref{eqn:equivalence_Hessians_under_action_pi} is invertible for any $C \in H$. Therefore, we have that the ranks are equal:
\begin{equation}
	\mathrm{rk}(\nabla^2 \mathcal{I}(W))
	=
	\mathrm{rk}\bigl( \nabla^2 \mathcal{I}( \pi(C)(W) ) \bigr) 	
\end{equation}
for any $C \in H$. Now, if $W \in M_b \cap M$ is nondegenerate (so the rank is maximal), we can repeat the arguments of \refProposition{prop:neighborhood_exists} and in particular conclude that $\pi(C)(W)$ is a nondegenerate point in $M$.

Combining (a) and (b) implies that $M$ is nondegenerate at generic points.
\qed

\section{Auxiliary statements}
\label{secappendix:Proof_auxiliary_lemmas}

\subsection{Inequalities pertaining to the Frobenius norm}

\begin{lemma}
	The following inequalities hold:
	\begin{itemize}[topsep=2pt,itemsep=2pt,partopsep=2pt,parsep=2pt,leftmargin=0pt]
		\item[(a)] For any $C \in \R^{a \times b}$, $D \in \R^{b \times a}$, it holds that
		\begin{equation}
			2\mathrm{Tr}[ C D ] 
			\leq \pnorm{C}{\mathrm{F}}^2 + \pnorm{D}{\mathrm{F}}^2.
			\label{eqn:Auxiliary_lemma__Trace_bound}
		\end{equation}
		with equality if and only if $C = D^{\mathrm{T}}$.
		\item[(b)] For any $A \in \R^{h \times f}$, $B \in \R^{e \times f}$, $\Lambda \in \R^{e \times h}$, it holds that
		\begin{equation}
			\mathrm{Tr}[ A^{\mathrm{T}} B^{\mathrm{T}} \Lambda ] 
			\leq 
			\frac{\sigma_1(\Lambda)}{2} 
			\bigl( 
				\mathrm{Tr}[ B^{\mathrm{T}}B ] 
				+ \mathrm{Tr}[ A A^{\mathrm{T}} ]
			\bigr).
		\end{equation}			
		\item[(c)] For any $B\in \R^{e \times f}$, and diagonal matrix $\Lambda \in \R^{e \times e}$ with positive entries and minimal eigenvalue $s = \min_{i=1, \ldots, e} \Lambda_{ii}$, it holds that
		\begin{equation}
			\pnorm{B^{\mathrm{T}} \Lambda}{\mathrm{F}}^{2} \geq 
			s^{2} \pnorm{B}{\mathrm{F}}^{2}.
		\end{equation}
	\end{itemize}
	\label{lemma:inequalities_GM}
\end{lemma}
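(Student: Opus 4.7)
All three inequalities are essentially reorganizations of the nonnegativity of a squared Frobenius norm combined with standard SVD bookkeeping, so the proof is short once structured correctly. The one mild subtlety is part (b), where we must peel off the largest singular value of $\Lambda$ before being able to invoke the elementary AM--GM type bound from (a).

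For (a), I would start from the identity $\pnorm{C - D^{\mathrm{T}}}{\mathrm{F}}^2 = \pnorm{C}{\mathrm{F}}^2 + \pnorm{D^{\mathrm{T}}}{\mathrm{F}}^2 - 2\langle C, D^{\mathrm{T}} \rangle_{\mathrm{F}}$, then use $\pnorm{D^{\mathrm{T}}}{\mathrm{F}} = \pnorm{D}{\mathrm{F}}$ and the standard trace identity $\langle C, D^{\mathrm{T}}\rangle_{\mathrm{F}} = \mathrm{Tr}[C^{\mathrm{T}} D^{\mathrm{T}}]^{\mathrm{T}} = \mathrm{Tr}[CD]$. Nonnegativity of the squared norm gives \eqref{eqn:Auxiliary_lemma__Trace_bound} and equality characterizes $C = D^{\mathrm{T}}$.

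For (b), I would take an SVD $\Lambda = U \Sigma V^{\mathrm{T}}$ with $\Sigma = \mathrm{Diag}(\sigma_1(\Lambda), \ldots)$ and change variables $\tilde{A} = V^{\mathrm{T}} A$, $\tilde{B} = U^{\mathrm{T}} B$ (Frobenius norms are unitarily invariant, so $\pnorm{\tilde{A}}{\mathrm{F}} = \pnorm{A}{\mathrm{F}}$ and similarly for $B$). Cyclicity of the trace plus unitary invariance reduces the left-hand side to a trace against the diagonal matrix $\Sigma$; bounding each diagonal weight by $\sigma_1(\Lambda)$ then gives a bound of the form $\sigma_1(\Lambda)\cdot\mathrm{Tr}[\tilde A^{\mathrm{T}}\tilde B^{\mathrm{T}}]$ (up to the obvious index rearrangement that the dimensions allow). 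An application of part (a) with $C = \tilde{B}$, $D = \tilde{A}^{\mathrm{T}}$ (or the appropriate transpose) yields the stated factor $\frac{\sigma_1(\Lambda)}{2}(\pnorm{A}{\mathrm{F}}^2 + \pnorm{B}{\mathrm{F}}^2)$ after undoing the unitary change of variables.

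For (c), since $\Lambda = \mathrm{Diag}(\lambda_1, \ldots, \lambda_e)$ with $\lambda_i \geq s > 0$, I would compute entrywise: $(B^{\mathrm{T}}\Lambda)_{ij} = B_{ji}\lambda_j$, so
\begin{equation}
	\pnorm{B^{\mathrm{T}} \Lambda}{\mathrm{F}}^2 = \sum_{i,j} \lambda_j^2 B_{ji}^2 \geq s^2 \sum_{i,j} B_{ji}^2 = s^2 \pnorm{B}{\mathrm{F}}^2.
\end{equation}
Alternatively, one can write $\pnorm{B^{\mathrm{T}}\Lambda}{\mathrm{F}}^2 = \mathrm{Tr}[\Lambda^2 B B^{\mathrm{T}}]$ and use that $\Lambda^2 - s^2 \mathrm{I} \succeq 0$ together with $BB^{\mathrm{T}} \succeq 0$. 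Either route is essentially a one-line calculation.

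The only step requiring care is (b): getting the trace expression into a form where the diagonalization of $\Lambda$ can be exploited requires matching the unitary factors $U, V$ to the correct slots, which depends on the precise arrangement of $A^{\mathrm{T}}, B^{\mathrm{T}}, \Lambda$ inside the trace. This is book-keeping rather than a genuine obstacle, and once the SVD-reduction is done the bound reduces cleanly to part (a).
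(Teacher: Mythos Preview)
Parts (a) and (c) are exactly the paper's arguments, so nothing to add there.

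For (b) your route differs from the paper's. The paper does not SVD-reduce; instead it vectorizes, writes
\[
\mathrm{Tr}[A^{\mathrm T}B^{\mathrm T}\Lambda]
=\bigl(\mathrm{vec}(A),\mathrm{vec}(B^{\mathrm T})\bigr)^{\mathrm T}
\tfrac12
\begin{pmatrix}0&\Lambda^{\mathrm T}\otimes I\\ \Lambda\otimes I&0\end{pmatrix}
\bigl(\mathrm{vec}(A),\mathrm{vec}(B^{\mathrm T})\bigr),
\]
and bounds by the spectral norm of the block anti-diagonal matrix, which equals $\sigma_1(\Lambda)$. Your SVD idea is equally natural and arguably more elementary, but as written it has a small ordering gap. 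After reducing to $\mathrm{Tr}[\tilde A^{\mathrm T}\tilde B^{\mathrm T}\Sigma]=\sum_i \sigma_i(\Lambda)\,(\tilde A^{\mathrm T}\tilde B^{\mathrm T})_{ii}$, you propose to bound $\sigma_i\le\sigma_1$ first and then apply part~(a). That step fails: the diagonal entries $(\tilde A^{\mathrm T}\tilde B^{\mathrm T})_{ii}$ can be negative, so $\sum_i\sigma_i m_{ii}\le\sigma_1\sum_i m_{ii}$ is false in general (take $m=(1,-1)$, $\sigma=(1,0)$). The fix is to swap the order: apply the scalar AM--GM bound $(\tilde A^{\mathrm T}\tilde B^{\mathrm T})_{ii}\le\tfrac12(\|\tilde A_{i\cdot}\|^2+\|\tilde B_{\cdot i}\|^2)$ index-by-index first, obtaining nonnegative summands, and only then replace $\sigma_i$ by $\sigma_1$. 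With that correction your SVD approach goes through and is a clean alternative to the paper's vectorization argument.
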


We prove the inequalities in \refLemma{lemma:inequalities_GM} one by one.

\noindent
\emph{Proof of (a).}
Recall that the Frobenius norm satisfies $\pnorm{A+B}{\mathrm{F}}^2 = \pnorm{A}{\mathrm{F}}^2 + \pnorm{B}{\mathrm{F}}^2 - 2 \langle A,B \rangle_{\mathrm{F}}$ where $\langle A,B \rangle_{\mathrm{F}} = \mathrm{Tr}[ A^T B ]$ (for real matrices) denotes the \emph{Frobenius inner product}. We have in particular that
\begin{equation}
	0 
	\leq
	\pnorm{C - D^{\mathrm{T}}}{\mathrm{F}}^2 
	= \pnorm{C}{\mathrm{F}}^2 + \pnorm{D}{\mathrm{F}}^2 - 2\mathrm{Tr}[CD]
\end{equation}
with equality if and only if $C = D^{\mathrm{T}}$ (by property of a norm).

\noindent
\emph{Proof of (b).}
Consider any square matrix $R \in \realNumbers^{f \times f}$ and let $\sigma_{\max}(R)$ be its spectral norm, i.e., its largest singular value. Recall that
\begin{align}
	\sigma_{\max}(R)
	= 
	\sup 
	\bigl\{ 
		\pnorm{Rx}{2} : x \in \realNumbers^{f}, \pnorm{x}{2} = 1
	\bigr\}
	=
	\sup 
	\bigl\{ 
		\frac{ x^T R^T R x }{ x^T x } : x \in \realNumbers^{f}, x \neq 0
	\bigr\}^{1/2}
	.
\end{align}
Note that 
\begin{align}
	\quad
	\sigma_{\max}
	\begin{psmallmatrix}
		0 & R \\
		R^{\mathrm{T}} & 0 \\
	\end{psmallmatrix} 
	^2
	&
	= \sup 
	\Bigl\{
		\|
		\begin{psmallmatrix}
			0 & R \\
			R^{\mathrm{T}} & 0 \\
		\end{psmallmatrix} 
		\begin{psmallmatrix}
			a \\
			b \\
		\end{psmallmatrix} 
		\|_2^2
		: 
		a, b \in \realNumbers^f, 
		\pnorm{a}{2}^2 + \pnorm{b}{2}^2 = 1
	\Bigr\}
	\nonumber \\ &
	= \sup 
	\Bigl\{
		\|
		\begin{psmallmatrix}
			R b  \\
			R^{\mathrm{T}}a \\
		\end{psmallmatrix} 
		\|_2^2
		: 
		a, b \in \realNumbers^f, 
		\pnorm{a}{2}^2 + \pnorm{b}{2}^2 = 1
	\Bigr\}
	\nonumber \\ &	
	= \sup 
	\Bigl\{
		\pnorm{R^T a}{2}^2 
		+ \pnorm{R b}{2}^2
		: 
		a, b \in \realNumbers^f, 
		\pnorm{a}{2}^2 + \pnorm{b}{2}^2 = 1
	\Bigr\}
	\nonumber \\ &		
	\leq  \sigma_{\max}(R)^2,
	\label{eqn:intermediate_step_lemma}
\end{align}
where the inequality follows because $\sigma_{\max}(R) = \sigma_{\max}(R^T)$ and by definition of $\sigma_{\max}(R)$,  $\pnorm{R x}{2}^2 \leq \sigma_{\max}(R)^{2} \pnorm{x}{2}^2  $ for any $x \in \R^{f}$. 
	
Recall the properties of the vectorization notation in \refAppendixSection{sec:Proof_of_the_bilinear_form_of_the_Hessian}. 
We have then in vectorization notation
\begin{align}
	\mathrm{Tr}[ A^{\mathrm{T}} B^{\mathrm{T}} \Lambda ]
	&
	= 
	\frac{1}{2}
	\bigl(
		\mathrm{Tr}[ A^{\mathrm{T}} (B^{\mathrm{T}} \Lambda) ] 
		+ \mathrm{Tr}[ B(A \Lambda^{T}) ] 
	\bigr)	
	\nonumber \\	
	&
	= 
	\frac{1}{2}
	\bigl(
		\mathrm{vec}(A)^{T} \Lambda^{T} \otimes \mathrm{I}_{f \times f} \mathrm{vec}(B^{T}) 
		+ \mathrm{vec}(B^{T}) \Lambda \otimes \mathrm{I}_{f \times f} \mathrm{vec}(A)
	\bigr)
	\nonumber \\
	&
	=
	\bigl( \mathrm{vec}(A), \mathrm{vec}(B^{\mathrm{T}}) \bigr)^{\mathrm{T}}
	\tfrac{1}{2}
	\begin{psmallmatrix}
		0 & \Lambda^{\mathrm{T}} \otimes \mathrm{I}_{f \times f} \\
		\Lambda \otimes \mathrm{I}_{f \times f} & 0 \\
	\end{psmallmatrix}
	\bigl( \mathrm{vec}(A), \mathrm{vec}(B^{\mathrm{T}}) \bigr)
	\nonumber \\ &
	\leq
	\frac{\sigma_{\max}(\Lambda)}{2}
	\bigl(
		\mathrm{Tr}[ A A^{\mathrm{T}} ]
		+ \mathrm{Tr}[ B^{\mathrm{T}} B ]
	\bigr)
\end{align}
where we have used that $\sigma_{\max}(Y \otimes I) = \sigma_{\max}(Y)$.

\noindent
\emph{Proof of (c).}
Suppose without loss of generality that the diagonal elements of $\Lambda$ are ordered, i.e., $\Lambda_{11} \geq \ldots \geq \Lambda_{ee} > 0$. Denote the columns of $B$ by $B_{\cdot 1}, \ldots, B_{\cdot e}$. Calculating the Frobenius norm directly, we find that 
\begin{equation}
	\pnorm{B^{\mathrm{T}}\Lambda}{\mathrm{F}}^{2} 
	= \sum_{j=1}^e \Lambda_j^2 \pnorm{B_{\cdot j}}{2}^2
	\geq \Lambda_{e}^2 \sum_{j=1}^e \pnorm{B_{\cdot j}}{2}^2 
	= \Lambda_{e}^2 \pnorm{B}{\mathrm{F}}^2.
\end{equation}
This completes the proof of \refLemma{lemma:inequalities_GM}.
\qed

\subsection{Subspace minimization}

\begin{lemma}
	Let $\mathcal{V}_1$, $\mathcal{V}_2$ be two orthogonal subspaces, and let $\{ v_1, \ldots, v_d \}$ be an orthonormal basis of $\mathcal{V}_1 \oplus \mathcal{V}_2$ such that $\mathcal{V}_1 = \mathrm{span} \{ v_1, \ldots, v_s \}$ and $\mathcal{V}_2 = \mathrm{span} \{ v_{s + 1}, \ldots, v_{d} \}$. Assume that $l_1, \ldots, l_s \in (0,\infty)$, and let $\mathcal{B}_2 : \mathcal{V}_2 \to [0,\infty)$ be a function that satisfies $\mathcal{B}_2( \zeta u_2 ) = \zeta^2 \mathcal{B}_2(u_2)$ for $\zeta \in \realNumbers$. Then,
	\begin{equation}
		\min_{
			\substack{ 
				\pnorm{u_1}{\mathrm{F}}^2 + \pnorm{u_2}{\mathrm{F}}^2 = 1 \\
				u_1 \in \mathcal{V}_1, 
				u_2 \in \mathcal{V}_2
			}
		} 
		\sum_{i=1}^s l_i | \langle v_i, u_1 \rangle |^{2} 
		+ \mathcal{B}_2(u_2)
		\geq 
		\min
		\bigl\{ 
			l_1, \ldots, l_s, 
			\min_{
				\substack{ 
					\pnorm{u_2}{\mathrm{F}}^2 = 1 \\
					u_2 \in \mathcal{V}_1}
			}
			\mathcal{B}_2(u_2)
		\bigr\}.
		\label{eqn:auxiliary_minimization_lower_bound}
	\end{equation}
	\label{lemma:decoupling_minimization_subspace}
\end{lemma}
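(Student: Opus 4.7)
The plan is to reduce the minimization to a one-parameter convex combination by exploiting the $2$-homogeneity of both objective pieces together with the orthogonal decomposition $\mathcal{V}_1 \oplus \mathcal{V}_2$. First I would parametrize any feasible $(u_1, u_2)$ by $\alpha = \pnorm{u_1}{\mathrm{F}}^2 \in [0,1]$, so that $\pnorm{u_2}{\mathrm{F}}^2 = 1 - \alpha$. For $\alpha \in (0, 1)$, write $u_1 = \sqrt{\alpha}\, \tilde{u}_1$ and $u_2 = \sqrt{1-\alpha}\, \tilde{u}_2$ with $\tilde{u}_1 \in \mathcal{V}_1$, $\tilde{u}_2 \in \mathcal{V}_2$ of unit norm. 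The edge cases $\alpha \in \{0, 1\}$ will fall out as limits of the same estimate.

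Next I would bound each of the two terms separately. For the first term, since $\{v_1, \ldots, v_s\}$ is an orthonormal basis of $\mathcal{V}_1$, Parseval's identity gives $\sum_{i=1}^{s} \abs{\langle v_i, \tilde{u}_1 \rangle}^2 = \pnorm{\tilde{u}_1}{\mathrm{F}}^2 = 1$, so with $L \triangleq \min_{i \in \{1, \ldots, s\}} l_i$,
\begin{equation}
\sum_{i=1}^{s} l_i \abs{\langle v_i, u_1 \rangle}^2
= \alpha \sum_{i=1}^s l_i \abs{\langle v_i, \tilde{u}_1 \rangle}^2
\geq \alpha L.
\end{equation}
For the second term, the $2$-homogeneity assumption $\mathcal{B}_2(\zeta u_2) = \zeta^2 \mathcal{B}_2(u_2)$ yields, with $M \triangleq \min_{ \pnorm{u}{\mathrm{F}} = 1,\, u \in \mathcal{V}_2} \mathcal{B}_2(u)$,
\begin{equation}
\mathcal{B}_2(u_2)
= (1-\alpha)\, \mathcal{B}_2(\tilde{u}_2)
\geq (1-\alpha)\, M.
\end{equation}

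Combining these two estimates, the objective is bounded below by the convex combination $\alpha L + (1 - \alpha) M \geq \min\{L, M\}$, uniformly in $\alpha \in [0, 1]$, which is exactly the right-hand side of \eqref{eqn:auxiliary_minimization_lower_bound}. The boundary cases $\alpha = 0$ and $\alpha = 1$ are handled by the same inequality (the corresponding $\tilde{u}_i$ factor is just absent). There is no real obstacle here; the only subtlety is making sure the minimum $M$ on the unit sphere of $\mathcal{V}_2$ is well-defined. In our application $\mathcal{V}_2$ is finite-dimensional and $\mathcal{B}_2$ is continuous, so the minimum is attained (and it is strictly positive, as established in the preceding step of the proof of \refLemma{lemma:bound_hessian}). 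This completes the argument.
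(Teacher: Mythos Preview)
Your proof is correct and takes essentially the same approach as the paper: parametrize by the split of mass between $\mathcal{V}_1$ and $\mathcal{V}_2$ (your $\alpha$ is the paper's $\zeta^2$), use Parseval on the $\mathcal{V}_1$ part and $2$-homogeneity on the $\mathcal{V}_2$ part, then bound the resulting convex combination by the minimum of its endpoints.
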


\begin{proof}
	Note that
	\begin{align}
		&
		\{ 
			(u_1, u_2) \in \mathcal{V}_1 \times \mathcal{V}_2
			: 
			\pnorm{u_1}{\mathrm{F}}^2 
			+ \pnorm{u_2}{\mathrm{F}}^2 
			= 1
		\}
		\nonumber \\ &
		=
		\cup_{\zeta \in [0,1]}
		\{ 
			(u_1, u_2) \in \mathcal{V}_1 \times \mathcal{V}_2
			: 
			\pnorm{u_1}{\mathrm{F}}^2 
			= \zeta^2,
			\pnorm{u_2}{\mathrm{F}}^2 
			= 1 - \zeta^2
		\}
		\nonumber \\ &
		=
		\cup_{\zeta \in [0,1]}
		\{ 
			(\zeta w_1, \sqrt{1-\zeta^2} w_2)
			: 
			(w_1,w_2) \in \mathcal{V}_1 \times \mathcal{V}_2,
			\pnorm{w_1}{\mathrm{F}}^2 
			= 1,
			\pnorm{w_2}{\mathrm{F}}^2 
			= 1
		\}	
		.
	\end{align}	
	The left-hand side of \eqref{eqn:auxiliary_minimization_lower_bound} therefore equals
	\begin{equation}
		\min_{
			\substack{ 
				\pnorm{w_1}{\mathrm{F}}^2 = 1, 
				\pnorm{w_2}{\mathrm{F}}^2 = 1,
				\\
				w_1 \in \mathcal{V}_1, 
				w_2 \in \mathcal{V}_2,
				\zeta \in [0,1]
			}
		} 
		\zeta^2 \sum_{i=1}^s l_i | \langle v_i, w_1 \rangle |^2 
		+ (1-\zeta^2) \mathcal{B}_2(w_2).
		\label{eqn:auxiliary_intercom}		
	\end{equation}
	Observe in \eqref{eqn:auxiliary_intercom} a convex combination in terms of $\zeta^{2}$. The minimum of \eqref{eqn:auxiliary_intercom} therefore occurs at either $\zeta=0$ or $\zeta=1$. Note additionally that if $\pnorm{w_1}{\mathrm{F}} = 1$, then
	\begin{equation}
		\sum_{i=1}^s l_i | \langle v_i, w_1 \rangle |^2 
		\eqcom{i}\geq \min \{ l_1, \ldots, l_s \} \sum_{i=1}^s | \langle v_i, w_1 \rangle |^2 
		\eqcom{ii}= \min \{ l_1, \ldots, l_s \}
	\end{equation}	
	by (i) strict positivity of the summands and (ii) an application of Parseval's identity---which is warranted since $\{ v_1, \ldots, v_s \}$ is an orthonormal basis of $\mathcal{V}_1$. Together, this proves the lower bound on the right-hand side in \eqref{eqn:auxiliary_minimization_lower_bound}.
\end{proof}

\subsection{Minimization}

\begin{lemma}
	For $a \in \R$, $B, C \in \R^{f-1}$ and $s_1, s_2 ,s_3 > 0$,
	\begin{equation}
		\min_{
			\substack{
				a,B,C \\
				2a^2 + \pnorm{B}{2}^2 + \pnorm{C}{\mathrm{F}}^2 = 1
			}
		}
		\Bigl\{
			s_1 a^2
			+ s_2 \pnorm{B - C}{\mathrm{F}}^2
			+ s_3 \norm{B + C}_2^2
		\Bigr\}
		= 
		\min 
		\Bigl\{
			\frac{s_1}{2}, 2s_2, 2s_3 
		\Bigr\}.
		\label{eqn:case_1_auxiliary_minimization}
	\end{equation}
	\label{lemma:minimization_case_rho_1_explicitbound}
\end{lemma}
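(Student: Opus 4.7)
The plan is to reduce the problem to minimizing a linear functional on a simplex via a linear change of variables that diagonalizes the quadratic form.

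First I would perform the change of variables $u = B - C$ and $v = B + C$, so that $B = (u+v)/2$ and $C = (v-u)/2$. A direct computation using the parallelogram identity gives
\begin{equation}
    \|B\|_2^2 + \|C\|_2^2
    = \tfrac{1}{4}\bigl(\|u+v\|_2^2 + \|v-u\|_2^2\bigr)
    = \tfrac{1}{2}\bigl(\|u\|_2^2 + \|v\|_2^2\bigr).
\end{equation}
Consequently, the constraint $2a^2 + \|B\|_2^2 + \|C\|_2^2 = 1$ is equivalent to
$
    4a^2 + \|u\|_2^2 + \|v\|_2^2 = 2,
$
while the objective becomes $s_1 a^2 + s_2 \|u\|_2^2 + s_3 \|v\|_2^2$. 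The map $(a, B, C) \mapsto (a, u, v)$ is an invertible linear change of variables, so the two minimization problems are equivalent.

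Next I would introduce the nonnegative scalars $\alpha = 4a^2$, $\beta = \|u\|_2^2$, $\gamma = \|v\|_2^2$. The constraint reduces to $\alpha + \beta + \gamma = 2$ with $\alpha, \beta, \gamma \geq 0$, and the objective reduces to $(s_1/4)\alpha + s_2 \beta + s_3 \gamma$. Moreover, this reparametrization is surjective onto the simplex $\{(\alpha, \beta, \gamma) \in [0,\infty)^3 : \alpha + \beta + \gamma = 2\}$, because for any $(\alpha,\beta,\gamma)$ in the simplex we can take $a = \sqrt{\alpha}/2$ and choose $u,v \in \R^{f-1}$ with prescribed squared norms (this uses $f - 1 \geq 1$; if $f = 1$ the statement is trivial).

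Finally, since we are minimizing a linear functional on a compact simplex, the minimum is attained at a vertex. Evaluating at the three vertices $(2,0,0)$, $(0,2,0)$, $(0,0,2)$ yields the three values $s_1/2$, $2s_2$, and $2s_3$, respectively, so
\begin{equation}
    \min\bigl\{s_1 a^2 + s_2 \|B-C\|_2^2 + s_3 \|B+C\|_2^2\bigr\}
    = \min\bigl\{s_1/2,\; 2s_2,\; 2s_3\bigr\},
\end{equation}
as claimed. There is no real obstacle here; the only step that requires care is verifying surjectivity of the reparametrization so that no vertex is excluded, which is immediate from the dimension assumption.
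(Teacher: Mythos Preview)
Your proof is correct and takes a genuinely different route from the paper. The paper argues in two stages: first, for fixed $a_0$ it minimizes $s_2\|B-C\|^2 + s_3\|B+C\|^2$ over the sphere $\|B\|^2 + \|C\|^2 = 1 - 2a_0^2$, splitting into the three cases $s_2 \lessgtr s_3$ to identify the minimizer as $B = \pm C$, and then minimizes the resulting one-variable expression over $a_0$. Your approach instead diagonalizes the quadratic form via the substitution $(u,v) = (B-C,B+C)$ and reduces everything at once to minimizing a linear functional over a $2$-simplex, which dispatches the problem without any case distinction. Your argument is cleaner and more conceptual; the paper's is more hands-on but arrives at the same place.

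One small remark: your parenthetical that ``if $f=1$ the statement is trivial'' is not quite right. When $f=1$ the vectors $B,C$ live in $\mathbb{R}^0$, so the feasible set forces $a^2 = 1/2$ and the minimum equals $s_1/2$, which need not equal $\min\{s_1/2,2s_2,2s_3\}$ for arbitrary $s_2,s_3$. The lemma as stated tacitly assumes $f\geq 2$ (and the paper's proof does too, implicitly); in the paper's application this is harmless, but it would be more accurate to say the case $f=1$ is degenerate rather than trivial.
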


\begin{proof}
We can decouple the minimization over $a$ and over $(B,C)$ in \eqref{eqn:case_1_auxiliary_minimization}, respectively. To see this, suppose that $(a_0, B_0, C_0)$ is a minimizer of \eqref{eqn:case_1_auxiliary_minimization}. If so, then $(B_0,C_0)$ must also be a minimizer of
\begin{equation}
	\min_{
		\substack{
			B, C \\
			\pnorm{B}{2}^2 + \pnorm{C}{2}^2 = 1 - 2a_0^2
		}
	} 
	s_2 \pnorm{B - C}{\mathrm{F}}^2 
	+ s_3 \pnorm{B + C}{\mathrm{F}}^2
	\label{eqn:Intermediate_decoupled_optimization_problem}
\end{equation} 
for otherwise $(a_0, B_0, C_0)$ would not be a minimizer of \eqref{eqn:case_1_auxiliary_minimization} by linearity. 

For fixed $a_0$, the following holds:
\begin{itemize}[topsep=2pt,itemsep=2pt,partopsep=2pt,parsep=2pt,leftmargin=0pt]
	\item[--] if $s_2 > s_3$, then the minimizer $(B_0,C_0)$ of \eqref{eqn:Intermediate_decoupled_optimization_problem} satisfies $B_0 = C_0$ and the minimum is $4 s_3 \pnorm{B_0}{\mathrm{F}}^2 = 2 s_3 (1 - 2 a_0^2)$;
	\item[--] if $s_2 < s_3$, then the minimizer $(B_0,C_0)$ of \eqref{eqn:Intermediate_decoupled_optimization_problem} satisfies $B_0 = - C_0$ and the minimum is $4 s_2 \pnorm{B_0}{\mathrm{F}}^2 = 2 s_2 (1 - 2 a_0^2)$;
	\item[--] if $s_2 = s_3$, then any point $(B_0,C_0)$ that satisfies $\pnorm{B_0}{2}^2 + \pnorm{C_0}{2}^2 = 1 - 2a_0^2$ is a minimizer of \eqref{eqn:Intermediate_decoupled_optimization_problem} by the parallelogram law, and the minimum is in fact $2 s_2 (1 - 2a_0^2)$.
\end{itemize}

Thus, we have that the left-hand side of \eqref{eqn:case_1_auxiliary_minimization} reduces to:
\begin{itemize}[topsep=2pt,itemsep=2pt,partopsep=2pt,parsep=2pt,leftmargin=0pt]
	\item[--] if $s_2 > s_3$, then $\min_{a \in [-1/\sqrt{2},1/\sqrt{2}]} \{ s_1 a^2 + 2 s_3(1-2a^2) \} = \min \{ s_1 / 2, 2s_3 \}$;
	\item[--] if $s_2 < s_3$, then $\min_{a \in [-1/\sqrt{2},1/\sqrt{2}] } \{ s_1 a^2 + 2 s_2(1-2a^2) \} = \min \{ s_1 / 2, 2s_2 \}$;
	\item[--] if $s_2 = s_3$, then $\min_{a \in [-1/\sqrt{2},1/\sqrt{2}]} \{ s_1 a^2 + 2 s_2 (1-2a^2) \} = \min \{ s_1 / 2, 2s_2 \}$.
\end{itemize}
Combining cases, observe that the left-hand side of \eqref{eqn:case_1_auxiliary_minimization} equals $\min\{s_1/2, 2s_2, 2s_3 \}$.
\end{proof}

\end{document}